\documentclass[11pt]{article}
\usepackage{arxiv}

\definecolor{siyu}{RGB}{148, 0, 211}

\definecolor{lw}{RGB}{11,0,249}

\let\hat\widehat
\let\tilde\widetilde

\usepackage{cleveref}
\usepackage{caption}
\usepackage{mdframed}
\usepackage{wrapfig}
\usepackage{esvect}
\usepackage[colorinlistoftodos,textsize=scriptsize]{todonotes}

\newtheorem*{proposition*}{Proposition}
\newtheorem*{theorem*}{Informal Theorem}

\newcommand{\smax}{{\mathrm{smax}}}

\newcommand{\ri}{{i}}
\newcommand{\id}{{\mathrm{Id}}}
\newcommand{\irr}{{\mathrm{Irr}}}
\newcommand{\scrR}{{\mathscr R}}
\newcommand{\euP}{{p}}
\newcommand{\euF}{{f}}
\newcommand{\idftmap}{{\mathscr{T}_{\sf idft}}}
\newcommand{\orb}{{\rm Orb}}

\newcounter{finding}

\usepackage{stmaryrd}
\usepackage{newpxtext}
\usetikzlibrary{calc}

\title{Neural Networks Provably Learn Spectral Representations for Group Composition}

\author{Jianliang He\thanks{Equal contribution.} \quad Leda Wang\footnotemark[1] \quad Fengzhuo Zhang \quad Siyu Chen \quad Zhuoran Yang
\vspace{3pt}
\and
{\small\textit{Department of Statistics and Data Science, Yale University}} 
\and
{
    \small\texttt{\{jianliang.he, leda.wang, fengzhuo.zhang, siyu.chen.sc3226, zhuoran.yang\}@yale.edu}
}
}
\date{}

\begin{document}

\maketitle

\begin{abstract}
Understanding how structured internal structure emerges during neural network training is central to the study of deep learning. 
We investigate this phenomenon through the group composition task, where a two-layer neural network is trained to predict $g_1 \star g_2$ for elements of a finite group $G$.
By lifting the projected gradient flow to the Fourier domain, we demonstrate that the training dynamics are governed by a Riemannian gradient ascent on a representation-theoretic energy functional. 
We prove that, under random initialization, this flow drives each neuron to converge almost surely toward a single irreducible representation, while the cross-layer Fourier coefficients achieve a rotational rank-one alignment. 
This framework provides a representation-theoretic account of feature learning and characterizes a novel low-rank compression phenomenon for matrix-valued group representations.
Moreover, for Abelian groups, we provide a complete population-level description: random initialization promotes uniform diversification across nontrivial representations and induces Haar-uniform phases, jointly approximating the indicator via a majority-vote mechanism. 
We further prove that both phase alignment and representation competition emerge with exponential convergence rates.
Our code is available at \href{https://github.com/Y-Agent/nn-group-representation-learning}{\dblue{github.com/Y-Agent/nn-group-representation-learning}}.
\end{abstract}

\vspace{10pt}

{
\tableofcontents
}
\newpage
 
\section{Introduction}

Understanding how useful internal structure emerges during neural network training remains a central challenge in deep learning theory. 
Two broad questions are especially persistent. The first concerns \emph{representation learning}: when data contain latent regularities, what internal representations do neural networks extract from raw inputs, and how do those representations support computation and generalization \citep{bengio2013representation}? 
The second concerns \emph{low intrinsic dimensionality}: despite optimization taking place in extremely high-dimensional parameter spaces, trained networks often concentrate on solutions with much lower-dimensional or low-rank structure, both in parameter space and in learned representations \citep{li2018measuring,ansuini2019intrinsic,aghajanyan2021intrinsic}. 
Although these questions are distinct, they point to a common underlying problem:
how does gradient-based training select \emph{structured, effective latent geometries} from overparameterized models?

One line of work in mechanistic interpretability approaches this question by directly reverse-engineering the features that emerge in trained neural networks. 
Algebraic tasks provide a particularly useful lens for this purpose, serving as minimal testbeds in which the emergence of structure can be studied in a controlled setting. 
Accordingly, group-theoretic tasks have become an important paradigm in the literature \citep[e.g.,][]{chughtai2023toy,marchetti2026sequential}, with modular addition over $\mathbb{Z}_p$ serving as the canonical example.
Empirical studies on modular addition have shown that networks consistently extract trigonometric features aligned with the discrete Fourier basis \citep[e.g.,][]{power2022grokking,nanda2023progress}. 
Recently, \citet{he2026mechanism} provided the first rigorous theoretical characterization of this phenomenon, establishing the emergence of three distinct structural patterns:
(i) Each neuron specializes in a sinusoid defined by a single Fourier frequency $k$,
(ii) the phases of the first and second layer Fourier coefficients align according to a additive relation, and
(iii) the learned frequencies and phases are independently and uniformly distributed across neurons.
Together, these observations yield a closed-form ensemble predictor that assigns the correct label to the largest logit.

In modular addition, the Fourier modes discovered by the network are not accidental artifacts of training. 
They have a precise mathematical interpretation: they are the \emph{irreducible representations} (irreps) of the underlying cyclic group $\ZZ_p$. 
From this viewpoint, the phenomenon often described as ``single-frequency specialization'' is not merely a neuron fitting one sinusoidal pattern. 
Rather, it reflects a representation-theoretic organization of computation, in which individual neurons concentrate their capacity on particular spectral components.
This also formalizes the \emph{low-dimensional nature} of the learned solution: rather than relying on an arbitrary superposition of Fourier modes, each neuron isolates a single one-dimensional component of the group decomposition.
The natural question is whether the same principle persists beyond the cyclic case.
For non-Abelian groups, however, these representations are no longer scalar-valued frequencies, but \emph{matrix-valued objects comprising multiple coupled coefficients}. 
Thus, extending the modular-addition framework beyond $\ZZ_p$ requires asking not only whether networks select individual irreps, but also how they organize the latent matrix structure within those irreps. 
This leads to our central question:
\begin{center}
    \emph{When a neural network learns to compose elements of an arbitrary finite group $G$, does it discover the irreducible representations of $G$? And if so, do the alignment and low-rank principles observed in modular addition generalize to this broader setting?}
\end{center}
\vspace{-9pt}
\paragraph{Main Contributions.}
We study the group composition task $g_1 \star g_2$ for arbitrary finite groups $(G,\star)$, using harmonic analysis on finite groups.
Concretely, we analyze the two-layer network with quadratic activation whose logit $f_{\sf NN}(g_1,g_2;\Theta)\in\RR^{|G|}$ takes the form
\begin{align*}
f_{\sf NN}(g_1,g_2;\Theta)_g
=
\frac{1}{M}\sum_{m=1}^M a_m\cdot\xi_{m,g}\cdot
\sigma\left(\theta_{m,g_1}^1+\theta_{m,g_2}^2\right)\in\RR,
\end{align*}
where $\theta_m^1,\theta_m^2,\xi_m:G\to\RR$ are the two input embeddings and the output embedding for neuron $m$, and $a_m>0$ is a neuron scaling factor.
Since a general group operation need not be commutative, the two inputs play distinct left and right roles.
This asymmetry motivates our use of separate input embeddings for the two operands.
We train on the complete composition table, comprising all pairs $(g_1,g_2)\in G\times G$ with label $g_1\star g_2$, and analyze the projected gradient flow of the directional parameters under the cross-entropy risk.
By lifting the training dynamics into the spectral domain via the group Fourier transform, we uncover fundamental structural links between the observations from $\ZZ_p$ and the representation theory of $G$.
Our results provide an affirmative answer to the question above. These findings are summarized in the following informal theorem:

\begin{theorem*}[Informal Version of Theorem~\ref{thm:converge_point_general_group}]
Consider the gradient flow under the population risk. 
We write $\nu\in\{\theta_m^1,\theta_m^2,\xi_m\}$ and view each $\nu$ as a function
$G\to\RR$. 
Then, for every neuron $m$, with probability one over the continuous random initialization, there exists a non-trivial irreducible representation $\check\rho_m$ such that, as $t\to\infty$, the
gradient flow converges to a state satisfying the following:
\begin{itemize}
    \setlength{\itemsep}{-1pt}
    \item[\rm (i)] \textbf{(Single Representation).}
    In the Fourier domain, let  $\hat\nu[\rho]$ denote the Fourier coefficient that measures the component of $\nu$ along the
    representation $\rho$. Each neuron keeps only $\check\rho_m$ and its conjugate $\check\rho_m^\vee$:
    $$
    \hat\nu[\rho]\to \zero_{d_\rho\times d_\rho},
    \qquad
    \forall \rho\notin\{\check\rho_m,\check\rho_m^\vee\},~\nu\in\{\theta_m^1,\theta_m^2,\xi_m\},
    $$
    where $d_\rho\in\NN^+$ represents the dimension of the corresponding representation.
    \item[{\rm(ii)}] \textbf{(Rank-one Rotational Alignment).}
    On the surviving representation $\rho\in\{\check\rho_m,\check\rho_m^\vee\}$, the three
    Fourier coefficients (as a matrix in general) become rank-one:
    $
    \rank(\hat{\theta}_m^1[\rho])
    =
    \rank(\hat{\theta}_m^2[\rho])
    =
    \rank(\hat{\xi}_m[\rho])
    =
    1
    $.
    Moreover, these coefficients have some alignment across layers, i.e., there exists $\lambda>0$ such that
    $$
    \hat{\xi_m}[\rho]=\lambda \cdot\hat{\theta_m^2}[\rho]\, \hat{\theta_m^1}[\rho],\qquad \hat{\theta_m^1}[\rho]=\lambda \cdot\big(\hat{\theta_m^2}[\rho]\big)^*\hat{\xi_m}[\rho],\qquad \hat{\theta_m^2}[\rho]=\lambda \cdot\hat{\xi_m}[\rho]\,\big(\hat{\theta_m^1}[\rho]\big)^*,
    $$
    where $(\cdot)^*$ denotes the Hermitian adjoint.
    In other words, each of $\hat{\theta}_m^1[\rho]$, $\hat{\theta}_m^2[\rho]$ and
    $\hat{\xi}_m[\rho]$ is positively proportional to the product formed by the other two
    coefficients in a rotational order.
\end{itemize}
\end{theorem*}

This theorem characterizes representation learning across arbitrary finite groups. 
It establishes that the Fourier-feature mechanism discovered in modular addition extends far beyond cyclic groups: gradient flow selects irreducible representations, compresses surviving matrix-valued coefficients to rank one, and aligns them across layers according to the group-composition structure. 

To establish this result, we show that each neuron's weights approximately evolve as a Riemannian gradient flow on a constrained manifold (see Proposition \ref{prop:dyn_approx}, \ref{prop:general_group_dyn}). 
Moreover, this flow is driven by a specific energy functional $\Omega_m$ (see Lemma~\ref{lem:riemannian-gf}). 
We then prove that every non-target equilibrium is either a measure-zero trap or a strict saddle. 
To address the latter, we extend the saddle-avoidance result of \citet{lee2019first} to continuous Riemannian gradient flows, establishing that the stable sets of strict saddles have measure zero under absolutely continuous random initialization (see Theorem~\ref{thm:riemannian-gf-escape_informal}). 
Consequently, the dynamics avoid all unstable critical points with probability one and converge to the desired single-representation, rank-one rotationally aligned equilibrium.

\paragraph{Refined Characterization for Abelian Groups.}
For Abelian groups, we provide a comprehensive characterization of the learned parameters, the emergent mechanism, and the underlying training dynamics in \S\ref{sec:stage1_abelian}. 
Specifically, under uniform spherical initialization, the learned representation $\check{\rho}_m$ is uniformly distributed over the set of non-trivial irreducible representations (see Theorem \ref{thm:perfect_accuracy_modular}). 
Furthermore, the absolute phase $u_m$ is Haar-distributed on the unit circle $\mathbb{D}$, independent of the surviving representation $\check{\rho}_m$.
This full diversification yields a closed-form ensemble predictor: by orthogonality of representations and averaging over the Haar measure, the noise terms cancel exactly while the signal terms combine into a ``flawed indicator'' that assigns the correct label $g_1\star g_2$ the largest logit.
Finally, we establish explicit convergence rates, demonstrating that phase alignment occurs exponentially fast and a lottery-ticket mechanism governs that representation competition.
In the latter regime, the initially dominant representation wins almost surely, while all competing representations are exponentially suppressed (see Theorem \ref{thm:abelian_convergence_formal}).

\subsection{Related Work}
\paragraph{Modular Arithmetic and Group Operations.}
Modular arithmetic and finite group operations have become central testbeds for mechanistic interpretability, feature emergence, and grokking \citep{power2022grokking,nanda2023progress,tian2024composing,liu2022understanding,mohamadi2024why,prieto2025edge,mallinar2025emergence,liu2026spectral}. In modular addition, mechanistic studies show that trained networks often implement Fourier-structured algorithms, and that these circuits provide useful progress measures during training \citep{nanda2023progress,tian2024composing}. More generally, reverse-engineering studies of finite group operations show that representation theory gives a useful language for learned circuits \citep{chughtai2023toy,stander2024cosets,wu2025unified}. A complementary theoretical perspective is provided by \citet{marchetti2024harmonics}, who show that finite-group invariance can force learned weights to recover the group Fourier transform. The closest predecessor to our Abelian analysis is \citet{he2026mechanism}, who analyze modular addition over $\ZZ_p$ and explain single-frequency learning, phase alignment, diversification, the flawed-indicator mechanism, and grokking in the cyclic setting. \citet{tian2025scaling} broaden this picture by deriving scaling laws for feature emergence in group arithmetic. Our Abelian results extend this line of work to arbitrary finite Abelian groups, while our general-group results show which parts remain valid when scalar Fourier characters are replaced by matrix-valued irreducible representations.
\citet{marchetti2026sequential} study a different but closely related problem: sequential group composition with orbit-based embeddings, analyzed through Alternating Gradient Flow \citep[AGF,][]{kunin2025alternating}. Their emphasis is on the step-wise progression of representation learning. Instead, we study two-input composition with one-hot embeddings and standard gradient flow, which allows us to analyze the emergence of the structured spectral patterns.

\paragraph{Training Dynamics of Neural Networks.}
A complementary line of work investigates feature learning directly through the dynamics of gradient-based optimization. Recent mechanics-inspired analyses highlight structural phenomena within these dynamics, such as symmetry breaking, directional-versus-radial learning phases, and alternating mechanisms for feature selection and growth \citep{tanaka2021noether,kunin2025alternating}. In parallel, a broad literature studies these questions on structured low-dimensional targets, including single-index, multi-index, and more general latent-feature classes, where gradient methods recover task-relevant subspaces and exhibit staircase or multi-timescale learning dynamics \citep{ba2022high,lee2024neural,chen2025can,berthier2024learning,damian2022neural,ren2025emergence}. Our work fits this dynamics perspective, but in a representation-theoretic setting where the relevant spectral features are not prescribed a priori.
Ultimately, by leveraging the tractable yet rich group composition task to uncover these underlying mechanisms, our work serves as a concrete realization of the ``learning mechanics'' paradigm \citep{simon2026scientific}, moving beyond mere end-performance certification to provide exact, falsifiable predictions about the emergent internal organization.

\subsection{Summary of Notation}

\paragraph{General Notation.}
For any positive integer $n\in\mathbb{N}^+$, let $[n]=\{i\in\mathbb{Z}:1\leq i\leq n\}$.
Let $\mathbb{Z}_p$ denote the set of integers modulo $p$.
$\|\cdot\|_p$ denotes the $\ell_p$-norm, $\|\cdot\|_{\rm F}$ denotes the Frobenius norm, and $\|\cdot\|_{\rm op}$ refers to the operator norm.
The softmax operator, $\smax(\cdot)$, maps a vector to a probability distribution, where the $i$-th component is given by $\smax(\upsilon)_i=\exp(\upsilon_i)/\sum_j\exp(\upsilon_j)$.
For two non-negative functions $f(x)$ and $g(x)$ defined on  $x\in\RR^+$, we write $f(x) \lesssim g(x)$  if there exists two constants $c> 0$  such that $f(x) \leq c \cdot g(x)$, and write $f(x) \gtrsim g(x)$ if there exists two constants $c> 0$  such that $f(x) \geq c \cdot g(x)$. We write $f(x) \asymp g(x)$ or $f(x)=\Theta(g(x))$ if $f(x) \lesssim g(x)$ and $g(x) \lesssim f(x)$. 
\vspace{-10pt}
\paragraph{Complex and Group-Theoretic Notation.}
Let $\Re(\cdot)$ and $\Im(\cdot)$ denote the real and imaginary parts.
For any complex number $z = |z|\cdot e^{i\phi} \in \CC$, we define $\arg(z) = \phi \in [0, 2\pi)$. 
For a matrix $A\in \CC^{d\times d}$, denote by $A^*$ the Hermitian adjoint.
We use $\mathbb{D}$ to denote the unit circle in the complex plane $\{z\in\CC:|z|=1\}$.
The notation $C_1 \propto_+ C_2$ denotes that $C_1$ is a real-proportional to $C_2$, that is, $C_1 = \lambda\cdot C_2$ for some $\lambda \in \mathbb{R}_{>0}$.
We use $\simeq$ to denote group isomorphism and $\rtimes$ to denote the semi-direct product, where $N \rtimes H$ is  formed by a normal subgroup $N$ and a subgroup $H$.

\section{Learning Group Composition with Neural Network}
\label{sec:group_learning}

In this section, we formalize the group composition task, describe the network architecture, and specify the training procedure.

\paragraph{Group Composition Task.}
Let  $(G, \star)$  denote a \textit{group}, 
which is defined as a set $G$ equipped with a binary operation $\star: G \times G \to G$ satisfying the following three properties:
\begin{itemize}[label=$\triangleright$, leftmargin=2em]
    \setlength{\itemsep}{-1pt}
    \item \textit{Associativity.} $(a\star b)\star c = a\star(b\star c)$ for all $a,b,c \in G$.
    \item \textit{Identity.} There exists an identity element $\id \in G$ such that $a\star\id = \id \star a = a$ for all $a \in G$.
    \item \textit{Inversion.} There exists an inverse $a^{-1} \in G$ for every $a \in G$ such that $a \star a^{-1} = a^{-1} \star a = \id$.
\end{itemize}
Moreover, the group is called \textit{Abelian} if it additionally satisfies \textit{commutativity}, i.e., $a \star b = b \star a$. 
In the task of learning group composition, we want to predict $g_1\star g_2$ for any $(g_1,g_2)\in G\times G$.
A canonical example is modular addition on $\ZZ_p$, which is an Abelian group with  $g_1 \star g_2 = (g_1 + g_2) \bmod p$. This task has been extensively studied in the literature \citep[e.g.,][]{nanda2023progress,he2026mechanism}. 

\paragraph{Network Architecture.}
We consider a standard two-layer fully connected neural network $f(\cdot,\cdot;\Theta):\RR^{|G|}\mapsto\RR^{|G|}$ parametrized by $\Theta=\{(a_m,\xi_m,\theta_m^{1},\theta_m^{2})\}_{m\in[M]}$ of the following form:
\begin{align}\label{eq:def_nn_logit}
    f_{\sf NN}(g_1,g_2;\Theta)=\frac{1}{M}\sum_{m=1}^Ma_m\cdot\xi_m\cdot\sigma\big(\langle \theta_m^{1},e_{g_1}\rangle+\langle \theta_m^{2},e_{g_2}\rangle\big)\in\RR^{|G|}.
\end{align}
The network parameters comprise: (i) two positional input embeddings $\theta_m^\tau\in\RR^{|G|}$ ($\tau\in\{1,2\}$), one for each operand, (ii) output embeddings $\xi_m\in\RR^{|G|}$ mapping activations to logits over $G$, and (iii) neuron scaling factors $a_m>0$. The use of two separate input embeddings $\theta_m^1, \theta_m^2$ is necessitated by non-Abelian groups: since the group operation is non-commutative, i.e., $g_1\star g_2\neq g_2\star g_1$, the network must distinguish between the left and right operands. 
Besides, $\sigma(\cdot)$ denotes the activation function, and we choose $\sigma(x)=x^2$ throughout this paper. 

Since the input is a one-hot vector, each weight vector $\nu\in\{\theta_m^1,\theta_m^2,\xi_m\}$ admits two equivalent views: as a vector in $\RR^{|G|}$ with entries indexed by group elements, or as a function $\nu:G\to\RR$ with $\nu(g)=\langle\nu,e_g\rangle$.
We freely switch between these two perspectives throughout the paper.
Moreover, $f_{\sf NN}$ in \eqref{eq:def_nn_logit} is the \emph{logit}, and the prediction probability is obtained by passing $f_{\sf NN}$ through the softmax function. 
We adopt mean-field parameterization $a_m/M = \Theta(1/M)$ in accordance with \cite{mei2018mean,ghorbani2020neural, abbe2022merged}.

\begin{remark}
We make two remarks on the architecture.
First, prior work \citep{he2026mechanism} has shown that the learned features are robust across activation choices (e.g., quadratic, ReLU), with the quadratic component being essential. 
We adopt $\sigma(x)=x^2$ because it is more amenable to analysis, enabling a clean spectral decomposition.
Second, the factorization of neuron weights into a scalar $a_m$ and unit-norm directions $(\theta_m^1, \theta_m^2, \xi_m)$ does not change the expressivity of the network, but it decouples the training into two stages. 
In the feature-learning stage, training primarily changes the directions, thereby learning the spectral features used by each neuron. In the margin-maximization stage, the learned directions are fixed and the scalars $a_m$ are optimized, thereby refining the logits by increasing the margin of the correct class.
\end{remark}

\paragraph{Training Data and Loss Function.}
To gain a clean understanding of the learned representations and the mechanism by which the network solves group composition, we focus on full-data training over the complete composition table: the network observes every pair $(g_1,g_2)\in G\times G$ with label $g_1\star g_2$.
We train the network by minimizing the cross-entropy (CE) loss over this complete dataset:
\begin{align}
    \scrR(\Theta)&
    =-\sum_{g_1,g_2\in G}\log\left(\frac{\exp(f_{\sf NN}(g_1,g_2;\Theta)_{g_1\star g_2})}{\sum_{j\in G}\exp(f_{\sf NN}(g_1,g_2;\Theta)_j)}\right)\notag\\
     &=-\sum_{g_1,g_2\in G}f_{\sf NN}(g_1,g_2;\Theta)_{g_1\star g_2}+\sum_{g_1,g_2\in G}\log\bigg(\sum_{j\in G}\exp\big(f_{\sf NN}(g_1,g_2;\Theta)_j\big)\bigg).
    \label{eq:def_risk}
\end{align}
The decomposition in the second line separates the loss into two parts: the first term maximizes logits at the correct label $g_1 \star g_2$, while the log-partition function penalizes large logits across all classes. Here, $f_{\sf NN}(g_1,g_2;\Theta)_j$ denotes the $j$-th entry of the logit vector in \eqref{eq:def_nn_logit} for all $j \in G$. 

\paragraph{Training Algorithm.}

We adopt a two-stage training procedure that separates feature learning from scale optimization: in Stage~I, the directional parameters $(\theta_m^1, \theta_m^2, \xi_m)$ are constrained to the unit sphere and trained via projected gradient flow with the scaling factors $a_m$ held fixed; in Stage~II, the learned directions are frozen and only $a_m$ is optimized.
The model is initialized as follows:
$$
a_m = a > 0, \qquad (\theta_m^{1}(0), \theta_m^{2}(0), \xi_m(0)) \iid {\rm Unif}(\SSS^{|G|-1})^{\otimes 3}, \qquad \forall m \in [M].
$$
Here, we fix $a = \Theta(1)$ in Stage I as a sufficiently small constant that controls the initial output scale. 
The small choice of $a$ places the network in a ``small-logit regime'' where the softmax output is approximately uniform, which simplifies the analysis of early-phase dynamics.
With $a_m$ fixed, we train only $(\theta_m^1, \theta_m^2, \xi_m)$ via projected gradient flow on the unit sphere:
\begin{align}
    \partial_t\theta_m^{\tau}=-(I-\theta_m^{\tau}{\theta_m^{\tau}}^\top)\nabla_{\theta_m^{\tau}}\scrR(\Theta),\qquad
    \partial_t\xi_m=-(I-\xi_m\xi_m^\top)\nabla_{\xi_m}\scrR(\Theta),
    \label{eq:def_gradient_flow}
\end{align}
for all $m \in [M]$ and $\tau \in \{1,2\}$. Here, the projection operator $\Pb_\nu^\perp = I - \nu\nu^\top$ projects the Euclidean gradient onto the tangent space of the sphere, ensuring that parameters remain on the sphere. 
In Stage~II, we freeze $(\theta_m^1, \theta_m^2, \xi_m)$ at their Stage~I values and optimize only the scaling factors via
$$
\partial_t a_m = -\nabla_{a_m}\scrR(\Theta),
$$
sharpening the softmax toward the correct prediction.

\section{Warmup: Learning Generalized Modular Addition}
\label{sec:warm_up}

The study of simple arithmetic tasks such as modular addition has emerged as a cornerstone of mechanistic interpretability \citep[e.g.,][]{power2022grokking}.
While recent efforts have focused extensively on reverse-engineering and theoretical interpretation of how transformers and MLPs solve $x+y\bmod p$ with $p$ being a prime or odd number \citep[e.g.,][]{nanda2023progress,tian2024composing,he2026mechanism}, these studies remain largely restricted to the analysis of single cyclic groups.
In this section, we generalize the modular addition task to arbitrary finite Abelian groups and revisit the mechanistic observations of \citet{he2026mechanism} through the lens of group representation theory.
This reformulation provides a unified language for describing the learned features and serves as a conceptual warm-up for our subsequent analysis of general (non-Abelian) group learning in \S\ref{sec:main_theory}.

\subsection{Generalized Modular Addition}
By the Fundamental Theorem of Finite Abelian Groups \citep{terras1999fourier}, every finite Abelian group $G$ is isomorphic to a unique direct sum of cyclic groups $G \simeq \mathbb{Z}_{n_1}\oplus\dots\oplus\ZZ_{n_d}$.
This isomorphism reduces the abstract group operation to component-wise modular addition. 
It thus suffices to study $G_\cN = \mathbb{Z}_{n_1}\oplus\dots\oplus\ZZ_{n_d}$ with $\cN=(n_1, \dots, n_d)$, where each $n_j \ge 2$, equipped with the operation:
\begin{align}\label{eq:define_generalized_modular}
g \star h =
\begin{pmatrix}
(g_1 + h_1) \bmod n_1 \\
\vdots \\
(g_d + h_d) \bmod n_d
\end{pmatrix},
\qquad \forall g, h \in G_{\mathcal{N}}.
\end{align}
Understanding this \emph{generalized modular addition} task thus provides a complete understanding of the Abelian setting, unifying several well-studied problems. 
In particular, the standard modular addition $x+y\bmod n$ corresponds to the cyclic case $d=1$, while the bitwise XOR task \citep[e.g.,][]{barak2022hidden,glasgow2023sgd} is recovered by setting $n_j=2$ for all $j\in[d]$.

\paragraph{Discrete Fourier Transform.}
The product structure of $G_\cN$ induces a Discrete Fourier Transform (DFT) that decomposes functions on $G_\cN$ into harmonic components.
For each frequency tuple $k = (k_1, \dots, k_d) \in G_\cN$, we define the one-dimensional representation $\rho_k$ as:
\begin{align}\label{eq:def_fourier_series}
\rho_k(g) = \prod_{j=1}^d \exp\left(\frac{2\pi \ri k_j }{n_j}\cdot g_j\right)\in\CC,\qquad\forall g=(g_1,\dots,g_d)\in G_\cN.
\end{align}
For any function $\nu: G_\cN \to \mathbb{C}$, the Fourier coefficient $\hat\nu[\rho_k]$ and the reconstruction formula are
\begin{align}\label{eq:def_fourier_transform}
\hat\nu[\rho_k] = \frac{1}{|G_\cN|} \sum_{h \in G_\cN} \nu(h)\cdot \overline{\rho_k(h)}\in\CC,
\qquad\text{and}\qquad
\nu(g) = \sum_{k \in G_\cN} \hat{\nu}[\rho_k] \cdot\rho_k(g).
\end{align}
This decomposition enables a spectral analysis of the network's parameters by expressing the weights in the group's harmonic basis.
Please refer to Figure \ref{fig:abelian_irreps} for an illustration of the representations.

\paragraph{Conjugate Representations.}
Since the network parameters are real-valued, their Fourier coefficients come in conjugate pairs.
For any $k\in G_\cN$, we define its \emph{conjugate} $k^\vee$ with $k_j^\vee=(n_j-k_j)\bmod n_j$ such that $\rho_{k^\vee}(g) = \overline{\rho_k(g)}$.
A representation $\rho_k$ is called \emph{self-conjugate} if $k = k^\vee$, which occurs when $k_j \in \{0, n_j/2\}$ for every $j$.
When all $n_j$ are odd, no non-trivial self-conjugate representation exists.
For any real-valued function $\nu$, the Fourier coefficients satisfy  $\hat\nu[\rho_{k^\vee}] = \overline{\hat\nu[\rho_k]}$ (see Lemma~\ref{lem:conjugate_relation}).
In this section, we focus on the non-self-conjugate case where all $n_j$ are odd, which admits a cleaner theory.
The self-conjugate case requires a separate treatment and is deferred to \S\ref{ap:even_modular}.

\subsection{Learned Patterns for Generalized Modular Addition}
\label{sec:patterns_modular}
We now describe the empirical patterns that emerge when the two-layer network \eqref{eq:def_nn_logit} is trained on all $|G|^2$ input pairs of the generalized modular addition task \eqref{eq:define_generalized_modular} using the training procedure described in \S\ref{sec:group_learning}.
After Stage~I training converges, we project each neuron's parameters $(\theta_m^1, \theta_m^2, \xi_m)$ onto the Fourier basis $\{\rho_k\}_{k\in G_\cN}$ via \eqref{eq:def_fourier_transform} and examine the resulting spectral structure.
As a running example, we use $G_\cN = \ZZ_3\oplus\ZZ_5$, equivalently $\ZZ_{15}$ with $1024$ neurons. 
The observations below mirror those reported by \citet{he2026mechanism} for $\ZZ_p$,with the primary difference being that the scalar frequency $k \in \{1,\dots,p-1\}$ is replaced by a frequency \emph{tuple} $k = (k_1, \dots, k_d) \in G_\cN$.

\begin{figure}[!ht]
  \centering
  \begin{minipage}[t]{0.66\textwidth}
    \centering
    \includegraphics[width=\linewidth]{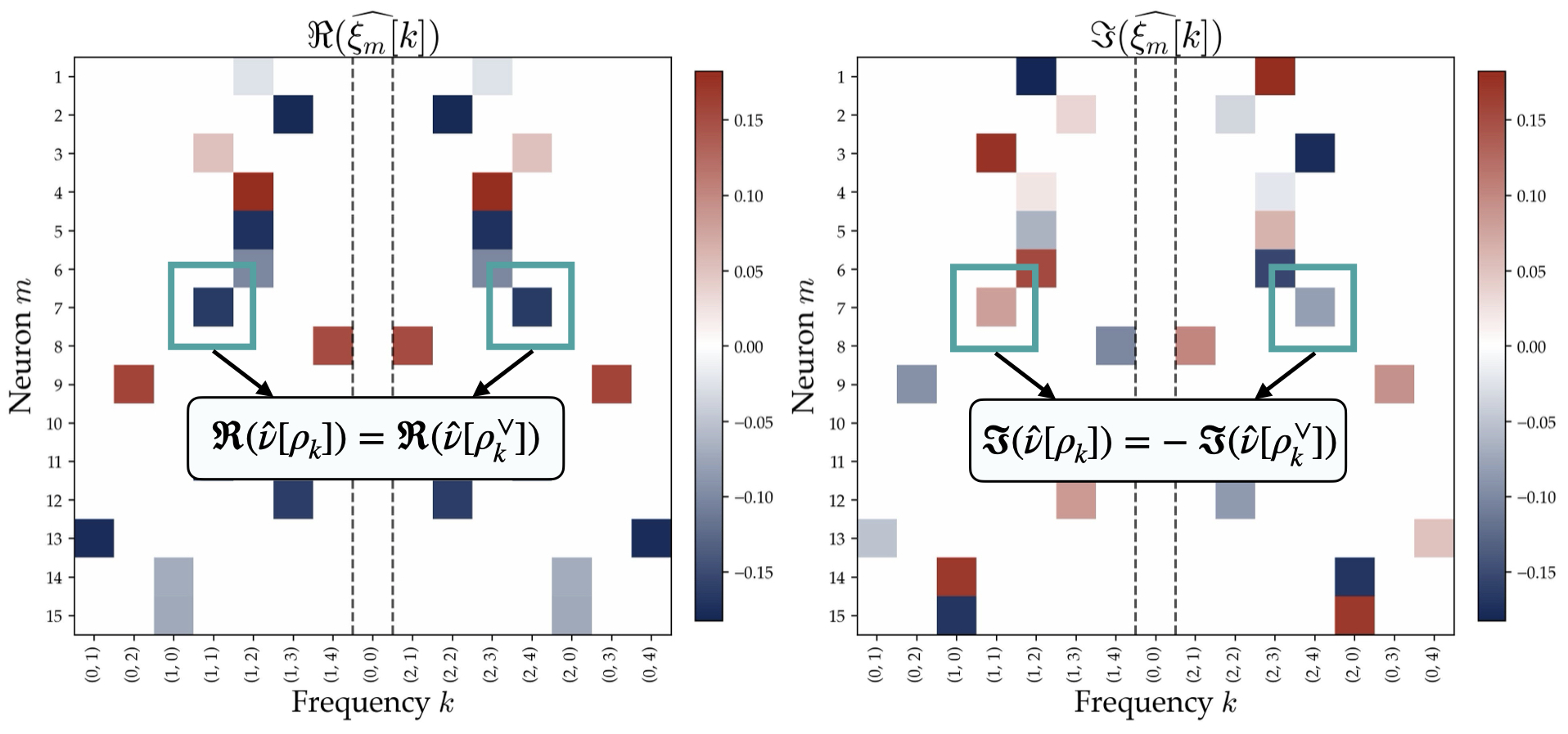}
    \subcaption{Visualization of the Single-Frequency Structure.}
    \label{fig:modular_add_dft}
  \end{minipage}
  \begin{minipage}[t]{0.3\textwidth}
    \centering
    \includegraphics[width=\linewidth]{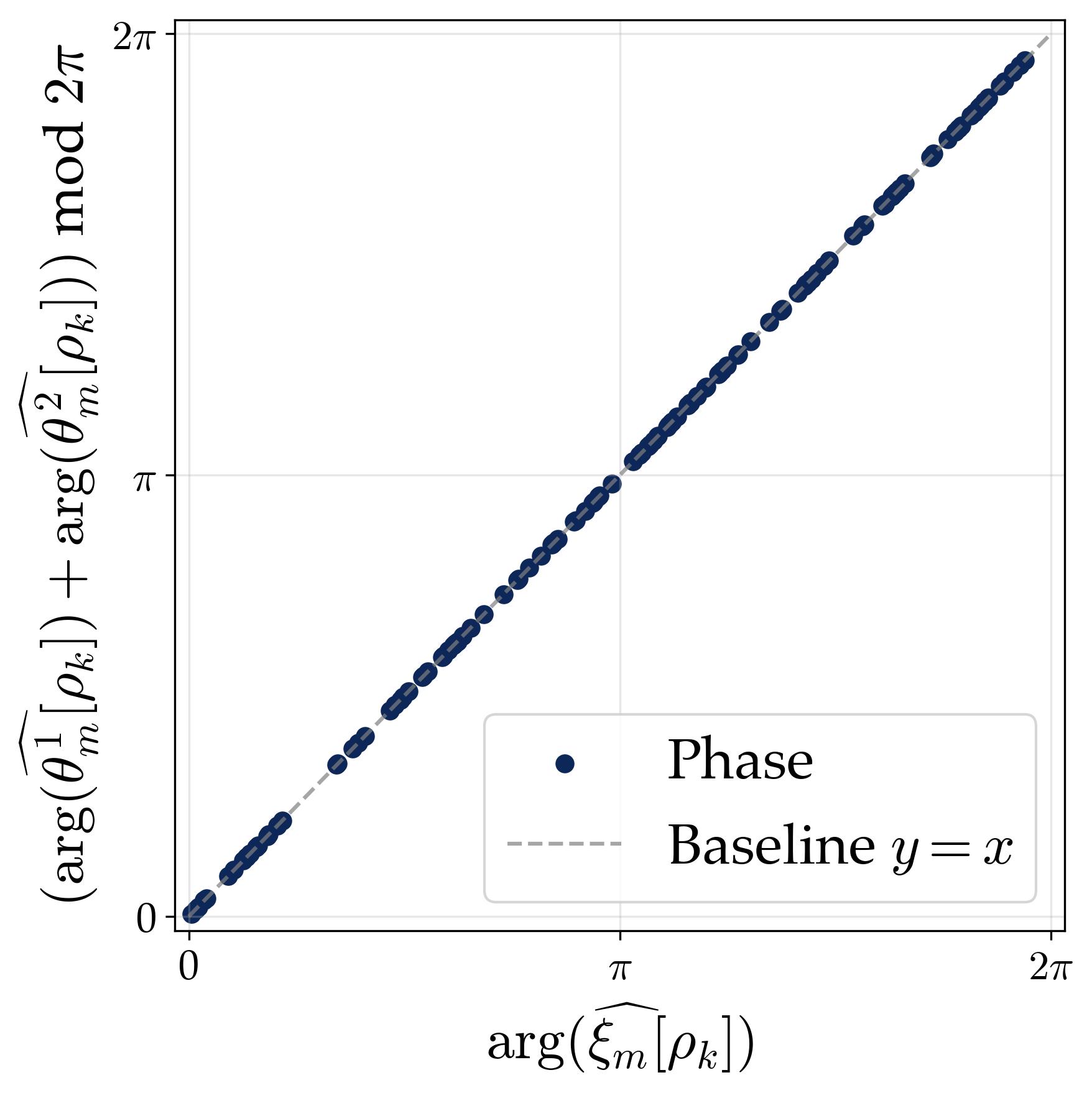}
    \subcaption{Phase Alignment.}
    \label{fig:modular_phase_align}
  \end{minipage}
  \caption{Empirical verification of Observations~\ref{find:freq_sparse} and~\ref{find:phase_align}. \textbf{(a)} DFT heatmaps of the learned parameter $\xi_m$ for the top 15 neurons on $G = \mathbb{Z}_3 \oplus \mathbb{Z}_5$. Each row corresponds to a neuron and each column to a frequency $k$. $\theta_m^1$ and $\theta_m^2$ exhibit identical sparsity (see Figure~\ref{fig:modular_addition_dft_heatmap_example} for full experimental results in \S\ref{ap:even_modular}). \textbf{(b)} Scatter plot comparing the sum of input-layer phases against the output-layer phase for all $m$.}
  \label{fig:modular_add_experiment}
\end{figure}

During the gradient training, the network undergoes a \emph{frequency concentration} process where the Fourier coefficients $\hat{\nu}[\rho_k]$ of each parameter vector $\nu\in\{\theta_m^1,\theta_m^2,\xi_m\}$ gradually become sparse, with only a single non-trivial frequency and its conjugate surviving.
We denote by $\check k_m \in G_\cN\backslash\{0\}$ the surviving frequency tuple for neuron $m$, and write $\check\rho_m := \rho_{\check k_m}$ for the corresponding representation.
\begin{tcolorbox}[frame empty, left = 0.1mm, right = 0.1mm, top = 0.1mm, bottom = 1.5mm]
\refstepcounter{finding}\label{find:freq_sparse}
\textbf{Observation \thefinding~(Single Frequency).}
For every neuron $m$, there exists a single non-trivial frequency tuple $\check k_m \in G_\cN\backslash\{0\}$ such that all other Fourier coefficients vanish:
$$
\hat{\nu}[\rho_k]=0\text{~~for all~~}\rho_k\notin\{\check\rho_m,\check\rho_m^\vee\},
$$
where $\nu$ denotes any of the learned parameters and $\check\rho_m^\vee$ is the conjugate representation.
\end{tcolorbox}
As shown in Figure~\ref{fig:modular_add_dft}, Observation~\ref{find:freq_sparse} shows that each neuron selects a single non-trivial frequency $\check k_m$ together with its conjugate $\check k_m^\vee$.
The real and imaginary parts in the DFT heatmap provide evidence of Hermitian symmetry: for each $\nu\in\{\theta_m^1,\theta_m^2,\xi_m\}$, the two surviving coefficients satisfy $\hat{\nu}[\check\rho_m^\vee]=\overline{\hat{\nu}[\check\rho_m]}$.
Next, we study how the phases of the surviving coefficients at the representative frequency $\check k_m$ align across the input and output layers.

\begin{tcolorbox}[frame empty, left = 0.1mm, right = 0.1mm, top = 0.1mm, bottom = 1.5mm]
\refstepcounter{finding}\label{find:phase_align}
\textbf{Observation \thefinding~(Phase Alignment).}
For a nonzero complex number $z=|z|\cdot e^{\ri\phi}$, let $\arg(z)=\phi \bmod 2\pi$ denote its phase.
For every neuron $m$, consider the complex Fourier coefficients of the three parameter vectors $\theta_m^1, \theta_m^2, \xi_m$ at the surviving frequency $\check k_m$.
Their phases satisfy:
$$
\arg(\hat{\xi}_m[\check\rho_m])=\big\{\arg(\hat{\theta}_m^1[\check\rho_m])+\arg(\hat{\theta}_m^2[\check\rho_m])\big\}\bmod2\pi.
$$
\end{tcolorbox}
Figure~\ref{fig:modular_phase_align} verifies this alignment: for each neuron, we plot the output phase against the sum of input phases, and the points concentrate tightly along the diagonal, confirming the additive relationship.
In the following, we denote the two input phases by
$\phi_m^\tau:=\arg(\hat\theta_m^\tau[\check\rho_m])$ for $\tau\in\{1,2\}$.
The output phase is then determined by Observation~\ref{find:phase_align}.

While Observations~\ref{find:freq_sparse} and~\ref{find:phase_align} characterize individual spectral support and coupling, they do not explain how the neuron \emph{ensemble} achieves the correct prediction.
This requires a third observation regarding the \emph{distribution} of neurons.
Following Observations~\ref{find:freq_sparse} and~\ref{find:phase_align}, each neuron $m$ is characterized by its frequency $\check k_m \in G_\cN\backslash\{0\}$ and input phases $\phi_m^\tau$'s, which in turn determine the phase of $\xi_m$.
Our third observation reveals that these frequencies and phases are \emph{uniformly distributed}.

\begin{tcolorbox}[frame empty, left = 0.1mm, right = 0.1mm, top = 0.1mm, bottom = 1.5mm]
\refstepcounter{finding}\label{find:diversification}
\textbf{Observation \thefinding~(Diversification).}
The surviving frequencies $\{\check k_m\}$ and phases $\{\phi_m\}$ satisfy:
\begin{itemize}
	\setlength{\itemsep}{-1pt}
	\item[(i)] \emph{Frequency Uniformity}: $\check k_m$ is uniformly distributed over all non-trivial frequencies in $G_\cN\backslash\{0\}$.
	\item[(ii)] \emph{Phase Uniformity}: $\phi_m^\tau$'s are independently and uniformly distributed on $[0, 2\pi)$ across neuron $m$ and position index $\tau$.
\end{itemize}
\end{tcolorbox}

Figure~\ref{fig:diversification} provides empirical verification: panel (a) plots the phases $\{\phi_m^\tau\}$ on the unit circle and their joint distribution, illustrating uniform distribution and mutual independence. Panel (b) shows a histogram of the surviving frequencies $\{\check k_m\}$ across neurons, confirming the uniform occupancy over all conjugate pairs.
We prove Observation~\ref{find:diversification} rigorously as part of Theorem~\ref{thm:perfect_accuracy_modular} in \S\ref{sec:stage1_abelian}.

\begin{figure}[!ht]
  \centering
  \begin{minipage}[t]{0.745\textwidth}
    \centering
    \includegraphics[width=\linewidth]{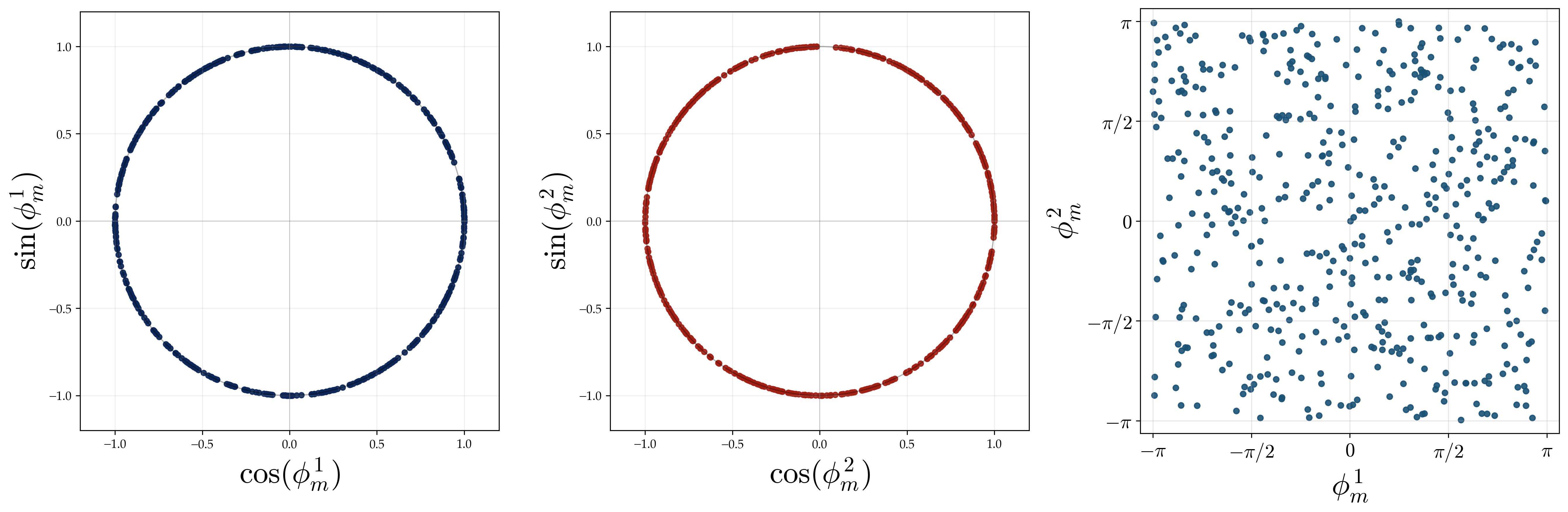}
    \subcaption{Distribution of phases $\{\phi_m^\tau\}$ on the unit circle  and their joint distribution.}
    \label{fig:phase_distribution}
  \end{minipage}
  \hfill
  \begin{minipage}[t]{0.25\textwidth}
    \centering
    \includegraphics[width=\linewidth]{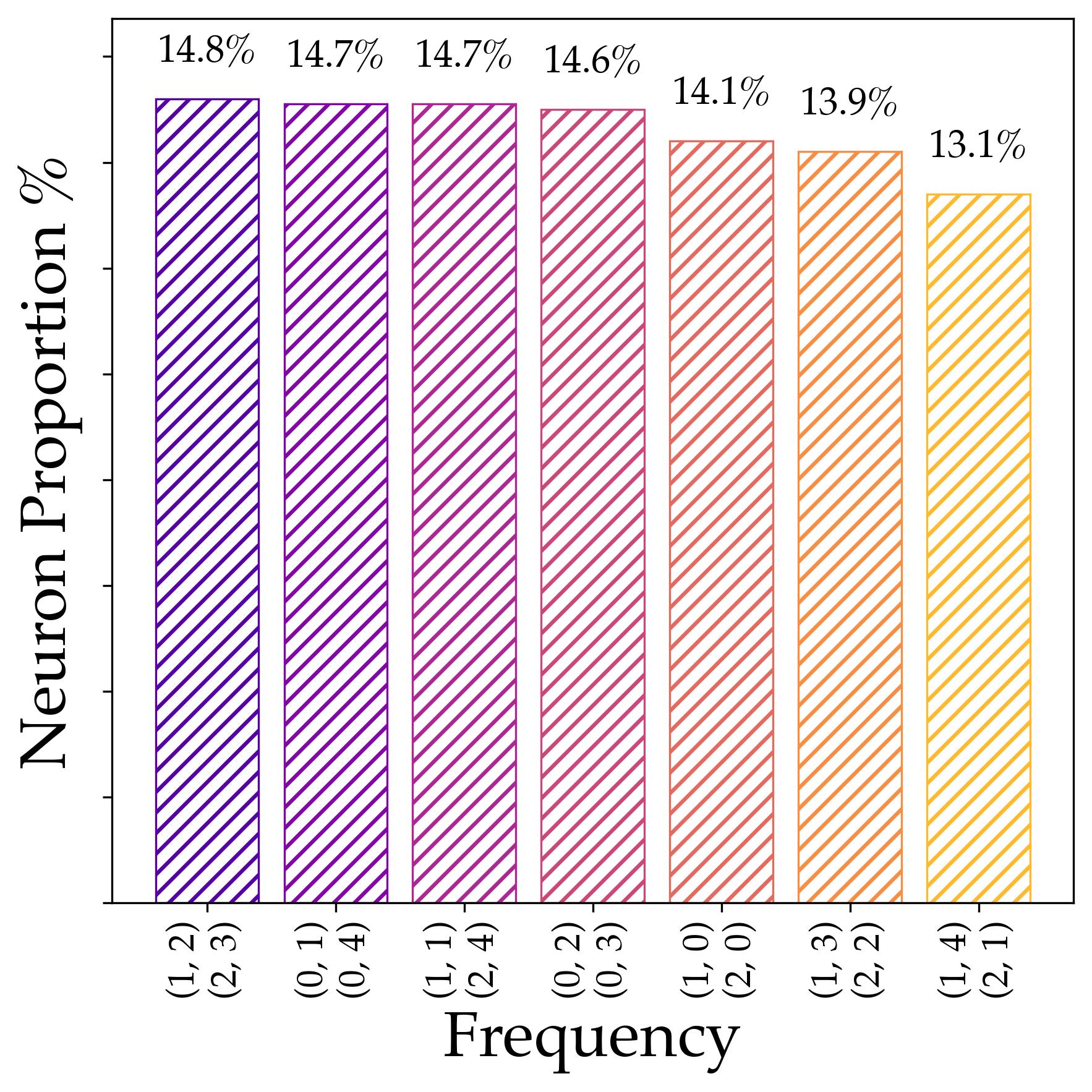}
    \subcaption{Frequency Distribution.}
    \label{fig:irrep_histogram}
  \end{minipage}
  \caption{Empirical verification of Observation~\ref{find:diversification}. \textbf{(a)} Polar plots showing the distribution of phases $\{\phi_m^\tau\}$ on the unit circle (left/middle) and their joint distribution (right), demonstrating uniform coverage and mutual independence. \textbf{(b)} Frequency histogram of surviving frequencies $\check{k}_m$'s across neurons, confirming that each non-trivial conjugate pair is represented with nearly equal frequency.}
  \label{fig:diversification}
\end{figure}

\paragraph{Implications for the Learned Predictor.}
Combining the three observations reveals the network's computational mechanism.
Observations~\ref{find:freq_sparse} and~\ref{find:phase_align} together determine the form of each neuron's contribution to the logit: neuron $m$ computes an estimator that depends on the inputs $g_1, g_2$ only through the representation $\check\rho_m$ evaluated at $g_1, g_2$, and output $j$.
Observation~\ref{find:diversification} then ensures that all non-trivial representations contribute equally to the ensemble, so that the network's overall predictor takes a closed-form expression.
As shown by \citet{he2026mechanism} for $\ZZ_p$ and generalized in our Lemma~\ref{lem:abelian_mu_satisfies_pa} for \emph{arbitrary representation} beyond product form in \eqref{eq:def_fourier_series}, the ensemble logit under full diversification  approximately produces the following \emph{indicator function}:
$$
f_{\sf NN}(g_1, g_2)_j \;\propto\; 2\cdot\ind(j = g_1\star g_2) + \ind(j = g_1^2) + \ind(j = g_2^2) + \mathsf{const},
$$
which peaks at the correct answer $j = g_1\star g_2$ and exhibits secondary ``ghost'' peaks at the squaring elements $g_1^2$ and $g_2^2$. 
Here, $\mathsf{const}$ is ignorable due to the softmax operation.

\section{Main Results for General Group Learning} \label{sec:main_theory}

This section analyzes the spectral patterns learned by neural networks on general finite groups.
After providing background on group harmonic analysis (see \S\ref{sec:background_harmonic}), we present our main theoretical results: each neuron converges to a single irreducible representation with rank-1 rotational alignment (see \S\ref{sec:stage1_general}), while scaling factors $a_m$ in \eqref{eq:def_nn_logit} grow to ensure perfect accuracy (see \S\ref{sec:stage2_scale}).

\subsection{Background: Harmonic Analysis on Finite Group}
\label{sec:background_harmonic}
\paragraph{Group Representation.}
We provide a brief introduction to irreducible representations of finite groups. See \citet{serre1977linear} for a comprehensive treatment.
\begin{definition}[Irreducible Representation]
\label{def:group}
    Let $G$ be a finite group.
	An irreducible representation of $G$ is a homomorphism $\rho:(G,\star)\mapsto({\rm GL}(V_\rho),\cdot)$ where $V_\rho$ is a finite-dimensional vector space over $\CC$ and ${\rm GL}(V_\rho)$ denotes the group of invertible linear maps on $V_\rho$, satisfying 
    $$
    \rho(g_1 \star g_2) = \rho(g_1)\cdot\rho(g_2),\qquad\forall g_1, g_2 \in G.
    $$
    {Let $\irr(G)$ denote a complete set of irreducible unitary representations of $G$. For each $\rho\in\irr(G)$, $\rho:G\to {\rm GL}(V_\rho)$ is a homomorphism and $\rho(g)^*\rho(g)=I_{d_\rho}$ for all $g\in G$.}
    For each $\rho\in\irr(G)$, we fix an orthonormal basis of $V_\rho$, identifying $\rho(g)$ with its matrix representation in $\CC^{d_\rho\times d_\rho}$ where $d_\rho=\dim V_\rho$ satisfying $\sum_{\rho \in \irr(G)} d_\rho^2 = |G|$.
    The unitary dual $\irr(G)$ satisfies the following properties:
	 \begin{itemize}
     \setlength{\itemsep}{-1pt}
    \item[\rm (i)] There exists a one-dimensional representation
    $\rho_{\mathsf{triv}}\in\irr(G)$ defined by $\rho_{\mathsf{triv}}(g)=1$ for all $g\in G$.
    \item[\rm (ii)] For any $\rho\in\irr(G)$, there exists a dual representation $\rho^\vee\in\irr(G)$ defined by $\rho^\vee(g)=\rho(g^{-1})^\top$. 

    \item[\rm (iii)] Let $\rho_{ij}(g)\in\CC$ denote the $(i,j)$-th entry of the matrix $\rho(g)\in\CC^{d_\rho\times d_\rho}$ given $g\in G$.
        Then, the collection 
        $$
        \big\{
        \sqrt{d_\rho}\,\rho_{ij}(\cdot)\in\CC^{|G|}:
        \rho\in\irr(G),\; i,j=1,\dots,d_\rho\big\},
        $$
    forms an orthonormal basis of $L^2(G)$ with respect to the inner product 
    $\langle f,h\rangle_{L^2(G)} =
        \frac{1}{|G|}\sum_{g\in G} f(g)\,\overline{h(g)}$.
\end{itemize}

\end{definition}

Representation converts the abstract group operation $\star$ into matrix multiplication: the property $\rho(g_1\star g_2) = \rho(g_1)\cdot\rho(g_2)$ means that $\rho$ is a \emph{structure-preserving map} from the group into the space of invertible matrices.
This ``linearizes'' the group, enabling the use of linear algebraic tools.
While irreducible representations for Abelian groups are one-dimensional, which recovers the scalar Fourier characters $\rho_k(g) \in \mathbb{C}$ from \S\ref{sec:patterns_modular}, non-Abelian groups necessitate irreps with $d_\rho > 1$, where the Fourier coefficients become $d_\rho$-by-$d_\rho$ complex matrices.

\begin{figure}[!h]
  \centering
  \begin{minipage}[t]{0.49\textwidth}
    \centering
    \includegraphics[width=0.49\linewidth]{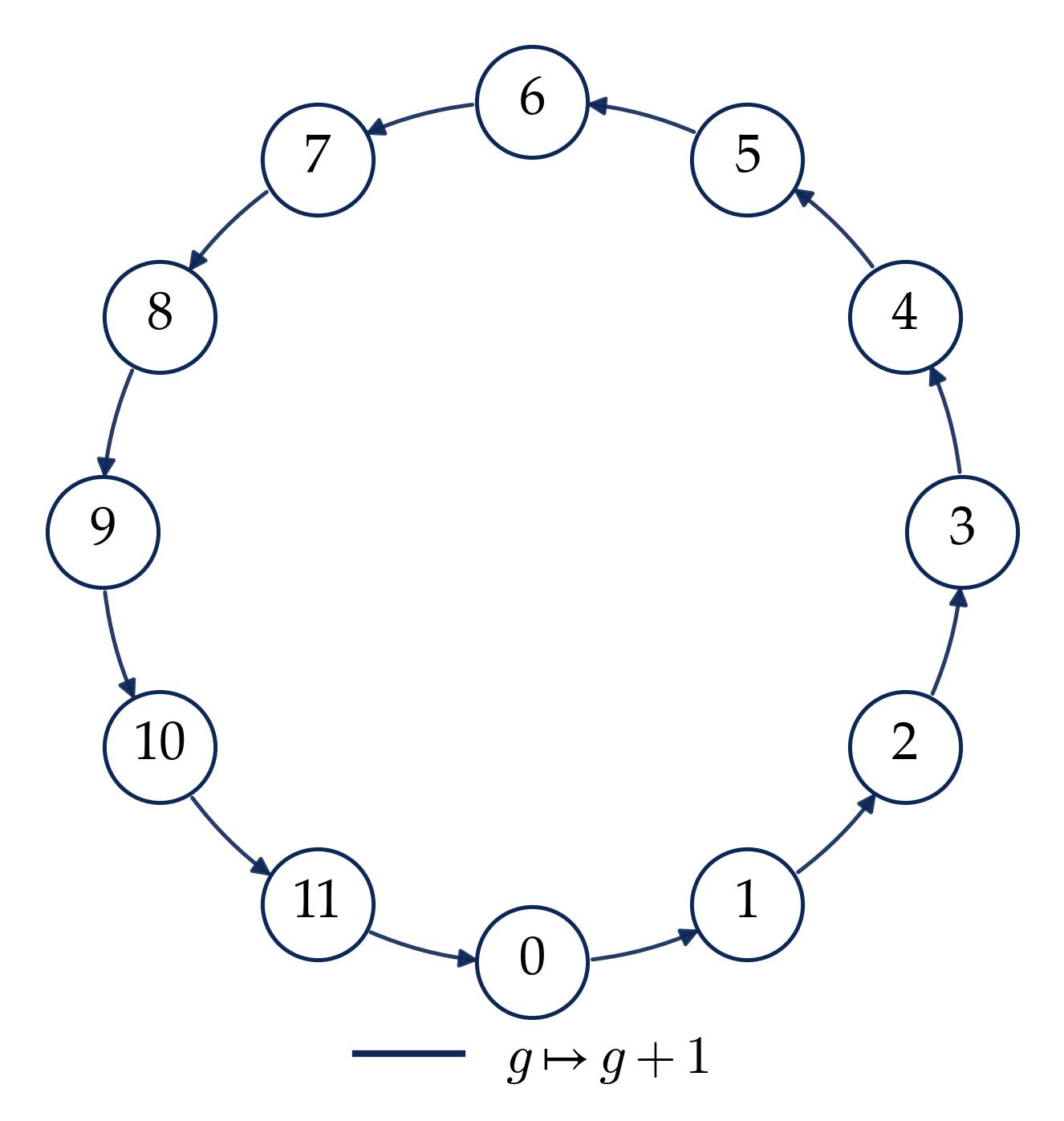}
    \hfill
    \includegraphics[width=0.49\linewidth]{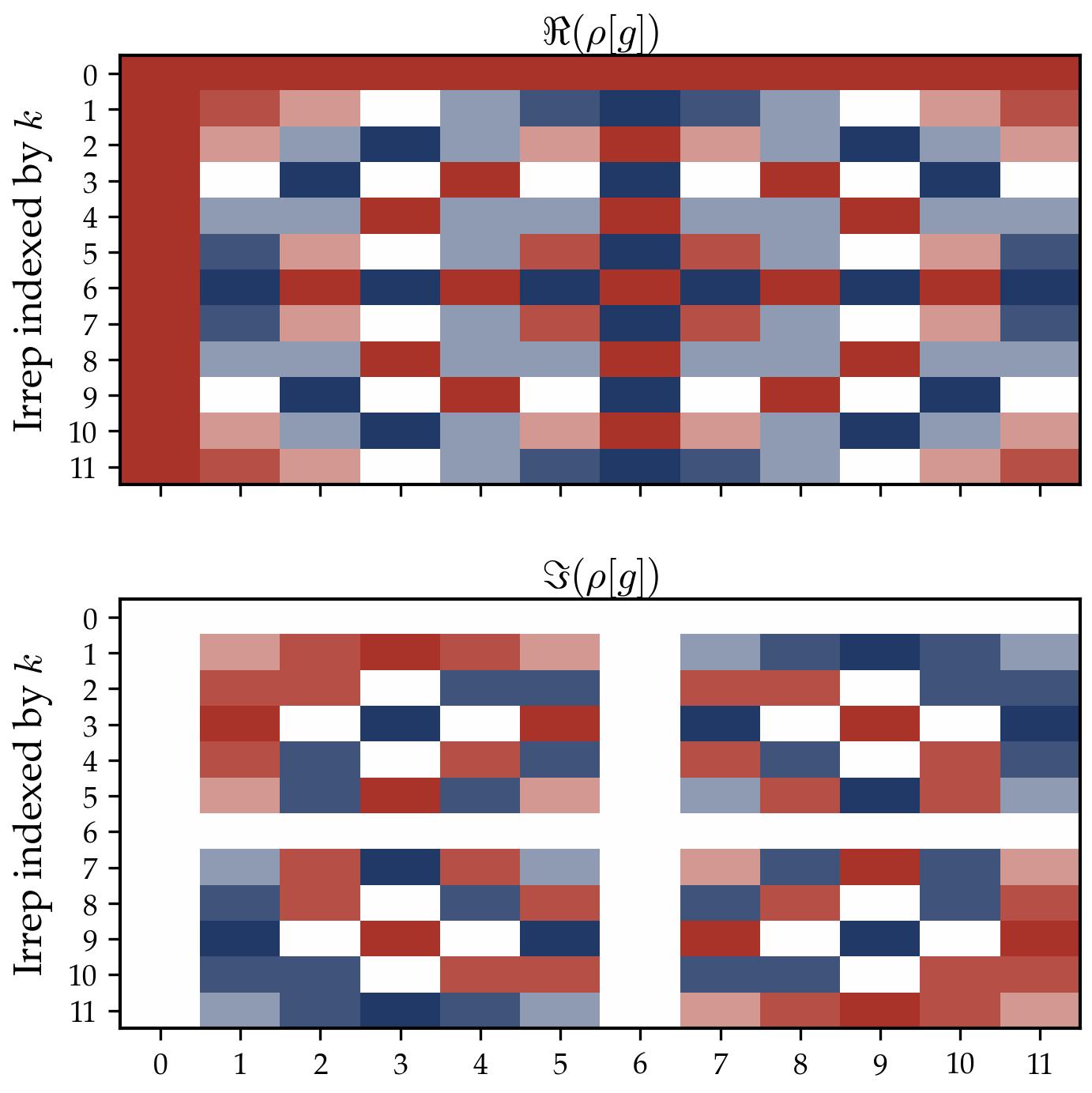}
    \subcaption{Abelian Group: Cyclic Group $\ZZ_{12}$.}
    \label{fig:abelian_irreps}
  \end{minipage}
  \hfill
  \begin{minipage}[t]{0.49\textwidth}
    \centering
    \includegraphics[width=0.49\linewidth]{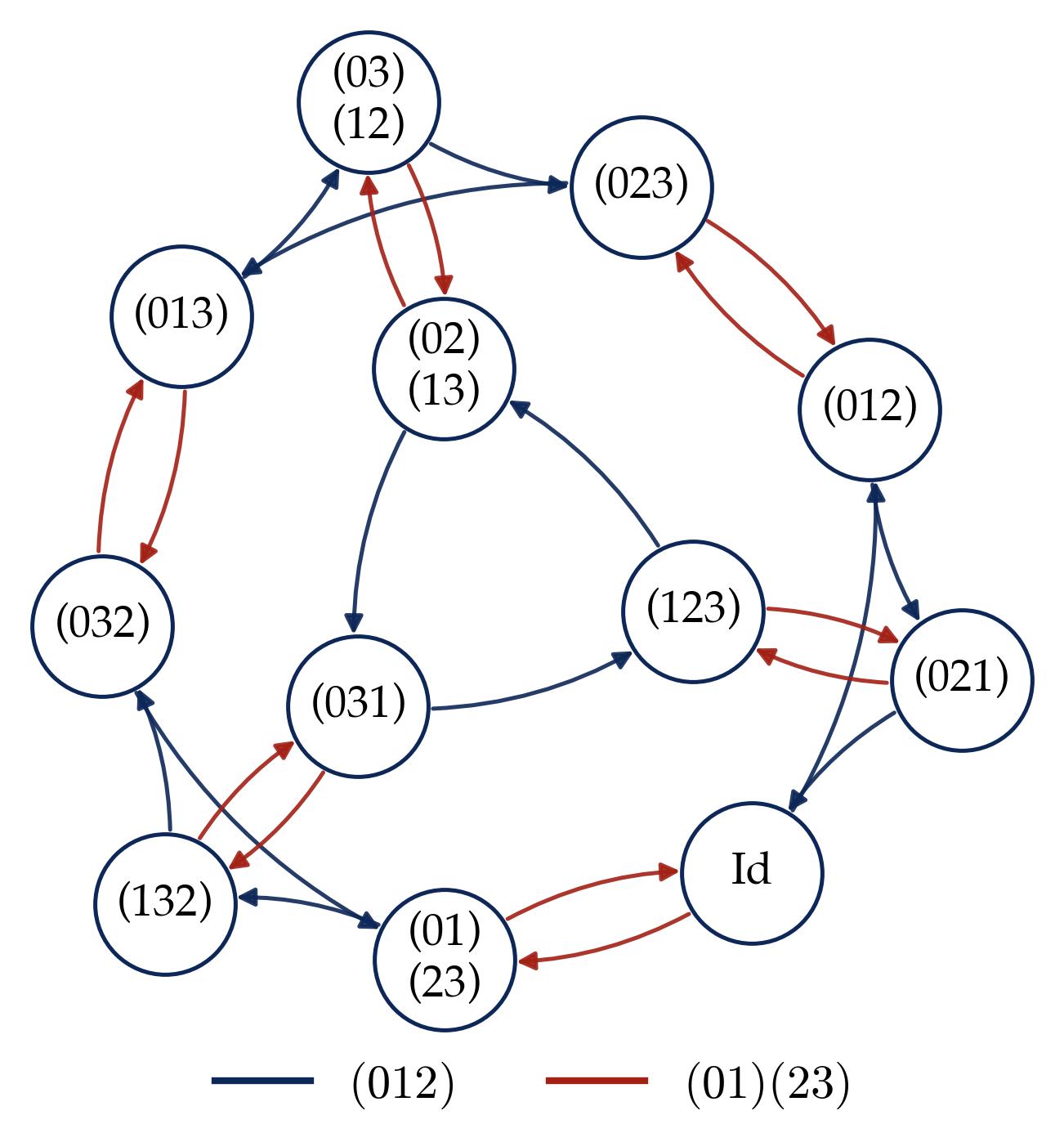}
    \hfill
    \includegraphics[width=0.49\linewidth]{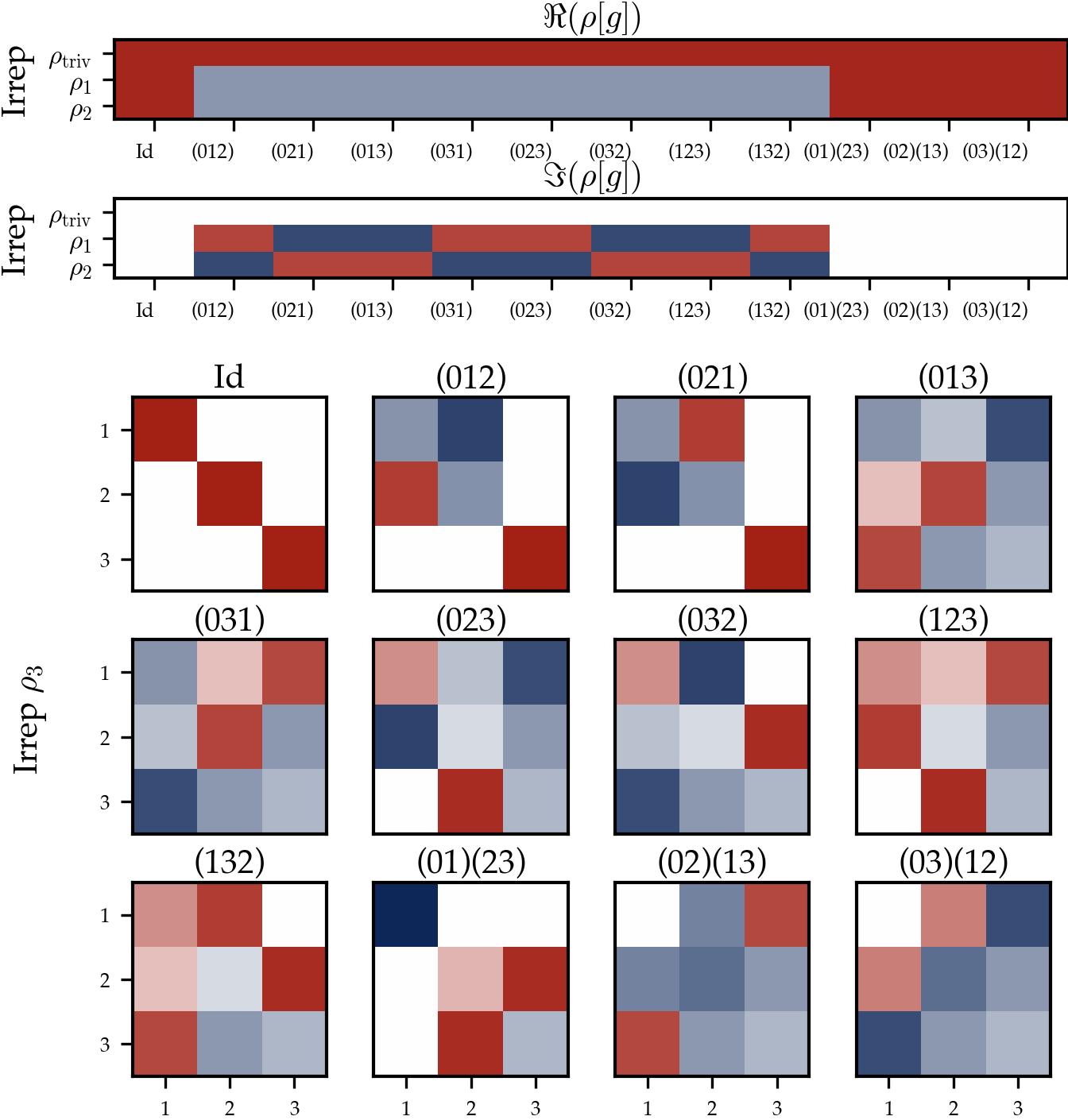}
    \subcaption{Non-abelian Group: Alternating Group $A_4$.}
  \end{minipage}
  \caption{Visual introduction to group structure and spectral representations. In each panel, the Cayley graph (left) illustrates the group's algebraic structure, where nodes represent unique group elements and edges denote the action of specific generators. The spectral basis heatmaps (right) visualize the irreducible representations. While $\mathbb{Z}_{12}$ is characterized by twelve 1D irreps, $A_4$ exhibits a more complex spectral structure, including a 3D irrep. The heatmaps clearly reveal the underlying symmetry and conjugate relationships inherent within the irrep decomposition.}
\end{figure}

\paragraph{Discrete Fourier Transform.}
The orthogonality of irreducible representations, i.e., property (iii) in Definition \ref{def:group}, constitutes the foundation of harmonic analysis on finite groups. It allows us to decompose any function $\nu:G\mapsto\mathbb{R}$ into its spectral components via DFT.
Recall from \S\ref{sec:group_learning} that each weight vector $\nu\in\RR^{|G|}$ is equivalently viewed as a function $\nu:G\to\RR$, placing it in $L^2(G)$.
The Fourier coefficients are defined as:
\begin{align*}
\hat{\nu}[\rho]=\frac{1}{|G|}\sum_{g\in G} \nu(g)\rho(g^{-1})\in \CC^{d_\rho \times d_\rho}, \qquad\forall \rho\in\irr(G).
\end{align*}
According to the Fourier inversion theorem \citep[see, e.g.,][]{terras1999fourier}, the original function $\nu$ can be exactly reconstructed from these coefficients via:
\begin{align*}
\nu(g)=
\sum_{\rho\in\irr(G)}
d_\rho\cdot\tr\big(\hat{\nu}[\rho]\rho(g)\big)
=\sum_{\rho\in\irr(G)}\sum_{i,j=1}^{d_\rho}d_\rho\cdot(\hat{\nu}[\rho])_{ij}\cdot\rho_{ji}(g)\in\mathbb{R},\qquad \forall g\in G.
\end{align*}
This is the direct generalization of the DFT in \eqref{eq:def_fourier_transform} for the Abelian group.
While the Abelian Fourier transform decomposes a function into $|G|$ scalar components, the general group DFT decomposes it into blocks of size $d_\rho^2$, with one block per irreducible representation.

\subsection{Stage I: Group Representation Learning}
\label{sec:stage1_general}

Recall from \S\ref{sec:group_learning} that in Stage~I, the scaling factors $a_m$ are held fixed at their initialization value $a>0$, and only the directional parameters $(\theta_m^1,\theta_m^2,\xi_m)$ evolve under projected gradient flow on the unit sphere~\eqref{eq:def_gradient_flow}.
In the sequel, we first approximate the CE loss in the small-logit regime, then state our main convergence result (see Theorem~\ref{thm:converge_point_general_group}), which shows that gradient flow drives each neuron to a single irreducible representation with rank-1 rotational alignment.

\paragraph{Risk Approximation.}
Since the scaling factor $a$ is chosen to be sufficiently small (see \S\ref{sec:group_learning}), the network logits $f_{\sf NN}(g_1,g_2;\Theta)$ remain close to zero during early training.
In this small-logit regime,
we apply the 
Taylor expansion $\log(\sum_{j=1}^n\exp(s_j))\approx\log n+n^{-1}\sum_{j=1}^ns_j$ to 
the cross-entropy loss \eqref{eq:def_risk}, which yields an approximate loss:
\begin{align}
    \scrR_{\sf ap}(\Theta)=-\sum_{g_1,g_2\in G}f_{\sf NN}(g_1,g_2;\Theta)_{g_1\star g_2}+\frac{1}{|G|}\sum_{g_1,g_2\in G}\sum_{j\in G}f_{\sf NN}(g_1,g_2;\Theta)_j+|G|^2\log |G|.
    \label{eq:def_approx_risk}
\end{align}
Let $\{\theta_m^{1,{\sf ap}},\theta_m^{2,{\sf ap}},\xi_m^{\sf ap}\}_{m=1}^M$ denote the solution to the gradient flow ODE in \eqref{eq:def_gradient_flow} with respect to $\scrR_{\sf ap}(\Theta)$.
The following proposition shows that the trajectories induced by the approximate loss $\scrR_{\sf ap}$ and the original risk $\scrR$ remain close throughout training over any finite time horizon.
\begin{proposition}
\label{prop:dyn_approx}
    Let $(\theta_m^1(t),\theta_m^2(t),\xi_m(t))$ be the solution of the gradient flow ODE in \eqref{eq:def_gradient_flow} associated with the risk $\scrR$ in \eqref{eq:def_risk}.
    Similarly, let $(\theta_m^{1,\sf ap}(t),\theta_m^{2,\sf ap}(t),\xi_m^{\sf ap}(t))$ be the solution of the gradient flow ODE in \eqref{eq:def_gradient_flow} associated with the approximate risk $\scrR_{\sf ap}$ in \eqref{eq:def_approx_risk}.
    Assuming identical initialization $\theta_m^\tau(0)=\theta_m^{\tau,\sf ap}(0)$ and $\xi_m(0)=\xi_m^{\sf ap}(0)$, then for any fixed time $T\in\RR_{\geq0}$, the following bound holds for all time $t\in[0,T]$
\begin{align*}
    &\max_{m\in[M]}\max\big\{\|\theta_m^1(t)-\theta_m^{1,\sf ap}(t)\|_2^2,\|\theta_m^2(t)-\theta_m^{2,\sf ap}(t)\|_2^2,\|\xi_m(t)-\xi_m^{\sf ap}(t)\|_2^2\}\\
    &\qquad\leq a\cdot\big\{\exp\big(\Theta(a|G|^{1/2}M^{-1})\cdot t\big)-1\big\}.
\end{align*}
\end{proposition}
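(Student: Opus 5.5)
This is a Grönwall comparison between two projected gradient flows on the product of spheres. Write $x_m=(\theta_m^1,\theta_m^2,\xi_m)$ and let $F_m(x)=-\Pb^\perp_{x_m}\nabla_{x_m}\scrR$ and $F_m^{\sf ap}(x)=-\Pb^\perp_{x_m}\nabla_{x_m}\scrR_{\sf ap}$ be the two vector fields. Setting $e(t)=\max_m\|x_m(t)-x_m^{\sf ap}(t)\|_2^2$, we have
\[
\tfrac{d}{dt}\|x_m-x_m^{\sf ap}\|^2=2\langle x_m-x_m^{\sf ap},\,F_m(x)-F_m^{\sf ap}(x^{\sf ap})\rangle .
\]
We split the difference of the fields as $[F_m(x)-F_m^{\sf ap}(x)]+[F_m^{\sf ap}(x)-F_m^{\sf ap}(x^{\sf ap})]$. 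The first bracket is a model-mismatch term and the second is a Lipschitz term. The goal is a linear differential inequality of the form $\dot e\le L(e+a)$ with $L=\Theta(a|G|^{1/2}M^{-1})$. Grönwall with $e(0)=0$ then gives $e(t)\le a(e^{Lt}-1)$, which is the stated bound.

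\emph{Step 1 (gradients).} Differentiating \eqref{eq:def_risk} gives
\[
\nabla_{\xi_m}\scrR=\frac{a}{M}\sum_{g_1,g_2}\sigma(\theta^1_{m,g_1}+\theta^2_{m,g_2})\,\big(\smax(f_{\sf NN}(g_1,g_2))-e_{g_1\star g_2}\big).
\]
Analogous chain-rule expressions hold for $\theta_m^\tau$, carrying a factor $\sigma'(\cdot)\,\langle\xi_m,\smax(f_{\sf NN})-e_{g_1\star g_2}\rangle$. For $\scrR_{\sf ap}$ the same formulas hold with $\smax(f_{\sf NN})$ replaced by the uniform vector $\one/|G|$. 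Hence the mismatch is exactly the softmax deviation
\[
\delta(g_1,g_2)=\smax(f_{\sf NN})-\one/|G|.
\]
Every parameter lies on the unit sphere, so each entry is at most $1$ in absolute value and $\sigma(\theta^1_{g_1}+\theta^2_{g_2})\le 4$. It follows that $\|f_{\sf NN}(g_1,g_2)\|_\infty\lesssim a$, and since softmax is Lipschitz, $\|\delta(g_1,g_2)\|_2\lesssim a\,\|\xi\|$-type quantities, i.e. $O(a)$.

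\emph{Step 2 (mismatch term).} Substituting $\delta$ into the gradient formulas bounds the first bracket by $\|F_m(x)-F_m^{\sf ap}(x)\|\lesssim (a/M)\cdot a\cdot(\text{sum over pairs})$. The main obstacle is getting the claimed $|G|^{1/2}$ dependence rather than a crude $|G|^2$. My plan is to exploit the one-hot structure. The $\xi_m$-gradient is a sum of $|G|^2$ vectors, but the relevant mismatch $\sum_{g_1,g_2}\sigma(\cdot)\delta(g_1,g_2)$ should be handled through orthogonality and Cauchy–Schwarz, using $\|\theta^\tau\|_2=1$ so that $\sum_g|\theta_g|\le|G|^{1/2}$. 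For the $\theta$-gradients, the $g_1$-coordinate collects only pairs with that $g_1$. The hope is that after these estimates the constants absorb into $\Theta(a|G|^{1/2}/M)$. If this fails, I would accept a polynomial in $|G|$ hidden in $\Theta(\cdot)$, since $|G|$ is fixed.

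\emph{Step 3 (Lipschitz term and assembly).} The field $F^{\sf ap}$ is linear in the softmax-free objective $\scrR_{\sf ap}$, which is cubic in the parameters and carries a prefactor $a/M$. Its gradient is therefore a polynomial of degree $2$ on a compact set, and the tangent projection $\Pb^\perp_{x}$ is Lipschitz on the sphere. This gives a Lipschitz constant $\lesssim a|G|^{1/2}/M$ in the same normalization as Step 2. One point needs care: $F_m$ depends on the other neurons through $f_{\sf NN}$ only with weight $a/M$ each, so the coupling across $m$ sums to $O(a)$ and $e$, defined as a max over $m$, closes the inequality. Combining the two terms with $2\langle u,v\rangle\le\|u\|^2/c+c\|v\|^2$ and choosing $c$ so that the mismatch contributes $L\cdot a$ yields $\dot e\le L(e+a)$. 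Grönwall then finishes the proof.
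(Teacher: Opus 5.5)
Your overall architecture matches the paper's. It is a Gr\"onwall comparison of the two projected flows, with the field difference split into a same-point mismatch (softmax versus uniform weights) and a Lipschitz term for the approximate field. The paper also isolates the projection-difference piece, but that is cosmetic.

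The gap is in Step~2, which as planned does not give the stated rate. The estimate $\|\delta(g_1,g_2)\|_2=O(a)$, taken from generic Lipschitz continuity of softmax, loses a factor $|G|^{1/2}$, and nothing later recovers it.
\begin{itemize}
\item The $\xi_m$-mismatch is $\frac{a}{M}\sum_{g_1,g_2}\sigma(\theta^1_{m,g_1}+\theta^2_{m,g_2})\,\delta(g_1,g_2)$, and $\sum_{g_1,g_2}\sigma(\cdot)$ is of order $|G|$.
\item The vectors $\delta(g_1,g_2)$ all lie in the same output space and may be aligned. So neither orthogonality nor the one-hot structure helps, and you only get $O(a^2|G|/M)$.
\item Fed into Gr\"onwall, this gives either the rate $\Theta(a|G|/M)$ or the prefactor $a|G|^{1/2}$ in place of $a$.
\end{itemize}
Your fallback of hiding powers of $|G|$ inside $\Theta(\cdot)$ changes the statement. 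The paper's $\Theta$ and $\lesssim$ absorb only constants independent of $a$, $|G|$ and $M$. The explicit $|G|^{1/2}$ is what the paper later uses to read off admissible horizons $t=O(M|G|^{-1/2}a^{-2}\varepsilon)$.

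The missing idea is that near the uniform distribution the softmax deviation gains a factor $1/|G|$.
\begin{itemize}
\item Since $\|f_{\sf NN}(g_1,g_2)\|_\infty\le 4a$, the logit spread is at most $8a$. Lemma~\ref{lem:smax_approx_uniform} then gives $\|\smax(f_{\sf NN})-\mathbf{1}/|G|\|_\infty\le 16a/|G|$. Equivalently, the softmax Jacobian $\mathrm{diag}(p)-pp^\top$ has operator norm at most $\max_j p_j\approx 1/|G|$.
\item Each entry of the $\xi_m$-mismatch is then at most $\frac{a}{M}\cdot 4|G|\cdot\frac{16a}{|G|}$, so its norm is $O(a^2|G|^{1/2}/M)$.
\item For $\theta_m^\tau$, combine $|\langle\delta,\xi_m\rangle|\le 16a|G|^{-1/2}$ with $\sum_{g_2}|\theta^1_{m,g_1}+\theta^2_{m,g_2}|\le|G|\,|\theta^1_{m,g_1}|+|G|^{1/2}$ to get the same order.
\end{itemize}

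The Lipschitz constant $a|G|^{1/2}/M$ in Step~3 also does not follow from ``polynomial on a compact set,'' which gives no control of the $|G|$-dependence. It needs explicit bounds:
\begin{itemize}
\item $\sum_x(\theta^1_x+\theta^2_{x^{-1}\star j})^2\le 4$;
\item $|\sum_g\nu_g|\le|G|^{1/2}$;
\item the $\theta_m^2$-Jacobian of the correct-label part of $\nabla_{\theta_m^1}\scrR_{\sf ap}$ has rows that are permutations of $(2a/M)\,\xi_m$.
\end{itemize}

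Finally, the cross-neuron coupling you flag does not arise in your own split. The mismatch is bounded uniformly, and $F_m^{\sf ap}$ depends only on $x_m$ because $\scrR_{\sf ap}$ is linear in $f_{\sf NN}$. So the inequality closes neuron by neuron, and you need not work with $\max_m$.
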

The proof of Proposition \ref{prop:dyn_approx} is deferred to \S \ref{ap:proof_prop_dyn_approx}. 
This result ensures that the trajectories induced by the approximate risk $\mathscr{R}_{\mathsf{ap}}$ remain uniformly close to those of the original risk $\mathscr{R}$ within a finite time horizon $T$, given that the initialization scale $a$ is sufficiently small.
To interpret this estimate, write $x=\Theta(a|G|^{1/2}M^{-1})\cdot t$. When $x\ll 1$, the Taylor expansion $\exp(x)-1=x+O(x^2)$ gives
$$
a\{\exp(x)-1\}=O(a^2 |G|^{1/2}M^{-1}\cdot t).
$$
{Thus, in this early-time regime, the discrepancy between the exact and approximate gradient-flow trajectories grows at most linearly in time, with slope proportional to $a^2 |G|^{1/2}/M$. Consequently, to keep the squared trajectory discrepancy below a target level $\varepsilon>0$, the linearized estimate permits time horizons of order $t=O(M |G|^{-1/2} a^{-2}\varepsilon)$, provided the corresponding exponent remains small.}

\paragraph{Global Convergence of the Spectral Patterns.}
In view of Proposition~\ref{prop:dyn_approx}, we henceforth analyze the gradient flow under $\scrR_{\sf ap}$ and use $(\theta_m^1,\theta_m^2,\xi_m)$ to replace $(\theta_m^{1,{\sf ap}},\theta_m^{2,{\sf ap}},\xi_m^{\sf ap})$ for brevity.
Moreover, we decompose each parameter into its spectral components via the Group DFT:
$$
\theta_m^{\tau}(g)=\sum_{\rho\in\irr(G)}d_\rho\cdot \tr\big(\widehat{\theta_m^\tau}[\rho]\rho(g)\big),\qquad\xi_m(g)=\sum_{\rho\in\irr(G)}d_\rho\cdot \tr\big(\widehat{\xi_m}[\rho]\rho(g)\big).
$$
We define the orbit of $\rho$ as $\orb(\rho)=\{\rho,\rho^\vee\}$, which reduces to a singleton if $\rho$ is self-dual.

The following theorem establishes that, under random initialization, gradient flow almost surely drives parameters to develop structural spectral patterns analogous to the Abelian case in \S\ref{sec:patterns_modular}.

\begin{theorem}
\label{thm:converge_point_general_group}
Consider the gradient flow \eqref{eq:def_gradient_flow} under the approximate risk $\scrR_{\sf ap}$ in \eqref{eq:def_approx_risk}.
For every neuron $m$, there exists a non-trivial irreducible representation $\check\rho_m \in \irr(G)_{\neq 1}$ such that as $t \to \infty$, almost surely, the following properties hold:
\begin{itemize}
    \item[\rm (i)] \text{(Single Representation)}. For all representation $\rho \in \irr(G) \setminus \orb(\check\rho_m)$, the Fourier coefficients vanish such that $\hat{\nu}[\rho] \to 0_{d_\rho\times d_\rho}$ for any parameter $\nu\in \{\xi_m, \theta_m^1, \theta_m^2\}$. Then, the parameters take the form:
    $$
    \nu(g)=d_{\check\rho_m}\cdot|\orb(\check\rho_m)|\cdot\Re\big(\tr(\hat{\nu}[\check\rho_m]\check\rho_m(g))\big),\qquad\forall\nu\in\{\theta_m^1,\theta_m^2,\xi_m\}.
    $$
    \item[\rm (ii)] \text{(Rank-one Rotational Alignment)}. For  active representations $\rho \in \orb(\check\rho_m)$, the Fourier coefficients of the parameters are of rank one, satisfying 
    $$
    \rank(\hat{\theta}_m^1[\rho]) = \rank(\hat{\theta}_m^2[\rho]) = \rank(\hat{\xi}_m[\rho]) = 1.
    $$
    Furthermore, they exhibit mutual alignment via the following proportionality relations:
    \begin{align}\label{eq:alignment_relations}
    \hat{\xi_m}[\rho]\propto_+\hat{\theta_m^2}[\rho]\, \hat{\theta_m^1}[\rho],\qquad \hat{\theta_m^1}[\rho]\propto_+\big(\hat{\theta_m^2}[\rho]\big)^*\hat{\xi_m}[\rho],\qquad \hat{\theta_m^2}[\rho]\propto_+\hat{\xi_m}[\rho]\,\big(\hat{\theta_m^1}[\rho]\big)^*.
    \end{align}
\end{itemize}
\end{theorem}
Here, ``almost surely'' is with respect to the random Stage~I initialization $(\theta_m^{1}(0), \theta_m^{2}(0), \xi_m(0)) \iid {\rm Unif}(\SSS^{|G|-1})^{\otimes 3}$.
After initialization, the gradient flow is deterministic.
We now discuss the two parts of Theorem~\ref{thm:converge_point_general_group} and explain their meaning.

\begin{itemize}[label=$\bullet$, leftmargin=2em]
    \item \textbf{Discussion of (i): Single Representation.}
    Part~(i) asserts that each neuron $m$ selects exactly one non-trivial irreducible representation $\check\rho_m$ and its dual $\check\rho_m^\vee$, and all other spectral components vanish.
    In other words,  while the parameters could theoretically span all $|\irr(G)|$ representations under random initialization, gradient training drives them to collapse onto a single $\orb(\check\rho_m)$. 
    This extends the single frequency pattern from \S\ref{sec:patterns_modular} of the Abelian group: each neuron learns to encode a specific group representation rather than a generic mixture (see Figure~\ref{fig:general_group_experiment} in \S\ref{sec:simulation}).
    \item \textbf{Discussion of (ii): Rank-one Rotational Alignment.} Part~(ii) reveals two structures within the Fourier coefficients of surviving $\check\rho_m$.
First, despite having $d_{\check\rho_m}$ available degree of freedom,  $\hat\nu[\check\rho_m] \in \CC^{d_{\check\rho_m} \times d_{\check\rho_m}}$ are all rank-one. 
Second, these matrices are mutually aligned according to the (positive) proportionality relations in \eqref{eq:alignment_relations}.
In the Abelian case, Fourier coefficients are scalars, and the alignment reduces to a single phase relation
$
\arg(\hat\xi_m) = \arg(\hat\theta^1_m) + \arg(\hat\theta^2_m) 
$.
Because scalar multiplication is commutative, the remaining relations in \eqref{eq:alignment_relations} are satisfied automatically.
In the non-Abelian case, however, matrix multiplication is non-commutative.
Therefore, these three relations are no longer equivalent and must be characterized separately.
Please refer to Figure~\ref{fig:verification_rotation} and \ref{fig:verification_rank1} for experimental verification.

\end{itemize}

\subsection{Proof Sketch for the Emergence of Spectral Patterns}

For real-valued parameters $\nu \in \{\theta_m^1, \theta_m^2, \xi_m\}$, the projected gradient flow in\eqref{eq:def_gradient_flow} preserves the $L^2(G)$-norm, restricting the optimization trajectory to the product of spheres $\eu M = \SSS^3$, where $\SSS$ is a sphere in $L^2(G)$.
Lifting the dynamics to the Fourier domain via the group DFT reveals that the projected gradient flow of \eqref{eq:def_approx_risk} coincides with the Riemannian gradient ascent of the energy functional (see Lemma~\ref{lem:riemannian-gf}):
\begin{align}
&\Omega_m=\sum_{\rho\in\irr(G)_{\neq1}}d_\rho\cdot\tr\big((\widehat{\xi_m}[\rho])^*\widehat{\theta_m^2}[\rho]\widehat{\theta_m^1}[\rho]\big)\in\RR,\label{eq:def_energy}\\
&\text{subject to~}|G|\cdot\sum_{\rho\in\irr(G)}d_\rho\cdot\tr((\hat\nu[\rho])^*\hat\nu[\rho])=1\text{~for all~}\nu\in\{\theta_m^1,\theta_m^2,\xi_m\}.\notag
\end{align}
This lifting can be viewed as a Riemannian mirror flow, with the group DFT acting as the mirror map between the parameter and Fourier spaces.
We further remark that a broader equivalence can be established via fourier analysis: the approximate risk $\scrR_{\sf ap}$ is equivalent to the negative energy $-\Omega_m$ up to an additive constant and a multiplicative factor, i.e., $\scrR_{\sf ap}(\Theta) = -{2a|G|^2}/{M}\cdot\Omega_m + \text{const}$.
However, as we are only interested in the gradient flow dynamics constrained on $\eu M$, the Riemannian gradient ascent equivalence is sufficient for our analysis.
Without loss of generality, dropping the time-scaling factor $2a|G|^2/M$, we can analyze the dynamics entirely on the spectral manifold via:
$$
\partial_t\hat\Theta_m={\rm grad}_{\eu M}\Omega(\hat\Theta_m)\footnote{While standard literature often uses the gradient descent flow, we consider the gradient ascent flow to reflect the underlying energy maximization mechanism. Parallel results for descent flow are obtained by reversing the sign  of ${\eu F}$.},\qquad \hat\Theta_m=(\hat{\theta_m^1}[\rho],\hat{\theta_m^2}[\rho],\hat{\xi_m}[\rho])_{\rho\in\irr(G)}\in{\eu M}.
$$ 
Here, $\text{grad}_{\eu M} \Omega_m$ denotes the Riemannian gradient on ${\eu M}$, defined as the orthogonal projection of the Fr\'echet derivative $\mathrm{D}\Omega_m$ onto the tangent space. 
The complete proof of Theorem \ref{thm:converge_point_general_group} is provided in  \S\ref{ap:proof_thm_converge_point}, and we provide a proof sketch below.
\paragraph{Step 1: Spectral Dynamics and Equilibrium (\S\ref{ap:step1_setup}).}
    {Because the spectral dynamics are Riemannian gradient ascent of the energy $\Omega_m$ in \eqref{eq:def_energy}, Proposition~\ref{prop:general_group_dyn} gives the following evolution equation for each non-trivial Fourier block, with one coupling term from the other two layers and one projection term enforcing the sphere constraint:}
    \begin{align*}
        \partial_t\hat{\theta_m^1}[\rho]&=\frac{2a|G|}{M}\cdot\underbrace{(\hat{\theta_m^2}[\rho])^*\hat{\xi_m}[\rho]}_{\displaystyle \text{\small driving term}}-\frac{2a|G|^2}{M}\cdot\underbrace{\Omega_m\cdot\hat{\theta_m^1}[\rho]}_{ \displaystyle \text{\small projection term}},\qquad\forall\rho\in\irr(G)_{\neq1}.
    \end{align*}
    {The evolution equations for $\hat{\theta_m^2}[\rho]$ and $\hat{\xi_m}[\rho]$ have the same structure, obtained by cyclically permuting the three layers.}
    The \emph{driving term} couples the per-layer updates via the product of the other two layers' Fourier coefficients, while the \emph{projection term} is proportional to the total energy. 
    Define the set of critical points for this dynamical system as
    ${\rm Crit}(\Omega) = \{ \hat\Theta_m^\dagger\in {\eu M} : {\rm grad}_{\eu M}\Omega(\hat\Theta_m^\dagger) = \mathbf{0} \}$.
    We write $\Omega_m^\dagger:=\Omega_m(\hat\Theta_m^\dagger)$ for the energy value at the equilibrium.
   For the equilibrium points with  $\Omega_m^\dagger > 0$,  solving the equilibrium equations already gives the alignment relations specified in \eqref{eq:alignment_relations}. In the following, we discuss broader critical-point classes and show that, under generic initialization, the flow avoids all classes except the positive-energy rank-one equilibria.

\paragraph{Step 2: Critical Point Classification
    (\S\ref{ap:step2_critical}).}
    We classify the critical points into five cases according to the sign of the equilibrium energy and the representation support of $\hat\Theta_m^\dagger$.
    The appendix proof then rules out the first four cases by two mechanisms: a null-set initialization condition for Cases 1--2, and a strict-saddle argument for Cases 3--4.
    \begin{itemize}[label=$\triangleright$, leftmargin=2em]
        \item \textbf{Cases 1 \& 2: Negative Energy $\Omega_m^\dagger < 0$ and Degenerate Zero-Energy $\Omega_m^\dagger = 0$.}
        
        For the negative-energy equilibria and the zero-energy equilibria with trivial representation, we can show that they are supported only for measure-zero initializations.
        Lemma~\ref{lem:ODE_Delta_1j} first shows that the pairwise norm gaps for any irreps $\rho\in\irr(G)$ between the three Fourier components evolve only by a common scalar factor:
        \begin{equation*}
        \Delta_m[\rho](t) = \Delta_m[\rho](0)\cdot \exp\Big(- \frac{4a|G|^2}{M}\cdot\int_0^t \Omega_m(s)\,{\rm d} s \Big),\qquad\forall t\in\RR_{\geq0}.
        \end{equation*}
        If $\Omega_m^\dagger < 0$, starting from nonzero gaps $\Delta^\tau_m[\rho](0)\neq 0$, if the dynamics converge to negative-energy equilibria, the whole term would unavoidably grow to infinity. 
        By the Parseval's identity, this would imply that the $L^2(G)$-norm of the parameters diverges, which is contradicting the boundedness of the manifold $\eu M$.
        Hence, if the trajectory converges to one of these equilibria, the relevant gaps must already vanish at initialization.
        Lemmas~\ref{lem:Omega<0} and~\ref{lem:Omega=0_trivial} formalize this implication: convergence to these equilibria in requires $\hat{\Theta}_m(0)$ to lie in the exceptional set
        \[
        {\eu M}_{\sf init}
        =
        \big\{
        \hat{\Theta}_m\in{\eu M}:
        \|\hat{\theta_m^2}[\rho]\|_{\rm F}^2
        =
        \|\hat{\theta_m^1}[\rho]\|_{\rm F}^2
        =
        \|\hat{\xi_m}[\rho]\|_{\rm F}^2,
        \quad
        \forall\rho\in\irr(G)
        \big\}.
        \]
        Since ${\eu M}_{\sf init}$ is a proper submanifold of ${\eu M}$ and therefore has Riemannian volume zero.
         
        \item \textbf{Cases 3 \& 4: Higher-Rank $\Omega_m^\dagger>0$ and Non-Degenerate $\Omega_m^\dagger=0$.}
        
        These cases are excluded by the Hessian analysis.
        For positive-energy equilibria, Lemma~\ref{lem:equilib-svd} shows that the equilibrium equations force the three Fourier coefficients at each representation $\rho$ to share the same rank $r_\rho$ and an aligned partial-isometry factorization.
        If $\sum_{\rho \in\irr(G)_{\neq1}} r_\rho \geq 2$, then Lemma~\ref{lem:rank-1-saddle} constructs an explicit tangent direction $\Xi_m$ on the tangent space and acting as a positive-eigenvalue direction of $\Hess_{\eu M}\Omega(\hat{\Theta}_m^\dagger)$, i.e., $\Hess_{\eu M}\Omega(\hat{\Theta}_m^\dagger)[\Xi]\propto_+\Xi$ where $\Hess_{\eu M}\Omega(\cdot):T_{(\cdot)}{\eu M}\mapsto T_{(\cdot)}{\eu M}$ denotes the Hessian operator.
        For zero-energy equilibria with at least one active non-trivial representation, Lemma~\ref{lem:triple-annihilation-structure} shows that the equilibrium equations become triple-annihilation relations. 
        This gives a mutually orthogonal block structure and again yields a tangent direction with strictly positive Hessian eigenvalue.
        Therefore, all critical points in Cases 3--4 are \emph{strict saddles}\footnote{In this paper, strict saddle points include local maximizers.} in the set defined by
        $$
        {\rm Sad}(\Omega) = \{ \hat\Theta_m^\dagger \in {\rm Crit}(\Omega) : \lambda(\Hess_{\eu M}\Omega(\hat{\Theta}_m^\dagger)) \cap \CC_+ \neq \emptyset \},
        $$

        \item \textbf{Case 5: Rank-One $\Omega_m^\dagger>0$.}
        With Cases 1-4 excluded, the only remaining equilibria have positive energy and a single active non-trivial irrep $\check\rho_m$ with $r_{\check\rho_m} = 1$. These are precisely the single-representation, rank-one patterns with positive proportion characterized in Theorem~\ref{thm:converge_point_general_group}.
    \end{itemize}

    \paragraph{Step 3: Saddle Avoidance (\S\ref{ap:step3_saddle}).}
    We extend the saddle-avoidance results for first-order methods \citep{lee2019first} to continuous Riemannian gradient flow by leveraging an analogous version of the center-stable manifold theorem in \citep{shub2013global}. The main statement is formalized as follows.
    \begin{lemma}[Informal]
    \label{thm:riemannian-gf-escape_informal}
    Let ${\eu M}$ be a compact Riemannian manifold, and let ${\eu F}\in C^2({\eu M})$. Consider the gradient flow with a random initialization:
    \[
    \partial_t x(t)=\grad_{{\eu M}} {\eu F}(x(t)),\qquad x(0)=X_0\sim{\sf P}_0
    ,
    \]
    where ${\sf P}_0$ is absolutely
    continuous with respect to the Riemannian volume measure.
    For each $t\in\mathbb R$, let $\phi_t(x)$ denote the value at time $t$ of the solution starting from $x$.
    Define the global stable set
    \[
    W^s:=\{x\in {\eu M}:\exists\, p\in {\rm Sad}({\eu F}),~ \phi_t(x)\to p \text{ \rm~as~ } t\to\infty\}.
    \]
    Then $W^s$ has zero Riemannian volume, and thus $\mathbb P(X_0\in W^s)=0$.
\end{lemma}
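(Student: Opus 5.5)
The plan is to run the classical Lee et al.\ argument in continuous time, replacing the iterated gradient map by the time-one flow map $\phi_1$. Since ${\eu M}$ is compact and ${\eu F}\in C^2$, the vector field $\grad_{\eu M}{\eu F}$ is $C^1$ and globally integrable. Hence $\{\phi_t\}_{t\in\RR}$ is a one-parameter group of $C^1$ diffeomorphisms. Its differential solves the variational equation $\partial_t D\phi_t = \nabla\grad_{\eu M}{\eu F}(\phi_t)\,D\phi_t$. By Liouville's formula, $\det D\phi_t = \exp\big(\int_0^t \mathrm{div}\,\grad_{\eu M}{\eu F}(\phi_s)\,ds\big) \neq 0$. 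Therefore each $\phi_t$ maps Riemannian-null sets to null sets in both directions. At a critical point $p$ we have $D\phi_1(p)=\exp(\Hess_{\eu M}{\eu F}(p))$. Because the Hessian is self-adjoint, its eigenvalues are real. For $p\in{\rm Sad}({\eu F})$ there is some eigenvalue $\lambda>0$, so $D\phi_1(p)$ has an eigenvalue $e^\lambda>1$ and the unstable subspace $E^u_p$ has dimension at least one.

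The first step is to apply the center-stable manifold theorem for $C^1$ local diffeomorphisms (Shub, Thm.~III.7) to $\phi_1$ at each saddle $p$. This yields a ball $B_p$ and an embedded disc $W^{cs}_{\rm loc}(p)$ of dimension $\dim{\eu M}-\dim E^u_p<\dim{\eu M}$, which is therefore of volume zero. Every point whose forward $\phi_1$-orbit stays in $B_p$ lies in this disc. Now take $x$ with $\phi_t(x)\to p$. Then there is an integer $N$ such that $\phi_n(x)\in B_p$ for all $n\ge N$, so $\phi_N(x)\in W^{cs}_{\rm loc}(p)$. Consequently $x\in\bigcup_{n\ge0}\phi_{-n}(W^{cs}_{\rm loc}(p))$, and each set in this union is null by the Liouville step.

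The main obstacle is that ${\rm Sad}({\eu F})$ may be uncountable. In our application the saddles form continuous families, for example orbits under the unitary rotations of each Fourier block. A union of null sets indexed by saddles is therefore not automatically null. I would handle this as Lee et al.\ do, using Lindelöf's lemma. The open cover $\{B_p\}_{p\in{\rm Sad}}$ of ${\rm Sad}({\eu F})$ has a countable subcover $\{B_{p_i}\}$, since ${\eu M}$ is second countable. Any $x\in W^s$ converges to some saddle $p$, and $p$ lies in some $B_{p_i}$. The tail of the orbit of $x$ then eventually stays in $B_{p_i}$, because the limit point is interior and we may shrink the balls so that the limit is interior. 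The center-stable theorem at $p_i$ covers every orbit that remains in $B_{p_i}$, whatever its limit. Hence
\[
W^s\subseteq \bigcup_{i\ge1}\bigcup_{n\ge0}\phi_{-n}\big(W^{cs}_{\rm loc}(p_i)\big),
\]
which is a countable union of null sets. The delicate check is that Shub's theorem gives a single neighborhood valid for all orbits confined to it. I would verify this carefully, including the requirement that $D\phi_1(p_i)$ be invertible, which holds because it is an exponential.

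Finally, since ${\sf P}_0$ is absolutely continuous with respect to the Riemannian volume, $\mathbb P(X_0\in W^s)=0$. In the application, ${\eu M}$ is a product of spheres, and the uniform initialization is pushed forward by the DFT. The DFT is a linear isometry up to scaling, so the pushed-forward initialization is still absolutely continuous on the spectral manifold.
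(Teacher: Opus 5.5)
Your proposal is correct and follows essentially the same route as the paper. At each strict saddle, a center-stable manifold theorem gives a lower-dimensional (hence volume-zero) local manifold that captures every orbit confined to a fixed neighborhood. A Lindel\"of countable subcover handles the possibly uncountable saddle set, and pulling back by $\phi_{-N}$ covers $W^s$ by a countable union of null sets.

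The differences are cosmetic. You apply Shub's discrete-time Theorem~III.7 to the time-one map $\phi_1$, using $D\phi_1(p)=\exp(\mathrm{Hess}\,F(p))$, and you get null-set preservation from Liouville's formula. The paper instead invokes the flow version of the theorem and simply uses that each $\phi_{-N}$ is a diffeomorphism.
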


The proof proceeds as follows.
For a strict saddle $p$, the tangent space can be decomposed as $T_p{\eu M}=E_p^{\sf sc}\oplus E_p^{\sf u}$, where $E_p^{\sf sc}$ is the eigenspace corresponding to non-positive Hessian eigenvalues, and $E_p^{\sf u}$ to strictly positive eigenvalues.
The center-stable manifold theorem (see Theorem~\ref{thm:shub-local-stable}) then guarantees the existence of a local center-stable manifold $W_{\rm loc}^{\sf sc}(p)$  and a neighborhood $U_p$ such that
\begin{align}
{\rm (i)}~\dim(W_{\rm loc}^{\sf sc}(p))=\dim(E_p^{\sf sc}),\qquad {\rm (ii)~}\phi_t(x)\in U_p\text{~for all~}t\in\RR_{\geq0}~\Rightarrow~x\in W^{\sf sc}_{\rm loc}(p).
\label{eq:sc_manifold}
\end{align}
For any saddle point $p\in{\rm Sad}({\eu F})$, by definition, we have $\dim(W_{\rm loc}^{\sf sc}(p))<\dim({\eu M})$, and hence has zero Riemannian volume.
Next, we extract a countable subcover $\{U_{p_j}\}_{j=1}^\infty$ such that ${\rm Sad}({\eu F})\subseteq \bigcup_{i=1}^\infty U_{p_i}$.
Since any point $w\in W^{\sf s}$ converges to a saddle in some $U_{p_j}$, its forward trajectory must eventually remain within $U_{p_j}$ for all sufficiently large times. 
By property (ii) in \eqref{eq:sc_manifold}, there exists an integer $N\in\NN_{\geq0}$ such that $\phi_N(w)\in W_{\rm loc}^{\sf sc}(p_j)$. 
Hence, we can get a countable cover of $W^{\sf s}$ by
$$
W^{\sf s}\subseteq \bigcup_{j=1}^\infty\bigcup_{N=0}^\infty\phi_{-N}(W^{\sf sc}_{\mathrm{loc}}(p_j)),
$$
where every piece is a smooth image of a volume zero set. Therefore, the global stable set of saddle points $W^{\sf s}$ has Riemannian volume zero, which yields the desired result.

\begin{figure}[!ht]
  \centering
  \begin{minipage}[t]{0.45\textwidth}
    \includegraphics[width=1\linewidth]{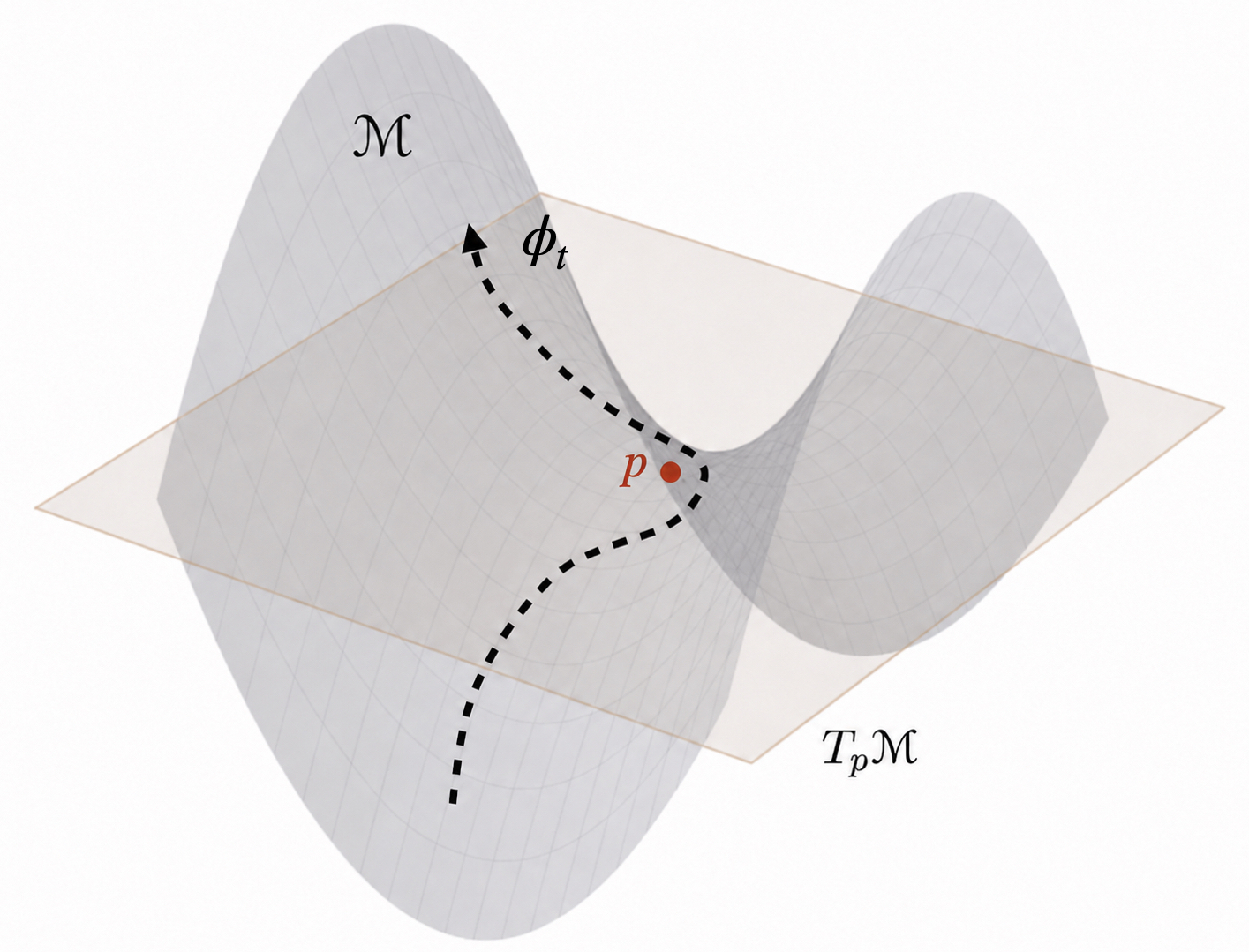}
    \subcaption{Manifold, Tangent Space and Flow.}
  \end{minipage}
  \begin{minipage}[t]{0.45\textwidth}
    \includegraphics[width=1\linewidth]{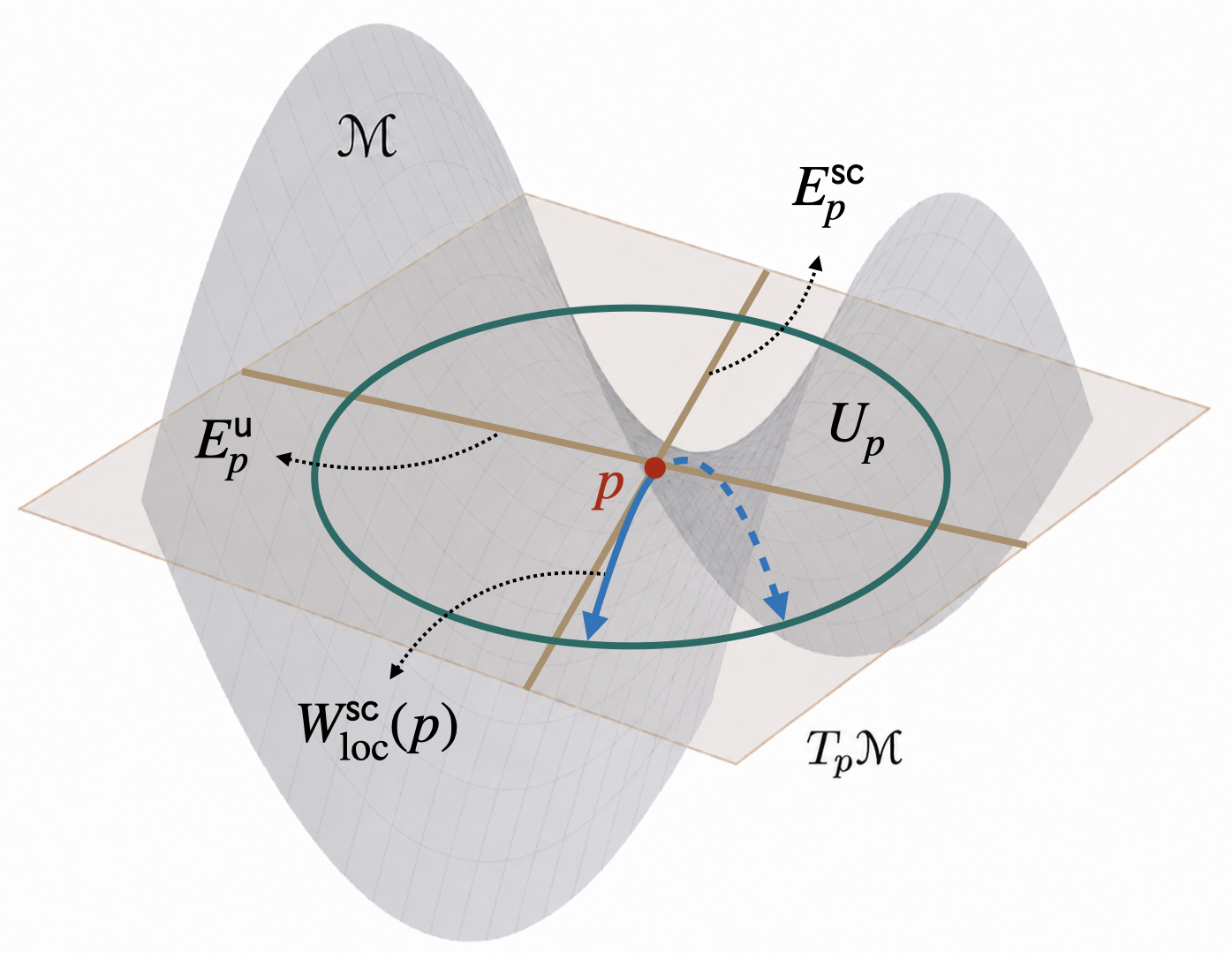}
    \subcaption{Local Stable-Center Manifold.}
  \end{minipage}
  \caption{Illustration of the geometric concepts used in  the center-stable manifold theorem for the saddle-avoidance argument.
  \textbf{(a)} The Riemannian gradient flow evolves intrinsically on the manifold $\eu M$.
  \textbf{(b)} Near a strict saddle $p$, the tangent space decomposes as
  $T_p\eu M=E_p^{\sf sc}\oplus E_p^{\sf u}$, where $E_p^{\sf sc}$ contains the non-expanding directions and $E_p^{\sf u}$ contains the non-expanding directions.
  The center-stable manifold theorem yields a local center-stable manifold
  $W^{\sf sc}_{\rm loc}(p)$ inside a neighborhood $U_p$ around $p$.
  }
\end{figure}

\paragraph{Step 4:  All Critical Points in Cases 1-4 are Avoided Almost Surely (\S\ref{ap:proof_thm41_assembly}).}
\begin{itemize}[label=$\triangleright$, leftmargin=2em]
    \item \textbf{Cases 1 \& 2.}  Convergence to these equilibria occurs only if the initial state lies within ${\eu M}_{\sf init}$ as established in Step 2. Since ${\eu M}_{\sf init}$ is a set of zero Riemannian volume, these points are reached with probability zero under any absolutely continuous random initialization.
    \item \textbf{Cases 3 \& 4.} By Theorem~\ref{thm:riemannian-gf-escape_informal} and the analysis in Step 3, the Riemannian gradient flow almost surely escapes all strict saddle points. Consequently, the dynamics avoid these higher-rank and non-degenerate equilibria with probability one.
\end{itemize}
After excluding these measure-zero basins and unstable equilibria, the only remaining attractors are the positive-energy equilibria with a total effective rank of one in Case 5.


\subsection{Further Discussion on Representation Selection in Stage I}
\paragraph{Energy Maximization: A Landscape Perspective.}
    By definition, the unit sphere constraint gives that for any $\nu\in\{\theta_m^1,\theta_m^2,\xi_m\}$, it holds that
    $\sum_{\rho\in\irr(G)}d_\rho\cdot\|\hat\nu[\rho]\|_{\rm F}^2=\|\nu\|_{L_2(G)}^2=|G|^{-1}\cdot\|\nu\|_2=|G|^{-1}$,
    where the first equation results from Plancherel theorem (Lemma \ref{lem:l2G_inner_product_dft}).
    By the Cauchy--Schwarz inequality applied to the trace and the spherical constraint on each parameter, the energy contribution of any single learned representation $\check\rho_m\in\irr(G)$ satisfies
$$
\Omega_m^{\check{\rho}_m}:=d_{\check\rho_m}\cdot|\orb(\check\rho_m)|\cdot\Re\big(\tr\big(\widehat{\xi_m}[\check\rho_m]^*\widehat{\theta_m^2}[\check\rho_m]\widehat{\theta_m^1}[\check\rho_m]\big)\big)\leq(d_{\check\rho_m}\cdot|\orb(\check\rho_m)|)^{-1/2}\cdot|G|^{-3/2}.
$$
From a purely energetic perspective, one might expect a systematic bias favoring low-dimensional representations, as they are capable of attaining higher energy relative to their high-dimensional counterparts, i.e., $\Omega_m^{\rho}=O(d_{\rho}^{-3/2})$.
However, the empirical results do not support this.

\begin{figure}[!h]
  \centering
  \begin{minipage}[t]{0.49\textwidth}
    \centering
    \includegraphics[width=\linewidth]{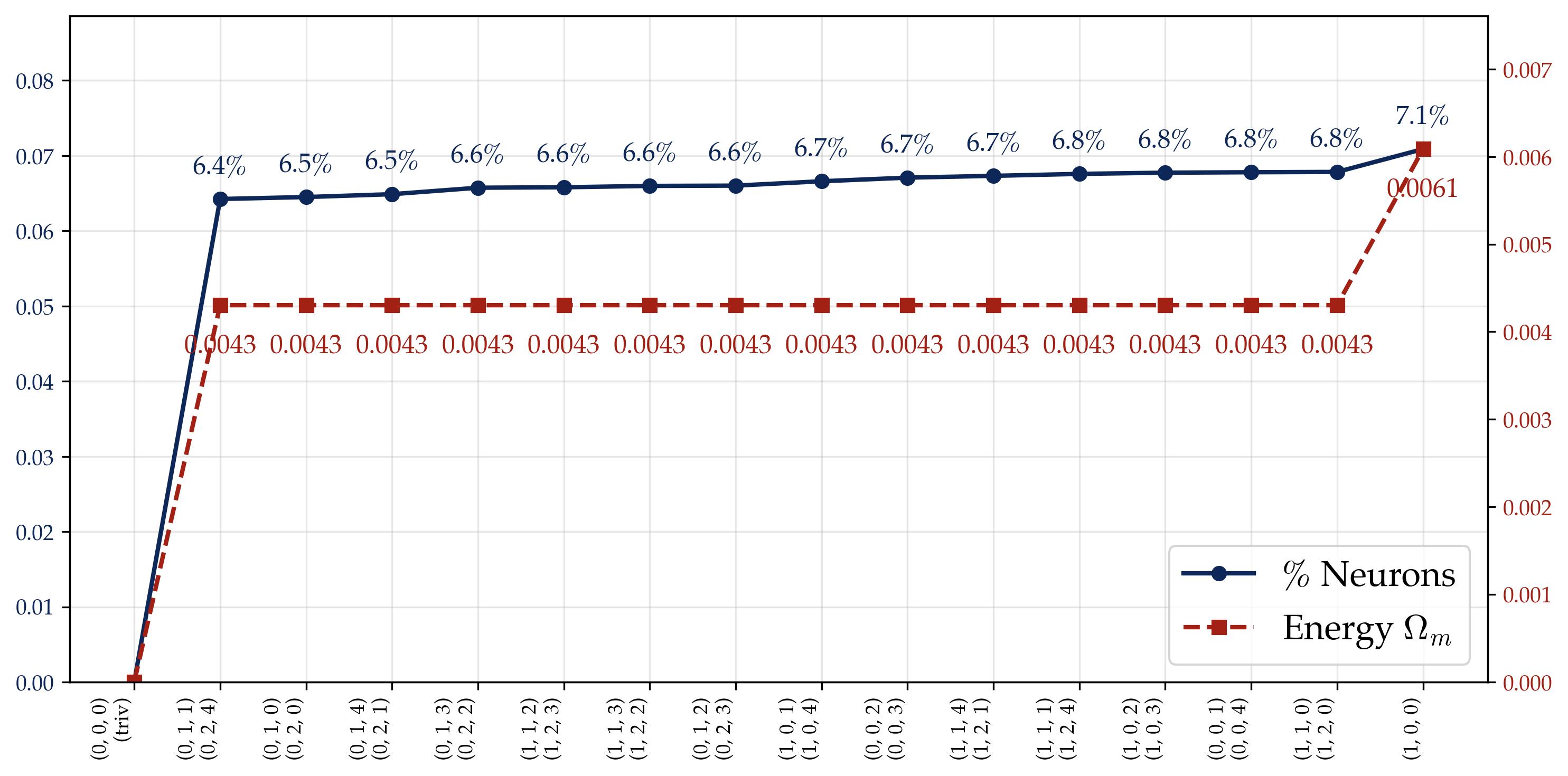}
    \subcaption{Distribution of Irreps for Cyclic Group $\ZZ_2\oplus\ZZ_3\oplus\ZZ_5$.}
  \end{minipage}
  \begin{minipage}[t]{0.49\textwidth}
    \centering
    \includegraphics[width=\linewidth]{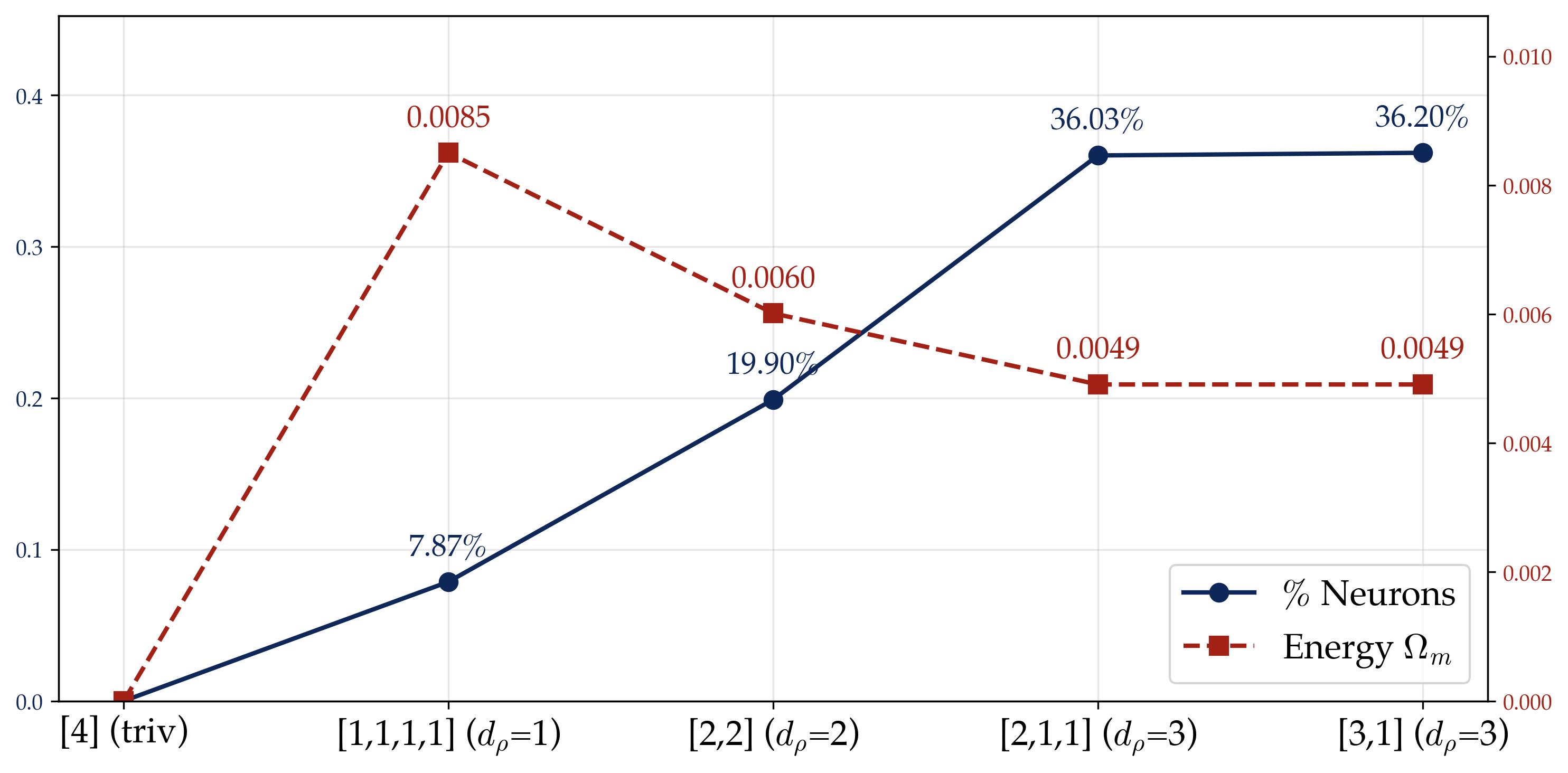}
    \subcaption{Distribution of Irreps for Symmetric Group $S_4$}
  \end{minipage}
  \caption{Distribution of learned representations under the random initialization on unit sphere and the corresponding energy levels in a sufficiently wide network for different groups.}
  \label{fig:irrep_distribution}
\end{figure}

\paragraph{Geometric Selection: A Dynamical Perspective.}
As illustrated in Figure \ref{fig:irrep_distribution}, low-dimensional irreps, despite their higher potential energy, actually exhibit a lower learned proportion.
In fact, the learning proportion is positively correlated with dimension $d_\rho$.
We observe empirically that the selection of learned irreps is governed by the initialization geometry rather than the energy landscape.
Under a random initialization on the unit sphere, the system is statistically more inclined to fall into a basin of attraction that favors higher-dimensional irreps.
Note that
$$
\|\hat \nu[\rho](0)\|_{\rm F}^2 = \sum_{i,j=1}^{d_\rho}\big|(\hat \nu[\rho](0))_{ij}\big|^2 = \frac1{d_\rho}\sum_{i,j=1}^{d_\rho}\big|\langle \nu(0),\sqrt{d_\rho}\rho_{ji}\rangle_{L^2(G)}\big|^2.
$$
Since $\{\sqrt{d_\rho}\,\rho_{ij}(\cdot)\in\CC:\rho\in\irr(G),\; i,j=1,\dots,d_\rho\}$ is an orthonormal basis and $\nu(0)\sim \text{Unif}(\SSS^{|G|-1})$, the projections $|\langle \nu(0),\sqrt{d_\rho}\,\rho_{ji}\rangle_{L^2(G)}|^2$ are identically distributed.
Thus, the typical magnitude of $\|\hat \nu[\rho](0)\|_{\rm F}^2$ scales proportionally to $d_\rho$.
This initial scale advantage is amplified by the competition dynamics, rendering higher-dimensional irreps more likely to become dominant.

\subsection{Stage II: Growth of Scaling Parameter}
\label{sec:stage2_scale}
In Stage~II, the directional parameters $(\theta_m^1,\theta_m^2,\xi_m)$ are frozen at their Stage~I values and only the scaling factors $a_m$ are optimized.
For theoretical tractability, we tie these factors $a_m$, reducing the Stage~II dynamics to a single scalar ODE for $a(t)$. While this simplification ensures equal magnitude contributions for our analytical convenience, experiments demonstrate that the network converges successfully when scaling factors are optimized independently.
For a sufficiently wide network, we analyze the dynamics in the mean-field limit.
Under the tied constraint, the network output \eqref{eq:def_nn_logit} becomes  $f_{\sf NN}(g_1,g_2;\Theta) = a\cdot\euF_{\sf NN}^{\hat\mu}(g_1,g_2)$, where the \emph{mean-field predictor} $\euF_{\sf NN}^{\hat\mu}$ is defined by
$$
\euF_{\sf NN}^{\hat\mu}(g_1,g_2):=\frac{1}{M}\sum_{m=1}^M\xi_m\cdot\sigma\big(\langle \theta_m^{1},e_{g_1}\rangle+\langle \theta_m^{2},e_{g_2}\rangle\big)=\int\xi\cdot\sigma\big(\langle \theta^{1},e_{g_1}\rangle+\langle \theta^{2},e_{g_2}\rangle\big)\,\rd\hat\mu(\theta^1,\theta^2,\xi),
$$
with $\hat\mu:=M^{-1}\sum_{m=1}^M\delta_{(\xi_m,\theta_m^1,\theta_m^2)}$representing the empirical measure over the frozen directional parameters. In this stage, the functional direction of the predictor is fixed by the spectral structures learned in Stage~I, leaving the scalar $a(t)$ as the sole trainable parameter.

\paragraph{Sufficient Condition for Scale Growth.}
While Stage~I establishes the correct spectral structure, the loss remains large due to the small $a$. 
To achieve zero loss, it suffices that the mean-field predictor $\euF_{\sf NN}^{\hat\mu}$ already assigns the highest logit to the correct group product for every input pair; growing $a$ then sharpens the softmax toward the correct label.
Let $\mu$ denote the limiting measure such that $\hat{\mu} \to\mu$ as $M\to\infty$.
We define the \emph{perfect accuracy} condition with respect to $\mu$ as
\begin{align}
\euF_{\sf NN}^\mu(g_1,g_2)_{g_1\star g_2}>\max_{j\in G\backslash\{g_1\star g_2\}}\euF_{\sf NN}^\mu(g_1,g_2)_j,\qquad\forall g_1,g_2\in G.\tag{$\mu$-{\sf PA}}
\label{eq:def_pa}
\end{align}
In words, \eqref{eq:def_pa} requires that the mean-field predictor already aligns with the ground truth, i.e., $\argmax_{j\in G}\euF_{\sf NN}^\mu(g_1,g_2)_j = g_1\star g_2$ for every input pair.
Since $G$ is finite, the strict inequality over finitely many pairs automatically implies a positive logit margin.

We prove \eqref{eq:def_pa} for Abelian groups in Theorem~\ref{thm:perfect_accuracy_modular}, while experiments confirm it also holds for non-Abelian cases.
Under this condition, we establish the following convergence result.
\begin{theorem}
\label{thm:stage2_scale_growth}
Suppose the condition \eqref{eq:def_pa} holds and the neurons' scales are tied such that $a_j = a$ for all $j \in [M]$.
For any $\delta \in (0, 1)$, if $M \gtrsim \log(|G|^3/\delta)$, then with probability at least $1-\delta$ the following hold:
\begin{itemize}
    \setlength{\itemsep}{-1pt}
    \item[(i)] (Logarithmic Scale Growth). The shared scale satisfies $a(t)\gtrsim\log (1+|G|\cdot(|G|-1)\cdot t)$ for $t\in\RR_{\geq0}$.
    \item[(ii)]  (Loss Convergence). For any $\epsilon > 0$, the cross-entropy loss satisfies $\scrR(\Theta(T))\leq\epsilon$ provided the training time $T\gtrsim|G|/\epsilon\cdot(1+(|G|-1)^{-2})$.
\end{itemize}
\end{theorem}
The proof of Theorem \ref{thm:stage2_scale_growth} is deferred to \S\ref{ap:growth_stage2}.
Together, Theorems~\ref{thm:converge_point_general_group} and~\ref{thm:stage2_scale_growth} establish a complete two-stage mechanism: Stage~I learns the correct spectral structure, i.e., single representation, rank-1 alignment, and Stage~II amplifies the scale $a(t)$ logarithmically, sharpening the softmax and driving the cross-entropy loss to zero at rate $O(1/T)$.
The convergence rate in Theorem~\ref{thm:stage2_scale_growth} depends on the logit margin implied by \eqref{eq:def_pa}, which is entirely determined by Stage~I and absorbed into the $\gtrsim$ notation.
We remark that this dynamics closely mimics the implicit bias of gradient flow training on separable data under exponential-type losses \citep[e.g.,][]{soudry2018implicit}.

\begin{tcolorbox}[breakable, colback=gray!5, colframe=black!70, left=2mm, right=2mm, top=1.5mm, bottom=1.5mm]
\textbf{Takeaway for General Group.}
For any finite group $G$, training provably proceeds in two stages:
\begin{itemize}[label=$\triangleright$, leftmargin=2em]
    \setlength{\itemsep}{-1pt}
    \item \textit{Stage~I} (Theorem~\ref{thm:converge_point_general_group}): The gradient flow drives each neuron to encode a single irreps $\check\rho_m$ with rank-1 Fourier coefficients and rotational alignment.
    This generalizes the Abelian ``single frequency + phase alignment'' (Observations~\ref{find:freq_sparse} and~\ref{find:phase_align}) to general groups, while introducing rank-one compression as a novel feature unique to matrix-valued representations.
    \item \textit{Stage~II} (Theorem~\ref{thm:stage2_scale_growth}): The scaling factor grows logarithmically, sharpening the softmax distribution and driving the CE loss to zero at an $O(1/T)$ rate, provided the mean-field predictor satisfies the perfect accuracy condition in \eqref{eq:def_pa}.
\end{itemize}
\end{tcolorbox}

\subsection{Experimental Results}
\label{sec:simulation}
\paragraph{Data Generation.}
Our simulations utilize the Frobenius group $G\simeq C_7 \rtimes C_3$, defined by 
$
\langle x, y \mid x^7 = y^3 = 1, \; yxy^{-1} = x^2 \rangle.
$ 
Here \(x\) and \(y\) are generators, meaning that all elements of \(G\) can be obtained by multiplying powers of \(x\) and \(y\). 
Specifically, every element has a unique form \(x^i y^j\). 
The relation \(yxy^{-1} = x^2\) specifies the interaction between the two generators and makes the group non-abelian.
We choose this group because it provides a simple test case where the representation structure is richer than that of commonly used groups.
$C_7 \rtimes C_3$ necessitates learning \emph{multi-dimensional} irreducible representations unlike abelian groups, and features \emph{non-self-conjugate} representations unlike symmetric groups. 


\begin{figure}[!h]
  \centering
\includegraphics[width=1\linewidth]{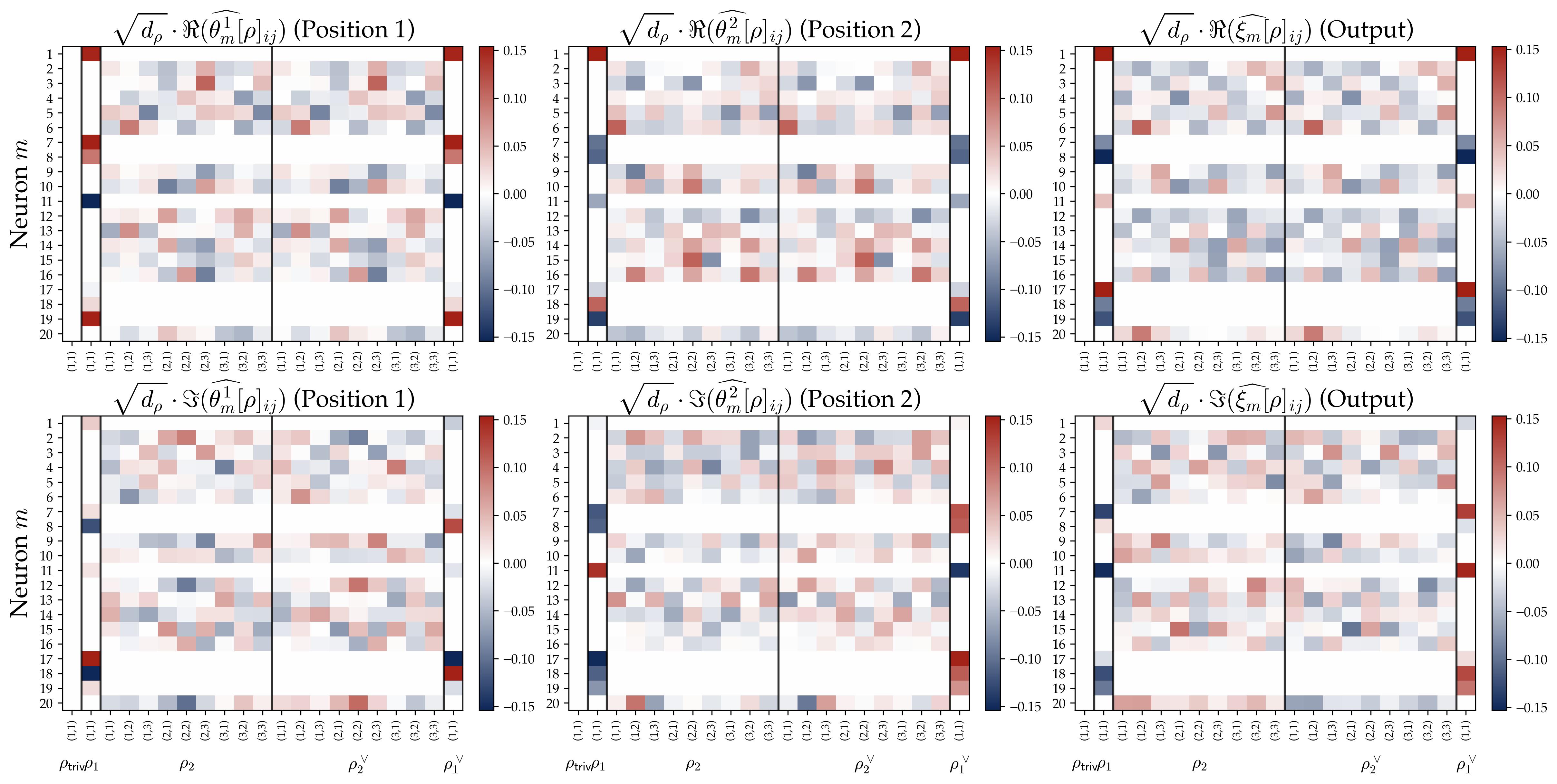}
  \caption{Empirical verification of the spectral pattern (i) in Theorem \ref{thm:converge_point_general_group} for Stage I. The heatmaps display the learned parameters for the top 20 neurons after applying the group DFT. Each row corresponds to one neuron. Along the horizontal axis, the coefficients are grouped by irreducible representations of the Frobenius group: the $1$-D representations $\rho_{\rm triv}$, $\rho_1$ and $\rho_1^\vee$ each contribute a single column, while the $3$-dimensional representations $\rho_2$ and $\rho_2^\vee$ contribute $3\times 3$ matrix blocks whose entries are indexed by $(i,j)$. The vertical separators mark the boundaries between these irrep blocks. Thus, a neuron that selects a single representation should exhibit two active conjugate blocks and near-zero values elsewhere, which is precisely the block-sparse pattern in the figure.}
  \label{fig:general_group_experiment}
\end{figure}

\begin{figure}[!h]
  \centering
  \begin{minipage}[t]{0.32\textwidth}
    \centering
    \includegraphics[width=\linewidth]{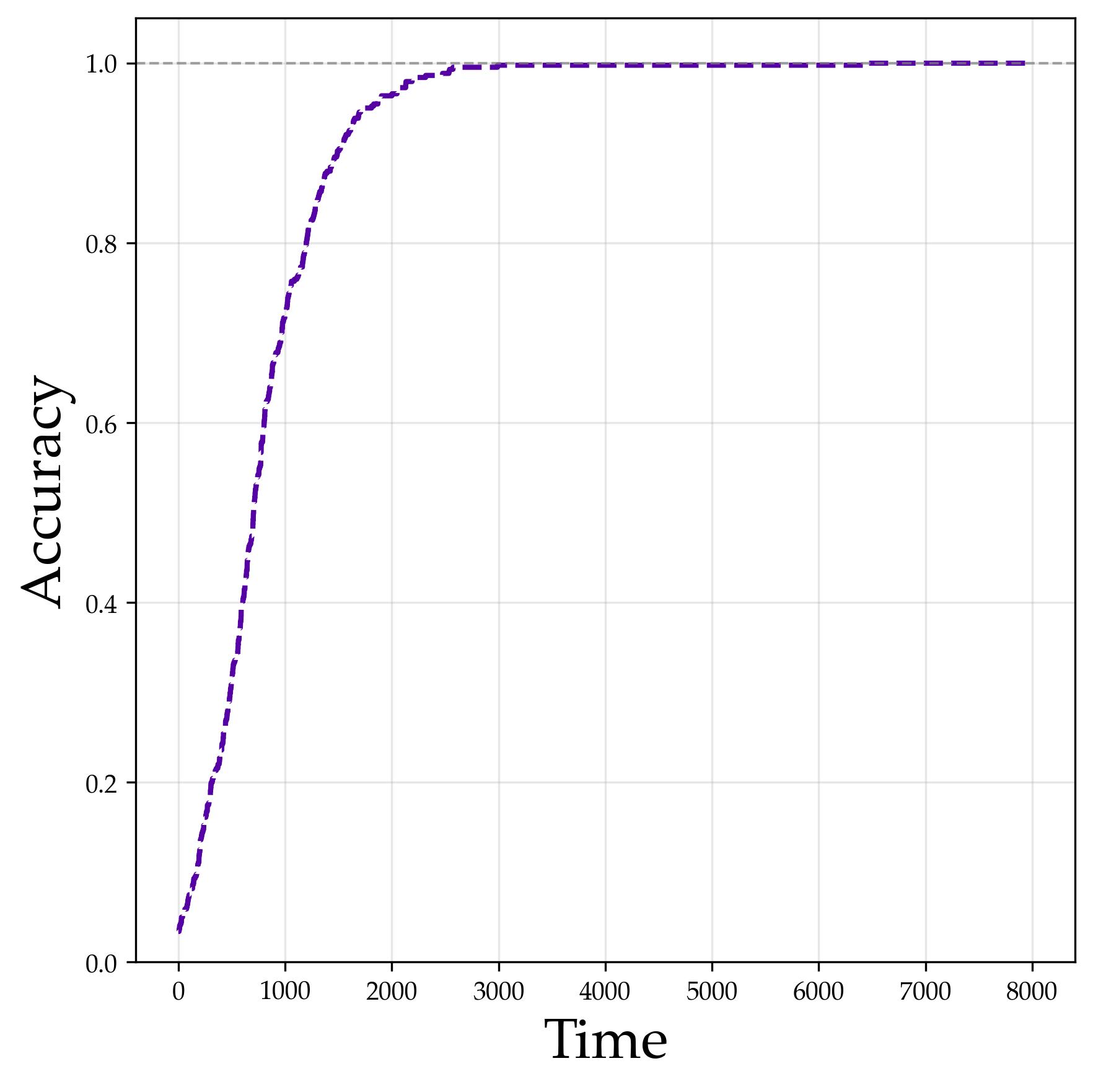}
    \subcaption{Accuracy Growth.}
    \label{fig:accuracy_curve}
  \end{minipage}
  \begin{minipage}[t]{0.32\textwidth}
    \centering
    \includegraphics[width=\linewidth]{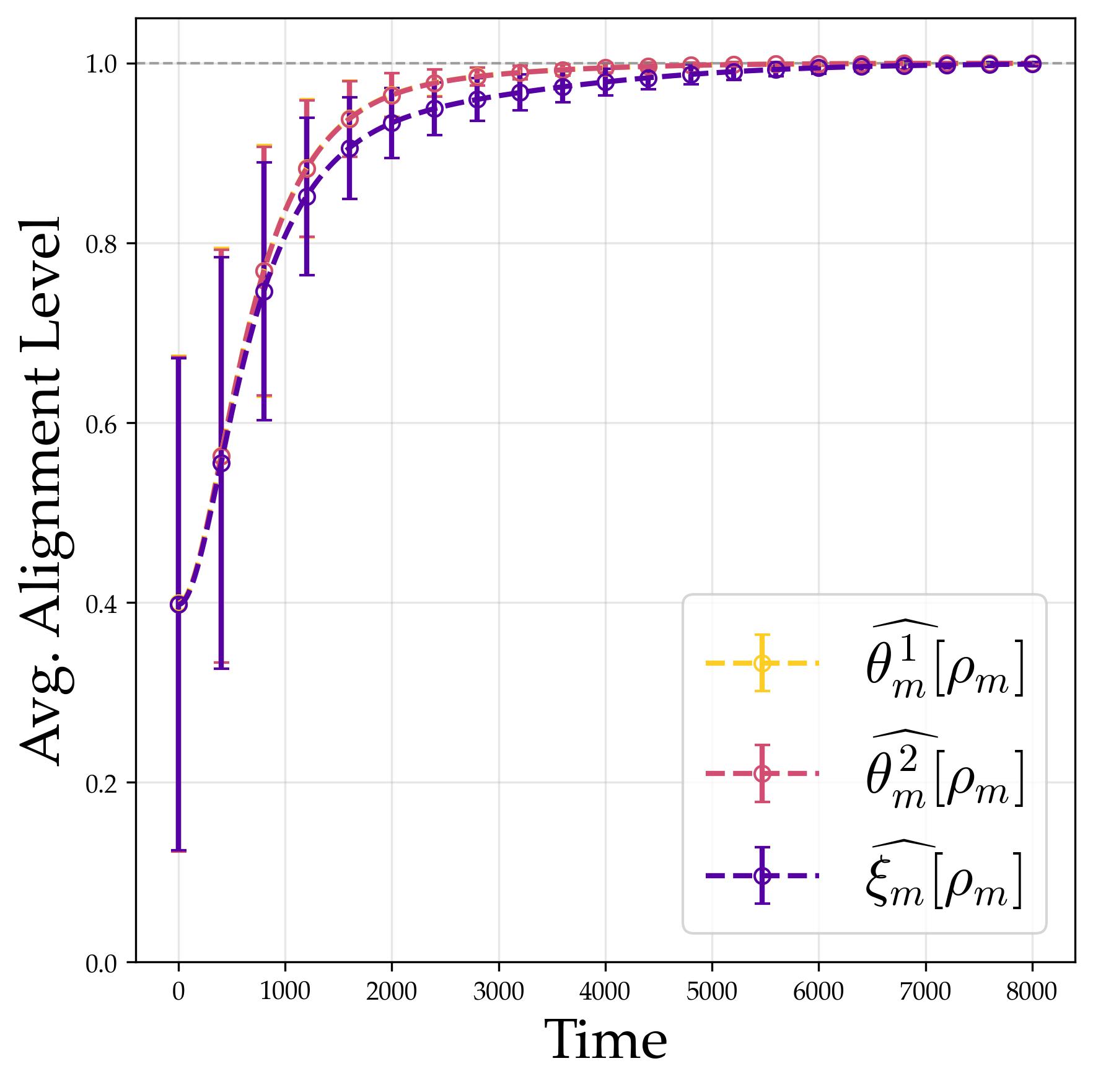}
    \subcaption{Rotational Alignment.}
    \label{fig:verification_rotation}
  \end{minipage}
  \begin{minipage}[t]{0.32\textwidth}
    \centering
    \includegraphics[width=\linewidth]{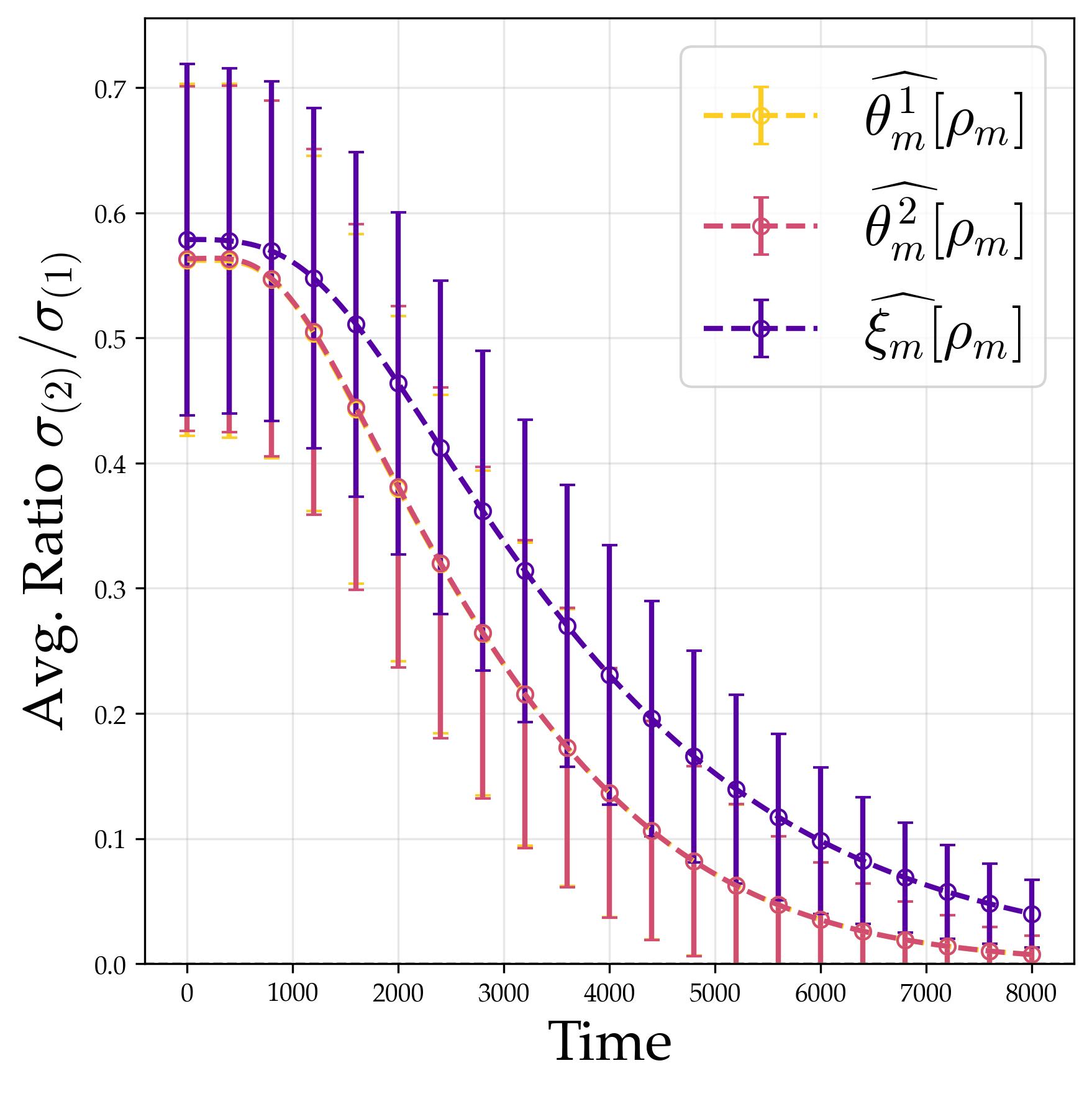}
    \subcaption{Rank-1 Compression.}
    \label{fig:verification_rank1}
  \end{minipage}
  \caption{Empirical verifications of the perfect accuracy condition in \eqref{eq:def_pa} and the spectral patterns (ii) and (iii) in Theorem \ref{thm:converge_point_general_group}. 
  \textbf{(a)} Accuracy curves across training, showing that the classifier reaches accuracy $1$ and then remains there. \textbf{(b)} Evolution of the rotational alignment metric ${\sf dist}_{\rm al}$ for the active Fourier blocks. The trajectories approach $1$ and their variance reduces to $0$, which means these matrices become asymptotically proportional as predicted by \eqref{eq:alignment_relations}. \textbf{(c)} Evolution of the low-rank metric ${\sf dist}_{\rm r1}$. The trajectories and their variances decay toward $0$, indicating that the second singular value vanishes relative to the first and the Fourier coefficient matrices become rank one.}
  \label{fig:general_group_dyanmics}
\end{figure}

\paragraph{Experimental Results for Stage I.}
To quantify the rational alignment and rank-one structure predicted by Theorem~\ref{thm:converge_point_general_group}, we introduce two metrics to measure the alignment and spectral decay of matrices $C_1, C_2 \in \CC^{d_\rho \times d_\rho}$ as 
$
    {\sf dist}_{\rm al}(C_1,C_2)={|\langle{\rm vec}(C_1),{\rm vec}(C_2)\rangle_\CC|}/({\|C_1\|_{\rm F}\cdot\|C_2\|_{\rm F}})$ and ${\sf dist}_{\rm r1}(C_1)={\sigma_{(2)}(C_1)}/{\sigma_{(1)}(C_1)},
$
where ${\rm vec}(\cdot)$ flattens a matrix into a vector, and $\sigma_{(n)}(\cdot)$ denotes the $n$-th largest singular value. 
The alignment metric ${\sf dist}_{\rm al}$ measures the cosine similarity between matrices, where $1$ indicates the proportionality, i.e., $C_1 \propto_+ C_2$, and $0$ indicates the orthogonality. 
Moreover, ${\sf dist}_{\rm r1}$ captures spectral decay and a value near $0$ indicates a perfectly rank-one structure.

The experimental results in Figures \ref{fig:general_group_experiment} and \ref{fig:general_group_dyanmics} corroborate the theoretical predictions established in Theorem \ref{thm:converge_point_general_group}. 
First, the structured sparsity observed in the Fourier-domain heatmaps (Figure \ref{fig:general_group_experiment}) directly validates the single-representation structure predicted by spectral pattern (i). In particular, Figure \ref{fig:general_group_experiment} shows that, for each of the top 20 neurons, the Fourier coefficients are concentrated in a single irreducible-representation block, while the remaining blocks stay nearly zero. This block-sparse pattern is exactly the evidence expected from pattern (i): each neuron selects one representation channel rather than spreading its mass across multiple irreps.
Furthermore, the training dynamics depicted in Figure \ref{fig:general_group_dyanmics} confirm the remaining theoretical claims. 
Throughout training, the model successfully achieves perfect accuracy (see Figure \ref{fig:accuracy_curve}), which validates the perfect alignment condition \eqref{eq:def_pa}. 
Moreover, the rotational alignment metric steadily converges to $1$ (see Figure \ref{fig:verification_rotation}), demonstrating the cross-layer synchronization dynamics. 
Concurrently, the singular value ratio strictly decays toward $0$ (see Figure \ref{fig:verification_rank1}), providing clear evidence for the rank-one compression predicted by pattern (iii). 
Together, these results verify that the practical optimization trajectory naturally collapses into the exact theoretical equilibrium.

\begin{figure}[!h]
  \centering
  \begin{minipage}[t]{0.32\textwidth}
    \centering
    \includegraphics[width=\linewidth]{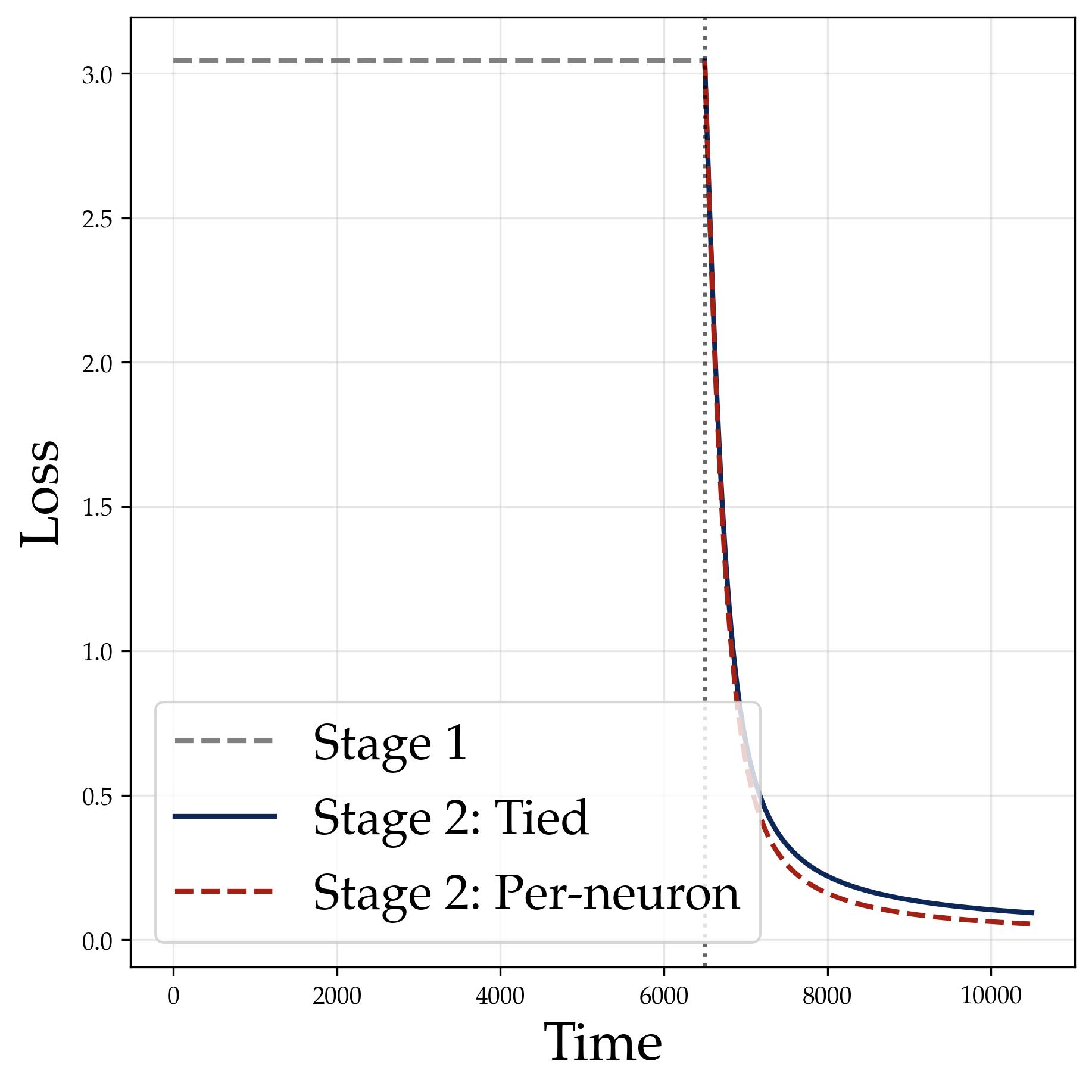}
    \subcaption{Loss Curve of Stage I and II.}
    \label{fig:verification_loss}
  \end{minipage}
  \begin{minipage}[t]{0.32\textwidth}
    \centering
    \includegraphics[width=\linewidth]{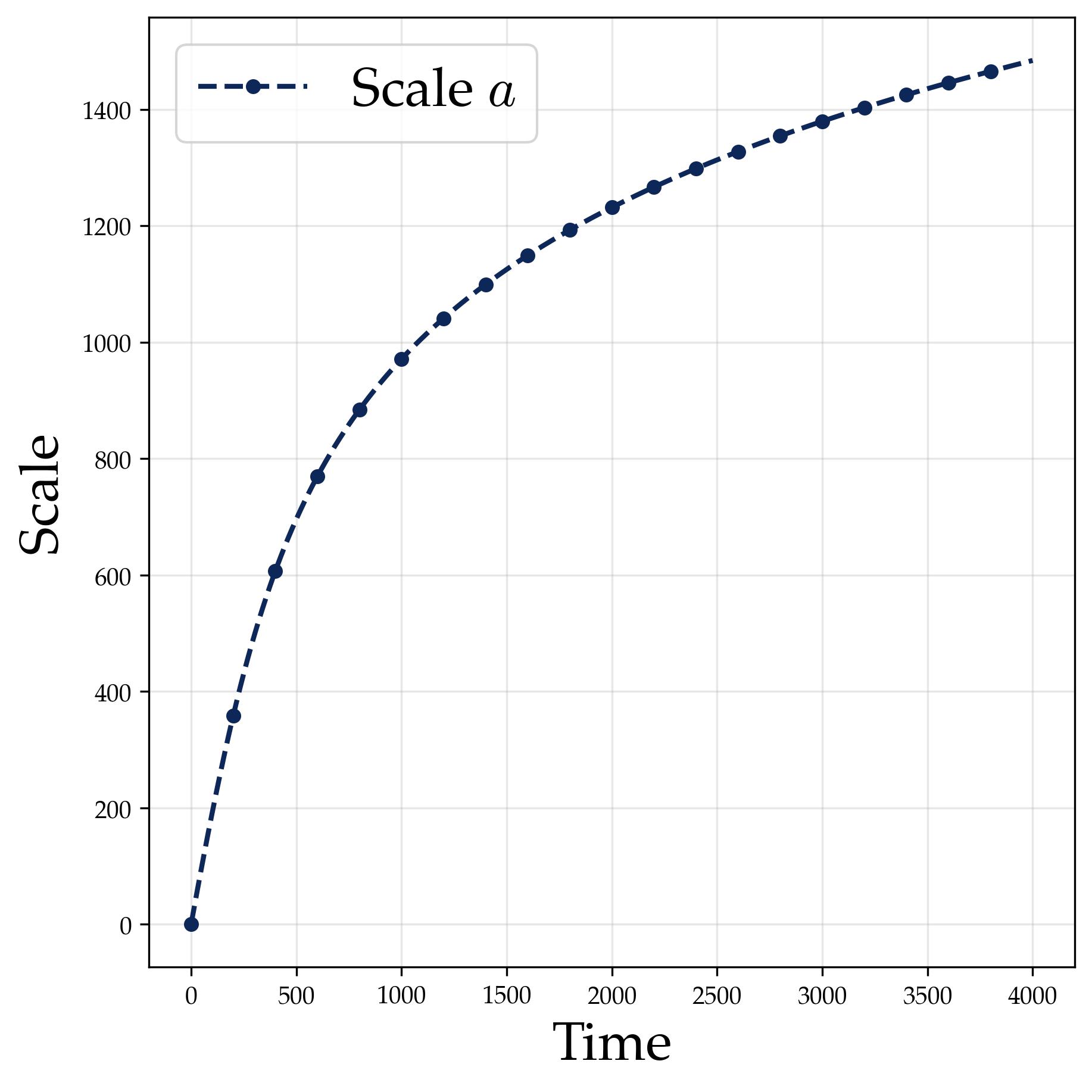}
    \subcaption{Growth of Tied Scale.}
    \label{fig:verification_scale_tied}
  \end{minipage}
  \begin{minipage}[t]{0.32\textwidth}
    \centering
    \includegraphics[width=\linewidth]{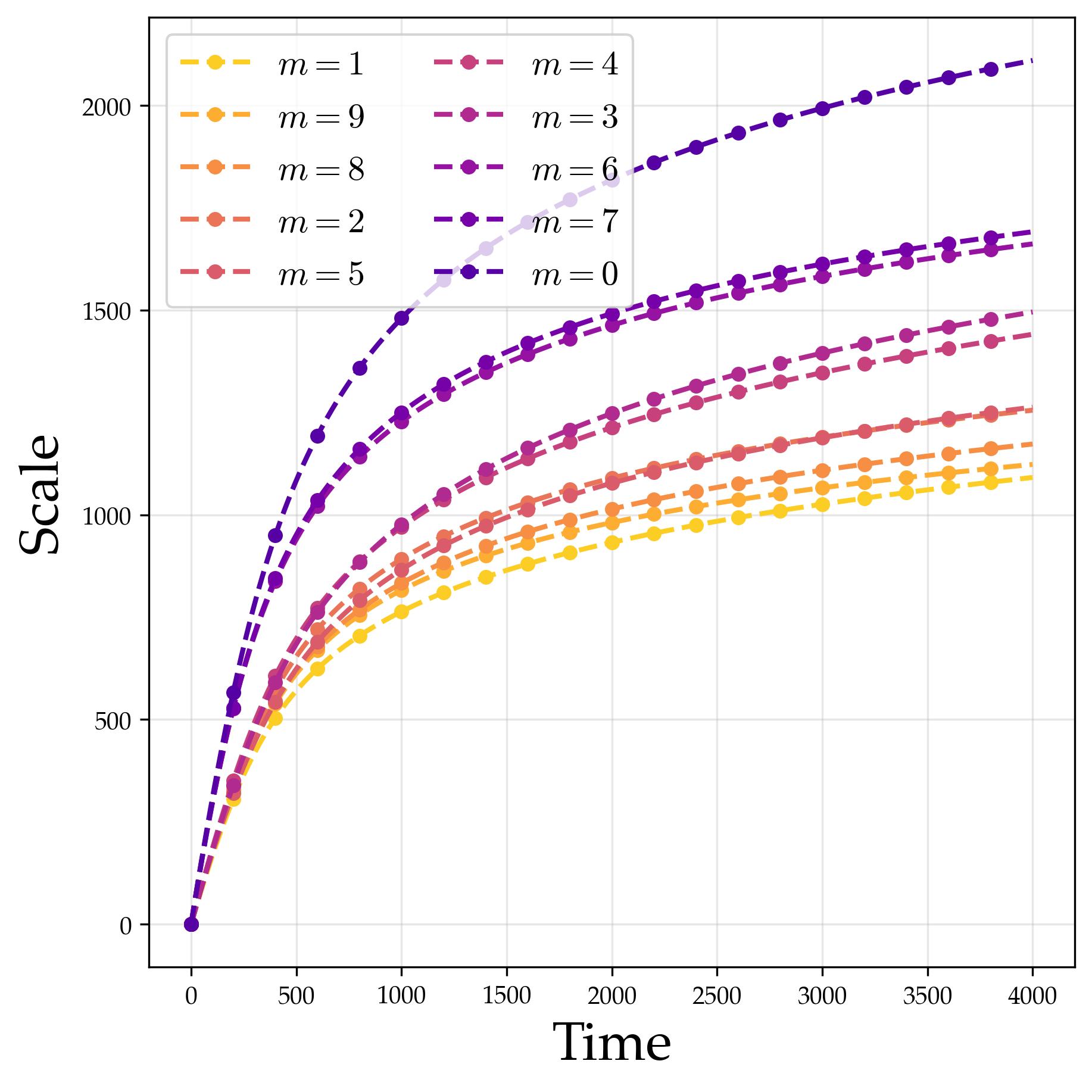}
    \subcaption{Growth of Untied Scales.}
    \label{fig:verification_scale_untied}
  \end{minipage}
  \caption{Empirical verification of the loss decrease and scale growth predicted by Theorem \ref{thm:stage2_scale_growth}. \textbf{(a)} During Stage I, the loss remains nearly constant due to the small, frozen scaling factor $a$, before undergoing a rapid drop toward $0$ in Stage II. \textbf{(b)–(c)} Evolution of the tied and untied scaling factors, both exhibiting logarithmic growth. The tied case corresponds to the theoretical setup under  \eqref{eq:def_pa}. For a fair comparison, the gradient of the untied case is amplified by a factor of $M$ to match the gradient accumulation across neurons inherent to the tied case.}
  \label{fig:stage2_dynamics}
\end{figure}

\paragraph{Experimental Results for Stage II.}

Figure \ref{fig:stage2_dynamics} offers empirical validation for Theorem \ref{thm:stage2_scale_growth}. Consistent with the mechanism where phase alignment strictly precedes the final loss minimization, the cross-entropy loss initially plateaus during Stage I due to the small, frozen scaling factor. 
Subsequently, the loss undergoes a rapid, dynamic drop toward $0$ (see Figure \ref{fig:verification_loss}), substantiating the loss behavior claimed in part (ii) of the theorem. 
Furthermore, Figures \ref{fig:verification_scale_tied} and \ref{fig:verification_scale_untied} track the evolution of the scaling factors. 
Both the tied configuration, i.e., analyzed under the perfect alignment condition in \eqref{eq:def_pa}, and the untied configuration demonstrate the clear logarithmic growth over time predicted by part (i). 
Together, these observations confirm that once the necessary structural alignment is achieved, the network naturally transitions into a scale-driven regime that drives the loss to zero.

\section{Mechanism and Training Dynamics of Abelian Group}
\label{sec:stage1_abelian}
In this section, we specialize in Abelian groups and provide a complete picture of the converged model, its mechanistic interpretation, and its training dynamics.
Throughout this section, we adopt a shared input embedding $\theta^1_m = \theta_m^2 =: \theta_m$, which is natural for the Abelian group since the group operation is commutative.
Under this convention, the Stage~I projected gradient flow becomes
$$
\partial_t\theta_m=-\frac{1}{2}(I-\theta_m{\theta_m}^\top)\nabla_{\theta_m}\scrR(\Theta),\qquad
    \partial_t\xi_m=-(I-\xi_m\xi_m^\top)\nabla_{\xi_m}\scrR(\Theta),
$$
where the factor $1/2$ compensates for the shared embedding, which would otherwise cause $\theta_m$ to evolve twice as fast.
We further assume that $G$ has no self-conjugate irreducible representations.

This section is organized as follows.
In \S\ref{sec:abelian_perfect_acc}, we prove the diversification property in Observation~\ref{find:diversification} by deriving the mean-field limit $\mu$ of the trained network.
Moreover, $\mu$ naturally satisfies \eqref{eq:def_pa}, which yields a closed-form \emph{flawed indicator} that reveals the network's computational mechanism.
In \S\ref{sec:abelian_rate}, we establish the explicit convergence rates for both phase alignment and representation competition, providing quantitative bounds on the emergence time of the spectral patterns.

\subsection{Diversification, Perfect Accuracy, and Mechanics}
\label{sec:abelian_perfect_acc}
We begin by proving Observation~\ref{find:diversification}: under uniform spherical initialization, the trained network's mean-field limit exhibits full diversification over representations and phases.
Moreover, we show that the resulting predictor satisfies \eqref{eq:def_pa} and admits a closed-form mechanistic interpretation.
Recall that $\mathbb{D}=\{z\in\CC:|z|=1\}$ denotes the unit circle in the complex plane.
\begin{theorem}
\label{thm:perfect_accuracy_modular}
    Consider an Abelian group $G$ that possesses no self-conjugate representations. 
    Let the network parameters be initialized as $\theta_m,\xi_m\iid{\rm Unif}(\SSS^{|G|-1})$.
    Then the limiting measure $\mu$ on $(\theta,\xi)$ is defined as the push-forward of $\pi$ under the mapping $\idftmap:\irr(G)_{\neq1}\times\mathbb{D}\mapsto(\SSS^{|G|-1})^{\otimes2}$:
\begin{align}
\mu=\idftmap_\#\pi, \qquad \idftmap:(\check\rho, u)\mapsto\sqrt{2/|G|}\cdot\big(\Re(u\check\rho(\cdot)),\Re(u^2\check\rho(\cdot))\big).
\label{eq:mean_field_measure}
\end{align}
Here, $\pi = {\rm Unif}(\irr(G)_{\neq 1}) \otimes {\rm Haar}(\mathbb{D})$ is a product measure, and ${\rm Haar}(\mathbb{D})$ denotes the Haar measure, i.e., uniform distribution, over the unit circle $\mathbb{D}$. 
Furthermore, $\mu$ satisfies the \eqref{eq:def_pa}.
\end{theorem}
The proof of Theorem \ref{thm:perfect_accuracy_modular} is deferred in \S\ref{ap:verification_abelian}.
Below, we interpret the theorem's implications, describe the resulting majority-vote mechanism, and provide a sketch of the proof.

\paragraph{Anatomy of the Push-Forward Mapping.}
The mapping $\idftmap$ converts a \emph{spectral coordinate} $(\check\rho, u)\in\irr(G)_{\neq1}\times\mathbb{D}$ into a pair of weight functions $(\theta_m, \xi_m)\in(\SSS^{|G|-1})^{\otimes 2}$.
Under this parametrization, each trained neuron is defined by
(i) A \emph{learned representation} $\check\rho\in\irr(G)_{\neq1}$, the single non-trivial irrep that survives the representation competition, and (ii) An \emph{absolute phase} $u\in\mathbb{D}$, a unit complex number that parametrizes the rotational degree of freedom within the chosen irrep.
The phase doubling $u \mapsto u^2$ of the output embedding $\xi_m$ in \eqref{eq:mean_field_measure} follows directly from the phase alignment condition (Observation~\ref{find:phase_align}), while the $\sqrt{2/|G|}$ factor ensures both functions lie on the unit sphere.
The source measure $\pi = \text{Unif}(\irr(G)_{\neq 1}) \otimes \text{Haar}(\mathbb{D})$ formalizes the diversification in Observation~\ref{find:diversification}: every non-trivial irrep is represented equally, phases are distributed Haar-uniformly within each irrep, and the choice of representation is independent of the phase.

\paragraph{Mechanistic Interpretation: The Flawed Indicator via Majority Vote.}
We now explain how the diversified network solves the group composition task via a \emph{majority-vote mechanism}.
Specifically, each neuron $m$, parametrized by $(\check\rho_m, u_m)$, contributes a \emph{biased vote} to the output logit. 
As shown in Lemma~\ref{lem:abelian_mu_satisfies_pa}, for input $(g_1, g_2)$ and output entry $j$, the neuron's contribution is proportional to
\begin{align*}
   \underbrace{2\cdot\Re\big(\check\rho_m(j (g_1 g_2)^{-1})\big)}_{\displaystyle \text{\small singal term}}+\underbrace{\Re\big(\check\rho_m(j g_1^{-2})\big)+\Re\big(\check\rho_m(j g_2^{-2})\big)}_{\displaystyle \text{\small ghost singal term}}+ \underbrace{\sum_{\kappa\in\{-4,-2,2,4\}}C_{\kappa,\check\rho_m}\cdot (u_m)^\kappa}_{\displaystyle \text{\small noise term}},
\end{align*}
where $C_{\kappa,\check\rho_m}$ are complex-valued constants depending on $j,g_1,g_2$.
The key insight is that averaging over a diversified ensemble cancels individual biases. 
Since phases $u_m$ are Haar-uniform and representations $\check\rho_m$ are distributed uniformly, the noise terms vanish in expectation while the signal accumulates coherently.
Moreover, we can show that the mean-field predictor takes the form:
$$
\euF_{\sf NN}^\mu(g_1,g_2)_j\;\propto\; 2\cdot\ind(j=g_1\star g_2)+\ind(j= g_1^2)+\ind(j= g_2^2)+{\sf const}.
$$
We characterize this as a \emph{flawed indicator}: it fails to be a perfect delta function due to architectural limitations. 
Nevertheless, the correct label always dominates, ensuring a positive logit margin.

Combining the results established so far, we arrive at the following takeaway.

\begin{tcolorbox}[breakable,colback=gray!5, colframe=black!70, left=2mm, right=2mm, top=1.5mm, bottom=1.5mm]
\textbf{Takeaway of Mechanistic Interpretability.}
For any Abelian group $G$ without self-conjugate representations, three properties hold simultaneously after the two-stage gradient training:
\begin{itemize}[label=$\triangleright$, leftmargin=2em]
    \setlength{\itemsep}{-1pt}
    \item  \textit{Single-Frequency and Phase Alignment} (Theorem~\ref{thm:converge_point_general_group}): Each neuron encodes a single rank-1 irreducible representation Fourier coefficients and aligned phases.
    \item \textit{Spectral Uniformity} (Theorem~\ref{thm:perfect_accuracy_modular}): The ensemble of neurons achieves full diversification: representations are uniform over $\irr(G)_{\neq 1}$ and phases are Haar-uniform on $\mathbb{D}$.
    \item \textit{Scale Explosion} (Theorem~\ref{thm:stage2_scale_growth}): The scaling parameter $a$ logarithmically grows to infinity.
\end{itemize}
Together, (i) and (ii) imply that the network becomes a \emph{flawed indicator} predictor via majority vote, and (iii) sharpens the softmax output toward a one-hot prediction.
\end{tcolorbox}

\subsection{Convergence Rate of the Spectral Patterns}
\label{sec:abelian_rate}
By Theorem~\ref{thm:converge_point_general_group}, we know that each neuron eventually learns a single irreducible representation with aligned input-output phases.
But this raises two natural questions: \emph{which} representation does each neuron select, and \emph{how fast} do these spectral patterns emerge?

At initialization, every neuron has energy spread across all representations with unaligned phases. 
Training then drives two simultaneous processes:
\begin{itemize}
    \setlength{\itemsep}{-1pt}
    \item[(i)] \emph{Phase Alignment}: The relative phase between the input and output embeddings locks into the relation required by Observation~\ref{find:phase_align}.
    \item[(ii)] \emph{Representation Competition}: The Fourier magnitudes of different irreps compete for dominance until a single winner $\check\rho_m$ together with its conjugate emerges.
\end{itemize}
We show that both processes converge exponentially fast and, notably, that representation selection follows a \emph{lottery ticket} mechanism: the winning irrep is determined by the random initialization.
To track these processes, we decompose the Fourier coefficient into its magnitude and phase:
$$
\alpha_{\nu,m}[\rho](t)=|\hat{\nu}[\rho](t)|\in\RR_{\geq0},\qquad \phi_{\nu,m}[\rho](t)=\hat{\nu}[\rho](t)/|\hat{\nu}[\rho](t)|\in\mathbb{D}.
$$
We measure phase alignment by the variable $\varphi_m[\rho] = \overline{\phi_{\xi,m}[\rho]}\cdot\phi_{\theta,m}[\rho]^2\in\mathbb{D}$, which equals one when phases are perfectly aligned.
We adopt the following mild assumption for analytical convenience.

\begin{definition}[Scaling-Matching Initialization]
    \label{def:scale_matching_init}
    Consider a coupled random initialization for $\theta_m$ and $\xi_m$ with identical Fourier magnitudes:
    $\alpha_{\theta,m}[\rho](0)=\alpha_{\xi,m}[\rho](0)$ for all $\rho\in\irr(G)$.
\end{definition}

This assumption requires only that $\theta_m$ and $\xi_m$ share initial scales per irrep, allowing for arbitrary phasors and non-uniform scales. 
This scale-matching property is preserved by the gradient flow (see Lemma \ref{lem:scale_identical}), justifying the unambiguous notation $\alpha_m[\rho](t)$ for all $t \geq 0$.

The theorem below decouples these two dynamics, analyzing phase alignment within a fixed representation and representation competition under pre-aligned phases. 

\begin{theorem}
\label{thm:abelian_convergence_formal}
    Let $G$ be an Abelian group with no self-conjugate representations, and adopt the scale-matching initialization in Definition \ref{def:scale_matching_init}.
    Then the following hold:
    \begin{itemize}
        \item[(i)] \textit{(Phase Alignment)}. Suppose neuron $m$ is initialized with a single representation $\check{\rho}_m\in\irr(G)_{\neq1}$, i.e., $\alpha_m[\rho](0)=0$ for all $\rho\notin\orb(\check{\rho}_m)$.
        Then the phase alignment level $\varphi_m[\check\rho_m]$ converges to one. That is,  for any $\epsilon>0$, we have $\Re(\varphi_m[\check\rho_m](T))\geq1-\varepsilon$ once
        $$
        T\gtrsim\frac{M}{a|G|^{1/2}}\cdot\log\left(\frac{2}{\varepsilon}\cdot\frac{1-\Re(\varphi_m[\check{\rho}_m](0))}{1+\Re(\varphi_m[\check{\rho}_m](0))}\right).
        $$
        \item[(ii)] \textit{(Representation Competition)}. Suppose all phases are initially aligned, i.e., $\varphi_m[\rho](0)=1$.
        In this case, the irrep with the largest initial magnitude wins, 
        $\check\rho_m = \argmax_{\rho\in\irr(G)_{\neq1}}\alpha_m[\rho](0)$.
        Define the scale ratio $r_{\check\rho_m,\rho}=\alpha_m[\check\rho_m]/\alpha_m[\rho]$  and let $r_{\min}=\min_{\rho\in\irr(G)_{\neq1}\backslash\orb(\check\rho_m)}r_{\check{\rho}_m,\rho}(0)$.
        Then for any $\varepsilon>0$ and training time
        $$
        T\gtrsim\frac{M}{a|G|}\cdot\max\left\{\frac{\log\!\left(\frac{\varepsilon^{-1}-1}{r_{\min}-1}\right)}{\alpha_m[\check\rho_m](0)},\frac{\log\left(\frac{\varepsilon^{-1}}{r_{\check\rho_m,\rho_{\sf triv}}(0)}\right)}{|G|\cdot \Omega_m(0)}\right\}
        $$
        the scale ratio satisfies $r_{\check{\rho}_m,\rho}(T)\geq1/\varepsilon$ for all non-winning representations $\rho\in\irr(G)\backslash\orb(\check\rho_m)$.
    \end{itemize}
\end{theorem}

The proof is provided in \S\ref{ap:convergence_abelian}.
We discuss the theoretical implications of convergence results.

\begin{figure}[!ht]
  \centering
  \begin{minipage}[t]{0.45\textwidth}
    \centering
    \includegraphics[width=\linewidth]{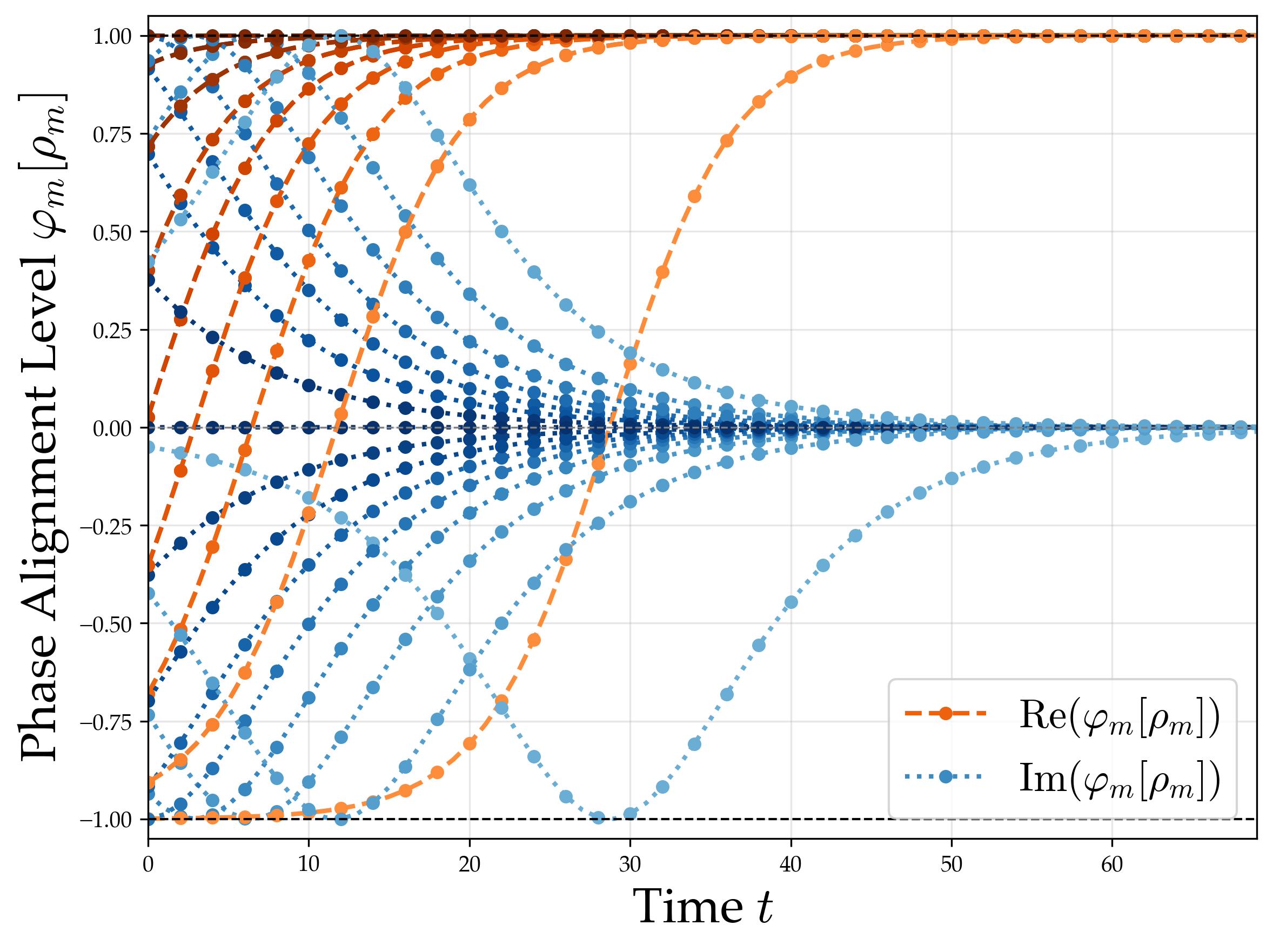}
    \subcaption{Training Dynamics of Phase Alignment.}
    \label{fig:dynamics_phase}
  \end{minipage}
  \begin{minipage}[t]{0.45\textwidth}
    \centering
    \includegraphics[width=\linewidth]{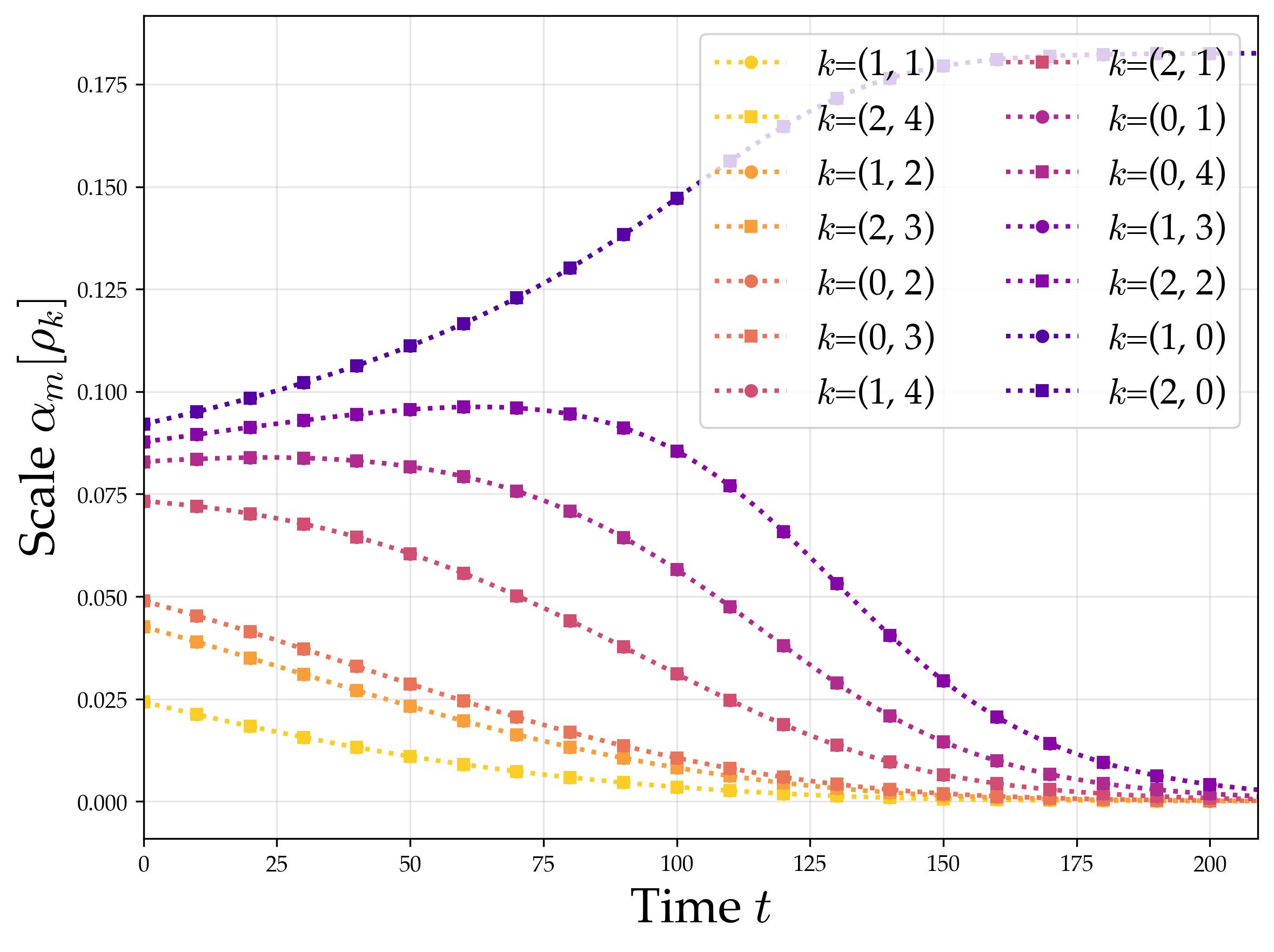}
    \subcaption{Representation Competition within Neuron.}
  \end{minipage}
  \caption{Training dynamics of $\ZZ_3\oplus\ZZ_5$ under the initializations in Theorem \ref{thm:abelian_convergence_formal}. \textbf{(a)} Phase alignment: the alignment level $\Re(\varphi_m)$ converges to $1$ at different speeds depending on the initial phase. \textbf{(b)} Representation competition: the magnitude of the winning irrep grows while all competitors decay.
  }
\end{figure}

\begin{itemize}[label=$\bullet$, leftmargin=2em]
    \item \textbf{Discussion of (i): Phase Alignment.}
    The first part isolates phase dynamics by assuming the representation competition is resolved. 
    The objective is for the input and output phases to lock into the relation $\phi_{\xi,m} = \phi_{\theta,m}^2$. 
    This alignment proceeds exponentially, with $1 - \Re(\varphi_m)$ reaching $\epsilon$-accuracy in time $\tilde{O}({M}/(a |G|^{1/2}) \cdot\log(1/\epsilon))$. 
    The closer the initial phase $\Re(\varphi_m(0))$ is to $1$, the faster the neuron reaches perfect alignment (see Figure \ref{fig:dynamics_phase}).
    \item\textbf{Discussion of (ii): Lottery Ticket Mechanism for Representation Competition.}
    The second part isolates magnitude dynamics by assuming pre-aligned phases. 
    Irreps compete through their magnitudes, and the winner is determined at initialization given by $\check{\rho}_m = \text{argmax}_\rho \alpha_m[\rho](0)$.
    Training exponentially amplifies this advantage.
    Thus, the random initialization acts as a \emph{lottery ticket mechanism} that dictates the learned representation. 
    The convergence speed depends on the initial spectral gap: a larger margin  accelerates selection.
    
    Under a uniform initialization, the magnitudes are \emph{exchangeable} across all non-trivial irreps, making each equally likely to win the representation lottery.
    This provides the microscopic basis for the macroscopic diversification established in Theorem~\ref{thm:perfect_accuracy_modular}.
\end{itemize}

\begin{tcolorbox}[colback=gray!5, colframe=black!70, left=2mm, right=2mm, top=1.5mm, bottom=1.5mm]
\textbf{Takeaway of Convergence Analysis.}
The training process decouples into two key subprocesses: phase alignment and representation competition. 
First, the input and output phases lock into a specific rotational relationship, creating the necessary alignment for the neuron to function. 
This is followed by a \emph{lottery ticket mechanism} where the irrep with the largest initial Fourier magnitude wins and is amplified, while all other competitors decay.
\end{tcolorbox}

\section{Conclusion}

This paper provides a rigorous theory of how neural networks learn group composition for finite groups, using harmonic analysis on finite groups as the analytical framework.
For general groups, including non-Abelian groups, we prove that gradient flow drives each neuron to learn a single irreducible representation with rank-one cross-layer alignment (see Theorem~\ref{thm:converge_point_general_group}), a spectral structure identified here for the first time, established via a Riemannian flow analysis on the spectral manifold.
For Abelian groups, we give a complete characterization: the learned representations and phases are independently and uniformly distributed (see Theorem~\ref{thm:perfect_accuracy_modular}), the resulting ensemble achieves perfect accuracy through exact noise cancellation, and both phase alignment and representation competition converge exponentially (see Theorem~\ref{thm:abelian_convergence_formal}).

Several open problems remain, such as the theoretical characterization of limiting distributions for general groups with high-dimensional irreps or Abelian groups with self-conjugate representations. 
Additionally, while we focus on the population case, a rigorous analysis of the train-test split remains open, particularly about the delayed generalization phenomenon known as grokking.

\newpage

\bibliographystyle{apalike}
\bibliography{reference}

@article{absil2005convergence,
  title={Convergence of the iterates of descent methods for analytic cost functions},
  author={Absil, Pierre-Antoine and Mahony, Robert and Andrews, Ben},
  journal={SIAM Journal on Optimization},
  volume={16},
  number={2},
  pages={531--547},
  year={2005},
  publisher={SIAM}
}

@book{shub2013global,
  title={Global stability of dynamical systems},
  author={Shub, Michael},
  year={2013},
  publisher={Springer Science \& Business Media}
}

@inproceedings{ba2022high,
  title     = {High-dimensional Asymptotics of Feature Learning: How One Gradient Step Improves the Representation},
  author    = {Ba, Jimmy and Erdogdu, Murat A. and Suzuki, Taiji and Wang, Zhichao and Wu, Denny and Yang, Greg},
  booktitle = {Advances in Neural Information Processing Systems},
  volume    = {35},
  year      = {2022}
}

@inproceedings{lee2024neural,
  title     = {Neural network learns low-dimensional polynomials with SGD near the information-theoretic limit},
  author    = {Lee, Jason D. and Oko, Kazusato and Suzuki, Taiji and Wu, Denny},
  booktitle = {Advances in Neural Information Processing Systems},
  volume    = {37},
  year      = {2024},
  doi       = {10.52202/079017-1872}
}

@inproceedings{chen2025can,
  title     = {Can Neural Networks Achieve Optimal Computational-statistical Tradeoff? An Analysis on Single-Index Model},
  author    = {Chen, Siyu and Wu, Beining and Lu, Miao and Yang, Zhuoran and Wang, Tianhao},
  booktitle = {International Conference on Learning Representations},
  year      = {2025}
}

@article{berthier2024learning,
  title   = {Learning Time-Scales in Two-Layers Neural Networks},
  author  = {Berthier, Rapha{\"e}l and Montanari, Andrea and Zhou, Kangjie},
  journal = {Foundations of Computational Mathematics},
  year    = {2024},
  doi     = {10.1007/s10208-024-09664-9}
}

@inproceedings{damian2022neural,
  title     = {Neural Networks can Learn Representations with Gradient Descent},
  author    = {Damian, Alexandru and Lee, Jason and Soltanolkotabi, Mahdi},
  booktitle = {Proceedings of Thirty Fifth Conference on Learning Theory},
  series    = {Proceedings of Machine Learning Research},
  volume    = {178},
  pages     = {5413--5452},
  year      = {2022},
  publisher = {PMLR}
}

@inproceedings{ren2025emergence,
  title     = {Emergence and scaling laws in SGD learning of shallow neural networks},
  author    = {Ren, Yunwei and Nichani, Eshaan and Wu, Denny and Lee, Jason D.},
  booktitle = {Advances in Neural Information Processing Systems},
  year      = {2025}
}

@inproceedings{tanaka2021noether,
  title={Noether's Learning Dynamics: Role of Symmetry Breaking in Neural Networks},
  author={Tanaka, Hidenori and Kunin, Daniel},
  booktitle={Advances in Neural Information Processing Systems},
  volume={34},
  pages={25646--25660},
  year={2021}
}

@article{power2022grokking,
  title={Grokking: Generalization Beyond Overfitting on Small Algorithmic Datasets},
  author={Power, Alethea and Burda, Yuri and Edwards, Harri and Babuschkin, Igor and Misra, Vedant},
  journal={arXiv preprint arXiv:2201.02177},
  year={2022}
}

@inproceedings{nanda2023progress,
  title={Progress Measures for Grokking via Mechanistic Interpretability},
  author={Nanda, Neel and Chan, Lawrence and Lieberum, Tom and Smith, Jess and Steinhardt, Jacob},
  booktitle={International Conference on Learning Representations},
  year={2023}
}

@inproceedings{liu2022understanding,
  title={Towards Understanding Grokking: An Effective Theory of Representation Learning},
  author={Liu, Ziming and Kitouni, Ouail and Nolte, Niklas S. and Michaud, Eric J. and Tegmark, Max and Williams, Mike},
  booktitle={Advances in Neural Information Processing Systems},
  year={2022}
}

@inproceedings{mohamadi2024why,
  title={Why Do You Grok? A Theoretical Analysis on Grokking Modular Addition},
  author={Mohamadi, Mohamad Amin and Li, Zhiyuan and Wu, Lei and Sutherland, Danica J.},
  booktitle={Proceedings of the 41st International Conference on Machine Learning},
  series={Proceedings of Machine Learning Research},
  volume={235},
  pages={35934--35967},
  year={2024}
}

@inproceedings{prieto2025edge,
  title={Grokking at the Edge of Numerical Stability},
  author={Prieto, Lucas and Barsbey, Melih and Mediano, Pedro A. M. and Birdal, Tolga},
  booktitle={International Conference on Learning Representations},
  year={2025}
}

@inproceedings{mallinar2025emergence,
  title={Emergence in Non-neural Models: Grokking Modular Arithmetic via Average Gradient Outer Product},
  author={Mallinar, Neil Rohit and Beaglehole, Daniel and Zhu, Libin and Radhakrishnan, Adityanarayanan and Pandit, Parthe and Belkin, Mikhail},
  booktitle={Proceedings of the 42nd International Conference on Machine Learning},
  series={Proceedings of Machine Learning Research},
  volume={267},
  pages={42834--42856},
  year={2025}
}

@inproceedings{chughtai2023toy,
  title={A Toy Model of Universality: Reverse Engineering how Networks Learn Group Operations},
  author={Chughtai, Bilal and Chan, Lawrence and Nanda, Neel},
  booktitle={Proceedings of the 40th International Conference on Machine Learning},
  series={Proceedings of Machine Learning Research},
  volume={202},
  pages={6243--6267},
  year={2023}
}

@inproceedings{stander2024cosets,
  title={Grokking Group Multiplication with Cosets},
  author={Stander, Dashiell and Yu, Qinan and Fan, Honglu and Biderman, Stella},
  booktitle={Proceedings of the 41st International Conference on Machine Learning},
  series={Proceedings of Machine Learning Research},
  volume={235},
  pages={46441--46467},
  year={2024}
}

@inproceedings{wu2025unified,
  title={Towards a Unified and Verified Understanding of Group-Operation Networks},
  author={Wu, Wilson and Jaburi, Louis and Drori, Jacob and Gross, Jason},
  booktitle={International Conference on Learning Representations},
  year={2025}
}

@inproceedings{marchetti2024harmonics,
  title={Harmonics of Learning: Universal Fourier Features Emerge in Invariant Networks},
  author={Marchetti, Giovanni Luca and Hillar, Christopher J. and Kragic, Danica and Sanborn, Sophia},
  booktitle={Proceedings of Thirty Seventh Conference on Learning Theory},
  series={Proceedings of Machine Learning Research},
  volume={247},
  pages={3775--3797},
  year={2024}
}

@book{jost2005riemannian,
  title={Riemannian geometry and geometric analysis},
  author={Jost, J{\"u}rgen},
  year={2005},
  publisher={Springer}
}

@book{terras1999fourier,
  title={Fourier analysis on finite groups and applications},
  author={Terras, Audrey},
  number={43},
  year={1999},
  publisher={Cambridge University Press}
}

@book{serre1977linear,
  title={Linear representations of finite groups},
  author={Serre, Jean-Pierre},
  volume={42},
  year={1977},
  publisher={Springer}
}

@article{glasgow2023sgd,
  title={Sgd finds then tunes features in two-layer neural networks with near-optimal sample complexity: A case study in the xor problem},
  author={Glasgow, Margalit},
  journal={arXiv preprint arXiv:2309.15111},
  year={2023}
}

@article{barak2022hidden,
  title={Hidden progress in deep learning: Sgd learns parities near the computational limit},
  author={Barak, Boaz and Edelman, Benjamin and Goel, Surbhi and Kakade, Sham and Malach, Eran and Zhang, Cyril},
  journal={Advances in Neural Information Processing Systems},
  volume={35},
  pages={21750--21764},
  year={2022}
}

@article{hoeffding1963probability,
  title={Probability inequalities for sums of bounded random variables},
  author={Hoeffding, Wassily},
  journal={Journal of the American statistical association},
  volume={58},
  number={301},
  pages={13--30},
  year={1963},
  publisher={Taylor \& Francis}
}

@article{lee2019first,
  title={First-order methods almost always avoid strict saddle points},
  author={Lee, Jason D and Panageas, Ioannis and Piliouras, Georgios and Simchowitz, Max and Jordan, Michael I and Recht, Benjamin},
  journal={Mathematical programming},
  volume={176},
  number={1},
  pages={311--337},
  year={2019},
  publisher={Springer}
}

@inproceedings{abbe2022merged,
  title={The merged-staircase property: a necessary and nearly sufficient condition for sgd learning of sparse functions on two-layer neural networks},
  author={Abbe, Emmanuel and Adsera, Enric Boix and Misiakiewicz, Theodor},
  booktitle={Conference on Learning Theory},
  pages={4782--4887},
  year={2022},
  organization={PMLR}
}

@article{mei2018mean,
  title={A mean field view of the landscape of two-layer neural networks},
  author={Mei, Song and Montanari, Andrea and Nguyen, Phan-Minh},
  journal={Proceedings of the National Academy of Sciences},
  volume={115},
  number={33},
  pages={E7665--E7671},
  year={2018},
  publisher={National Academy of Sciences}
}

@article{ghorbani2020neural,
  title={When do neural networks outperform kernel methods?},
  author={Ghorbani, Behrooz and Mei, Song and Misiakiewicz, Theodor and Montanari, Andrea},
  journal={Advances in Neural Information Processing Systems},
  volume={33},
  pages={14820--14830},
  year={2020}
}

@article{soudry2018implicit,
  title={The implicit bias of gradient descent on separable data},
  author={Soudry, Daniel and Hoffer, Elad and Nacson, Mor Shpigel and Gunasekar, Suriya and Srebro, Nathan},
  journal={Journal of Machine Learning Research},
  volume={19},
  number={70},
  pages={1--57},
  year={2018}
}

@article{marchetti2026sequential,
  title={Sequential Group Composition: A Window into the Mechanics of Deep Learning},
  author={Marchetti, Giovanni Luca and Kunin, Daniel and Myers, Adele and Acosta, Francisco and Miolane, Nina},
  journal={arXiv preprint arXiv:2602.03655},
  year={2026}
}

@article{kunin2025alternating,
  title={Alternating gradient flows: A theory of feature learning in two-layer neural networks},
  author={Kunin, Daniel and Marchetti, Giovanni Luca and Chen, Feng and Karkada, Dhruva and Simon, James B and DeWeese, Michael R and Ganguli, Surya and Miolane, Nina},
  journal={arXiv preprint arXiv:2506.06489},
  year={2025}
}

@article{bengio2013representation,
  title={Representation learning: A review and new perspectives},
  author={Bengio, Yoshua and Courville, Aaron and Vincent, Pascal},
  journal={IEEE transactions on pattern analysis and machine intelligence},
  volume={35},
  number={8},
  pages={1798--1828},
  year={2013},
  publisher={IEEE}
}

@article{li2018measuring,
  title={Measuring the intrinsic dimension of objective landscapes},
  author={Li, Chunyuan and Farkhoor, Heerad and Liu, Rosanne and Yosinski, Jason},
  journal={arXiv preprint arXiv:1804.08838},
  year={2018}
}

@article{he2026mechanism,
  title={On the Mechanism and Dynamics of Modular Addition: Fourier Features, Lottery Ticket, and Grokking},
  author={He, Jianliang and Wang, Leda and Chen, Siyu and Yang, Zhuoran},
  journal={arXiv preprint arXiv:2602.16849},
  year={2026}
}

@article{simon2026scientific,
  title={There Will Be a Scientific Theory of Deep Learning},
  author={Simon, Jamie and Kunin, Daniel and Atanasov, Alexander and Boix-Adser{\`a}, Enric and Bordelon, Blake and Cohen, Jeremy and Ghosh, Nikhil and Guth, Florentin and Jacot, Arthur and Kamb, Mason and Karkada, Dhruva and Michaud, Eric J and Ottlik, Berkan and Turnbull, Joseph},
  journal={arXiv preprint arXiv:2604.21691},
  year={2026}
}

@article{tian2025scaling,
  title={Provable Scaling Laws of Feature Emergence from Learning Dynamics of Grokking},
  author={Tian, Yuandong},
  journal={arXiv preprint arXiv:2509.21519},
  year={2025}
}

@article{tian2024composing,
  title={Composing Global Optimizers to Reasoning Tasks via Algebraic Objects in Neural Nets},
  author={Tian, Yuandong},
  journal={arXiv preprint arXiv:2410.01779},
  year={2024}
}

@article{ansuini2019intrinsic,
  title={Intrinsic dimension of data representations in deep neural networks},
  author={Ansuini, Alessio and Laio, Alessandro and Macke, Jakob H and Zoccolan, Davide},
  journal={Advances in Neural Information Processing Systems},
  volume={32},
  year={2019}
}

@inproceedings{aghajanyan2021intrinsic,
  title={Intrinsic dimensionality explains the effectiveness of language model fine-tuning},
  author={Aghajanyan, Armen and Gupta, Sonal and Zettlemoyer, Luke},
  booktitle={Proceedings of the 59th annual meeting of the association for computational linguistics and the 11th international joint conference on natural language processing (volume 1: long papers)},
  pages={7319--7328},
  year={2021}
}

@article{liu2026spectral,
  title={Spectral Lens: Activation and Gradient Spectra as Diagnostics of LLM Optimization},
  author={Liu, Andy Zeyi and Paquette, Elliot and Sous, John},
  journal={arXiv preprint arXiv:2605.05683},
  year={2026}
}

\newpage
\appendix
\section{Flow Approximation under Small-Logit Regime}
\label{ap:small_logit_approx}

This appendix proves Proposition~\ref{prop:dyn_approx}, which justifies replacing the cross-entropy risk $\scrR$ by its first-order approximation $\scrR_{\sf ap}$ in Stage~I.
When the scaling factor $a$ is small, the network logits are close to zero and the softmax distribution is nearly uniform, so the two risks are nearly identical.

\subsection{Preparation: Gradient Computation}
\label{ap:gradient_computation}
In this section, we compute the gradients of the cross-entropy risk $\scrR$ in \eqref{eq:def_risk} and the approximate risk $\scrR_{\sf ap}$ in \eqref{eq:def_approx_risk} with respect to each parameter.
These gradient expressions will be used throughout the proof of Proposition~\ref{prop:dyn_approx} to control the discrepancy gradient flow dynamics with respect to these two risk functions. 
Recall the neural network architecture from \eqref{eq:def_nn_logit}:
$$
    f_{\sf NN}(g_1,g_2;\Theta)=\frac{a}{M}\sum_{m=1}^M\xi_m\cdot\sigma\big(\langle \theta_m^{1},e_{g_1}\rangle+\langle \theta_m^{2},e_{g_2}\rangle\big).
$$
We decompose both risk functions into two parts: $\scrR = \scrR^{(1)} + \scrR^{(2)}$ and $\scrR_{\sf ap} = \scrR_{\sf ap}^{(1)} + \scrR_{\sf ap}^{(2)}$, where
\begin{align*}
    &\scrR^{(1)}(\Theta)=\scrR_{\sf ap}^{(1)}(\Theta)=-\sum_{g_1,g_2\in G}f_{\sf NN}(g_1,g_2;\Theta)_{g_1\star g_2},\\
    &\scrR^{(2)}(\Theta)=\sum_{g_1,g_2\in G}\log\bigg(\sum_{j\in G}\exp\big(f_{\sf NN}(g_1,g_2;\Theta)_j\big)\bigg),\quad\scrR_{\sf ap}^{(2)}(\Theta)=\frac{1}{|G|}\sum_{g_1,g_2\in G}\sum_{j\in G}f_{\sf NN}(g_1,g_2;\Theta)_{j}.
\end{align*}
The first component $\scrR^{(1)}$, which measures the negative logit at the correct label, is identical for both risks.
The two risks differ only in the second component: $\scrR^{(2)}$ is the log-sum-exp, while $\scrR_{\sf ap}^{(2)}$ is the mean logit.
We compute the gradients of each component separately.

\paragraph{Gradients of $\scrR^{(1)}$.}
By direct differentiation with respect to each parameter, we obtain
\begin{align}
    \nabla_{\theta_m^\tau}\scrR^{(1)}(\Theta)=\nabla_{\theta_m^\tau}\scrR_{\sf ap}^{(1)}(\Theta)&=-\frac{2a}{M}\sum_{g_1,g_2\in G}\xi_{m,g_1\star g_2}\cdot(\theta_{m,g_1}^{1}+\theta_{m,g_2}^{2})\cdot e_{g_\tau},\label{eq:approx_gradient_1_theta}\\
      \nabla_{\xi_m}\scrR^{(1)}(\Theta)=\nabla_{\xi_m}\scrR_{\sf ap}^{1}(\Theta)&=-\frac{a}{M}\sum_{g_1,g_2\in G}(\theta_{m,g_1}^{1}+\theta_{m,g_2}^{2})^2\cdot e_{g_1\star g_2}.\label{eq:approx_gradient_1_xi}
\end{align}

\paragraph{Gradients of $\scrR^{(2)}$.}
Denote the softmax distribution induced by the logits by
$(\euP_{g_1g_2})_j:=(\smax\circ f_{\sf NN}(g_1,g_2;\Theta))_j$ for each input pair $(g_1,g_2)\in G\times G$. 
By direct computation, we have 
\begin{align}
        \nabla_{\theta_m^\tau}\scrR^{(2)}(\Theta)&=\frac{2a}{M}\sum_{g_1,g_2\in G}\sum_{j\in G}(\euP_{g_1g_2})_j\cdot\xi_{m,j}\cdot(\theta_{m,g_1}^1+\theta_{m,g_2}^2)\cdot e_{g_\tau},\label{eq:exact_gradient_2_theta}\\
    \nabla_{\xi_m}\scrR^{(2)}(\Theta)&=\frac{a}{M}\sum_{g_1,g_2\in G}\sum_{j\in G}(\euP_{g_1g_2})_j\cdot(\theta_{m,g_1}^1+\theta_{m,g_2}^2)^2\cdot e_j.\label{eq:exact_gradient_2_xi}
\end{align}

\paragraph{Gradients of $\scrR_{\sf ap}^{(2)}$.}
For the approximate risk, the softmax distribution $\euP_{g_1g_2}$ is replaced by the uniform distribution $\one_{|G|}/|G|$, which significantly simplifies the gradient expressions:
\begin{align}
    \nabla_{\theta_m^\tau}\scrR_{\sf ap}^{(2)}(\Theta)&=\frac{2a}{M|G|}\cdot \langle\xi_m,\one_{|G|}\rangle\cdot\sum_{g_1,g_2\in G}(\theta_{m,g_1}^1+\theta_{m,g_2}^2)\cdot e_{g_\tau},\label{eq:approx_gradient_2_theta}\\
    \nabla_{\xi_m}\scrR_{\sf ap}^{(2)}(\Theta)&=\frac{a}{M|G|}\sum_{g_1,g_2\in G}(\theta_{m,g_1}^1+\theta_{m,g_2}^2)^2\cdot \one_{|G|}.\label{eq:approx_gradient_2_xi}
\end{align}
Comparing \eqref{eq:exact_gradient_2_theta}--\eqref{eq:exact_gradient_2_xi} with \eqref{eq:approx_gradient_2_theta}--\eqref{eq:approx_gradient_2_xi}, the only difference is that the softmax weights $(\euP_{g_1g_2})_j$ in the exact gradients are replaced by the uniform weights $1/|G|$ in the approximate gradients.

\subsection{Proof of Proposition \ref{prop:dyn_approx}}
\label{ap:proof_prop_dyn_approx}
\begin{proof}[Proof of Proposition \ref{prop:dyn_approx}]
We prove this proposition by tracking the trajectory discrepancy $\|\theta_m^\tau(t) - \theta_m^{\tau,\sf ap}(t)\|_2$ and $\|\xi_m(t) - \xi_m^{\sf ap}(t)\|_2$ between the two flows.
Specifically, we write the ODE governing each squared discrepancy and decompose it into three terms~\eqref{eq:diff_dynamic_decom}: (i)~the difference in projection operators, (ii)~the gap between $\nabla\scrR$ and $\nabla\scrR_{\sf ap}$ evaluated at the same point, and (iii)~the Lipschitz drift of $\nabla\scrR_{\sf ap}$ between trajectories.
Each term is bounded using the explicit gradient expressions from \S\ref{ap:gradient_computation}.
Crucially, term~(ii) is controlled by the fact that the scaling factor $a$ is small: since the output is $\Theta(a)$, the softmax distribution stays close to uniform, making the gradients of $\scrR$ and $\scrR_{\sf ap}$ nearly identical.
The resulting coupled differential inequality is solved via Gr\"onwall's lemma.

\medskip\noindent\textbf{Error Decomposition.}
Let $\Pb^\perp_\nu=I-\nu\nu^\top\in\RR^{d\times d}$ denote the orthogonal projection operator.
Let $\Theta(t)$ and $\Theta^{\sf ap}(t)$ denote the unique solutions of the gradient flow \eqref{eq:def_gradient_flow} with respect to $\scrR$ in \eqref{eq:def_risk} and $\scrR_{\sf ap}$ in \eqref{eq:def_approx_risk}, respectively, starting from identical initialization.
For any parameter $\iota\in\{\theta_m^1,\theta_m^2,\xi_m\}$, we decompose the right-hand side of the ODE governing $\|\iota_m-\iota_m^{\sf ap}\|_2^2$ as
\begin{align}
    \partial_t\|\iota_m-\iota_m^{\sf ap}\|_2^2/2&=\langle \iota_m-\iota_m^{\sf ap},\Pb^\perp_{\iota_m}\nabla_{\iota_m}\scrR(\Theta)-\Pb^\perp_{\iota_m^{\sf ap}}\nabla_{\iota_m}\scrR_{\sf ap}(\Theta^{\sf ap})\rangle\notag\\
    &=\underbrace{\langle \iota_m-\iota_m^{\sf ap},(\Pb^\perp_{\iota_m}-\Pb^\perp_{\iota_m^{\sf ap}})\nabla_{\iota_m}\scrR(\Theta)\rangle}_{\displaystyle \textbf{\small (i)}}+\underbrace{\big\langle \iota_m-\iota_m^{\sf ap},\Pb^\perp_{\iota_m^{\sf ap}}\big(\nabla_{\iota_m}\scrR(\Theta)-\nabla_{\iota_m}\scrR_{\sf ap}(\Theta)\big)\big\rangle}_{\displaystyle \textbf{\small (ii)}}\notag\\
    &\qquad+\underbrace{\big\langle \iota_m-\iota_m^{\sf ap},\Pb^\perp_{\iota_m^{\sf ap}}\big(\nabla_{\iota_m}\scrR_{\sf ap}(\Theta)-\nabla_{\iota_m}\scrR_{\sf ap}(\Theta^{\sf ap})\big)\big\rangle}_{\displaystyle \textbf{\small (iii)}}.
    \label{eq:diff_dynamic_decom}
\end{align}
In the following, we bound the growth of $\|\theta_m^\tau-\theta_m^{\tau,\sf ap}\|_2$ and $\|\xi_m-\xi_m^{\sf ap}\|_2$ by substituting the gradient expressions from \eqref{eq:approx_gradient_1_theta} to \eqref{eq:approx_gradient_2_xi} in \S\ref{ap:gradient_computation} into the decomposed components provided in \eqref{eq:diff_dynamic_decom}.

\medskip\noindent\textbf{Part 1. Bounding $\|\theta_m^\tau-\theta_m^{\tau,\sf ap}\|_2$.}
We bound each of the three terms in \eqref{eq:diff_dynamic_decom} with $\iota = \theta_m^\tau$.

\vspace{5pt}
\noindent\textbf{$\bullet$ Growth of \textbf{(i)} in \eqref{eq:diff_dynamic_decom}.}
By definition, it holds that
\begin{align}
    \textbf{(i)}&\leq\|\theta_m^\tau-\theta_m^{\tau,\sf ap}\|_2\cdot\|\Pb^\perp_{\theta_m^\tau}-\Pb^\perp_{\theta_m^{\tau,\sf ap}}\|_{\rm op}\cdot\|\nabla_{\theta_m^\tau}\scrR(\Theta)\|_2=2\|\theta_m^\tau-\theta_m^{\tau,\sf ap}\|_2^2\cdot\|\nabla_{\theta_m^\tau}\scrR(\Theta)\|_2,
    \label{eq:theta_diff_term1}
\end{align}
where the second equality results from
\begin{align*}
\|\Pb^\perp_{\theta_m}-\Pb^\perp_{\theta_m^{\sf ap}}\|_{\rm op}&=\|\theta_m^\tau{\theta_m^\tau}^\top-\theta_m^{\tau,\sf ap}{\theta_m^{\tau,\sf ap}}^\top\|_{\rm op}\\
&=\frac{1}{2}\cdot\|(\theta_m^\tau+\theta_m^{\tau,\sf ap})(\theta_m^\tau-\theta_m^{\tau,\sf ap})^\top+(\theta_m^\tau-\theta_m^{\tau,\sf ap})(\theta_m^\tau+\theta_m^{\tau,\sf ap})^\top\|_{\rm op}\\
&\leq\|\theta_m^\tau+\theta_m^{\tau,\sf ap}\|_2\cdot\|\theta_m^\tau-\theta_m^{\tau,\sf ap}\|_2\leq2\|\theta_m^\tau-\theta_m^{\tau,\sf ap}\|_2.
\end{align*}
Moreover, we can bound the gradient norm $\|\nabla_{\theta_m}\scrR^{(1)}(\Theta)\|_2$ as follows:
\begin{align}
   \|\nabla_{\theta_m^\tau}\scrR^{(1)}(\Theta)\|_2 
   &\leq \frac{2a|G|^{1/2}}{M}\cdot\bigg(\sum_{g_1,g_2\in G}\xi_{m,g_1\star g_2}^2\cdot(\theta_{m,g_1}^{1}+\theta_{m,g_2}^{2})^2\bigg)^{1/2}\notag\\
   &= \frac{2a|G|^{1/2}}{M}\cdot\bigg(\sum_{j\in G}\xi_{m,j}^2\cdot\sum_{x\in G}(\theta_{m,x}^1+\theta_{m,x^{-1}\star j}^2)^2\bigg)^{1/2}\notag\\
   &\leq\frac{a|G|^{1/2}}{M}\cdot\bigg(2\sum_{j\in G}\xi_{m,j}^2\cdot\sum_{x\in G}\big\{(\theta_{m,x}^1)^2+(\theta_{m,x^{-1}\star j}^2)^2\big\}\bigg)^{1/2}=\frac{4a|G|^{1/2}}{M},
   \label{eq:theta_diff_term1_1}
\end{align}
where the first inequality results from Lemma \ref{lem:indicator_sum_l2_norm} and the last inequality uses the AM-GM inequality.
Following a similar argument, we can show that
\begin{align}
    \|\nabla_{\theta_m^\tau}\scrR^{(2)}(\Theta)\|_2 &\leq\frac{2a|G|^{1/2}}{M}\cdot\bigg(\sum_{g_1,g_2\in G}(\theta_{m,g_1}^1+\theta_{m,g_2}^2)^2\cdot\Big(\sum_{j\in G}(\euP_{g_1g_2})_j\cdot\xi_{m,j}\Big)^2\bigg)^{1/2}\notag\\
    &\leq\frac{2a|G|^{1/2}}{M}\cdot\bigg(\sum_{(g_1,g_2)\in G^2}\|\xi_m\|_2^2\cdot\|\euP_{g_1g_2}\|_2^2\cdot(\theta_{m,g_1}^1+\theta_{m,g_2}^2)^2\bigg)^{1/2}\notag\\
    &\leq\frac{2a|G|^{1/2}}{M}\cdot\max_{g_1,g_2\in G}\|\euP_{g_1g_2}\|_2\cdot\bigg(2|G|\cdot\{\|\theta_m^1\|_2^2+\|\theta_m^2\|_2^2\}\bigg)^{1/2}\notag\\
    &\leq\frac{4a|G|}{M}\cdot\Big\{|G|^{-1/2}+\max_{g_1,g_2\in G}\big\|\euP_{g_1g_2}-\one_{|G|}/|G|\big\|_2\Big\}.
    \label{eq:theta_diff_term1_2}
\end{align}
where the last inequality uses the unit sphere and the triangle inequality.
Recall that during Stage I, we fix a sufficiently small constant $a>0$ as the scaling factor such that the softmax distribution $\euP_{g_1g_2}$ is close to uniform distribution $\one_{|G|}/|G|$.
Formally, we can deduce that
\begin{align}
    \Delta f_{\max}:&=\big\|\max_{j\in G} f_j-\min_{j\in G} f_j\big\|_\infty\leq2\max_{j\in G} \|f_j\|_\infty\leq2a\cdot\|\xi_m\|_\infty\cdot2\{\|\theta_m^1\|_2^2+\|\theta_m^2\|_2^2\}\leq8a.\notag
\end{align}
By applying Lemma \ref{lem:smax_approx_uniform} and assuming that $a=o(1)$,  we can bound the distribution difference as
\begin{align}
    \max_{g_1,g_2\in G}\big\|\euP_{g_1g_2}-\one_{|G|}/|G|\big\|_2\leq|G|^{1/2}\cdot \max_{g_1,g_2\in G}\big\|\euP_{g_1g_2}-\one_{|G|}/|G|\big\|_\infty\leq 16a\cdot|G|^{-1/2}.
    \label{eq:theta_diff_term1_smax_unif}
\end{align}
Combining \eqref{eq:theta_diff_term1}, \eqref{eq:theta_diff_term1_1}, \eqref{eq:theta_diff_term1_2} and \eqref{eq:theta_diff_term1_smax_unif} gives that
\begin{align}
    \textbf{(i)}&\leq\frac{8a|G|}{M}\cdot\Big(2|G|^{-1/2}+ \max_{g_1,g_2\in G}\big\|\euP_{g_1g_2}-\one_{|G|}/|G|\big\|_2\Big)\cdot \|\theta_m^\tau-\theta_m^{\tau,\sf ap}\|_2^2\notag\\
    &\leq\frac{16a\cdot(1+ 8a)\cdot|G|^{1/2}}{M}\cdot \|\theta_m^\tau-\theta_m^{\tau,\sf ap}\|_2^2.
    \label{eq:theta_diff_term1_ub}
\end{align}

\vspace{5pt}
\noindent\textbf{$\bullet$ Growth of \textbf{(ii)} in \eqref{eq:diff_dynamic_decom}.}
For the second term, we have
\begin{align}
    \textbf{(ii)}&\leq\|\theta_m^\tau-\theta_m^{\tau,\sf ap}\|_2\cdot\|\Pb^\perp_{\theta_m^{\tau,\sf ap}}\|_{\rm op}\cdot\|\nabla_{\theta_m^\tau}\scrR(\Theta)-\nabla_{\theta_m}\scrR_{\sf ap}(\Theta)\|_2\notag\\
    &\leq\|\theta_m^\tau-\theta_m^{\tau,\sf ap}\|_2\cdot\|\nabla_{\theta_m^\tau}\scrR(\Theta)-\nabla_{\theta_m^\tau}\scrR_{\sf ap}(\Theta)\|_2,
    \label{eq:theta_diff_term2}
\end{align}
where the last inequality uses $\|\Pb_\nu^\perp x\|_2\leq \|x\|_2$ for all $x$ if $\|\nu\|_2=1$.
By definition, it holds that
\begin{align}
    &\|\nabla_{\theta_m^\tau}\scrR(\Theta)-\nabla_{\theta_m^\tau}\scrR_{\sf ap}(\Theta)\|_2\notag\\
    &\qquad=\frac{2a}{M}\cdot\bigg\|\sum_{g_1,g_2\in G}\sum_{j\in G}\big((\euP_{g_1g_2})_j-1/|G|\big)\cdot\xi_{m,j}\cdot(\theta_{m,g_1}^1+\theta_{m,g_2}^2)\cdot e_{g_\tau}\bigg\|_2\notag\\
    &\qquad\leq\frac{2a|G|^{1/2}}{M}\cdot\bigg(\sum_{g_1,g_2\in G}(\theta_{m,g_1}^1+\theta_{m,g_2}^2)^2\cdot\Big\{\sum_{j\in G}\big((\euP_{g_1g_2})_j-{1}/{|G|}\big)\cdot\xi_{m,j}\Big\}^2\bigg)^{1/2}\notag\\
    &\qquad\leq\frac{2a|G|^{1/2}}{M}\cdot\Bigg(\sum_{g_1,g_2\in G}\big\|\euP_{g_1g_2}-\one_{|G|}/|G|\big\|_2^2\cdot(\theta_{m,g_1}^1+\theta_{m,g_2}^2)^2\Bigg)^{1/2}\notag\\
    &\qquad\leq\frac{4a|G|}{M}\cdot\max_{g_1,g_2\in G}\big\|\euP_{g_1g_2}-\one_{|G|}/|G|\big\|_2,
    \label{eq:theta_diff_term2_1}
\end{align}
where the first inequality applies Lemma \ref{lem:indicator_sum_l2_norm} and the second follows from the Cauchy-Schwarz inequality.
According to \eqref{eq:theta_diff_term2}, \eqref{eq:theta_diff_term2_1} and \eqref{eq:theta_diff_term1_smax_unif}, we can deduce that
\begin{align}
    \textbf{(ii)}&\leq\frac{64a^2\cdot|G|^{1/2}}{M}\cdot\|\theta_m^\tau-\theta_m^{\tau,\sf ap}\|_2,
    \label{eq:theta_diff_term2_ub}
\end{align}

\vspace{5pt}
\noindent\textbf{$\bullet$ Growth of \textbf{(iii)} in \eqref{eq:diff_dynamic_decom}.}
Following a similar argument as in \eqref{eq:theta_diff_term2}, we have
\begin{align*}
    \textbf{(iii)}&\leq\|\theta_m^\tau-\theta_m^{\tau,\sf ap}\|_2\cdot\|\nabla_{\theta_m^\tau}\scrR_{\sf ap}(\Theta)-\nabla_{\theta_m^\tau}\scrR_{\sf ap}(\Theta^{\sf ap})\|_2.
\end{align*}
Next, we establish the Lipschitz continuity of the gradient. For the first component, we have
\begin{align}
    &\|\nabla_{\theta_m^\tau}\scrR_{\sf ap}^{(1)}(\Theta)-\nabla_{\theta_m^\tau}\scrR_{\sf ap}^{(1)}(\Theta^{\sf ap})\|_2\notag\\
    &\qquad\leq\frac{2a}{M}\cdot\bigg\|\sum_{g_1,g_2\in G}\xi_{m,g_1\star g_2}\cdot(\theta_{m,g_1}^1-\theta_{m,g_1}^{1,\sf ap}+\theta_{m,g_2}^2-\theta_{m,g_2}^{2,\sf ap})\cdot e_{g_\tau}\bigg\|_2\notag\\
    &\qquad\qquad+\frac{2a}{M}\cdot\bigg\|\sum_{g_1,g_2\in G}(\xi_{m,g_1\star g_2}-\xi_{m,g_1\star g_2}^{\sf ap})\cdot(\theta_{m,g_1}^{1,\sf ap}+\theta_{m,g_2}^{2,\sf ap})\cdot e_{g_\tau}\bigg\|_2\notag\\
    &\qquad\leq\frac{2a|G|^{1/2}}{M}\cdot\underbrace{\Bigg(\sum_{g_1,g_2\in G}\xi_{m,g_1\star g_2}^2\cdot(\theta_{m,g_1}^1-\theta_{m,g_1}^{1,\sf ap}+\theta_{m,g_2}^2-\theta_{m,g_2}^{2,\sf ap})^2\Bigg)^{1/2}}_{\displaystyle \textbf{\small (iii.1)}}\notag\\
    &\qquad\qquad+\frac{2a|G|^{1/2}}{M}\cdot\underbrace{\Bigg(\sum_{g_1,g_2\in G}(\xi_{m,g_1\star g_2}-\xi_{m,g_1\star g_2}^{\sf ap})^2\cdot(\theta_{m,g_1}^{1,\sf ap}+\theta_{m,g_2}^{2,\sf ap})^2\Bigg)^{1/2}}_{\displaystyle \textbf{\small (iii.2)}},
    \label{eq:theta_diff_term3_1}
\end{align}
where the first inequality follows from the triangle inequality and the second from Lemma \ref{lem:indicator_sum_l2_norm}.
We now bound each sub-term. For the first,
\begin{align}
    \textbf{(iii.1)}&=\Bigg(\sum_{j\in G}\xi_{m,j}^2\cdot\sum_{x\in G}(\theta_{m,x}^1-\theta_{m,x}^{1,\sf ap}+\theta_{m,x^{-1}\star j}^2-\theta_{m,x^{-1}\star j}^{2,\sf ap})^2\Bigg)^{1/2}\notag\\
    &\leq \bigg(2\sum_{j\in G}\xi_{m,j}^2\cdot\big\{\|\theta_m^1-\theta_m^{1,\sf ap}\|_2^2+\|\theta_m^2-\theta_m^{2,\sf ap}\|_2^2\big\}\bigg)^{1/2}\notag\\
    &\leq\sqrt{2}\cdot \big\{\|\theta_m^1-\theta_m^{1,\sf ap}\|_2^2+\|\theta_m^2-\theta_m^{2,\sf ap}\|_2^2\big\}^{1/2},\label{eq:theta_diff_term3_1_1}
\end{align}
For the second sub-term,
\begin{align}
    \textbf{(iii.2)}
    &\leq\bigg(\sum_{j\in G}(\xi_{m,j}-\xi_{m,j}^{\sf ap})^2\cdot2\big\{\|\theta_{m}^{1,\sf ap}\|_2^2+\|\theta_{m}^{2,\sf ap}\|_2^2\big\}\bigg)^{1/2}\leq2\|\xi_m-\xi_m^{\sf ap}\|_2.
    \label{eq:theta_diff_term3_1_2}
\end{align}
Based on \eqref{eq:theta_diff_term3_1}, \eqref{eq:theta_diff_term3_1_1} and \eqref{eq:theta_diff_term3_1_2}, we can obtain that
\begin{align}
&\|\nabla_{\theta_m^\tau}\scrR_{\sf ap}^{(1)}(\Theta)-\nabla_{\theta_m^\tau}\scrR_{\sf ap}^{(1)}(\Theta^{\sf ap})\|_2\notag\\
&\qquad\leq\frac{4a|G|^{1/2}}{M}\cdot\bigg(\big\{\|\theta_m^1-\theta_m^{1,\sf ap}\|_2^2+\|\theta_m^2-\theta_m^{2,\sf ap}\|_2^2\big\}^{1/2}/\sqrt{2}+ \|\xi_m-\xi_m^{\sf ap}\|_2\bigg).
\label{eq:theta_diff_term3_1_ub}
\end{align}
Similarly, by again applying Lemma \ref{lem:indicator_sum_l2_norm} and the Cauchy-Schwarz inequality, we can deduce that
\begin{align}
    &\|\nabla_{\theta_m^\tau}\scrR_{\sf ap}^{(2)}(\Theta)-\nabla_{\theta_m^\tau}\scrR_{\sf ap}^{(2)}(\Theta^{\sf ap})\|_2\notag\\
    &\qquad\leq\frac{2a}{M|G|}\cdot|\langle\xi_m,\one_{|G|}\rangle|\cdot\bigg\|\sum_{g_1,g_2\in G}(\theta_{m,g_1}^1-\theta_{m,g_1}^{1,\sf ap}+\theta_{m,g_2}^2-\theta_{m,g_2}^{2,\sf ap})\cdot e_{g_\tau}\bigg\|_2\notag\\
    &\qquad\qquad+\frac{2a}{M|G|}\cdot|\langle\xi_m-\xi_m^{\sf ap},\one_{|G|}\rangle|\cdot\bigg\|\sum_{g_1,g_2\in G}(\theta_{m,g_1}^{\sf ap}+\theta_{m,g_2}^{\sf ap})\cdot e_{g_\tau}\bigg\|_2\notag\\
    &\qquad\leq\frac{2a}{M|G|}\cdot\|\xi_m\|_1\cdot|G|^{1/2}\cdot\Bigg(\sum_{g_1,g_2\in G}(\theta_{m,g_1}^1-\theta_{m,g_1}^{1,\sf ap}+\theta_{m,g_2}^2-\theta_{m,g_2}^{2,\sf ap})^2\Bigg)^{1/2}\notag\\
    &\qquad\qquad+\frac{2a}{M|G|}\cdot\|\xi_m-\xi_m^{\sf ap}\|_1\cdot|G|^{1/2}\cdot\Bigg(\sum_{g_1,g_2\in G}(\theta_{m,g_1}^{\sf ap}+\theta_{m,g_2}^{\sf ap})^2\Bigg)^{1/2}\notag\\
    &\qquad\leq\frac{2a}{M|G|}\cdot|G|\cdot\sqrt{2}|G|^{1/2}\big\{\|\theta_m^1-\theta_m^{1,\sf ap}\|_2^2+\|\theta_m^2-\theta_m^{2,\sf ap}\|_2^2\big\}^{1/2}+\frac{2a}{M|G|}\cdot|G|^{1/2}\|\xi_m-\xi_m^{\sf ap}\|_2\cdot2|G|\notag\\
    &\qquad=\frac{4a|G|^{1/2}}{M}\cdot\bigg(\big\{\|\theta_m^1-\theta_m^{1,\sf ap}\|_2^2+\|\theta_m^2-\theta_m^{2,\sf ap}\|_2^2\big\}^{1/2}/\sqrt{2}+\|\xi_m-\xi_m^{\sf ap}\|_2\bigg),
    \label{eq:theta_diff_term3_2}
\end{align}
Combining \eqref{eq:theta_diff_term3_1_ub} and \eqref{eq:theta_diff_term3_2} gives that
\begin{align}
    \textbf{(iii)}\leq\frac{8a|G|^{1/2}}{M}\cdot\|\theta_m^\tau-\theta_m^{\tau,\sf ap}\|_2\cdot\bigg(\big\{\|\theta_m^1-\theta_m^{1,\sf ap}\|_2^2+\|\theta_m^2-\theta_m^{2,\sf ap}\|_2^2\big\}^{1/2}/\sqrt{2}+\|\xi_m-\xi_m^{\sf ap}\|_2\bigg),
\label{eq:theta_diff_term3_ub}
\end{align}

\vspace{5pt}
\noindent\textbf{$\bullet$ Combining (i)--(iii).}
By plugging \eqref{eq:theta_diff_term1_ub}, \eqref{eq:theta_diff_term2_ub} and \eqref{eq:theta_diff_term3_ub} back into \eqref{eq:diff_dynamic_decom}, we can conclude that
\begin{align*}
     &\partial_t\|\theta_m^\tau-\theta_m^{\tau,\sf ap}\|_2^2/2\\
     &\qquad\leq\frac{16a(1+ 8a\big)|G|^{1/2}}{M}\cdot \|\theta_m^\tau-\theta_m^{\tau,\sf ap}\|_2^2+\frac{64a^2|G|^{1/2}}{M}\cdot\|\theta_m^\tau-\theta_m^{\tau,\sf ap}\|_2\notag\\
     &\qquad\qquad+\frac{8a|G|^{1/2}}{M}\cdot\|\theta_m^\tau-\theta_m^{\tau,\sf ap}\|_2\cdot\bigg(\big\{\|\theta_m^1-\theta_m^{1,\sf ap}\|_2^2+\|\theta_m^2-\theta_m^{2,\sf ap}\|_2^2\big\}^{1/2}/\sqrt{2}+\|\xi_m-\xi_m^{\sf ap}\|_2\bigg).
\end{align*}
By taking the sum of the two terms, we can conclude that
\begin{align}
     &\partial_t\sum_{\tau\in\{1,2\}}\|\theta_m^\tau-\theta_m^{\tau,\sf ap}\|_2^2\notag\\
     &\quad\leq\frac{32a(1+ 8a\big)|G|^{1/2}}{M}\cdot \sum_{\tau\in\{1,2\}}\|\theta_m^\tau-\theta_m^{\tau,\sf ap}\|_2^2+\frac{256a^2|G|^{1/2}}{M}\notag\\
     &\qquad+\frac{16|G|^{1/2}}{M}\cdot\bigg(\bigg\{\sum_{\tau\in\{1,2\}}\|\theta_m^\tau-\theta_m^{\tau,\sf ap}\|_2^2\bigg\}^{1/2}/\sqrt{2}+\|\xi_m-\xi_m^{\sf ap}\|_2\bigg)\cdot\sum_{\tau\in\{1,2\}}\|\theta_m^\tau-\theta_m^{\tau,\sf ap}\|_2,
     \label{eq:sum_theta_bound}
\end{align}

\medskip\noindent\textbf{Part 2. Bounding $\|\xi_m-\xi_m^{\sf ap}\|_2$.}
We bound each of the three terms in \eqref{eq:diff_dynamic_decom} with $\iota = \xi_m$.

\vspace{5pt}
\noindent\textbf{$\bullet$ Growth of \textbf{(i)} in \eqref{eq:diff_dynamic_decom}.}
Following a similar argument as in \eqref{eq:theta_diff_term1}, we have
\begin{align}
    \textbf{(i)}&\leq2\|\xi_m-\xi_m^{\sf ap}\|_2^2\cdot\|\nabla_{\xi_m}\scrR(\Theta)\|_2,
    \label{eq:xi_diff_term1}
\end{align}
Moreover, we can bound the gradient norm as
\begin{align}
    \|\nabla_{\xi_m}\scrR(\Theta)\|_2&\leq \|\nabla_{\xi_m}\scrR^{(1)}(\Theta)\|_2+\|\nabla_{\xi_m}\scrR^{(2)}(\Theta)\|_2\notag\\
    &\leq\frac{a}{M}\cdot\underbrace{\bigg\|\sum_{g_1,g_2\in G}(\theta_{m,g_1}^1+\theta_{m,g_2}^2)^2\cdot e_{g_1\star g_2}\bigg\|_2}_{\displaystyle \textbf{\small (i.1)}}\notag\\
    &\qquad+\frac{a}{M}\cdot\underbrace{\bigg\|\sum_{g_1,g_2\in G}\sum_{j\in G}(\euP_{g_1g_2})_j\cdot(\theta_{m,g_1}^1+\theta_{m,g_2}^2)^2\cdot e_j\bigg\|_2}_{\displaystyle \textbf{\small (i.2)}}.
    \label{eq:xi_diff_term1_decom}
\end{align}
We bound each of these two sub-terms separately. For the first term,
\begin{align}
    \textbf{(i.1)}^2&=\sum_{j\in G}\bigg(\sum_{(g_1,g_2):g_1\star g_2=j}(\theta_{m,g_1}^1+\theta_{m,g_2}^2)^2\bigg)^2\leq\sum_{j\in G}\bigg(2\cdot\big\{\|\theta_m^1\|_2^2+\|\theta_m^2\|_2^2\big\}\bigg)^2=16|G|,
    \label{eq:xi_diff_term1_1}
\end{align}
For the second term,
\begin{align}
    \textbf{(i.2)}^2
    &\leq2\sum_{j\in G}\bigg(\sum_{g_1,g_2\in G}|(\euP_{g_1g_2})_j-1/|G||\cdot(\theta_{m,g_1}^1+\theta_{m,g_2}^2)^2\bigg)^2+2\sum_{j\in G}\bigg(\frac{1}{|G|}\sum_{g_1,g_2\in G}(\theta_{m,g_1}^1+\theta_{m,g_2}^2)^2\bigg)^2\notag\\
    &\leq2|G|\cdot\bigg(\max_{g_1,g_2\in G}\big\|\euP_{g_1g_2}-\one_{|G|}/|G|\big\|_\infty^2+|G|^{-2}\bigg)\cdot\bigg(\sum_{g_1,g_2\in G}(\theta_{m,g_1}^1+\theta_{m,g_2}^2)^2\bigg)^2\notag\\
    &\leq 2|G|\cdot\{1+(16a)^2\}\cdot|G|^{-2}\cdot16|G|^2=32|G|\cdot(1+256a^2),
    \label{eq:xi_diff_term1_2}
\end{align}
where the last inequality results from Lemma \ref{lem:smax_approx_uniform} based on the derived upper bound in \eqref{eq:theta_diff_term1_smax_unif}.
By combining \eqref{eq:xi_diff_term1}, \eqref{eq:xi_diff_term1_decom}, \eqref{eq:xi_diff_term1_1} and \eqref{eq:xi_diff_term1_2}, we can conclude that
\begin{align}
    \textbf{(i)}&\leq \frac{a}{M}\cdot4|G|^{1/2}\cdot(1+\sqrt{2+512a^2})\cdot\|\xi_m-\xi_m^{\sf ap}\|_2^2.
    \label{eq:xi_diff_term1_ub}
\end{align}

\vspace{5pt}
\noindent\textbf{$\bullet$ Growth of \textbf{(ii)} in \eqref{eq:diff_dynamic_decom}.}
Similarly, for the second term we have
\begin{align*}
    \textbf{(ii)}
    &\leq\|\xi_m-\xi_m^{\sf ap}\|_2\cdot\|\nabla_{\xi_m}\scrR^{(2)}(\Theta)-\nabla_{\xi_m}\scrR_{\sf ap}^{(2)}(\Theta)\|_2\notag\\
    &=\frac{a}{M}\cdot\|\xi_m-\xi_m^{\sf ap}\|_2\cdot\underbrace{\bigg\|\sum_{g_1,g_2\in G}\sum_{j\in G}\big((\euP_{g_1g_2})_j-1/|G|\big)\cdot(\theta_{m,g_1}^1+\theta_{m,g_2}^2)^2\cdot e_j\bigg\|_2}_{\displaystyle \textbf{\small (ii.1)}},
\end{align*}
Expanding the squared norm and using Lemma \ref{lem:smax_approx_uniform}, we obtain
\begin{align*}
    \textbf{(ii.1)}^2
    &\leq \max_{g_1,g_2\in G}\big\|\euP_{g_1g_2}-\one_{|G|}/|G|\big\|_\infty^2\cdot\sum_{j\in G}\bigg(\sum_{g_1,g_2\in G}(\theta_{m,g_1}^1+\theta_{m,g_2}^2)^2\bigg)^2
    \leq4096a^2\cdot|G|.
\end{align*}
Combining the bounds above yields
\begin{align}
    \textbf{(ii)}\leq\frac{a}{M}\cdot\|\xi_m-\xi_m^{\sf ap}\|_2\cdot64a|G|^{1/2}=\frac{64a^2|G|^{1/2}}{M}\cdot\|\xi_m-\xi_m^{\sf ap}\|_2.
    \label{eq:xi_diff_term2_ub}
\end{align}

\vspace{5pt}
\noindent\textbf{$\bullet$ Growth of \textbf{(iii)} in \eqref{eq:diff_dynamic_decom}.}
Akin to \eqref{eq:theta_diff_term2}, we have
\begin{align}
    \textbf{(iii)}&\leq\|\xi_m-\xi_m^{\sf ap}\|_2\cdot\|\nabla_{\xi_m}\scrR_{\sf ap}(\Theta)-\nabla_{\xi_m}\scrR_{\sf ap}(\Theta^{\sf ap})\|_2.
    \label{eq:xi_diff_term3}
\end{align}
For the gradient component $\nabla_{\xi_m}\scrR_{\sf ap}^{(1)}$ from \eqref{eq:approx_gradient_1_xi}, we have
\begin{align*}
    &\|\nabla_{\xi_m}\scrR_{\sf ap}^{(1)}(\Theta)-\nabla_{\xi_m}\scrR_{\sf ap}^{(1)}(\Theta^{\sf ap})\|_2=\frac{2a}{M}\cdot\underbrace{\bigg\|\sum_{g_1,g_2\in G}\big\{(\theta_{m,g_1}^1+\theta_{m,g_2}^2)^2-(\theta_{m,g_1}^{1,\sf ap}+\theta_{m,g_2}^{2,\sf ap})^2\big\}\cdot e_{g_1\star g_2}\bigg\|_2}_{\displaystyle \textbf{\small (iii.1)}},
\end{align*}
where we can bound \textbf{(iii.1)} by
\begin{align*}
    \textbf{(iii.1)}^2&=\sum_{j\in G}\bigg(\sum_{x\in G}\big\{(\theta_{m,x}^1+\theta_{m,x^{-1}\star j}^2)^2-(\theta_{m,x}^{1,\sf ap}+\theta_{m,x^{-1}\star j}^{2,\sf ap})^2\big\}\bigg)^2\\
    &=\sum_{j\in G}\bigg(\sum_{x\in G}(\theta_{m,x}^1-\theta_{m,x}^{1,\sf ap}+\theta_{m,x^{-1}\star j}^2-\theta_{m,x^{-1}\star j}^{2,\sf ap})\cdot (\theta_{m,x}^1+\theta_{m,x^{-1}\star j}^2+\theta_{m,x}^{1,\sf ap}+\theta_{m,x^{-1}\star j}^{2,\sf ap})\bigg)^2\\
    &\leq\sum_{j\in G}\bigg(\sum_{x\in G}(\theta_{m,x}^1-\theta_{m,x}^{1,\sf ap}+\theta_{m,x^{-1}\star j}^2-\theta_{m,x^{-1}\star j}^{2,\sf ap})^2\bigg)\cdot\bigg(\sum_{x\in G}(\theta_{m,x}^1+\theta_{m,x^{-1}\star j}^2+\theta_{m,x}^{1,\sf ap}+\theta_{m,x^{-1}\star j}^{2,\sf ap})^2\bigg)\\
    &\leq 32|G|\cdot\big\{\|\theta_m^1-\theta_m^{1,\sf ap}\|_2^2+\|\theta_m^2-\theta_m^{2,\sf ap}\|_2^2\big\},
\end{align*}
where the last two inequalities follow from the Cauchy-Schwarz inequality.
Hence, we have
\begin{align}
    \|\nabla_{\xi_m}\scrR_{\sf ap}^{(1)}(\Theta)-\nabla_{\xi_m}\scrR_{\sf ap}^{(1)}(\Theta^{\sf ap})\|_2\leq4\sqrt{2}|G|^{1/2}\cdot\big\{\|\theta_m^1-\theta_m^{1,\sf ap}\|_2^2+\|\theta_m^2-\theta_m^{2,\sf ap}\|_2^2\big\}^{1/2}.\label{eq:xi_diff_term3_1}
\end{align}
Similarly, it holds that
\begin{align}
    &\|\nabla_{\xi_m}\scrR_{\sf ap}^{(2)}(\Theta)-\nabla_{\xi_m}\scrR_{\sf ap}^{(2)}(\Theta^{\sf ap})\|_2\notag\\
    &\qquad\leq\frac{a}{M|G|^{1/2}}\cdot\bigg(\sum_{g_1,g_2\in G}(\theta_{m,g_1}^1-\theta_{m,g_1}^{1,\sf ap}+\theta_{m,g_2}^2-\theta_{m,g_2}^{2,\sf ap})^2\bigg)^{1/2}\notag\\
    &\qquad\qquad\cdot\bigg(\sum_{g_1,g_2\in G}(\theta_{m,g_1}^1+\theta_{m,g_2}^2+\theta_{m,g_1}^{1,\sf ap}+\theta_{m,g_2}^{2,\sf ap})^2\bigg)^{1/2}\notag\\
    &\qquad\leq\frac{4\sqrt{2}a|G|^{1/2}}{M}\cdot\big\{\|\theta_m^1-\theta_m^{1,\sf ap}\|_2^2+\|\theta_m^2-\theta_m^{2,\sf ap}\|_2^2\big\}^{1/2}.
    \label{eq:xi_diff_term3_2}
\end{align}
Combining \eqref{eq:xi_diff_term3}, \eqref{eq:xi_diff_term3_1} and \eqref{eq:xi_diff_term3_2} gives that
\begin{align}
    \textbf{(iii)}\leq\frac{8\sqrt{2}a|G|^{1/2}}{M}\cdot\big\{\|\theta_m^1-\theta_m^{1,\sf ap}\|_2^2+\|\theta_m^2-\theta_m^{2,\sf ap}\|_2^2\big\}^{1/2}\cdot\|\xi_m-\xi_m^{\sf ap}\|_2.
    \label{eq:xi_diff_term3_ub}
\end{align}
\vspace{5pt}
\noindent\textbf{$\bullet$ Combining (i)--(iii).}
By substituting \eqref{eq:xi_diff_term1_ub}, \eqref{eq:xi_diff_term2_ub} and \eqref{eq:xi_diff_term3_ub} into  \eqref{eq:diff_dynamic_decom}, we obtain
\begin{align}
     &\partial_t\|\xi_m-\xi_m^{\sf ap}\|_2^2/2\notag\\
     &\qquad\leq\frac{4a|G|^{1/2}\cdot(1+\sqrt{2+512a^2})}{M}\cdot\|\xi_m-\xi_m^{\sf ap}\|_2^2+\frac{128a^2|G|^{1/2}}{M}\notag\\
     &\qquad\qquad+\frac{8\sqrt{2}a|G|^{1/2}}{M}\cdot\big\{\|\theta_m^1-\theta_m^{1,\sf ap}\|_2^2+\|\theta_m^2-\theta_m^{2,\sf ap}\|_2^2\big\}^{1/2}\cdot\|\xi_m-\xi_m^{\sf ap}\|_2.
     \label{eq:sum_xi_bound}
\end{align}
\medskip\noindent\textbf{Part 3. Combining bounds and solving the differential inequality.}
For notational simplicity, we introduce the shorthand
$$
\Delta\theta_m=\|\theta_m^1-\theta_m^{1,\sf ap}\|_2^2+\|\theta_m^2-\theta_m^{2,\sf ap}\|_2^2,\qquad\Delta\xi_m=\|\xi_m-\xi_m^{\sf ap}\|_2^2,
$$
Combining \eqref{eq:sum_theta_bound} and \eqref{eq:sum_xi_bound} yields the following coupled differential inequalities:
\begin{align*}
    \partial_t\Delta\theta_m/2&\leq\frac{16a(1+ 8a\big)|G|^{1/2}}{M}\cdot \Delta\theta_m+\frac{8a|G|^{1/2}}{M}\cdot\big(\Delta\theta_m^{1/2}/\sqrt{2}+\Delta\xi_m^{1/2}\big)\cdot\Delta\theta_m^{1/2}+\frac{128a^2|G|^{1/2}}{M},\\
    \partial_t\Delta\xi_m/2&\leq\frac{4a|G|^{1/2}\cdot(1+\sqrt{2+512a^2})}{M}\cdot\Delta\xi_m+\frac{8\sqrt{2}a|G|^{1/2}}{M}\cdot\Delta\theta_m^{1/2}\cdot\Delta\xi_m^{1/2}+\frac{128a^2|G|^{1/2}}{M}.
\end{align*}
To decouple the system, we sum the two inequalities and apply AM-GM inequality $\Delta\theta_m^{1/2}\cdot\Delta\xi_m^{1/2} \leq (\Delta\theta_m + \Delta\xi_m)/2$ to absorb the cross terms.
Then, it  holds that
\begin{align*}
    \partial_t(\Delta\theta_m+\Delta\xi_m)&\lesssim\frac{a|G|^{1/2}}{M}\cdot (\Delta\theta_m+\Delta\xi_m)+\frac{a|G|^{1/2}}{M}\cdot\Delta\theta_m^{1/2}\cdot\Delta\xi_m^{1/2}+\frac{a^2|G|^{1/2}}{M}\\
    &\lesssim\frac{a|G|^{1/2}}{M}\cdot (\Delta\theta_m+\Delta\xi_m)+\frac{a^2|G|^{1/2}}{M}.
\end{align*}
Here, the notation $\lesssim$ absorbs universal constants independent of $a$, $|G|$, and $M$.
Applying Gr\"onwall's inequality in Lemma \ref{lem:lin_ode_solution} and using the fact that both flows share the same initialization, we obtain
\begin{align*}
    &\Delta\theta_m(t)+\Delta\xi_m(t)\\
    &\qquad\leq\{\Delta\theta_m(0)+\Delta\xi_m(0)\}\cdot\exp\big(\Theta(a|G|^{1/2}M^{-1})\cdot t\big)+a\cdot\big\{\exp\big(\Theta(a|G|^{1/2}M^{-1})\cdot t\big)-1\big\}\\
     &\qquad\leq a\cdot\big\{\exp\big(\Theta(a|G|^{1/2}M^{-1})\cdot t\big)-1\big\},
\end{align*}
which completes the proof.
\end{proof}

\section{Proof of Theoretical Results for General Group}
\label{ap:proof_general_group}

This appendix contains the proofs of the main results for general group learning in \S\ref{sec:main_theory}.
We organize the material into three parts.
First, \S\ref{ap:proof_prop_dyn} derives the equivalent dynamics of the Fourier coefficients under the projected gradient flow (see Proposition~\ref{prop:general_group_dyn}).
Second, \S\ref{ap:proof_thm_converge_point} proves that the gradient flow drives each neuron to a single irreducible representation with rank-1 alignment (Theorem~\ref{thm:converge_point_general_group}).
Third, \S\ref{ap:growth_stage2} establishes the logarithmic growth of the scaling factor in Stage~II (Theorem~\ref{thm:stage2_scale_growth}).

\subsection{Equivalent Dynamics on the Spectral Manifold}
\label{ap:proof_prop_dyn}

We first employ the group DFT to translate the projected gradient flow into an equivalent dynamical system on the Fourier coefficients. 
This spectral reformulation serves as the analytical foundation for the subsequent proof of Theorem~\ref{thm:converge_point_general_group}.

\begin{proposition}[Spectral Dynamics]
\label{prop:general_group_dyn}
Under the approximate risk $\scrR_{\sf ap}$ in \eqref{eq:def_approx_risk}, the projected gradient flow \eqref{eq:def_gradient_flow} on the unit sphere induces the following dynamics on the matrix-valued Fourier coefficients $\hat\nu[\rho]\in\CC^{d_\rho\times d_\rho}$ for each $\rho\in\irr(G)$ and neuron $m$:
\begin{align*}
    \partial_t\hat{\theta_m^1}[\rho]&=\frac{2a|G|}{M}\cdot(\widehat{\theta_m^2}[\rho])^*\widehat{\xi_m}[\rho]\cdot\ind(\rho\neq\rho_{\sf triv})-\frac{2a|G|^2}{M}\cdot\Omega_m\cdot\hat{\theta_m^1}[\rho],\\
    \partial_t\hat{\theta_m^2}[\rho]&=\frac{2a|G|}{M}\cdot\widehat{\xi_m}[\rho](\widehat{\theta_m^1}[\rho])^* \cdot\ind(\rho\neq\rho_{\sf triv})-\frac{2a|G|^2}{M}\cdot\Omega_m\cdot\hat{\theta_m^2}[\rho],\\
    \partial_t\hat{\xi_m}[\rho]&=\frac{2a|G|}{M}\cdot\widehat{\theta_m^2}[\rho]\widehat{\theta_m^1}[\rho]\cdot\ind(\rho\neq\rho_{\sf triv})-\frac{2a|G|^2}{M}\cdot\Omega_m\cdot\hat{\xi_m}[\rho],
\end{align*}
where the energy functional $\Omega_m\in\RR$ is defined in \eqref{eq:def_energy}.
\end{proposition}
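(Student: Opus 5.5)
Since the group DFT is linear and real-valued functions correspond bijectively to their Fourier blocks, I will compute the Euclidean gradients of $\scrR_{\sf ap}$ from \eqref{eq:approx_gradient_1_theta}--\eqref{eq:approx_gradient_2_xi}, transform them block by block, and then handle the projection term $-\nu\nu^\top\nabla_\nu\scrR_{\sf ap}$ separately. Because $\sigma(x)=x^2$ and the loss is linear in the logits, $\scrR_{\sf ap}$ is a cubic polynomial in $(\theta_m^1,\theta_m^2,\xi_m)$. Expanding the square $(\theta^1_{g_1}+\theta^2_{g_2})^2=(\theta^1_{g_1})^2+(\theta^2_{g_2})^2+2\theta^1_{g_1}\theta^2_{g_2}$, the first term is $-\frac{a}{M}\sum_{g_1,g_2}\xi_{g_1\star g_2}(\theta^1_{g_1}+\theta^2_{g_2})^2$. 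The two diagonal pieces give $-\frac{a}{M}\langle\xi,\one\rangle(\|\theta^1\|^2+\|\theta^2\|^2)$ after reindexing $g_2\mapsto g_1^{-1}j$. These cancel exactly against the same pieces in the mean-logit term $\frac{a}{M|G|}\sum_{g_1,g_2,j}\xi_j(\cdot)^2$. The cross pieces leave $-\frac{2a}{M}\big[\sum_{g_1,g_2}\xi_{g_1 g_2}\theta^1_{g_1}\theta^2_{g_2}-\frac1{|G|}\langle\xi,\one\rangle\langle\theta^1,\one\rangle\langle\theta^2,\one\rangle\big]$.

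The key algebraic step is the Fourier identity
\[
\sum_{g_1,g_2}\xi(g_1g_2)\theta^1(g_1)\theta^2(g_2)=|G|^2\sum_{\rho}d_\rho\tr\big(\hat\xi[\rho]^*\hat\theta^2[\rho]\hat\theta^1[\rho]\big).
\]
I will prove it via the convolution theorem together with Plancherel (Lemma~\ref{lem:l2G_inner_product_dft}). The convolution $(\theta^1*\theta^2)(j)=\sum_{x}\theta^1(x)\theta^2(x^{-1}j)$ has Fourier block $|G|\,\hat\theta^2[\rho]\hat\theta^1[\rho]$ under the convention $\hat\nu[\rho]=|G|^{-1}\sum\nu(g)\rho(g^{-1})$. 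Pairing with $\xi$ by Plancherel ($\langle f,h\rangle_{L^2}=\sum d_\rho\tr(\hat h^*\hat f)$) then gives the identity. I will check the order of the matrix product carefully against the stated equations. The trivial-representation term equals $|G|^{-1}\langle\xi,\one\rangle\langle\theta^1,\one\rangle\langle\theta^2,\one\rangle$, which cancels the subtracted piece. Hence $\scrR_{\sf ap}=-\frac{2a|G|^2}{M}\Omega_m+\text{const}$ as a function of neuron $m$, with realness of $\Omega_m$ following from the conjugate pairing $\hat\nu[\rho^\vee]=\overline{\hat\nu[\rho]}$.

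Next I will convert gradients. For real $\nu$, $\langle\nabla_\nu F,\delta\nu\rangle_{\RR^{|G|}}=|G|\langle\nabla_\nu F,\delta\nu\rangle_{L^2}=|G|\sum_\rho d_\rho\Re\tr(\widehat{\nabla F}[\rho]^*\widehat{\delta\nu}[\rho])$. Differentiating $\Omega_m$ in $\hat\theta^1[\rho]$ (keeping $\rho$ and $\rho^\vee$ together so the perturbation remains real) gives the Wirtinger gradient $d_\rho(\hat\theta^2)^*\hat\xi$. The factor $d_\rho$ is matched on both sides, and the factor $|G|$ converts the prefactor $2a|G|^2/M$ into $2a|G|/M$. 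The analogous computations yield the cyclic forms $\hat\xi(\hat\theta^1)^*$ and $\hat\theta^2\hat\theta^1$, and the trivial block receives no driving term because $\Omega_m$ excludes $\rho_{\sf triv}$.

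For the projection term, I will compute $\nu^\top\nabla_\nu\scrR_{\sf ap}$. Since $\Omega_m$ is trilinear, Euler's identity gives $\langle\nu,\nabla_\nu\Omega_m\rangle_{\RR^{|G|}}=\Omega_m$ in the appropriate normalization, so $\nu\nu^\top\nabla_\nu\scrR_{\sf ap}=-\frac{2a|G|^2}{M}\Omega_m\nu$. Its Fourier block is $-\frac{2a|G|^2}{M}\Omega_m\hat\nu[\rho]$, with the sign flipped in the flow. I expect the main obstacle to be bookkeeping rather than concept: fixing the exact constants $|G|$ versus $|G|^2$ and the multiplication order or adjoints under the chosen DFT convention. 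I will verify these by checking the Abelian case $G=\ZZ_p$ against the known phase relation $\arg\hat\xi=\arg\hat\theta^1+\arg\hat\theta^2$.
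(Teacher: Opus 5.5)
Your proposal is correct, but it is organized differently from the paper's proof.

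\textbf{The paper's route.} It never writes down the potential. It substitutes the inverse DFT into each Euclidean gradient \eqref{eq:approx_gradient_1_theta}--\eqref{eq:approx_gradient_2_xi} separately and evaluates the resulting double sums term by term with the trace-level Schur orthogonality of Lemma~\ref{lem:SO_vee_trace}. The trivial-representation and $\|\theta_m^\tau\|^2$ contributions cancel between the $\scrR^{(1)}_{\sf ap}$ and $\scrR^{(2)}_{\sf ap}$ pieces at the gradient level, giving the element-wise formulas \eqref{eq:full_grad_theta1}--\eqref{eq:full_grad_xi}. It then computes the projection coefficient by Plancherel and lifts to Fourier blocks with Lemma~\ref{lem:trace_times_rep_matrix}.

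\textbf{Your route.} You perform the cancellation once, on the loss itself. The diagonal squares cancel exactly between the correct-logit and mean-logit terms. The convolution theorem ($\widehat{\theta^1*\theta^2}[\rho]=|G|\,\hat\theta^2[\rho]\hat\theta^1[\rho]$ under this DFT convention, which checks out) plus Plancherel turn the cross term into $|G|^2\sum_\rho d_\rho\,\mathrm{tr}(\hat\xi[\rho]^*\hat\theta^2[\rho]\hat\theta^1[\rho])$. Its trivial block exactly cancels the mean-logit remainder, so $\scrR_{\sf ap}=-\tfrac{2a|G|^2}{M}\sum_m\Omega_m+|G|^2\log|G|$. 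The driving terms then follow from differentiating a trilinear form; the factor $|G|$ in $\langle\cdot,\cdot\rangle_{\mathbb{R}^{|G|}}=|G|\langle\cdot,\cdot\rangle_{L^2(G)}$ produces the prefactor $2a|G|/M$. The projection term follows from Euler's identity $\langle\nu,\nabla_\nu\Omega_m\rangle_{\mathbb{R}^{|G|}}=\Omega_m$. It enters $\partial_t\nu=-\nabla_\nu\scrR_{\sf ap}+\nu\nu^\top\nabla_\nu\scrR_{\sf ap}$ as $-\tfrac{2a|G|^2}{M}\Omega_m\nu$.

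\textbf{What each buys.} Your route is shorter and proves the identity $\scrR_{\sf ap}=-\tfrac{2a|G|^2}{M}\Omega_m+\mathrm{const}$, which the main text only asserts. It also makes Lemma~\ref{lem:riemannian-gf} essentially immediate. The paper's route stays closer to the raw gradient expressions already derived for Proposition~\ref{prop:dyn_approx} and yields explicit spatial-domain gradient formulas along the way.

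\textbf{One step to write carefully.} In the Riesz identification, state explicitly that the candidate blocks obey the conjugate symmetry $C[\rho^\vee]=\overline{C[\rho]}$. These are, e.g.\ $|G|^{-1}(\hat\theta^2[\rho])^*\hat\xi[\rho]$ on nontrivial $\rho$ and zero on $\rho_{\sf triv}$. The symmetry makes them the transform of a real vector, so the Parseval pairing pins them down uniquely. Equivalently, compute $\nabla_{\theta^1}$ of the triple sum directly as the correlation $x\mapsto\sum_y\xi(xy)\theta^2(y)$, whose transform is $|G|\,(\hat\theta^2[\rho])^*\hat\xi[\rho]$.
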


\begin{proof}[Proof of Proposition \ref{prop:general_group_dyn}]
Recall from \S\ref{ap:gradient_computation} that the gradient of the approximate risk decomposes as
$$
\nabla_\bullet\scrR_{\sf ap}(\Theta)=\nabla_\bullet\scrR_{\sf ap}^{(1)}(\Theta)+\nabla_\bullet\scrR_{\sf ap}^{(2)}(\Theta),\qquad\forall \bullet\in\{\theta_m^1,\theta_m^2,\xi_m\}.
$$
where the expressions are given in \eqref{eq:approx_gradient_1_theta}--\eqref{eq:approx_gradient_2_xi}.
In this proof, we substitute the DFT expansions into these gradient expressions and simplify the results using Schur orthogonality.
Specifically, we obtain closed-form expressions for $\nabla\scrR_{\sf ap}^{(1)}$ and $\nabla\scrR_{\sf ap}^{(2)}$ individually in terms of Fourier coefficients (Parts~1 and~2).
We then combine the two parts, substituting the gradients into the projected GF ODE~\eqref{eq:def_gradient_flow} to obtain the dynamics.
Finally, converting these element-wise ODEs to Fourier coefficient dynamics using the chain rule yields the claimed spectral equations (Part~3).

\medskip\noindent\textbf{Part 1: Fourier Expansion of $\nabla\scrR_{\sf ap}^{(1)}$.}
Recall from \eqref{eq:approx_gradient_1_theta} that the first gradient block of $\theta_m^\tau$ is
$$
\nabla_{\theta_m^\tau}\scrR_{\sf ap}^{(1)}(\Theta)=-\frac{2a}{M}\sum_{g_1,g_2\in G}\xi_{m,g_1\star g_2}\cdot(\theta_{m,g_1}^{1}+\theta_{m,g_2}^{2})\cdot e_{g_\tau}.
$$
We now substitute the inverse DFT for each factor. Recall that, for any $\nu$, the inverse DFT gives $\nu(g)=\sum_{\rho\in\irr(G)}d_\rho\cdot\tr\big(\hat\nu[\rho]\,\rho(g)\big)$.
Inserting these into the gradient expression yields
\begin{align}
    \nabla_{\theta_m^{\tau}}\scrR_{\sf ap}^{(1)}(\Theta)&=-\frac{2a}{M}\sum_{g_1,g_2\in G}\bigg(\sum_{\rho_1\in\irr(G)}d_{\rho_1}\cdot\tr\big(\widehat{\xi_m}[\rho_1]\rho_1(g_1)\rho_1(g_2)\big)\bigg)\notag\\
    &\qquad\cdot\bigg(\sum_{\rho_2\in\irr(G)}d_{\rho_2}\cdot\big\{\tr\big(\widehat{\theta_m^1}[\rho_2]\rho_2(g_1)\big)+\tr\big(\widehat{\theta_m^2}[\rho_2]\rho_2(g_2)\big)\big\}\bigg)\cdot e_{g_\tau}\notag\\
    &=-\frac{2a}{M}\sum_{\rho_1,\rho_2\in\irr(G)}d_{\rho_1}d_{\rho_2}\cdot S_{\rho_1\rho_2}^{1\tau},
    \label{eq:theta_apprx_1_dft}
\end{align}
where the last step exchanges the order of summation, and we define the vector $S_{\rho_1\rho_2}^{1\tau}\in\RR^{|G|}$ by
$$
S_{\rho_1\rho_2}^{1\tau}:=\sum_{g_1,g_2\in G}\tr\big(\widehat{\xi_m}[\rho_1]\rho_1(g_1)\rho_1(g_2)\big)\cdot  \big\{\tr\big(\widehat{\theta_m^1}[\rho_2]\rho_2(g_1)\big)+\tr\big(\widehat{\theta_m^2}[\rho_2]\rho_2(g_2)\big)\big\}\cdot e_{g_\tau}.
$$
We now simplify each entry $(S_{\rho_1\rho_2}^{1\tau})_j$ for $j\in G$.
For $\tau=1$, the basis vector $e_{g_1}$ selects the $j$-th entry via $g_1=j$, so the sum over $g_1$ collapses and $g_2$ is relabeled as $s$:
\begin{align}
    (S_{\rho_1\rho_2}^{11})_j&=\sum_{s\in G}\tr\big(\widehat{\xi_m}[\rho_1]\rho_1(j)\rho_1(s)\big)\cdot\big\{\tr\big(\widehat{\theta_m^1}[\rho_2]\rho_2(j)\big)+\tr\big(\widehat{\theta_m^2}[\rho_2]\rho_2(s)\big)\big\}\notag\\
    &=\tr\big(\widehat{\theta_m^1}[\rho_2]\rho_2(j)\big)\cdot\underbrace{\sum_{s\in G}\tr\big(\widehat{\xi_m}[\rho_1]\rho_1(j)\rho_1(s)\big)}_{\displaystyle \textbf{\small \textbf{(i)}}}+\underbrace{\sum_{s\in G}\tr\big(\widehat{\xi_m}[\rho_1]\rho_1(j)\rho_1(s)\big)\cdot\tr\big(\widehat{\theta_m^2}[\rho_2]\rho_2(s)\big)}_{\displaystyle \textbf{\small \textbf{(ii)}}}.\notag
\end{align}
For term~\textbf{(i)}, note that  $\sum_{s\in G}\rho(s)=|G|\cdot\ind(\rho=\rho_{\sf triv})\cdot I_{d_{\rho}}$.  Then, by the linearity of trace, we have 
$$
\textbf{\textbf{(i)}}=\tr\bigg(\widehat{\xi_m}[\rho_1]\rho_1(j)\cdot\sum_{s\in G}\rho_1(s)\bigg)=|G|\cdot\tr\big(\widehat{\xi_m}[\rho_1]\rho_1(j)\big)\cdot\ind(\rho_1=\rho_{\sf triv}).
$$
For term~\textbf{(ii)}, note that $\rho_2(s)=(\rho_2^\vee)^\vee(s)$.  
Applying Lemma~\ref{lem:SO_vee_trace} with $\sigma=\rho_2^\vee$ gives
$$
\textbf{\textbf{(ii)}}={|G|}/{d_{\rho_1}}\cdot\ind(\rho_1=\rho_2^\vee)\cdot\tr\big(\widehat{\xi_m}[\rho_1]\rho_1(j)\widehat{\theta_m^2}[\rho_2]^\top\big).
$$
Combining terms~\textbf{(i)} and~\textbf{(ii)}, we obtain
\begin{align}
    (S_{\rho_1\rho_2}^{11})_j
    &=|G|\cdot\tr\big(\widehat{\xi_m}[\rho_1]\rho_1(j)\big)\cdot\tr\big(\widehat{\theta_m^1}[\rho_2]\rho_2(j)\big)\cdot\ind(\rho_1=\rho_{\sf triv})\notag\\
    &\qquad+{|G|}/{d_{\rho_1}}\cdot\tr\big(\widehat{\xi_m}[\rho_1]\rho_1(j)\widehat{\theta_m^2}[\rho_2]^\top\big)\cdot\ind(\rho_1^\vee=\rho_2).
    \label{eq:theta_apprx_1_dft_sum_x}
\end{align}
For $\tau=2$, the basis vector $e_{g_2}$ selects $g_2=j$ instead. Then, by an analogous argument, we get
\begin{align}
    (S_{\rho_1\rho_2}^{12})_j
    &=|G|\cdot\tr\big(\widehat{\xi_m}[\rho_1]\rho_1(j)\big)\cdot\tr\big(\widehat{\theta_m^2}[\rho_2]\rho_2(j)\big)\cdot\ind(\rho_1=\rho_{\sf triv})\notag\\
    &\qquad+{|G|}/{d_{\rho_1}}\cdot\tr\big(\rho_1(j)\widehat{\xi_m}[\rho_1]\widehat{\theta_m^1}[\rho_2]^\top\big)\cdot\ind(\rho_1^\vee=\rho_2).
    \label{eq:theta_apprx_1_dft_sum_y}
\end{align}
We now substitute \eqref{eq:theta_apprx_1_dft_sum_x} into \eqref{eq:theta_apprx_1_dft} for $\tau=1$.
The indicator $\ind(\rho_1=\rho_{\sf triv})$ collapses the sum over $\rho_1$ to $\rho_1=\rho_{\sf triv}$, while $\ind(\rho_1^\vee=\rho_2)$ collapses the sum over $\rho_2$ to $\rho_2=\rho_1^\vee$. This further yields that 
\begin{align}
      \big(\nabla_{\theta_m^1}\scrR_{\sf ap}^{(1)}(\Theta)\big)_j
      &=-\frac{2a|G|}{M}\sum_{\rho_2\in\irr(G)}d_{\rho_2}\cdot\tr\big(\widehat{\xi_m}[\rho_{\sf triv}]\big)\cdot\tr\big(\widehat{\theta_m^1}[\rho_2]\rho_2(j)\big)\notag\\
      &\qquad-\frac{2a|G|}{M}\sum_{\rho_1\in\irr(G)}d_{\rho_1}\cdot\tr\big(\widehat{\xi_m}[\rho_1]\rho_1(j)\widehat{\theta_m^2}[\rho_1^\vee]^\top\big).
      \label{eq:theta_apprx_1_1_dft_intermediate}
\end{align}
For the first term, note that the sum $\sum_{\rho_2\in\irr(G)}d_{\rho_2}\cdot\tr\big(\widehat{\theta_m^1}[\rho_2]\rho_2(j)\big)$ is exactly the inverse DFT of $\theta_m^1$ evaluated at $j$, which equals $(\theta_m^1)_j$.
For the second term, the conjugacy relation $\widehat{\theta_m^2}[\rho^\vee]=\overline{\widehat{\theta_m^2}[\rho]}$ (see Lemma~\ref{lem:conjugate_relation}) gives $\widehat{\theta_m^2}[\rho_1^\vee]^\top=(\widehat{\theta_m^2}[\rho_1])^*$.
Substituting these into \eqref{eq:theta_apprx_1_1_dft_intermediate}, we obtain
\begin{align}
      \big(\nabla_{\theta_m^1}\scrR_{\sf ap}^{(1)}(\Theta)\big)_j&=-\frac{2a|G|}{M}\cdot \hat{\xi_m}[\rho_{\sf triv}]\cdot (\theta_m^1)_j-\frac{2a|G|}{M}\sum_{\rho\in\irr(G)}d_{\rho}\cdot\tr\big((\widehat{\theta_m^2}[\rho])^* \widehat{\xi_m}[\rho]\rho(j)\big),
      \label{eq:theta_apprx_1_1_dft_final}
\end{align}
where we leverage the cyclic property of trace. 
Similarly, substituting \eqref{eq:theta_apprx_1_dft_sum_y} into \eqref{eq:theta_apprx_1_dft} for $\tau=2$ gives
\begin{align*}
      \big(\nabla_{\theta_m^2}\scrR_{\sf ap}^{(1)}(\Theta)\big)_j&=-\frac{2a|G|}{M}\cdot \hat{\xi_m}[\rho_{\sf triv}]\cdot (\theta_m^2)_j-\frac{2a|G|}{M}\sum_{\rho\in\irr(G)}d_{\rho}\cdot\tr\big(\widehat{\xi_m}[\rho](\widehat{\theta_m^1}[\rho])^*\rho(j)\big).
\end{align*}
Now, we derive the gradient with respect to $\xi_m$ using a similar approach.
Note that
$$
(\theta_{m,g_1}^{1}+\theta_{m,g_2}^{2})^2=\bigg(\sum_{\rho\in\irr(G)}d_{\rho}\cdot\big\{\tr\big(\widehat{\theta_m^1}[\rho]\rho(g_1)\big)+\tr\big(\widehat{\theta_m^2}[\rho]\rho(g_2)\big)\big\}\bigg)^2.
$$
Expanding the square as a product of two sums indexed by $\rho_1$ and $\rho_2$, and exchanging the order of summation over $(\rho_1,\rho_2)$ and $(g_1,g_2)$, we obtain
\begin{align}
   \nabla_{\xi_m}\scrR_{\sf ap}^{(1)}(\Theta)&=-\frac{a}{M}\sum_{\rho_1,\rho_2\in\irr(G)}d_{\rho_1}d_{\rho_2}\cdot S_{\rho_1\rho_2}^2,
    \label{eq:xi_apprx_1_dft}
\end{align}
where we define the vector $S_{\rho_1\rho_2}^2\in\RR^{|G|}$ by
$$
S_{\rho_1\rho_2}^2:=\sum_{g_1,g_2\in G} \big\{\tr\big(\widehat{\theta_m^1}[\rho_1]\rho_1(g_1)\big)+\tr\big(\widehat{\theta_m^2}[\rho_1]\rho_1(g_2)\big)\big\}\cdot\big\{\tr\big(\widehat{\theta_m^1}[\rho_2]\rho_2(g_1)\big)+\tr\big(\widehat{\theta_m^2}[\rho_2]\rho_2(g_2)\big)\big\}\cdot e_{g_1\star g_2}.
$$
For each fixed  $j\in G$, the constraint $g_1\star g_2=j$ allows us to substitute $g_2=g_1^{-1}\star j$ and relabel $g_1=s$. 
Then we can equivalently write the $j$-th entry of $S_{\rho_1\rho_2}^2$ as 
\begin{align}
(S_{\rho_1\rho_2}^2)_j& =\sum_{s\in G}\big\{\tr\big(\widehat{\theta_m^1}[\rho_1]\rho_1(s)\big)+\tr\big(\widehat{\theta_m^2}[\rho_1]\rho_1(s^{-1}\star j)\big)\big\}\cdot\big\{\tr\big(\widehat{\theta_m^1}[\rho_2]\rho_2(s)\big)+\tr\big(\widehat{\theta_m^2}[\rho_2]\rho_2(s^{-1}\star j)\big)\big\}.\notag
\end{align}
Expanding the product yields four cross terms. 
We first apply  $\rho(s^{-1}\star j)=\rho(s^{-1})\rho(j)$ to rewrite each factor involving $s^{-1}\star j$.
We group the four terms as follows:
\begin{align}
&(S_{\rho_1\rho_2}^2)_j\notag\\
    &\quad=\underbrace{\sum_{s\in G}\tr\big(\widehat{\theta_m^1}[\rho_1]\rho_1(s)\big)\cdot \tr\big(\widehat{\theta_m^1}[\rho_2]\rho_2(s)\big)+\sum_{s\in G}\tr\big(\widehat{\theta_m^2}[\rho_1]\rho_1(s^{-1}\star j)\big)\cdot \tr\big(\widehat{\theta_m^2}[\rho_2]\rho_2(s^{-1}\star j)\big)}_{\displaystyle\textbf{\small (iii)}}\notag\\
    &\quad\qquad +\underbrace{\sum_{s\in G}\tr\big(\widehat{\theta_m^1}[\rho_1]\rho_1(s)\big)\cdot \tr\big(\widehat{\theta_m^2}[\rho_2]\rho_2(s^{-1})\rho_2(j)\big)+\sum_{s\in G}\tr\big(\widehat{\theta_m^2}[\rho_1]\rho_1(s^{-1})\rho_1(j)\big)\cdot \tr\big(\widehat{\theta_m^1}[\rho_2]\rho_2(s)\big)}_{\displaystyle\textbf{\small (iv)}}.\notag
\end{align}
We simplify each of these sums individually.
For the first sum in term~\textbf{(iii)}, Lemma~\ref{lem:SO_vee_trace} gives
\begin{align}
\sum_{s\in G}\tr\big(\widehat{\theta_m^1}[\rho_1]\rho_1(s)\big)\cdot \tr\big(\widehat{\theta_m^1}[\rho_2]\rho_2(s)\big)
={|G|}/{d_{\rho_1}}\cdot\ind(\rho_1=\rho_2^\vee)\cdot\tr\big(\widehat{\theta_m^1}[\rho_1]\widehat{\theta_m^1}[\rho_2]^\top\big).
\label{eq:xi_sum_III_1}
\end{align}
For the second sum in term~\textbf{(iii)},
ergodicity and \eqref{eq:xi_sum_III_1} jointly gives that
\begin{align}
& \sum_{s\in G}\tr\big(\widehat{\theta_m^2}[\rho_1]\rho_1(s^{-1}\star j)\big)\cdot \tr\big(\widehat{\theta_m^2}[\rho_2]\rho_2(s^{-1}\star j)\big)\notag \\
&\qquad=\sum_{s\in G}\tr\big(\widehat{\theta_m^2}[\rho_1]\rho_1(s)\big)\cdot \tr\big(\widehat{\theta_m^2}[\rho_2]\rho_2(s)\big)={|G|}/{d_{\rho_1}}\cdot\ind(\rho_1=\rho_2^\vee)\cdot\tr\big(\widehat{\theta_m^2}[\rho_1]\widehat{\theta_m^2}[\rho_2]^\top\big),
\label{eq:xi_sum_III_2}
\end{align}
Combining \eqref{eq:xi_sum_III_1} and \eqref{eq:xi_sum_III_2}, we obtain
$$
\textbf{\textbf{(iii)}}={|G|}/{d_{\rho_1}}\cdot\sum_{\tau\in\{1,2\}}\tr\big(\widehat{\theta_m^\tau}[\rho_1]\widehat{\theta_m^\tau}[\rho_2]^\top\big)\cdot\ind(\rho_1=\rho_2^\vee).
$$
For the first sum in term~\textbf{(iv)},
we leverage the trace manipulation below:
\begin{align*}
\tr\big(\widehat{\theta}[\rho]\rho(s^{-1})\rho(j)\big)=\tr\big(\rho(s^{-1})^\top(\rho(j)\widehat{\theta}[\rho])^\top\big)=\tr\big((\rho(j)\widehat{\theta}[\rho])^\top\rho^\vee(s)\big).
\end{align*}
Applying Lemma~\ref{lem:SO_vee_trace} gives that
\begin{align}
& \sum_{s\in G}\tr\big(\widehat{\theta_m^1}[\rho_1]\rho_1(s)\big)\cdot \tr\big(\widehat{\theta_m^2}[\rho_2]\rho_2(s^{-1})\rho_2(j)\big)=\sum_{s\in G}\tr\big(\widehat{\theta_m^1}[\rho_1]\rho_1(s)\big)\cdot \tr\big((\rho_2(j)\widehat{\theta_m^2}[\rho_2])^\top\rho_2(s^{-1})\big) \notag \\
&\qquad ={|G|}/{d_{\rho_1}}\cdot\ind(\rho_1=\rho_2)\cdot\tr\big(\widehat{\theta_m^1}[\rho_1]\rho_2(j)\widehat{\theta_m^2}[\rho_2]\big). 
\label{eq:xi_sum_IV_1}
\end{align}
For the second sum in term~\textbf{(iv)},
by an analogous argument with the roles of $\rho_1$ and $\rho_2$ swapped, we obtain a similar result as in \eqref{eq:xi_sum_IV_1}:
\begin{align}
& \sum_{s\in G}\tr\big(\widehat{\theta_m^2}[\rho_1]\rho_1(s^{-1})\rho_1(j)\big)\cdot \tr\big(\widehat{\theta_m^1}[\rho_2]\rho_2(s)\big) \notag \\
& \qquad ={|G|}/{d_{\rho_1}}\cdot\ind(\rho_1=\rho_2)\cdot\tr\big(\widehat{\theta_m^2}[\rho_1]\widehat{\theta_m^1}[\rho_2]\rho_2(j)\big).
\label{eq:xi_sum_IV_2}
\end{align}
Thus, combining \eqref{eq:xi_sum_IV_1} and \eqref{eq:xi_sum_IV_2}, we obtain
$$
\textbf{\textbf{(iv)}}={|G|}/{d_{\rho_1}}\cdot\Big\{\tr\big(\widehat{\theta_m^2}[\rho_2]\widehat{\theta_m^1}[\rho_1]\rho_1(j)\big)+\tr\big(\widehat{\theta_m^2}[\rho_1]\widehat{\theta_m^1}[\rho_2]\rho_2(j)\big)\Big\}\cdot\ind(\rho_1=\rho_2).
$$
Combining terms~\textbf{(iii)} and~\textbf{(iv)}, we obtain
\begin{align}
    (S_{\rho_1\rho_2}^2)_j&={|G|}/{d_{\rho_1}}\cdot\sum_{\tau\in\{1,2\}}\tr\big(\widehat{\theta_m^\tau}[\rho_1]^\top \widehat{\theta_m^\tau}[\rho_2]\big)\cdot\ind(\rho_2=\rho_1^\vee)\notag\\
    &\qquad+{|G|}/{d_{\rho_1}}\cdot\Big\{\tr\big(\widehat{\theta_m^2}[\rho_2]\widehat{\theta_m^1}[\rho_1]\rho_1(j)\big)+\tr\big(\widehat{\theta_m^2}[\rho_1]\widehat{\theta_m^1}[\rho_2]\rho_2(j)\big)\Big\}\cdot\ind(\rho_1=\rho_2).
    \label{eq:xi_apprx_1_dft_sum}
\end{align}
We now substitute \eqref{eq:xi_apprx_1_dft_sum} into \eqref{eq:xi_apprx_1_dft}, which gives that
\begin{align*}
    \big(\nabla_{\xi_m}\scrR_{\sf ap}^{(1)}(\Theta)\big)_j
 &=-\frac{a|G|}{M}\sum_{\tau\in\{1,2\}}\sum_{\rho\in\irr(G)}d_{\rho}\cdot\|\widehat{\theta_m^\tau}[\rho]\|_{\rm F}^2-\frac{2a|G|}{M}\sum_{\rho\in \irr(G)}d_{\rho}\cdot\tr\big(\widehat{\theta_m^2}[\rho]\widehat{\theta_m^1}[\rho]\rho(j)\big).
\end{align*}


\medskip\noindent\textbf{Part 2: Fourier Expansion of $\nabla\scrR_{\sf ap}^{(2)}$.}
Using $\langle\xi_m,\one_{|G|}\rangle=|G|\cdot \hat{\xi_m}[\rho_{\sf triv}]$ and substituting the inverse DFT, we can obtain that
\begin{align*}
   \nabla_{\theta_m^\tau}\scrR_{\sf ap}^{(2)}(\Theta)&=\frac{2a}{M}\cdot \hat{\xi_m}[\rho_{\sf triv}]\cdot\sum_{\rho\in\irr(G)}d_{\rho}\cdot\underbrace{\sum_{g_1,g_2\in G}\big\{\tr\big(\hat{\theta_m^1}[\rho]\rho(g_1)\big)+\tr\big(\hat{\theta_m^2}[\rho]\rho(g_2)\big)\big\}\cdot e_{g_\tau}}_{\displaystyle\small{ :=S_\rho^{3\tau}}}.
\end{align*}
For $\tau=1$, the basis vector $e_{g_1}$ selects $g_1=j$, and the sum over $g_2$ can be reduced to
$$
(S_\rho^{31})_j=|G|\cdot \tr\big(\hat{\theta_m^1}[\rho]\rho(j)\big)+|G|\cdot\hat{\theta_m^2}[\rho_{\sf triv}]\cdot\ind(\rho= \rho_{\sf triv}).
$$
By an analogous argument for $\tau=2$, where $e_{g_2}$ selects $g_2 = j$, we have 
$$
(S_\rho^{32})_j=|G|\cdot\hat{\theta_m^1}[\rho_{\sf triv}]\cdot\ind(\rho= \rho_{\sf triv})+|G|\cdot \tr\big(\hat{\theta_m^2}[\rho]\rho(j)\big).
$$
Substituting these into the gradient expression and using the inverse DFT, we obtain
\begin{align}
    \big(\nabla_{\theta_m^1}\scrR_{\sf ap}^{(2)}(\Theta)\big)_j
    &=\frac{2a|G|}{M}\cdot \hat{\xi_m}[\rho_{\sf triv}]\cdot(\theta_m^1)_j+\frac{2a|G|}{M}\cdot \hat{\xi_m}[\rho_{\sf triv}]\hat{\theta_m^2}[\rho_{\sf triv}],
    \label{eq:theta_apprx_2_1_dft_final}\\
    \big(\nabla_{\theta_m^2}\scrR_{\sf ap}^{(2)}(\Theta)\big)_j&=\frac{2a|G|}{M}\cdot \hat{\xi_m}[\rho_{\sf triv}]\cdot(\theta_m^2)_j+\frac{2a|G|}{M}\cdot \hat{\xi_m}[\rho_{\sf triv}]\hat{\theta_m^1}[\rho_{\sf triv}].\notag
\end{align}
In the end, we compute the gradient of $\xi_m$ for the second part according to \eqref{eq:approx_gradient_2_xi}:
\begin{align*}
    \nabla_{\xi_m}\scrR_{\sf ap}^{(2)}(\Theta)&=\frac{a}{M|G|}\sum_{g_1,g_2\in G}\left(\sum_{\rho\in\irr(G)}d_\rho\cdot \big\{\tr\big(\hat{\theta_m^1}[\rho]\rho(g_1)\big)+\tr\big(\hat{\theta_m^2}[\rho]\rho(g_2)\big)\big\}\right)^2\cdot \one_{|G|}\notag\\
    &=\frac{a}{M|G|}\sum_{\rho_1,\rho_2\in\irr(G)}d_{\rho_1}d_{\rho_2}\cdot S_{\rho_1\rho_2}^4\cdot \one_{|G|},
\end{align*}
where the scalar $S_{\rho_1\rho_2}^4\in\RR$ is defined by
$$
S_{\rho_1\rho_2}^4:=\sum_{g_1,g_2\in G}\big\{\tr\big(\hat{\theta_m^1}[\rho_1]\rho_1(g_1)\big)+\tr\big(\hat{\theta_m^2}[\rho_1]\rho_1(g_2)\big)\big\}\cdot\big\{\tr\big(\hat{\theta_m^1}[\rho_2]\rho_2(g_1)\big)+\tr\big(\hat{\theta_m^2}[\rho_2]\rho_2(g_2)\big)\big\}.
$$
Following a similar argument in \eqref{eq:xi_apprx_1_dft_sum}, we can get that
\begin{align}
    S_{\rho_1\rho_2}^4
    &=|G|\cdot\sum_{s\in G}\sum_{\tau\in\{1,2\}}\tr\big(\hat{\theta_m^\tau}[\rho_1]\rho_1(s)\big)\cdot\tr\big(\hat{\theta_m^\tau}[\rho_2]\rho_2(s)\big)\notag\\
    &\qquad+\sum_{g_1\in G}\tr\big(\hat{\theta_m^1}[\rho_1]\rho_1(g_1)\big)\cdot\sum_{g_1\in G}\tr\big(\hat{\theta_m^2}[\rho_2]\rho_2(g_2)\big)+\sum_{g_1\in G}\tr\big(\hat{\theta_m^1}[\rho_2]\rho_2(g_1)\big)\cdot\sum_{g_1\in G}\tr\big(\hat{\theta_m^2}[\rho_1]\rho_1(g_2)\big)\notag\\
    &=|G|^2/d_{\rho_1}\cdot\sum_{\tau\in\{1,2\}}\|\hat{\theta_m^\tau}[\rho]\|_{\rm F}^2\cdot\ind(\rho_2=\rho_1^\vee)+2|G|^2\cdot\hat{\theta_m^1}[\rho_{\sf triv}]\cdot\hat{\theta_m^2}[\rho_{\sf triv}]\cdot\ind(\rho_1=\rho_2=\rho_{\sf triv}),\notag
\end{align}
where the last equality uses the conjugacy relation $\hat\nu[\rho^\vee]=\overline{\hat\nu[\rho]}$ (Lemma~\ref{lem:conjugate_relation}).
This further gives
\begin{align*}
    \big(\nabla_{\xi_m}\scrR_{\sf ap}^{(2)}(\Theta)\big)_j
    =\frac{a|G|}{M}\sum_{\tau\in\{1,2\}}\sum_{\rho\in\irr(G)}d_\rho\cdot\big\|\widehat{\theta_m^\tau}[\rho]\big\|_{\rm F}^2+\frac{2a|G|}{M}\cdot \hat{\theta_m^1}[\rho_{\sf triv}]\cdot\hat{\theta_m^2}[\rho_{\sf triv}].
\end{align*}
Therefore,  $\nabla_{\xi_m}\scrR_{\sf ap}^{(2)}(\Theta)$ is a constant vector proportional to $\one_{|G|}$.  


\medskip\noindent\textbf{Part 3: Combining and Lifting to the Spectral Domain.}
Let
$\irr(G)_{\neq1}:=\irr(G)\backslash\{\rho_{\sf triv}\}$.
We now combine the Part~1 and Part~2 results.
Adding \eqref{eq:theta_apprx_1_1_dft_final} and \eqref{eq:theta_apprx_2_1_dft_final}, we obtain
\begin{align*}
\big(\nabla_{\theta_m^1}\scrR_{\sf ap}(\Theta)\big)_j
&=-\frac{2a|G|}{M}\sum_{\rho\in\irr(G)_{\neq1}}d_{\rho}\cdot\tr\big((\widehat{\theta_m^2}[\rho])^* \widehat{\xi_m}[\rho]\rho(j)\big),
\end{align*}
where the trivial-representation terms cancel.
By the same cancellation mechanism for $\theta_m^2$ and $\xi_m$, only the non-trivial representations $\rho\in\irr(G)_{\neq1}$ survive.
Therefore, we can conclude that
\begin{align}
   \big( \nabla_{\theta_m^1}\scrR_{\sf ap}(\Theta)\big)_j&=-\frac{2a|G|}{M}\sum_{\rho\in\irr(G)_{\neq1}}d_{\rho}\cdot\tr\big((\widehat{\theta_m^2}[\rho])^* \widehat{\xi_m}[\rho]\rho(j)\big),\label{eq:full_grad_theta1}\\
   \big( \nabla_{\theta_m^2}\scrR_{\sf ap}(\Theta)\big)_j&=-\frac{2a|G|}{M}\sum_{\rho\in\irr(G)_{\neq1}}d_{\rho}\cdot\tr\big(\widehat{\xi_m}[\rho](\widehat{\theta_m^1}[\rho])^*\rho(j)\big),\label{eq:full_grad_theta2}\\
    \big(\nabla_{\xi_m}\scrR_{\sf ap}(\Theta)\big)_j&=-\frac{2a|G|}{M}\sum_{\rho\in\irr(G)_{\neq1}}d_{\rho}\cdot\tr\big(\widehat{\theta_m^2}[\rho]\widehat{\theta_m^1}[\rho]\rho(j)\big).\label{eq:full_grad_xi}
\end{align}
Recall that we consider the projected gradient flow as follows:
$$
\partial_t\theta_m^\tau=-(I_p-\theta_m\theta_m^\top)\nabla_{\theta_m}\scrR_{\sf ap}(\theta_m,\xi_m),\qquad\partial_t\xi_m=-(I_p-\xi_m\xi_m^\top)\nabla_{\xi_m}\scrR_{\sf ap}(\theta_m,\xi_m).
$$
By substituting \eqref{eq:full_grad_theta1}, \eqref{eq:full_grad_theta2}, and \eqref{eq:full_grad_xi} into the ODEs above, we obtain
\begin{align*}
    \partial_t(\theta_m^1)_j&=\frac{2a|G|}{M}\sum_{\rho\in\irr(G)_{\neq1}}d_{\rho}\cdot\tr\big((\widehat{\theta_m^2}[\rho])^* \widehat{\xi_m}[\rho]\rho(j)\big)-\frac{2a|G|^2}{M}\cdot\left\langle\nabla_{\theta_m^1}\scrR_{\sf ap}(\Theta),\theta_m^1\right\rangle_{L_2(G)}\cdot(\theta_m^1)_j\notag\\
     &=\frac{2a|G|}{M}\sum_{\rho\in\irr(G)_{\neq1}}d_{\rho}\cdot\tr\big((\widehat{\theta_m^2}[\rho])^* \widehat{\xi_m}[\rho]\rho(j)\big)\notag\\
     &\qquad-\frac{2a|G|^2}{M}\cdot\sum_{\rho\in\irr(G)_{\neq1}}d_{\rho}\cdot\tr\big((\widehat{\xi_m}[\rho])^*\widehat{\theta_m^2}[\rho]\widehat{\theta_m^1}[\rho]\big)\cdot(\theta_m^1)_j,
\end{align*}
where in the second equality we use Lemma~\ref{lem:l2G_inner_product_dft} to express $\langle\nabla_{\theta_m^1}\scrR_{\sf ap},\theta_m^1\rangle_{L^2(G)}$ in terms of Fourier coefficients.
Similarly, we obtain that
\begin{align*}
    \partial_t(\theta_m^2)_j&=\frac{2a|G|}{M}\sum_{\rho\in\irr(G)_{\neq1}}d_{\rho}\cdot\tr\big(\widehat{\xi_m}[\rho](\widehat{\theta_m^1}[\rho])^* \rho(j)\big)-\frac{2a|G|^2}{M}\cdot\sum_{\rho\in\irr(G)_{\neq1}}d_{\rho}\cdot\tr\big((\widehat{\xi_m}[\rho])^*\widehat{\theta_m^2}[\rho]\widehat{\theta_m^1}[\rho]\big)\cdot(\theta_m^2)_j,\\
    \partial_t(\xi_m)_j&=\frac{2a|G|}{M}\sum_{\rho\in\irr(G)_{\neq1}}d_{\rho}\cdot\tr\big(\widehat{\theta_m^2}[\rho]\widehat{\theta_m^1}[\rho]\rho(j)\big)-\frac{2a|G|^2}{M}\cdot\sum_{\rho\in\irr(G)_{\neq1}}d_{\rho}\cdot\tr\big((\widehat{\xi_m}[\rho])^*\widehat{\theta_m^2}[\rho]\widehat{\theta_m^1}[\rho]\big)\cdot(\xi_m)_j.
\end{align*}
For notational convenience, we introduce the energy functional
$$
\Omega_m=\sum_{\rho\in\irr(G)_{\neq1}}d_{\rho}\cdot\tr\big((\widehat{\xi_m}[\rho])^*\widehat{\theta_m^2}[\rho]\widehat{\theta_m^1}[\rho]\big)\in\RR,
$$
which is real-valued due to the conjugate symmetry of the unitary dual $\irr(G)$.
We call $\Omega_m$ the energy because, as we show in Lemma~\ref{lem:riemannian-gf}, the projected gradient flow is precisely the Riemannian gradient ascent of $\Omega_m$.
In other words, $\Omega_m$ is the quantity that the dynamics seek to maximize.

\medskip\noindent\textbf{Lifting to Fourier-Coefficient Dynamics.}
We now convert the element-wise ODEs above into dynamics on the Fourier coefficients.
By differentiating the DFT form with respect to $t$, we obtain
$$
\partial_t\widehat{\nu}[\rho]=\frac{1}{|G|}\sum_{g\in G} \partial_t\nu(g)\cdot\rho(g^{-1}), \qquad\forall \rho\in\irr(G),\;\nu\in\{\theta_m^1,\theta_m^2,\xi_m\}.
$$
Substituting the element-wise dynamics of $\partial_t(\theta_m^1)_j$ into this formula, we derive that for all $\rho\in\irr(G)$,
\begin{align}
    \partial_t\hat{\theta_m^1}[\rho]&=\frac{2a}{M}\sum_{\rho'\in\irr(G)_{\neq1}}d_{\rho'}\cdot\sum_{g\in G}\tr\big((\widehat{\theta_m^2}[\rho'])^*\widehat{\xi_m}[\rho'] \rho'(g)\big)\cdot\rho(g^{-1})\notag\\
    &\qquad-\frac{2a|G|}{M}\cdot\Omega_m\cdot\sum_{\rho'\in\irr(G)}d_{\rho'}\cdot\sum_{g\in G}\tr\big(\hat{\theta_m^1}[\rho']\rho'(g)\big)\cdot\rho(g^{-1})\notag\\
    &=\frac{2a}{M}\sum_{\rho'\in\irr(G)_{\neq1}}d_{\rho'}\cdot \cN_1^{\rho,\rho'}-\frac{2a|G|}{M}\cdot\Omega_m\cdot\sum_{\rho'\in\irr(G)}d_{\rho'}\cdot \cN_2^{\rho,\rho'}\label{eq:dft_theta1_intermediate}
\end{align}
By applying Lemma~\ref{lem:trace_times_rep_matrix},we c an  simplify the two terms as
\begin{align}
\cN_1^{\rho,\rho'}=\frac{|G|}{d_{\rho'}}\cdot\ind(\rho'=\rho)\cdot(\widehat{\theta_m^2}[\rho'])^*\widehat{\xi_m}[\rho'],\qquad
\cN_2^{\rho,\rho'}=\frac{|G|}{d_{\rho'}}\cdot\ind(\rho'=\rho)\cdot\hat{\theta_m^1}[\rho'].\label{eq:T1_T2_result}
\end{align}
We now substitute \eqref{eq:T1_T2_result} back into \eqref{eq:dft_theta1_intermediate}.
For the first sum, the indicator $\ind(\rho'=\rho)$ collapses the sum over $\rho'\in\irr(G)_{\neq1}$ to the single term $\rho'=\rho$, and the prefactor $d_{\rho'}$ cancels with $|G|/d_{\rho'}$.
Since the sum ranges over $\irr(G)_{\neq1}$, this term is nonzero only when $\rho\in\irr(G)_{\neq1}$, producing the indicator $\ind(\rho\neq\rho_{\sf triv})$.
A similar collapse applies to the second sum.
Therefore, we arrive at
\begin{align*}
    \partial_t\hat{\theta_m^1}[\rho]&=\frac{2a|G|}{M}\cdot(\widehat{\theta_m^2}[\rho])^*\widehat{\xi_m}[\rho]\cdot\ind(\rho\neq\rho_{\sf triv})-\frac{2a|G|^2}{M}\cdot\Omega_m\cdot\hat{\theta_m^1}[\rho].
\end{align*}
Similarly, we can easily show that
\begin{align*}
    \partial_t\hat{\theta_m^2}[\rho]&=\frac{2a|G|}{M}\cdot\widehat{\xi_m}[\rho](\widehat{\theta_m^1}[\rho])^*\cdot\ind(\rho\neq\rho_{\sf triv}) -\frac{2a|G|^2}{M}\cdot\Omega_m\cdot\hat{\theta_m^2}[\rho],\\
    \partial_t\hat{\xi_m}[\rho]&=\frac{2a|G|}{M}\cdot\widehat{\theta_m^2}[\rho]\widehat{\theta_m^1}[\rho]\cdot\ind(\rho\neq\rho_{\sf triv})-\frac{2a|G|^2}{M}\cdot\Omega_m\cdot\hat{\xi_m}[\rho],
\end{align*}
which characterizes the dynamics within the representation space and completes the proof.
\end{proof}

\subsection{Proof of Theorem \ref{thm:converge_point_general_group}: Representation Learning in Stage I}
\label{ap:proof_thm_converge_point}

In this appendix, we analyze the projected gradient flow under the approximate risk $\scrR_{\sf ap}$ in \eqref{eq:def_approx_risk}.
For notational brevity, we drop the subscript ${\sf ap}$.
The proof of Theorem~\ref{thm:converge_point_general_group} proceeds as follows

\begin{itemize}
    \setlength{\itemsep}{-1pt}
    \item[(i)] \textbf{Riemannian Lifting} (\S\ref{ap:step1_setup}): We lift the dynamics from Euclidean space onto a constrained manifold ${\eu M}$ within the Fourier space, establishing that the projected gradient flow coincides with the Riemannian gradient ascent on the energy functional $\Omega_m$ defined in \eqref{eq:def_energy}.
    \item[(ii)] \textbf{Critical Point Classification} (\S\ref{ap:step2_critical}):
    {For a critical point $\hat\Theta_m^\dagger$, we write $\Omega_m^\dagger:=\Omega_m(\hat\Theta_m^\dagger)$ for its equilibrium energy.}
    We classify these critical points based on the sign of the equilibrium energy $\Omega_m^\dagger$ and their algebraic structure.
    \begin{itemize}[label=$\triangleright$, leftmargin=2em]
        \item \textbf{Cases 1 \& 2: Negative Energy $\Omega_m^\dagger < 0$ and Degenerate Zero-Energy $\Omega_m^\dagger = 0$.}
        We first rule out the critical points that cannot be reached from a generic initialization but are not most conveniently handled by the strict-saddle argument. 
    Specifically, every critical point with $\Omega_m^\dagger<0$, and every zero-energy critical point $\Omega_m^\dagger=0$ with support only on the trivial representation, can attract a trajectory only if the initialization lies in an embedded strict submanifold. We argue these in detail in {Cases 1 \& 2} in \S\ref{ap:step2_critical}.
    \item \textbf{Cases 3 \& 4: Non-Degenerate $\Omega_m^\dagger=0$ and Higher-Rank $\Omega_m^\dagger>0$.} 
    We then classify the remaining critical points.  Specifically, we show that for every critical point with $\Omega_m^\dagger=0$ and nontrivial representation support or with $\Omega_m^\dagger>0$ and total rank $\sum_{\rho\in\irr(G)_{\neq1}} r_\rho \geq 2$, we can construct a positive-eigenvalue Hessian direction. Therefore, these critical points are strict saddles (see {Cases 3 \& 4} in \S\ref{ap:step2_critical}).
    \item \textbf{Case 5: Rank-One $\Omega_m^\dagger>0$.} Since all four cases above are removed, we finally figure out the structure of the remaining critical points. The only remaining critical points are the (total) rank-one equilibria with positive energy $\Omega_m^\dagger>0$.
    \end{itemize}

    \item[(iii)] \textbf{Saddle Avoidance} (\S\ref{ap:step3_saddle}): 
    We prove that, under any absolutely continuous random initialization, the Riemannian gradient flow avoids all strict saddle points almost surely.
    \item[(iv)]
    \textbf{Avoid all Bad Critical Points Almost Surely} (\S\ref{ap:proof_thm41_assembly}): 
    By saddle avoidance argument in \S\ref{ap:step3_saddle}, the Riemannian gradient flow almost surely escapes strict saddles in {Cases 3 \& 4}. On the other hand, the initialization will almost surely not be in the measure-zero regions that converge to {Cases 1 \& 2}. Consequently, the dynamics must converge to the only remaining equilibria of {Case 5}. This completes the proof of Theorem~\ref{thm:converge_point_general_group}.
    
\end{itemize}
We state the key results for each step in \S\ref{ap:step1_setup}--\S\ref{ap:step3_saddle}, then assemble the proof of Theorem~\ref{thm:converge_point_general_group} in \S\ref{ap:proof_thm41_assembly}.
The proofs of all supporting lemmas and theorems are deferred to \S\ref{ap:proof_riemannian_escape}--\S\ref{ap:proof_equilib_svd_rank1}.

\subsubsection{Step 1: Riemannian Lifting to the Spectral Manifold}
\label{ap:step1_setup}

This step has two roles: it first reviews the Riemannian geometry required for constrained gradient flows on manifolds, then instantiates this framework for single-neuron Fourier coefficients within a spectral Hilbert space subject to unit-sphere constraints.

\paragraph{Embedded Riemannian Calculus.}
We begin with the general background.
Let \(\mathcal{H}\) be a real Hilbert space and let \({\eu F}:\mathcal{H}\to\mathbb R\) be a real-valued functional.

\begin{definition}[Fr\'echet derivative]
We say that \({\eu F}\) is Fr\'echet differentiable at \(p\in\mathcal{H}\) if there exists a bounded linear functional $\mathrm D{\eu F}(p):\mathcal{H}\to\mathbb R$
such that
\[
\mathrm D{\eu F}(p)[\Xi]
=
\left.
\frac{\mathrm d}{\mathrm d\varepsilon}
{\eu F}(p+\varepsilon \Xi)
\right|_{\varepsilon=0},
\qquad
\forall \Xi\in\mathcal{H}.
\]
By the Riesz representation theorem, there exists a unique ambient gradient
\(\nabla_\cH{\eu F}(p)\in\mathcal{H}\) satisfying
\[
\mathrm D{\eu F}(p)[\Xi]
=
\langle \nabla_\cH{\eu F}(p),\Xi\rangle_{L_2(\cH)},
\qquad
\forall \Xi\in\mathcal{H}.
\]
\end{definition}
Let \({\eu M}\subset \mathcal{H}\) be an embedded manifold equipped with the metric inherited from \(\mathcal{H}\).
For \(p\in{\eu M}\), let \(T_p{\eu M}\) be the tangent space and let $\Pi_p:\mathcal{H}\to T_p{\eu M}$ be the orthogonal projection.
The Riemannian gradient of \({\eu F}\) on \({\eu M}\) is the unique tangent vector
\(\grad_{{\eu M}}{\eu F}(p)\in T_p{\eu M}\) satisfying
$$
\langle \grad_{{\eu M}}{\eu F}(p), \Xi \rangle_{L_2(\cH)}
= \mathrm D{\eu F}(p)[\Xi],
\qquad
\forall \Xi\in T_p{\eu M}.
$$
In an embedded manifold, it is obtained by projecting the ambient gradient:
$$
\grad_{{\eu M}}{\eu F}(p)
= \Pi_p\big(\nabla_\cH{\eu F}(p)\big).
$$
Thus, the Riemannian gradient is the direction of steepest ascent after enforcing the constraint \(p\in{\eu M}\).
The corresponding Riemannian Hessian is the tangent linear operator
\(\Hess_{{\eu M}}{\eu F}(p):T_p{\eu M}\to T_p{\eu M}\) given by the covariant derivative of this gradient field:
\[
\Hess_{{\eu M}}{\eu F}(p)[\Xi]
=
\Pi_p\left(
\mathrm D[\grad_{{\eu M}}{\eu F}](p)[\Xi]
\right),
\qquad
\forall \Xi\in T_p{\eu M}.
\]
Equivalently, using \(\grad_{{\eu M}}{\eu F}=\Pi_{(\cdot)}(\nabla_\cH{\eu F})\), the Hessian has the extrinsic form
\[
\Hess_{{\eu M}}{\eu F}(p)[\Xi]
=
\Pi_p\left(
\mathrm D\big[\Pi_{(\cdot)}\nabla_\cH{\eu F}(\cdot)\big](p)[\Xi]
\right).
\]
This expression separates the ambient second-order variation from the geometric correction caused by the movement of the tangent space along the manifold.
For a comprehensive treatment of these geometric constructs, we refer to \cite{jost2005riemannian}.

\paragraph{The Spectral Hilbert Space.}
We now specialize the preceding setup to the Fourier representation of the parameters.
For real-valued parameters, the Fourier coefficients at $\rho$ and $\rho^\vee$ are conjugate and carry the same information.
It is enough to track one representative from each equivalence class.
Let $\irr(G)^\sharp\subseteq\irr(G)$ be a set such that
(i) $|\irr(G)^\sharp\cap\orb(\rho)|=1$ and (ii) $\bigcup_{\rho \in \irr(G)^\sharp} \orb(\rho) = \irr(G)$.
Then the full collection $\hat{\nu} = (\hat\nu[\rho])_{\rho\in\irr(G)}$ is uniquely determined by the reduced collection $(\hat\nu[\rho])_{\rho\in\irr(G)^\sharp}$ through the conjugacy relation
$\widehat{\nu}[\rho^\vee] = \overline{\widehat{\nu}[\rho]}$.
Thus the spectral dynamics are fully captured in this reduced half-space.
With $d_\rho^\sharp=d_\rho\cdot |\orb(\rho)|$, define
\begin{align}
\cH = \bigoplus_{\rho \in \irr(G)^\sharp} \mathbb{C}^{d_\rho \times d_\rho},
\qquad
\langle A, B \rangle_{L^2(\cH)}
= |G|\cdot\sum_{\rho \in \irr(G)^\sharp} d_\rho^\sharp\cdot
\Re\big(\operatorname{tr} \big( (A[\rho])^* B[\rho] \big)\big).
\label{eq:def_hilbert}
\end{align}

\paragraph{Constrained Manifold.}
Recall that in the original parameter space, the projected gradient flow \eqref{eq:def_gradient_flow} constrains each parameter to lie on $\SSS^{|G|-1}$.
Through the group DFT, this unit-norm constraint translates into a constraint on the coefficients $\|\hat\nu\|_{L^2(\cH)}^2=|G|\cdot\|\nu\|_{L^2(G)}^2=1$ using the Plancherel theorem (see Lemma~\ref{lem:l2G_inner_product_dft}).
Hence, we define the spectral manifold as
$$
{\eu M} = \SSS(\cH)^3,\qquad\SSS(\cH)=\{\nu\in\cH:\|\nu\|_{L^2(\cH)}=1\}.
$$
An element of this manifold is represented by the triple $\hat\Theta_m = (\hat{\theta_m^1},\hat{\theta_m^2},\hat{\xi_m})\in{\eu M}$.
The following lemma shows that the dynamics in Proposition~\ref{prop:general_group_dyn} are precisely a Riemannian gradient ascent flow on ${\eu M}$ with respect to the energy functional $\Omega_m$.

\begin{lemma}[Riemannian Gradient Ascent]\label{lem:riemannian-gf}
The dynamical system in Proposition \ref{prop:general_group_dyn} coincides with the Riemannian gradient flow on $\eu M$ associated with the energy functional $\Omega:{\eu M}\mapsto\RR$:
$$ 
\partial_t\hat\Theta_m={\rm grad}_{\eu M}\Omega(\hat\Theta_m),\qquad\Omega(\hat\Theta_m) = \sum_{\rho \in \irr(G)^\sharp_{\neq 1}}d_\rho^\sharp\cdot\Re\big(\operatorname{tr} \big( \widehat{\xi_m}[\rho]^* \hat{\theta_m^2}[\rho] \hat{\theta_m^1}[\rho] \big)\big),
$$
differing only by a multiplicative constant $2a|G|^2/M$.
\end{lemma}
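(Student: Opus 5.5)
The plan is to compute the Riemannian gradient of $\Omega$ on ${\eu M}=\SSS(\cH)^3$ directly and match it, block by block, with the right-hand sides in Proposition~\ref{prop:general_group_dyn}. The argument has three steps: compute the ambient gradient $\nabla_\cH\Omega$ with respect to the inner product \eqref{eq:def_hilbert}; project onto $T_{\hat\Theta_m}{\eu M}$; and compare with the spectral dynamics restricted to the representatives $\rho\in\irr(G)^\sharp$. Coefficients at $\rho^\vee$ are determined by conjugation, and the ODE at $\rho^\vee$ is the conjugate of the ODE at $\rho$, because the drift terms are real-structure compatible: $\overline{(\hat\theta^2)^*\hat\xi}=(\overline{\hat\theta^2})^*\overline{\hat\xi}$. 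So it suffices to check the representatives.

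\textbf{Step 1: ambient gradient.} For a perturbation $\Xi=(\Xi^1,\Xi^2,\Xi^\xi)\in\cH^3$, differentiate $\Omega$. Take the $\theta^1$ slot as an example:
$$\mathrm D_{\theta^1}\Omega[\Xi^1]=\sum_{\rho\in\irr(G)^\sharp_{\neq1}}d^\sharp_\rho\Re\tr\big(\hat\xi[\rho]^*\hat\theta^2[\rho]\Xi^1[\rho]\big).$$
Rewrite it as
$$\sum_{\rho\in\irr(G)^\sharp_{\neq1}}d^\sharp_\rho\Re\tr\big(((\hat\theta^2[\rho])^*\hat\xi[\rho])^*\Xi^1[\rho]\big).$$
Comparing with \eqref{eq:def_hilbert} gives
$$\nabla_{\theta^1}\Omega[\rho]=|G|^{-1}(\hat\theta^2[\rho])^*\hat\xi[\rho]\,\ind(\rho\neq\rho_{\sf triv}).$$
The other slots follow the same way. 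For $\theta^2$, use cyclicity to write $\Re\tr(\hat\xi^*\Xi^2\hat\theta^1)=\Re\tr((\hat\xi(\hat\theta^1)^*)^*\Xi^2)$, which gives $|G|^{-1}\hat\xi[\rho](\hat\theta^1[\rho])^*$. For $\xi$, use $\Re\tr(\Xi^{\xi *}\hat\theta^2\hat\theta^1)=\Re\tr((\hat\theta^2\hat\theta^1)^*\Xi^\xi)$, which gives $|G|^{-1}\hat\theta^2[\rho]\hat\theta^1[\rho]$.

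\textbf{Step 2: projection.} The tangent space of a product of unit spheres is $\{\Xi:\langle\nu,\Xi^\nu\rangle_{L^2(\cH)}=0\ \forall\nu\}$, so the projection is $\Pi(\Xi^\nu)=\Xi^\nu-\langle\nu,\Xi^\nu\rangle\nu$ in each slot. For each slot, compute $\langle\nu,\nabla_\nu\Omega\rangle_{L^2(\cH)}$. For the $\theta^1$ slot this equals
$$|G|\sum_{\rho}d^\sharp_\rho\Re\tr\big((\hat\theta^1)^*|G|^{-1}(\hat\theta^2)^*\hat\xi\big)=\sum_{\rho}d^\sharp_\rho\Re\tr\big((\hat\xi^*\hat\theta^2\hat\theta^1)^*\big)=\Omega.$$
The same value $\Omega$ comes out in the other two slots; this is Euler's identity for a function that is degree-one homogeneous in each slot. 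Therefore
$$\grad_{\eu M}\Omega[\theta^1][\rho]=|G|^{-1}(\hat\theta^2)^*\hat\xi\,\ind(\rho\neq\rho_{\sf triv})-\Omega\,\hat\theta^1[\rho].$$

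\textbf{Step 3: matching.} We need $\Omega$ (summed over $\irr(G)^\sharp$ with weights $d^\sharp_\rho$ and real parts) to equal $\Omega_m$ (summed over $\irr(G)_{\neq1}$). This holds because the $\rho$ and $\rho^\vee$ terms of $\Omega_m$ are complex conjugates of each other, and $\Omega_m$ is real. Multiplying Step 2 by $2a|G|^2/M$ then reproduces Proposition~\ref{prop:general_group_dyn} exactly: the first term gives $2a|G|/M$ times the driving term, and the second gives $-(2a|G|^2/M)\,\Omega_m\hat\nu$.

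The only delicate point is bookkeeping. Real parts and the weight $d^\sharp_\rho=d_\rho|\orb(\rho)|$ must be handled correctly, including for self-dual $\rho$, where $\hat\nu[\rho]$ is constrained rather than a free complex matrix. For a self-dual $\rho$, I would note that the admissible coefficient subspace is preserved by the flow and by the gradient formula, since $\overline{\hat\nu[\rho]}=\hat\nu[\rho^\vee]$ is linked through a fixed intertwiner. The Riesz identification is therefore still valid after restricting the Hilbert space to that real subspace.
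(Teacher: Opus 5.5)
Your proposal is correct and follows the paper's proof. Like the paper, you identify the ambient gradient blockwise by Riesz representation with respect to \eqref{eq:def_hilbert}, project onto the product-sphere tangent space using $\langle \nabla_{\hat\nu}\Omega,\hat\nu\rangle_{L^2(\cH)}=\Omega$, and rescale by $2a|G|^2/M$ to recover Proposition~\ref{prop:general_group_dyn}. Your extra bookkeeping (identifying $\Omega$ with $\Omega_m$ via the conjugate pairing, and checking that the gradient preserves the constrained self-dual blocks) fills in points the paper leaves implicit. Your projected gradient $|G|^{-1}(\hat\theta^2)^*\hat\xi\,\ind(\rho\neq\rho_{\sf triv})-\Omega\,\hat\theta^1$ is also the correct form; the paper's display of this step carries a spurious extra factor $1/|G|$ in front of $\nabla_{\hat\theta_m^1}\Omega$.
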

\begin{proof}[Proof of Lemma \ref{lem:riemannian-gf}]
    Please refer to \S\ref{ap:proof_riemannian_gf_hessian} for detailed proof.
\end{proof}

In the subsequent analysis, we absorb the positive constant $2a|G|^2/M$ into the time parametrization and study the normalized flow $\partial_t\hat\Theta_m=\grad_{\eu M}\Omega(\hat\Theta_m)$.
This rescaling changes only the speed along each trajectory, and therefore does not affect the convergence conclusions.
The next lemma specializes the extrinsic Hessian formula above to the product sphere $\SSS(\cH)^3$.

\begin{lemma}[Riemannian Hessian]\label{lem:hessian_extrinsic_adapted}
For any tangent direction $\Xi_m=(\Xi_{\theta_m^1},\Xi_{\theta_m^2},\Xi_{\xi_m})
\in T_{\hat\Theta_m}{\eu M}$, the Riemannian Hessian of $\Omega$ on manifold ${\eu M}$ is given by
\begin{align}
\Hess_{\eu M}\Omega(\hat\Theta_m)[\Xi_m]=\Pi_{\hat\Theta_m}\big(\Hess_{\cH^3}\Omega(\hat\Theta_m)[\Xi_m]\big)-\Omega(\hat\Theta_m)\cdot\Xi_m,\quad \Pi_{\hat\Theta_m}=\bigoplus_{\nu\in\{\theta_m^1,\theta_m^2,\xi_m\}}\Pi_{\hat\nu}
\label{eq:hessian_decomposition}
\end{align}
where the projection $\Pi$ is defined as $\Pi_{\hat\nu}(U)=U-\langle U,\hat\nu\rangle_{L^2(\cH)}\hat\nu$.
Moreover, the Hessian on the full space is given by $\Hess_{\cH^3}\Omega(\hat\Theta_m)[\Xi_m]
=({\rm HS}_{\theta_m^1},{\rm HS}_{\theta_m^2},{\rm HS}_{\xi_m})$, where each block takes the form
\begin{subequations}
\begin{align}
{\rm HS}_{\theta_m^1}&={|G|}^{-1}\cdot\bigoplus_{\rho \in \irr(G)^\sharp} \left((\Xi_{\theta_m^2}[\rho])^*\widehat{\xi_m}[\rho]+(\widehat{\theta_m^2}[\rho])^*\Xi_{\xi_m}[\rho]\right)\cdot\ind(\rho\neq\rho_{\sf triv}),\label{eq:ambient_hess_theta1}\\
{\rm HS}_{\theta_m^2}&={|G|}^{-1}\cdot\bigoplus_{\rho \in \irr(G)^\sharp} \left(\Xi_{\xi_m}[\rho](\widehat{\theta_m^1}[\rho])^*+\widehat{\xi_m}[\rho] (\Xi_{\theta_m^1}[\rho])^*\right)\cdot\ind(\rho\neq\rho_{\sf triv}),\label{eq:ambient_hess_theta2}\\
{\rm HS}_{\xi_m}&={|G|}^{-1}\cdot\bigoplus_{\rho \in \irr(G)^\sharp} \left(\Xi_{\theta_m^2}[\rho]\widehat{\theta_m^1}[\rho]+\widehat{\theta_m^2}[\rho]\Xi_{\theta_m^1}[\rho]\right)\cdot\ind(\rho\neq\rho_{\sf triv}).\label{eq:ambient_hess_xi}
\end{align}
\end{subequations}
\end{lemma}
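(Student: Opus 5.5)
I will derive the Hessian directly from the embedded-submanifold formula stated above, $\Hess_{\eu M}{\eu F}(p)[\Xi]=\Pi_p\big(\mathrm D[\Pi_{(\cdot)}\nabla_\cH{\eu F}(\cdot)](p)[\Xi]\big)$. I specialize it to the product of unit spheres ${\eu M}=\SSS(\cH)^3$ with the inner product \eqref{eq:def_hilbert}.

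\textbf{Step 1: ambient gradient.} First I compute the ambient gradient $\nabla_{\cH^3}\Omega$ with respect to $\langle\cdot,\cdot\rangle_{L^2(\cH)}$. The energy is
\[
\Omega=\sum_{\rho\in\irr(G)^\sharp_{\neq1}} d_\rho^\sharp\,\Re\,\tr\big(\hat\xi^*\hat\theta^2\hat\theta^1\big).
\]
I perturb $\hat\theta^1[\rho]$ by $\varepsilon U$. The derivative is $d_\rho^\sharp\Re\tr(\hat\xi^*\hat\theta^2U)=d_\rho^\sharp\Re\tr\big(((\hat\theta^2)^*\hat\xi)^*U\big)$. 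Matching this with the weight $|G|d_\rho^\sharp$ in the inner product gives
\[
\nabla_{\theta^1}\Omega=|G|^{-1}\bigoplus_\rho(\hat\theta^2[\rho])^*\hat\xi[\rho]\,\ind(\rho\neq\rho_{\sf triv}).
\]
Analogously, $\nabla_{\theta^2}\Omega=|G|^{-1}\bigoplus_\rho\hat\xi(\hat\theta^1)^*$, using $\Re\tr(\hat\xi^*U\hat\theta^1)=\Re\tr\big((\hat\xi(\hat\theta^1)^*)^*U\big)$. Likewise $\nabla_\xi\Omega=|G|^{-1}\bigoplus_\rho\hat\theta^2\hat\theta^1$. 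This is consistent with Proposition~\ref{prop:general_group_dyn} and Lemma~\ref{lem:riemannian-gf}.

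\textbf{Step 2: ambient Hessian.} Each block of the ambient gradient is bilinear in the other two variables. Differentiating along $\Xi$ therefore gives exactly the product-rule expressions \eqref{eq:ambient_hess_theta1}--\eqref{eq:ambient_hess_xi}. For example, for the $\theta^1$ block, $\mathrm D[(\hat\theta^2)^*\hat\xi][\Xi]=\Xi_{\theta^2}^*\hat\xi+(\hat\theta^2)^*\Xi_\xi$.

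\textbf{Step 3: curvature correction.} Next I account for the moving projection. Write $\grad\Omega(\hat\Theta)=\Pi_{\hat\Theta}\nabla\Omega$, where on each sphere $\Pi_{\hat\nu}U=U-\langle U,\hat\nu\rangle\hat\nu$. Differentiating along $\Xi$ yields
\[
\Pi_{\hat\nu}(\mathrm D\nabla_\nu\Omega[\Xi])-\langle\nabla_\nu\Omega,\Xi_\nu\rangle\hat\nu-\langle\nabla_\nu\Omega,\hat\nu\rangle\Xi_\nu .
\]
Applying the outer projection $\Pi_{\hat\Theta}$ kills the middle term, since it is parallel to $\hat\nu$. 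It leaves the third term unchanged, since $\Xi_\nu\perp\hat\nu$.

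It remains to show that $\langle\nabla_\nu\Omega,\hat\nu\rangle_{L^2(\cH)}=\Omega$ for each $\nu$. This is Euler's identity, because $\Omega$ is linear, hence $1$-homogeneous, in each block separately. It can also be checked directly: $|G|\sum d_\rho^\sharp\Re\tr\big(((\hat\theta^2)^*\hat\xi)^*\hat\theta^1\big)|G|^{-1}=\Omega$. Hence the correction is $-\Omega\cdot\Xi$ in every block, which gives \eqref{eq:hessian_decomposition}.

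\textbf{Main obstacle.} The only delicate point is bookkeeping in Step 1. The factor $|G|^{-1}$ must come out correctly from the weight $|G|d_\rho^\sharp$ in the inner product. I also need to confirm that restricting to the half-space $\irr(G)^\sharp$ with weights $d_\rho^\sharp$ makes the real-part pairing coincide with the full sum over $\irr(G)$. This holds because the $\rho^\vee$ terms are complex conjugates of the $\rho$ terms. The rest is routine differentiation.
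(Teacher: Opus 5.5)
Your proposal is correct and follows essentially the same route as the paper: the ambient Hessian comes from the product rule applied to the bilinear ambient gradient, and the curvature correction comes from differentiating the moving projection. The outer projection kills the normal term, and $\langle\nabla_\nu\Omega,\hat\nu\rangle=\Omega$ yields $-\Omega\cdot\Xi$; your Euler/multilinearity argument for that identity is a slightly cleaner justification of what the paper checks by direct trace computation.
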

\begin{proof}[Proof of Lemma \ref{lem:hessian_extrinsic_adapted}]
    Please refer to \S\ref{ap:proof_riemannian_gf_hessian} for a detailed proof.
\end{proof}
We remark that each block is a direct sum over $\rho\in\irr(G)^\sharp$, so the ambient Hessian has a block-diagonal structure in the representation basis: it decouples across irreducible representations, with each $\rho$ contributing an independent block.

\subsubsection{Step 2: Critical Point Classification}
\label{ap:step2_critical}

With the Riemannian gradient and Hessian in hand, we now characterize the critical points of the normalized flow on ${\eu M}$.
A critical point is a state $\hat\Theta_m^\dagger$ that remains invariant under the flow:
$$
{\rm Crit}(\Omega) := \{ \hat\Theta_m^\dagger\in {\eu M} : {\rm grad}_{\eu M}\Omega(\hat\Theta_m^\dagger) = \mathbf{0} \}.
$$
{For any $\hat\Theta_m^\dagger\in{\rm Crit}(\Omega)$, the notation $\Omega_m^\dagger:=\Omega_m(\hat\Theta_m^\dagger)$ denotes the energy evaluated at that critical point.}
Equivalently, setting the right-hand side of the spectral dynamics in Proposition~\ref{prop:general_group_dyn} to zero gives
\begin{align}
(\hat{\theta^{2}_m}[\rho]^\dagger)^*\hat{\xi_m}[\rho]^\dagger&=|G|\cdot\Omega_m^\dagger\cdot\hat{\theta^{1}_m}[\rho]^\dagger,\qquad
\hat{\xi_m}[\rho]^\dagger(\hat{\theta^{1}_m}[\rho]^\dagger)^*=|G|\cdot\Omega_m^\dagger\cdot\hat{\theta^{2}_m}[\rho]^\dagger,\notag\\
&
\hat{\theta^{2}_m}[\rho]^\dagger\hat{\theta^{1}_m}[\rho]^\dagger=|G|\cdot\Omega_m^\dagger\cdot\hat{\xi_m}[\rho]^\dagger,\qquad\forall \rho\in\irr(G)^\sharp_{\neq 1}.
\label{eq:eqid_hat_clean}
\end{align}
We classify a critical point $\hat{\Theta}_m^\dagger$ as a strict saddle if the Riemannian Hessian has a positive eigenvalue:
$$
{\rm Sad}(\Omega) := \{ \hat\Theta_m^\dagger \in {\rm Crit}(\Omega) : \lambda(\Hess_{\eu M}\Omega(\hat{\Theta}_m^\dagger)) \cap \CC_+ \neq \emptyset \}.
$$
The classification proceeds by the sign of $\Omega_m^\dagger$ and the representation support of $\hat{\Theta}_m^\dagger$.
The first four cases below are the excluded cases.
Specifically, Cases~1--2 can attract trajectories only from a zero. Thus, the only critical points not excluded are the positive-energy, rank-one, single-representation equilibria (Case 5), which constitute the convergence points identified in Theorem~\ref{thm:converge_point_general_group}.

\subsubsection*{Case 1. Negative Equilibrium Energy $\Omega_m^\dagger < 0$}
For negative-energy equilibria, the exceptional initialization set is
\begin{align}\label{eq:def_eu_M0}
{\eu M}_{\sf init} = \big\{ &\hat{\Theta}_m\in{\eu M}: 
\|\hat{\theta_m^2}[\rho]\|_{\rm F}^2 = \|\hat{\theta_m^1}[\rho]\|_{\rm F}^2 = \|\hat{\xi_m}[\rho]\|_{\rm F}^2,\quad \forall \rho \in \irr(G)^\sharp\big\} .
\end{align}
Note this is a proper embedded submanifold of ${\eu M}$, with $\dim({\eu M}_{\sf init}) < \dim ({\eu M})$.
Therefore, we have $\mathrm{vol}({\eu M}_{\sf init}) = 0$.
The following lemma gives the dynamics of norm-gap identity.
\begin{lemma}\label{lem:ODE_Delta_1j}
    For all $\rho \in \irr(G)^\sharp$, we denote 
    $$
    \Delta^{1}_m[\rho]=\|\hat{\theta_m^1}[\rho]\|_{\rm F}^2-\|\hat{\theta_m^2}[\rho]\|_{\rm F}^2,\qquad\Delta^{2}_m[\rho]=\|\hat{\theta_m^1}[\rho]\|_{\rm F}^2-\|\hat{\xi_m}[\rho]\|_{\rm F}^2.
    $$ 
    Then, for both $\tau\in\{1,2\}$, the evolution of $\Delta^\tau_m[\rho]$ is given by
    \begin{equation}\label{eq:ODE_Delta_1j}
\Delta^\tau_m[\rho](t) = \Delta^\tau_m[\rho](0)\cdot \exp\Big(- \frac{4a|G|^2}{M}\cdot\int_0^t \Omega_m(s)\,{\rm d} s \Big),\qquad\forall t\in\RR_{\geq0}.
\end{equation}
\end{lemma}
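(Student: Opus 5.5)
The plan is to differentiate the three squared Frobenius norms along the flow of Proposition~\ref{prop:general_group_dyn}. The driving terms turn out to contribute the same amount to each norm, so they cancel in the differences. What remains is the projection term, and this gives a linear scalar ODE for $\Delta^\tau_m[\rho]$ that Gr\"onwall-type integration solves exactly. Write $c=2a|G|/M$. For each $\rho$ and each $\nu\in\{\theta_m^1,\theta_m^2,\xi_m\}$ we have $\partial_t\|\hat\nu[\rho]\|_{\rm F}^2 = 2\Re\tr\big((\hat\nu[\rho])^*\partial_t\hat\nu[\rho]\big)$. Substituting the three evolution equations gives the following:
\begin{align*}
\partial_t\|\hat{\theta_m^1}[\rho]\|_{\rm F}^2 &= 2c\,\Re\tr\big((\hat{\theta_m^1}[\rho])^*(\hat{\theta_m^2}[\rho])^*\hat{\xi_m}[\rho]\big)\ind(\rho\neq\rho_{\sf triv}) - 2c|G|\,\Omega_m\|\hat{\theta_m^1}[\rho]\|_{\rm F}^2,\\
\partial_t\|\hat{\theta_m^2}[\rho]\|_{\rm F}^2 &= 2c\,\Re\tr\big((\hat{\theta_m^2}[\rho])^*\hat{\xi_m}[\rho](\hat{\theta_m^1}[\rho])^*\big)\ind(\rho\neq\rho_{\sf triv}) - 2c|G|\,\Omega_m\|\hat{\theta_m^2}[\rho]\|_{\rm F}^2,\\
\partial_t\|\hat{\xi_m}[\rho]\|_{\rm F}^2 &= 2c\,\Re\tr\big((\hat{\xi_m}[\rho])^*\hat{\theta_m^2}[\rho]\hat{\theta_m^1}[\rho]\big)\ind(\rho\neq\rho_{\sf triv}) - 2c|G|\,\Omega_m\|\hat{\xi_m}[\rho]\|_{\rm F}^2.
\end{align*}

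The key step is to show that the three coupling traces have equal real parts. Let $T=\tr\big((\hat{\xi_m}[\rho])^*\hat{\theta_m^2}[\rho]\hat{\theta_m^1}[\rho]\big)$. In the first trace I will use cyclicity to move $(\hat{\theta_m^1})^*$ to the end, which gives $\tr\big((\hat{\theta_m^2})^*\hat\xi_m(\hat{\theta_m^1})^*\big)$. This equals $\tr\big((\hat{\theta_m^1}\,(\hat\xi_m)^*\hat{\theta_m^2})^*\big)$, which is $\overline{\tr(\hat{\theta_m^1}(\hat\xi_m)^*\hat{\theta_m^2})}=\overline{T}$ after one more cyclic shift. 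The second trace is the same expression after a cyclic shift, so it is also $\overline T$. The third trace is $T$ itself. Since $\Re\overline T=\Re T$, the coupling contributions coincide. For $\rho=\rho_{\sf triv}$ there is no coupling at all, so the identity holds trivially there.

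Subtracting then gives, for each $\tau\in\{1,2\}$,
$$
\partial_t\Delta^\tau_m[\rho] = -\frac{4a|G|^2}{M}\,\Omega_m(t)\,\Delta^\tau_m[\rho].
$$
This is a scalar linear ODE with continuous coefficient $\Omega_m(t)$; continuity holds because the flow is smooth on the compact manifold $\eu M$, so $\Omega_m$ is bounded. Integrating (uniqueness of solutions to linear ODEs, or differentiating $\Delta^\tau_m[\rho](t)\exp\big(\tfrac{4a|G|^2}{M}\int_0^t\Omega_m\big)$ and observing that the derivative is zero) yields \eqref{eq:ODE_Delta_1j}.

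The main thing to check carefully is the trace bookkeeping: that all three coupling terms really reduce to $T$ or $\overline T$, with the adjoints in the right positions given the non-commutativity. A secondary point is consistency with the reduced index set $\irr(G)^\sharp$: the statement for $\rho^\vee$ follows by conjugation, since $\|\hat\nu[\rho^\vee]\|_{\rm F}=\|\hat\nu[\rho]\|_{\rm F}$. The constant $4a|G|^2/M$ arises as twice the projection coefficient $2a|G|^2/M$, consistent with the unnormalized time scale used in Proposition~\ref{prop:general_group_dyn}.
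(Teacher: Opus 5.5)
Your proposal is correct and follows the same route as the paper. You differentiate the squared Frobenius norms along the dynamics of Proposition~\ref{prop:general_group_dyn}, subtract so that only the projection term $-\frac{4a|G|^2}{M}\Omega_m\Delta^\tau_m[\rho]$ survives, and integrate the resulting scalar linear ODE. The one addition is your explicit check, via cyclicity and $\tr(A^*)=\overline{\tr(A)}$, that the three coupling traces share the same real part; the paper uses this silently when it writes ``subtracting gives.''
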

\begin{proof}[Proof of Lemma~\ref{lem:ODE_Delta_1j}]
    Please refer to \S\ref{ap:proof_ODE_Omega_leq_0} for a detailed proof.
\end{proof}
Note that the right-hand side of \eqref{eq:ODE_Delta_1j} diverges as $t \to \infty$ if $\Omega_m(t)$ remains negative for all $t \geq 0$.
Consequently, if $\Delta^\tau_m[\rho](0) \neq 0$, then $\Delta^\tau_m[\rho](t)$ will grow unboundedly, which is impossible under the unit-sphere constraint.
Building upon this observation, we establish the following result.

\begin{lemma}\label{lem:Omega<0}
Consider the equilibrium $\hat{\Theta}_m^\dagger$ such that $\Omega_m^\dagger < 0$.
Then the dynamical system in Proposition~\ref{prop:general_group_dyn} converges to $\hat{\Theta}_m^\dagger$ only if $\hat{\Theta}_m(0) \in {\eu M}_{\sf init}$.    
\end{lemma}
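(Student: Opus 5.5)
We argue by contraposition. Suppose the trajectory $\hat\Theta_m(t)$ converges to a critical point $\hat\Theta_m^\dagger$ with $\Omega_m^\dagger<0$. We show that every norm gap $\Delta^\tau_m[\rho](0)$ must vanish, which means exactly that $\hat\Theta_m(0)\in{\eu M}_{\sf init}$.

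\textbf{Step 1: the energy is eventually bounded away from zero.} The energy $\Omega$ is a continuous (indeed polynomial) function on the compact manifold ${\eu M}$. Since $\hat\Theta_m(t)\to\hat\Theta_m^\dagger$, we get $\Omega_m(t)\to\Omega_m^\dagger<0$. Hence there is a time $T_0$ such that
$$
\Omega_m(t)\le \Omega_m^\dagger/2<0\quad\text{for all } t\ge T_0.
$$
Also, $\Omega_m$ is bounded on $[0,T_0]$ by compactness; for instance, Cauchy--Schwarz gives a bound of the form $|\Omega_m|\le |G|^{-3/2}$ up to constants. It follows that
$$
-\int_0^t\Omega_m(s)\,{\rm d}s\;\ge\; c_0+\frac{|\Omega_m^\dagger|}{2}(t-T_0)\;\longrightarrow\;+\infty .
$$

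\textbf{Step 2: bounded norm gaps force zero initial gaps.} Apply Lemma~\ref{lem:ODE_Delta_1j} to each $\rho\in\irr(G)^\sharp$ and $\tau\in\{1,2\}$. It gives
$$
|\Delta^\tau_m[\rho](t)|=|\Delta^\tau_m[\rho](0)|\cdot\exp\Big(-\tfrac{4a|G|^2}{M}\int_0^t\Omega_m(s)\,{\rm d}s\Big).
$$
By Step 1, the exponential factor diverges. On the other hand, each $\Delta^\tau_m[\rho](t)$ is bounded for all $t$. By the Plancherel identity (Lemma~\ref{lem:l2G_inner_product_dft}), $d_\rho^\sharp|G|\,\|\hat\nu[\rho]\|_{\rm F}^2\le\|\hat\nu\|_{L^2(\cH)}^2=1$, so $|\Delta^\tau_m[\rho](t)|\le 2/(|G|d^\sharp_\rho)$. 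A bounded quantity equal to a constant times a divergent factor forces $\Delta^\tau_m[\rho](0)=0$. This holds for both $\tau$ and all $\rho\in\irr(G)^\sharp$, including $\rho_{\sf triv}$ if Lemma~\ref{lem:ODE_Delta_1j} covers it as stated.

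\textbf{Step 3: conclusion.} The conditions $\Delta^1_m[\rho](0)=0$ and $\Delta^2_m[\rho](0)=0$ give $\|\hat{\theta_m^1}[\rho]\|_{\rm F}=\|\hat{\theta_m^2}[\rho]\|_{\rm F}=\|\hat{\xi_m}[\rho]\|_{\rm F}$ at $t=0$, for every $\rho$. This is exactly the defining condition of ${\eu M}_{\sf init}$ in \eqref{eq:def_eu_M0}.

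\textbf{Where the care lies.} The main delicate point is the time scaling. Lemma~\ref{lem:ODE_Delta_1j} is stated with the factor $4a|G|^2/M$ attached to the unnormalized flow. Whichever parametrization is used, the multiplier is a fixed positive constant, so divergence of $-\int\Omega_m$ is unaffected. A second, minor check: Lemma~\ref{lem:ODE_Delta_1j} must apply to the trivial representation as well. For $\rho_{\sf triv}$ the driving term vanishes and each squared norm evolves as $\partial_t\|\hat\nu[\rho]\|_{\rm F}^2=-2c\,\Omega_m\|\hat\nu[\rho]\|_{\rm F}^2$. The gaps still satisfy the same multiplicative ODE, so the argument goes through unchanged.
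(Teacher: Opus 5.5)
Your proof is correct and follows the paper's argument. Both combine the norm-gap identity of Lemma~\ref{lem:ODE_Delta_1j} with the boundedness of the Fourier blocks under the unit-sphere constraint to force $\Delta^\tau_m[\rho](0)=0$ for every $\rho$, including $\rho_{\sf triv}$. The one difference is how you show $\int_0^t\Omega_m(s)\,{\rm d}s\to-\infty$: the paper uses that $\Omega_m(t)$ is non-decreasing along the Riemannian gradient ascent flow, so $\Omega_m(t)\le\Omega_m^\dagger<0$ for all $t\ge 0$, whereas you use only continuity and convergence to get $\Omega_m(t)\le\Omega_m^\dagger/2$ eventually, which is equally valid and does not rely on the gradient structure.
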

\begin{proof}[Proof of Lemma~\ref{lem:Omega<0}]
    Please refer to \S\ref{ap:proof_ODE_Omega_leq_0} for a detailed proof.
\end{proof}
By Lemma~\ref{lem:Omega<0}, convergence to a negative-energy equilibrium can occur only if $\hat{\Theta}_m(0)\in{\eu M}_{\sf init}$.
Since $\mathrm{vol}({\eu M}_{\sf init}) = 0$ and the law of $\hat{\Theta}_m(0)$ is absolutely continuous, we have
$\mathbb{P}\big(\hat{\Theta}_m(0)\in{\eu M}_{\sf init}\big)=0$.

\subsubsection*{Case 2. Zero Equilibrium $\Omega_m^\dagger = 0$ with only Trivial Representation.}
The same exceptional set \eqref{eq:def_eu_M0} in Case 1 excludes zero-energy equilibria supported only on the trivial representation.
The precise statement is as follows.

\begin{lemma}\label{lem:Omega=0_trivial}
    Consider the equilibrium $\hat{\Theta}_m^\dagger$ satisfying $\Omega_m^\dagger = 0$ and $\hat{\theta_m^2}[\rho]^\dagger = \hat{\theta_m^1}[\rho]^\dagger = \hat{\xi_m}[\rho]^\dagger = 0$ for all $\rho\neq\rho_{\sf triv}$.
Then the dynamical system in Proposition~\ref{prop:general_group_dyn} converges to $\hat{\Theta}_m^\dagger$ only if $\hat{\Theta}_m(0) \in {\eu M}_{\sf init}$.
\end{lemma}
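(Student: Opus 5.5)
The plan is to combine Lemma~\ref{lem:ODE_Delta_1j} with an explicit solution of the trivial-representation dynamics. The point of that explicit solution is to show that the common scalar factor
$$
E(t):=\exp\Big(-\tfrac{2a|G|^2}{M}\int_0^t\Omega_m(s)\,{\rm d}s\Big)
$$
has a \emph{finite, strictly positive} limit along any trajectory converging to such an equilibrium. Once that is known, every norm gap must already vanish at time zero. The argument is thus a degenerate analogue of Lemma~\ref{lem:Omega<0}: there, divergence of the factor was contradicted; here, its non-degeneracy forces the gaps to be zero.

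\textbf{Step 1 (trivial block is a pure rescaling).} By Proposition~\ref{prop:general_group_dyn}, the driving term carries the indicator $\ind(\rho\neq\rho_{\sf triv})$. Hence for each $\nu\in\{\theta_m^1,\theta_m^2,\xi_m\}$,
$$
\partial_t\hat\nu[\rho_{\sf triv}]=-\tfrac{2a|G|^2}{M}\,\Omega_m\,\hat\nu[\rho_{\sf triv}],
$$
so that $\hat\nu[\rho_{\sf triv}](t)=\hat\nu[\rho_{\sf triv}](0)\,E(t)$ with the same $E(t)$ for all three parameters. Consistently, Lemma~\ref{lem:ODE_Delta_1j} gives $\Delta^\tau_m[\rho](t)=\Delta^\tau_m[\rho](0)\,E(t)^2$ for every $\rho$.

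\textbf{Step 2 (the limit of $E$ is finite and positive).} At the equilibrium $\hat\Theta_m^\dagger$, all non-trivial blocks vanish. The sphere constraint $\|\hat\nu\|_{L^2(\cH)}=1$, with $d_{\rho_{\sf triv}}^\sharp=1$, then forces
$$
|\hat\nu[\rho_{\sf triv}]^\dagger|^2=|G|^{-1}\neq0
$$
for each $\nu$. If $\hat\nu[\rho_{\sf triv}](0)=0$ for some $\nu$, Step 1 keeps that coefficient identically zero, contradicting convergence; so all three initial trivial coefficients are nonzero. Then $|\hat\nu[\rho_{\sf triv}](0)|^2E(t)^2\to|G|^{-1}$, which gives $E(t)^2\to E_\infty^2:=\big(|G|\,|\hat\nu[\rho_{\sf triv}](0)|^2\big)^{-1}\in(0,\infty)$. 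Since $E_\infty$ does not depend on $\nu$, the three trivial initial norms coincide. This is exactly the $\rho=\rho_{\sf triv}$ condition in ${\eu M}_{\sf init}$.

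\textbf{Step 3 (non-trivial gaps vanish initially).} Fix $\rho\in\irr(G)^\sharp_{\neq1}$. Convergence to $\hat\Theta_m^\dagger$ means all three Frobenius norms at $\rho$ tend to $0$, so $\Delta^\tau_m[\rho](t)\to0$. On the other hand, $\Delta^\tau_m[\rho](t)=\Delta^\tau_m[\rho](0)E(t)^2\to\Delta^\tau_m[\rho](0)E_\infty^2$. Because $E_\infty^2>0$, this forces $\Delta^\tau_m[\rho](0)=0$ for $\tau\in\{1,2\}$. Combining with Step 2 yields $\hat\Theta_m(0)\in{\eu M}_{\sf init}$.

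The main (though mild) obstacle is Step 2. A naive reading of ``$\Omega_m\to0$'' does not control $\int_0^\infty\Omega_m$, which could a priori diverge, sending $E$ to $0$ or $\infty$. The trick is to read off the limit of $E$ from the trivial block itself rather than from integrability of $\Omega_m$: the trivial block's norm is pinned to $|G|^{-1/2}$ by the equilibrium, and its initial value is nonzero. Everything else is immediate from Lemma~\ref{lem:ODE_Delta_1j}.
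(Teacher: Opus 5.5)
Your proof is correct, but it obtains the key estimate by a different route from the paper.

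\textbf{The paper's route.} It argues exactly as in Lemma~\ref{lem:Omega<0}. Since the flow is a Riemannian gradient ascent of $\Omega_m$, the energy is non-decreasing and tends to $\Omega_m^\dagger=0$, so $\Omega_m(t)\le 0$ for all $t$. Hence the exponential factor in \eqref{eq:ODE_Delta_1j} is at least $1$, which gives $|\Delta^\tau_m[\rho](t)|\ge|\Delta^\tau_m[\rho](0)|$. Then $\Delta^\tau_m[\rho](\infty)=0$ for $\rho\neq\rho_{\sf triv}$ forces the initial gaps to vanish.

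\textbf{Your route.} You never invoke monotonicity of the energy. Instead, you use the trivial block, which has no driving term, as an exact readout of the common factor $E(t)$. The sphere constraint at the equilibrium then pins its limit to a finite positive value.

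\textbf{Comparison.}
\begin{itemize}
\item The paper's argument is shorter and needs only a one-sided bound ($E\ge 1$), which is why divergence of $\int_0^\infty\Omega_m$ is harmless there.
\item Your argument gives two-sided control. It additionally shows that $\int_0^\infty\Omega_m$ converges and identifies $E_\infty$, and it does not rely on the gradient-flow structure.
\item You explicitly verify the $\rho=\rho_{\sf triv}$ equal-norm condition in the exceptional set. The paper leaves this implicit; it follows from the non-trivial gaps vanishing together with the sphere constraint, or from $\Delta^\tau_m[\rho_{\sf triv}](\infty)=0$.
\item The price is that your argument is tailored to this particular equilibrium type (support only on the trivial representation). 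The paper's monotonicity argument is the same one reused from Case~1.
\end{itemize}
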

\begin{proof}[Proof of Lemma~\ref{lem:Omega=0_trivial}]
    Please refer to \S\ref{ap:proof_ODE_Omega_leq_0} for a detailed proof.
\end{proof}
The proof of Lemma \ref{lem:Omega=0_trivial} similarly leverages the dynamics in \eqref{eq:ODE_Delta_1j}. 
Since $\Omega_m(t) \leq 0$ for all $t \geq 0$, the exponential term remains strictly greater than 1.
Thus, the trivial-representation-only pattern can only emerge and be maintained if the system is initialized within ${\eu M}_{\sf init}$.

\subsubsection*{Case 3. Zero Equilibrium $\Omega_m^\dagger = 0$ with Non-trivial Representation Support.}

For zero-energy critical points with non-trivial representation support, the equilibrium equations reduce to triple-annihilation relations.
The next lemma records the resulting block structure and proves the existence of the positive Hessian direction.

\begin{lemma}
\label{lem:triple-annihilation-structure}
Consider a critical point $\widehat{\Theta}_m^\dagger \in \mathcal{M}$ with $\Omega_m^\dagger = 0$.
Let $r_{1,\rho}$, $r_{2,\rho}$ and $r_{3,\rho}$ be the respective ranks of $\widehat{\theta^1_m}[\rho]^\dagger$, $\widehat{\theta^2_m}[\rho]^\dagger$ and $\widehat{\xi_m}[\rho]^\dagger$.
Suppose that there exist $\rho \in \irr(G)^\sharp_{\neq 1}$ and $\ell\in\{1,2,3\}$ such that $r_{\ell,\rho} > 0$.
Then, these ranks satisfy the pairwise constraints:
$$
r_{i,\rho} + r_{j,\rho} \le d_\rho,\qquad\forall(i, j)\in\{S\subseteq\{1,2,3\}:|S|=2\}.
$$
Moreover, for each $\rho \in \irr(G)^\sharp_{\neq 1}$, there exist unitary matrices $U_{m}[\rho], V_{m}[\rho], W_{m}[\rho] \in \CC^{d_{\rho} \times d_\rho}$
and positive diagonal matrices $\Sigma_{1,m}[\rho]$, $\Sigma_{\theta^2_m}[\rho]$ and $\Sigma_{3,m}[\rho]$ 
of sizes $r_{1,\rho}$, $r_{2,\rho}$ and $r_{3,\rho}$ such that
\begin{align*}
\widehat{\theta^1_m}[\rho]^\dagger=
V_{m}[\rho]
\begin{pmatrix}
    \Sigma_{1,m}[\rho] & & \\
    & \mathbf{0}& \\
    & & \mathbf{0}
\end{pmatrix}
&(W_{m}[\rho])^*,\qquad
\widehat{\theta^2_m}[\rho]^\dagger=
U_{m}[\rho]
\begin{pmatrix}
   \mathbf{0} & & \\
    & \Sigma_{2,m}[\rho] & \\
    & & \mathbf{0}
\end{pmatrix}
(V_{m}[\rho])^*, \\
\widehat{\xi_m}[\rho]^\dagger
=
&U_{m}[\rho]
\begin{pmatrix}
    \mathbf{0}& & \\
    & \mathbf{0} & \\
    & & \Sigma_{3,m}[\rho]
\end{pmatrix}
(W_{m}[\rho])^*.
\end{align*}
Furthermore,
there exists a tangent direction $\Xi_m \in T_{\widehat{\Theta}_m^\dagger} {\eu M}$ and a positive scalar $\lambda_m^+ > 0$ such that
$$\Hess_{\eu M}\Omega(\hat\Theta_m^\dagger)[\Xi_m]=\lambda_m^+\cdot\Xi_m.$$
\end{lemma}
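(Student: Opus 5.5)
Throughout, fix a critical point with $\Omega_m^\dagger=0$ and abbreviate $A_\rho=\widehat{\theta^1_m}[\rho]^\dagger$, $B_\rho=\widehat{\theta^2_m}[\rho]^\dagger$, $C_\rho=\widehat{\xi_m}[\rho]^\dagger$ for $\rho\in\irr(G)^\sharp_{\neq1}$.

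\textbf{Step 1: triple annihilation and the block form.} Setting $\Omega_m^\dagger=0$ in the equilibrium equations \eqref{eq:eqid_hat_clean} gives
\[
B_\rho^*C_\rho=0,\qquad C_\rho A_\rho^*=0,\qquad B_\rho A_\rho=0 .
\]
I read these as three pairwise orthogonality statements between subspaces of $\CC^{d_\rho}$:
\begin{itemize}
\item $B_\rho^*C_\rho=0$ gives $\mathrm{range}(C_\rho)\perp\mathrm{range}(B_\rho)$;
\item $C_\rho A_\rho^*=0$ gives $\mathrm{range}(A_\rho^*)\perp\mathrm{range}(C_\rho^*)$;
\item $B_\rho A_\rho=0$ gives $\mathrm{range}(A_\rho)\subseteq\ker B_\rho=\mathrm{range}(B_\rho^*)^\perp$.
\end{itemize}
Each pair of orthogonal subspaces lives in a common copy of $\CC^{d_\rho}$, so dimension counting immediately gives $r_{i,\rho}+r_{j,\rho}\le d_\rho$.

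For the factorization, take compact SVDs of $A_\rho,B_\rho,C_\rho$. The singular subspaces come in three orthogonal pairs:
\begin{itemize}
\item the left factor of $A_\rho$ and the right factor of $B_\rho$, both in the ``$V$'' space;
\item the left factors of $B_\rho$ and $C_\rho$, both in the ``$U$'' space;
\item the right factors of $A_\rho$ and $C_\rho$, both in the ``$W$'' space.
\end{itemize}
In each space I place the two orthonormal families in disjoint coordinate slots and complete to a unitary $V_m[\rho],U_m[\rho],W_m[\rho]$. This yields the displayed block-diagonal forms, with the singular values collected in $\Sigma_{1,m},\Sigma_{2,m},\Sigma_{3,m}$. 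Only the pairwise constraints are needed here, since each unitary hosts only two of the three blocks; the zero padding is merely bookkeeping of positions.

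\textbf{Step 2: a positive curvature direction.} By hypothesis some active block exists. By the cyclic symmetry of $\Omega$ and of the Hessian formulas in Lemma~\ref{lem:hessian_extrinsic_adapted}, I may assume $A_\rho\neq0$ for some $\rho$. Let $\sigma_1>0$ be its top singular value with unit singular vectors $v,w$, so that $A_\rho w=\sigma_1 v$. Pick any unit vector $u\in\CC^{d_\rho}$ and define $\Xi_m$ by
\[
\Xi_{\theta^1_m}=0,\qquad \Xi_{\theta^2_m}[\rho]=uv^*,\qquad \Xi_{\xi_m}[\rho]=uw^*,
\]
with all other blocks zero.

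\emph{Tangency.} I need $\langle B,\Xi_{\theta^2_m}\rangle_{L^2(\cH)}=0$ and $\langle C,\Xi_{\xi_m}\rangle_{L^2(\cH)}=0$. These inner products are proportional to $\Re\big((B_\rho v)^*u\big)$ and $\Re\big((C_\rho w)^*u\big)$ respectively.
\begin{itemize}
\item Since $v\in\mathrm{range}(A_\rho)\subseteq\ker B_\rho$, we have $B_\rho v=0$.
\item Since $w\in\mathrm{range}(A_\rho^*)\perp\mathrm{range}(C_\rho^*)$, we have $w\in\ker C_\rho$, so $C_\rho w=0$.
\end{itemize}
Thus both inner products vanish for every choice of $u$, and $\Xi_m$ is tangent.

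\emph{Curvature.} Because $\Omega_m^\dagger=0$, the Hessian in \eqref{eq:hessian_decomposition} reduces to $\Pi\circ\Hess_{\cH^3}\Omega$. Since $\Xi_m$ is tangent, $\langle\Xi_m,\Pi X\rangle=\langle\Xi_m,X\rangle$ for any $X$, so
\[
\langle\Xi_m,\Hess_{\eu M}\Omega[\Xi_m]\rangle
=\frac{\mathrm d^2}{\mathrm d\varepsilon^2}\,\Omega(\hat\Theta^\dagger+\varepsilon\Xi_m)\Big|_{\varepsilon=0}.
\]
I expand the trilinear $\Omega$ along this line. The only second-order term that survives is the one pairing $\Xi_{\xi_m}$, $\Xi_{\theta^2_m}$ and $A_\rho$:
\[
2\,d_\rho^\sharp\,\Re\operatorname{tr}\big(\Xi_{\xi_m}[\rho]^*\,\Xi_{\theta^2_m}[\rho]\,A_\rho\big)
=2\,d_\rho^\sharp\,\Re\big(v^*A_\rho w\big)=2\,d_\rho^\sharp\,\sigma_1>0 .
\]
Since $\Hess_{\eu M}\Omega$ is self-adjoint on $T\eu M$, a positive Rayleigh quotient forces a positive eigenvalue $\lambda_m^+$, and the spectral theorem supplies the required eigenvector.

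\textbf{Main obstacle.} The delicate step is the tangency check. The projection terms in Lemma~\ref{lem:hessian_extrinsic_adapted} could in principle cancel the gain, and I first worried that $u$ must avoid $\mathrm{range}(B_\rho)+\mathrm{range}(C_\rho)$, which may be impossible. The resolution is the observation that $B_\rho v=0$ and $C_\rho w=0$ come for free from the annihilation relations of Step 1. The remaining cases, where only $B_\rho$ or only $C_\rho$ is nonzero, follow by cyclically permuting the construction.
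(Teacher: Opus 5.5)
Your proof is correct. Step~1 (three pairwise orthogonality relations, compact SVDs, and completing each of $U_m[\rho],V_m[\rho],W_m[\rho]$ from the two singular families it hosts) is essentially the paper's argument. Step~2, however, takes a genuinely different and shorter route.

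\textbf{How the paper handles the Hessian.} The paper first shows that at $\Omega_m^\dagger=0$ the projection is inactive on the ambient Hessian: by the annihilation relations, $\langle\hat\nu^\dagger,\mathrm{HS}_\nu\rangle_{L^2(\mathcal H)}=0$ for every direction. It then constructs an \emph{exact} eigenvector, which forces a case split.
\begin{itemize}
\item When $r_{2,\rho}+r_{3,\rho}<d_\rho$, it uses your direction with $u$ chosen orthogonal to $\mathrm{range}(B_\rho)+\mathrm{range}(C_\rho)$. This choice is needed only so that the $\theta^1$-block $\Xi_{\theta^2_m}^*C_\rho+B_\rho^*\Xi_{\xi_m}$ vanishes, and it gives eigenvalue $\sigma_1/|G|$.
\item When $r_{2,\rho}+r_{3,\rho}=d_\rho$, no such $u$ exists. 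The paper then mixes in the top singular triple $(u_2,v_2,s_2)$ of $B_\rho$, obtaining an eigenvector with all three components nonzero and eigenvalue $\sqrt{\sigma_1^2+s_2^2}/|G|$.
\end{itemize}

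\textbf{What your route buys.} Your Rayleigh-quotient argument needs neither the projection computation nor the case split. Tangency depends only on $B_\rho v=0$ and $C_\rho w=0$. The quadratic form $2d_\rho^\sharp\sigma_1$ does not depend on $u$. Self-adjointness of the Riemannian Hessian on $T_{\widehat{\Theta}_m^\dagger}\mathcal M$ then turns a positive Rayleigh quotient into the required eigenpair. You give up the explicit eigenvector but not the quantitative content. Since $\|\Xi_m\|_{L^2(\mathcal H)}^2=2|G|d_\rho^\sharp$, your normalized Rayleigh quotient is exactly $\sigma_1/|G|$, the paper's Case~1 eigenvalue. The downstream saddle-avoidance step only uses the existence of a positive eigenvalue.

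\textbf{One small imprecision.} The reduction to $A_\rho\neq0$ does not come from a literal cyclic shift. It rests on adjoint symmetries: writing $\Omega$ as a function of the blocks $(A,B,C)$, one has $\Omega(A,B,C)=\Omega(C,B^*,A)$. You can also avoid the reduction by building the direction directly.
\begin{itemize}
\item If $B_\rho\neq0$, take $(\Xi_{\theta^1_m},\Xi_{\xi_m})=(v_2x^*,u_2x^*)$.
\item If $C_\rho\neq0$, let $(u_3,w_3)$ be the top singular pair of $C_\rho$ and take $(\Xi_{\theta^1_m},\Xi_{\theta^2_m})=(xw_3^*,u_3x^*)$.
\end{itemize}
Here $x$ is any unit vector, and tangency again follows from the annihilation relations.
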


\begin{proof}[Proof of Lemma~\ref{lem:triple-annihilation-structure}]
    Please refer to \S\ref{ap:proof_triple_annihilation} for a detailed proof.
\end{proof}
Thus every zero-energy critical point with non-trivial representation support belongs to ${\rm Sad}(\Omega)$.

\subsubsection*{Case 4. Positive Equilibrium Energy $\Omega_m^\dagger > 0$ with Higher-rank}
For $\Omega_m^\dagger > 0$, the equilibrium equations force a SVD-type factorization with shared rank.

\begin{lemma}
\label{lem:equilib-svd}
Let $\hat{\Theta}_m^\dagger\in {\eu M}$ be a critical point with $\Omega_m^\dagger > 0$.
Then for each $\rho\in\irr(G)^\sharp_{\neq 1}$, there exist a rank $r_\rho \leq d_\rho$ and three partial isometries
$U_m[\rho],V_m[\rho],W_m[\rho]\in\CC^{d_{\rho}\times r_\rho}$, i.e., matrices satisfying $A^*A=I_r$,
such that the equilibrium Fourier coefficients factorize as
\begin{equation*}
\hat{\theta^{1}_m}[\rho]^\dagger=|G|\cdot\Omega_m^\dagger\cdot V_m[\rho] (W_m[\rho])^*,~
\hat{\theta^{2}_m}[\rho]^\dagger=|G|\cdot\Omega_m^\dagger\cdot U_m[\rho] (V_m[\rho])^*,~
\hat{\xi_m}[\rho]^\dagger=|G|\cdot\Omega_m^\dagger\cdot U_m[\rho] (W_m[\rho])^*.
\end{equation*}
\end{lemma}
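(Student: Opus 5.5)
The plan is to fix $\rho\in\irr(G)^\sharp_{\neq1}$ and work purely algebraically with the equilibrium equations \eqref{eq:eqid_hat_clean}. Write $A=\hat{\theta^1_m}[\rho]^\dagger$, $B=\hat{\theta^2_m}[\rho]^\dagger$, $C=\hat{\xi_m}[\rho]^\dagger$ and $c=|G|\cdot\Omega_m^\dagger>0$. Then \eqref{eq:eqid_hat_clean} reads
\begin{align*}
B^*C=cA,\qquad CA^*=cB,\qquad BA=cC.
\end{align*}
Since $c\neq0$, the third relation gives $C=BA/c$, so $C$ can be eliminated. The goal is to show that $A$, $B$ and $C$ all have the same rank $r_\rho$, that all their nonzero singular values equal $c$, and that their singular subspaces are chained together.

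\textbf{Step 1 (eliminate $C$).} Substituting $C=BA/c$ into the first two relations gives
\begin{align*}
B^*BA=c^2A,\qquad BAA^*=c^2B.
\end{align*}
Right-multiplying the first identity by $A^*$ and left-multiplying the second by $B^*$ produces two expressions for $B^*BAA^*$, namely $c^2AA^*$ and $c^2B^*B$. Hence $AA^*=B^*B=:P$.

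Using $PA=B^*BA=c^2A$, we get
\begin{align*}
P^2=PAA^*=c^2AA^*=c^2P.
\end{align*}
So $\Pi:=P/c^2$ is an orthogonal projection; let $r_\rho=\rank\Pi\le d_\rho$.

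For $C$: $C^*C=A^*B^*BA/c^2=A^*(c^2A)/c^2=A^*A$, and $CC^*=BAA^*B^*/c^2=BPB^*/c^2=BB^*$ (using $BAA^*=c^2B$). Therefore all three matrices have rank $r_\rho$, and every nonzero singular value equals $c$. If $r_\rho=0$, all three blocks vanish and the factorization holds trivially with empty isometries.

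\textbf{Step 2 (build the isometries).} Choose $V\in\CC^{d_\rho\times r_\rho}$ with orthonormal columns spanning $\operatorname{ran}\Pi=\operatorname{ran}A$, so that $VV^*=\Pi$. Define $W=A^*V/c$ and $U=BV/c$. Then
\begin{align*}
W^*W=V^*AA^*V/c^2=V^*\Pi V=I_{r_\rho},\qquad U^*U=V^*B^*BV/c^2=I_{r_\rho},
\end{align*}
so $U$ and $W$ are partial isometries. Moreover $cVW^*=VV^*A=\Pi A=A$, because $\operatorname{ran}A=\operatorname{ran}\Pi$. Since $B^*B=c^2\Pi$, the kernel of $B$ is $\operatorname{ran}(\Pi)^\perp$, so $B=B\Pi=BVV^*=cUV^*$. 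Finally $C=BA/c=c\,UV^*VW^*=c\,UW^*$. This gives exactly the claimed factorization with common rank $r_\rho$.

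The only step needing care is Step 1, specifically obtaining $AA^*=B^*B$ and the idempotence $P^2=c^2P$; the construction in Step 2 is routine once that is in place. As a consistency check, one can verify that $CA^*=cUW^*\cdot cWV^*=c^2UV^*=cB$, recovering the second original relation.
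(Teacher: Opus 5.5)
Your proof is correct, and it reaches the factorization by a somewhat different route than the paper.

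\textbf{The paper's route.} It first proves that the three blocks have equal rank, applying $\rank(XY)\le\min\{\rank(X),\rank(Y)\}$ cyclically to the equilibrium identities. It then upgrades the range inclusions implied by each identity to equalities by dimension counting. Next it picks orthonormal bases $U,V,W$ of the three shared subspaces and writes the blocks as $VM_1W^*$, $UM_2V^*$, $UM_3W^*$ with invertible $r_\rho\times r_\rho$ cores. Only then does it eliminate $M_3$ and cancel the invertible core to get $M_2^*M_2=c^2I$, and symmetrically for $M_1$. Finally it absorbs the resulting unitaries into $U$ and $W$.

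\textbf{Your route.} You stay with the full $d_\rho\times d_\rho$ blocks. The same elimination of $C$, followed by two-sided multiplication, gives the Gram identities $AA^*=B^*B=c^2\Pi$ with $\Pi$ an orthogonal projection, together with $C^*C=A^*A$ and $CC^*=BB^*$. From these, equal rank and the fact that every nonzero singular value equals $c$ follow at once. Choosing a single basis $V$ of $\operatorname{ran}\Pi$ and setting $U=BV/c$, $W=A^*V/c$ then produces the chained isometries directly. So you need neither the rank and range bookkeeping, nor invertible cores, nor the final unitary re-normalization.

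\textbf{Comparison.} The paper's version makes the subspace alignment explicit up front, in the same style as its block construction for the zero-energy case in Lemma~\ref{lem:triple-annihilation-structure}. Yours is shorter and more self-contained, and it exhibits each block as exactly $c$ times a partial isometry in one stroke.
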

\begin{proof}[Proof of Lemma \ref{lem:equilib-svd}]
    Please refer to \S\ref{ap:proof_equilib_svd_rank1} for a detailed proof.
\end{proof}
Following the factorization, the next lemma shows that every positive-energy equilibrium with total rank at least two is a strict saddle.

\begin{lemma}\label{lem:rank-1-saddle}
Under the setting of Lemma~\ref{lem:equilib-svd}, let $ U_m[\rho], V_m[\rho], W_m[\rho]$ be the partial isometries and $ r_\rho$ be the ranks from the equilibrium factorization.
Consider the tangent directions of the form
\begin{align}
    \Xi_{\theta_m^1}[\rho]=&V_m[\rho] \Sigma_m[\rho] (W_m[\rho])^*,\qquad\qquad \Xi_{\theta_m^2}[\rho]=U_m[\rho] \Sigma_m[\rho] (V_m[\rho])^*\notag\\
    &\Xi_{\xi_m}[\rho]=U_m[\rho] \Sigma_m[\rho] (W_m[\rho])^*,\qquad \forall\rho\in\irr(G)^\sharp_{\neq 1},
    \label{eq:perturbation_form_statement}
\end{align}
where $\Sigma_m[\rho]\in\RR^{r_\rho\times r_\rho}$ is a diagonal matrix.
Let $\mathcal T_m$ be the linear subspace defined by $\{\Xi_m=(\Xi_{\theta_m^1},\Xi_{\theta_m^2},\Xi_{\xi_m})\\\in T_{\hat\Theta_m^\dagger}{\eu M}:\Xi_\nu[\rho_{\sf triv}]=0\}$, subject to tangent condition $\sum_{\rho\in\irr(G)^\sharp_{\neq 1}} d_\rho^\sharp\cdot \tr(\Sigma_m[\rho])=0$.
If $\sum_{\rho\in\irr(G)_{\neq 1}} r_\rho \geq 2$, then $\mathcal T_m$ is non-trivial with $\dim(\mathcal T_m)\ge 1$.
Then, we have
$$
\Hess_{\eu M}\Omega(\hat\Theta_m^\dagger)[\Xi_m]
= \Omega_m^\dagger\cdot\Xi_m,\qquad\forall \Xi_m \in \mathcal{T}_m.
$$
\end{lemma}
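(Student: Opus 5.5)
The plan is to verify the claims directly from the equilibrium factorization of Lemma~\ref{lem:equilib-svd} and the Hessian formula of Lemma~\ref{lem:hessian_extrinsic_adapted}. Throughout, write $c:=|G|\cdot\Omega_m^\dagger>0$, so that
$$
\hat{\theta^1_m}[\rho]^\dagger=cV_m[\rho](W_m[\rho])^*,\qquad \hat{\theta^2_m}[\rho]^\dagger=cU_m[\rho](V_m[\rho])^*,\qquad \hat{\xi_m}[\rho]^\dagger=cU_m[\rho](W_m[\rho])^*.
$$
The only algebraic facts I will use are the partial-isometry identities $U_m[\rho]^*U_m[\rho]=V_m[\rho]^*V_m[\rho]=W_m[\rho]^*W_m[\rho]=I_{r_\rho}$, together with the fact that $\Sigma_m[\rho]$ is real diagonal.

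\emph{Step 1 (tangency and dimension).} On the product sphere, a triple $\Xi_m$ is tangent iff $\langle\Xi_\nu,\hat\nu^\dagger\rangle_{L^2(\cH)}=0$ for each $\nu\in\{\theta_m^1,\theta_m^2,\xi_m\}$. For $\nu=\theta^1_m$, the inner product \eqref{eq:def_hilbert} gives
$$
\langle\Xi_{\theta_m^1},\hat{\theta^1_m}^\dagger\rangle_{L^2(\cH)}=c|G|\sum_{\rho}d_\rho^\sharp\,\Re\,\tr\big(W\Sigma V^*VW^*\big)=c|G|\sum_\rho d_\rho^\sharp\,\tr(\Sigma_m[\rho]).
$$
Here I used cyclicity of the trace and $W^*W=I$. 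The computations for $\theta^2_m$ and $\xi_m$ are identical. Hence the single real linear constraint $\sum_\rho d_\rho^\sharp\tr(\Sigma_m[\rho])=0$ makes all three components tangent simultaneously. The trivial blocks are set to zero, so they contribute nothing.

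For the dimension count, the map $(\Sigma_m[\rho])_\rho\mapsto\Xi_m$ is injective, since $V^*(V\Sigma W^*)W=\Sigma$. Its domain is $\RR^{\sum_\rho r_\rho}$, and we cut it by one linear constraint. Therefore $\dim\mathcal T_m\ge\sum_\rho r_\rho-1\ge1$ whenever the total rank is at least two. Some care is needed to count ranks over $\irr(G)^\sharp$ rather than over $\irr(G)$, because conjugate blocks are tied together; the argument still goes through, since $d^\sharp_\rho$ already accounts for the orbit.

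\emph{Step 2 (ambient Hessian).} I substitute $\Xi_m$ into \eqref{eq:ambient_hess_theta1}--\eqref{eq:ambient_hess_xi}. For the $\theta^1_m$ block,
$$
{\rm HS}_{\theta_m^1}[\rho]=|G|^{-1}\big(V\Sigma U^*\cdot cUW^*+cVU^*\cdot U\Sigma W^*\big)=2|G|^{-1}c\,V\Sigma W^*=2\Omega_m^\dagger\,\Xi_{\theta_m^1}[\rho].
$$
The $\theta^2_m$ and $\xi_m$ blocks give, in the same way, $2\Omega_m^\dagger\Xi_{\theta^2_m}$ and $2\Omega_m^\dagger\Xi_{\xi_m}$. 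In each case the middle factors collapse via the partial-isometry identities, and $\Sigma$ commutes with itself. So $\Hess_{\cH^3}\Omega(\hat\Theta_m^\dagger)[\Xi_m]=2\Omega_m^\dagger\Xi_m$.

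\emph{Step 3 (Riemannian correction).} By Step 1, $2\Omega_m^\dagger\Xi_m$ is already tangent, so the projection $\Pi_{\hat\Theta_m^\dagger}$ leaves it unchanged. Subtracting the curvature term $\Omega_m^\dagger\Xi_m$ in \eqref{eq:hessian_decomposition} then yields $\Hess_{\eu M}\Omega(\hat\Theta_m^\dagger)[\Xi_m]=\Omega_m^\dagger\Xi_m$. Since $\Omega_m^\dagger>0$, this exhibits a positive eigenvalue, so the critical point is a strict saddle.

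\emph{Main obstacle.} The delicate point is bookkeeping of normalizations, not the algebra itself. I must check that the $|G|^{-1}$ in the ambient Hessian, the factor $c=|G|\Omega_m^\dagger$ from Lemma~\ref{lem:equilib-svd}, and the $-\Omega\cdot\Xi$ correction in Lemma~\ref{lem:hessian_extrinsic_adapted} all use the same time normalization of $\Omega$. If they do not, the eigenvalue becomes a different positive multiple of $\Omega_m^\dagger$, which still suffices for the strict-saddle conclusion. A second point to confirm is that the $\Re$ in the inner product is harmless, which holds because $\tr(\Sigma_m[\rho])$ is real.
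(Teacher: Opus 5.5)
Your proposal is correct and follows the paper's proof essentially step for step. Tangency reduces to the single trace constraint, the ambient Hessian collapses to $2\Omega_m^\dagger\Xi_m$ through the partial-isometry identities, and the curvature correction $-\Omega_m^\dagger\Xi_m$ leaves eigenvalue $\Omega_m^\dagger$; the normalizations you were worried about are indeed consistent.

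One correction concerns your rank-count remark: the hypothesis must be read as $\sum_{\rho\in\irr(G)^\sharp_{\neq 1}} r_\rho\ge 2$, as the paper's proof does in its last line. Over $\irr(G)_{\neq1}$, a single non-self-dual rank-one orbit (exactly the Case~5 target) already has total rank $2$ yet gives $\mathcal T_m=\{0\}$, and the weights $d_\rho^\sharp$ do not change that.
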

\begin{proof}[Proof of Lemma \ref{lem:rank-1-saddle}]
    Please refer to \S\ref{ap:proof_equilib_svd_rank1} for a detailed proof.
\end{proof}
Lemma \ref{lem:rank-1-saddle} shows that every $\Xi_m \in \mathcal{T}_m$ acts as an eigenvector of the Hessian with positive eigenvalue $\Omega_m^\dagger>0$.
Therefore, positive-energy equilibria with total rank at least two belong to ${\rm Sad}(\Omega)$.
Only the rank-one, single-representation positive-energy equilibria remain.

\subsubsection{Step 3: Saddle Avoidance for Riemannian Gradient Flow}
\label{ap:step3_saddle}

The final ingredient is a general result showing that Riemannian gradient flow avoids all saddle points from random initialization almost surely.

\begin{lemma}
\label{thm:riemannian-gf-escape}
Let ${\eu M}$ be a compact Riemannian manifold with a Riemannian metric $g$, and let
${\eu F}:{\eu M}\to\RR$ be a smooth functional satisfying ${\eu F}\in C^{r}({\eu M})$ with $r\ge 2$.
Consider the gradient flow of ${\eu F}$ defined by
$$\partial_t x(t)=\grad_{{\eu M}} {\eu F}(x(t)), \qquad x(0)=X_0.$$
For a time $t\in \RR$, denote $\phi_t(x):{\eu M}\to {\eu M}$ to be the flow mapping from $x$, and define the global stable set as 
$$
W^{\sf s}=\{x\in {\eu M}:\exists\, p\in {\rm Sad}({\eu F}),\; \phi_t(x)\to p \text{\rm~as~} t\to\infty\},
$$
Then $W^{\sf s}$ has Riemannian volume measure zero.
Consequently, if $X_0$ has a probability measure absolutely continuous with respect to the Riemannian volume measure, then
$\mathbb P(X_0\in W^{\sf s})=0$.
\end{lemma}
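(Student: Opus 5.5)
The plan is to follow the strategy already sketched in the main text (Step~3): localize near each strict saddle with the center-stable manifold theorem, then propagate the resulting null sets backward along the flow and take a countable union. Throughout we treat $\phi_t$ as a complete flow on the compact manifold ${\eu M}$. Since ${\eu F}\in C^r$ with $r\ge2$, the vector field $\grad_{\eu M}{\eu F}$ is $C^{r-1}$, so $\phi_t$ is defined for all $t\in\RR$ and each $\phi_t$ is a $C^{1}$ diffeomorphism with inverse $\phi_{-t}$. This is the fact that later lets us pull null sets back.

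First I would fix $p\in{\rm Sad}({\eu F})$ and linearize at $p$. There $\grad_{\eu M}{\eu F}(p)=0$, and the derivative of the vector field at $p$ is $\Hess_{\eu M}{\eu F}(p)$. This operator is self-adjoint on $T_p{\eu M}$, so its spectrum is real, and the split $T_p{\eu M}=E_p^{\sf sc}\oplus E_p^{\sf u}$ (nonpositive versus positive eigenvalues) is well defined. Because $p$ is a strict saddle, $\dim E_p^{\sf u}\ge1$. The time-one map $\phi_1$ fixes $p$ and has $D\phi_1(p)=\exp(\Hess_{\eu M}{\eu F}(p))$, whose eigenvalues are $\le1$ on $E_p^{\sf sc}$ and $>1$ on $E_p^{\sf u}$.

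Next I would apply the center-stable manifold theorem (Theorem~\ref{thm:shub-local-stable}, \citealp{shub2013global}) to the diffeomorphism $\phi_1$ in a chart around $p$. It yields a neighborhood $U_p$ and an embedded disc $W^{\sf sc}_{\rm loc}(p)$ of dimension $\dim E_p^{\sf sc}<\dim{\eu M}$, with the following property: if $\phi_n(x)\in U_p$ for all integers $n\ge0$, then $x\in W^{\sf sc}_{\rm loc}(p)$. A lower-dimensional $C^1$ embedded disc has Riemannian volume zero.

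Then I would reduce to countably many saddles. ${\rm Sad}({\eu F})$ may be uncountable, but ${\eu M}$ is second countable, so the open cover $\{U_p\}_{p\in{\rm Sad}}$ of ${\rm Sad}({\eu F})$ has a countable subcover $\{U_{p_j}\}_{j\ge1}$ (Lindelöf). Now take $x\in W^{\sf s}$ with $\phi_t(x)\to p$. Then $p\in U_{p_j}$ for some $j$, and since $U_{p_j}$ is open there is an integer $N$ with $\phi_t(x)\in U_{p_j}$ for all $t\ge N$. In particular $\phi_n(\phi_N(x))\in U_{p_j}$ for all $n\ge0$, so $\phi_N(x)\in W^{\sf sc}_{\rm loc}(p_j)$. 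This gives
\[
W^{\sf s}\subseteq\bigcup_{j\ge1}\bigcup_{N\ge0}\phi_{-N}\big(W^{\sf sc}_{\rm loc}(p_j)\big).
\]
Each $\phi_{-N}$ is a $C^1$ diffeomorphism of a compact manifold, hence locally Lipschitz, so it maps volume-zero sets to volume-zero sets. A countable union of null sets is null, so ${\rm vol}(W^{\sf s})=0$. Absolute continuity of the law of $X_0$ then gives $\mathbb P(X_0\in W^{\sf s})=0$.

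The main obstacle is the application of the center-stable manifold theorem. First, it must be applied in its general form: $p$ need not be a nondegenerate critical point, since there may be zero eigenvalues and even continua of critical points, so only the center-stable (not stable) version applies. Second, the point $p_j$ whose neighborhood captures the trajectory generally differs from the limit $p$, so the property must hold on a neighborhood $U_{p_j}$ that is uniform, not just at $p_j$ itself. Shub's theorem, stated for a fixed point of a $C^1$ local diffeomorphism, delivers exactly this. So I would verify carefully that $\phi_1$ satisfies its hypotheses in local coordinates, and that the discrete-time containment suffices for the continuous-time trajectory.
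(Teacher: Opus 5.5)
Your proposal is correct and follows the paper's proof essentially step for step. You use a complete flow of diffeomorphisms, a local center-stable manifold of dimension $\dim E_p^{\sf sc}<\dim{\eu M}$ at each strict saddle, a Lindel\"of countable subcover, and the covering $W^{\sf s}\subseteq\bigcup_{j}\bigcup_{N}\phi_{-N}\big(W^{\sf sc}_{\rm loc}(p_j)\big)$. The only differences are minor technical ones: you apply Shub's discrete-time theorem directly to the time-one map $\phi_1$, using $D\phi_1(p)=\exp(\Hess_{\eu M}{\eu F}(p))$ and integer-time containment, instead of a flow version of the theorem (the paper itself mentions this time-$t$ reduction), and you transport null sets by Lipschitz continuity of $\phi_{-N}$ rather than arguing that the images are lower-dimensional submanifolds.
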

\begin{proof}[Proof of Lemma \ref{thm:riemannian-gf-escape}]
    Please refer to \S\ref{ap:proof_riemannian_escape} for a detailed proof.
\end{proof}
This theorem extends the classical saddle-avoidance results for these discrete first-order methods \citep{lee2019first} to the continuous Riemannian gradient flows on arbitrary manifolds.
The key idea is that the center-stable manifold of each saddle point is a lower-dimensional submanifold of $\eu M$, and hence has Riemannian volume zero.
A countable union of such null sets remains null, so a randomly initialized flow avoids all saddle points with probability one.

\subsubsection{Assembly: Proof of Theorem~\ref{thm:converge_point_general_group}}
\label{ap:proof_thm41_assembly}

As established in Lemma~8.4.4 of \citet{jost2005riemannian}, a trajectory of the Riemannian gradient flow satisfies
\begin{align}
    \grad_{\eu M}{\eu F}(x(t)) \to 0
    \qquad\text{or}\qquad
    |{\eu F}(x(t))| \to +\infty
    \quad\text{as } t\to\infty.
    \label{eq:critical_converge}
\end{align}
On a compact manifold, the second alternative is impossible because \({\eu F}\) is bounded. 
Hence every trajectory satisfies
$\grad_{\eu M}{\eu F}(x(t))\to 0$.
However, it does not by itself imply that \(x(t)\) converges to a single critical point. In principle, the set of accumulation points of the trajectory could form a continuum of critical points, and the trajectory could continue to drift along this set while its velocity converges to zero. The following analytic convergence result rules out such behavior.
\begin{lemma}[Theorem 2.2, \cite{absil2005convergence}]
\label{lem:analytic_gradient_flow_convergence}
Let ${\eu M}$ be a compact real analytic Riemannian manifold with a Riemannian metric $g$, and ${\eu F}:{\eu M}\to\RR$ is real analytic.
Consider the gradient flow $\partial_t x(t)=\grad_{{\eu M}} {\eu F}(x(t))$ with $x(0)=X_0$.
Then there exists a critical point $p\in {\rm Crit}({\eu F})$ such that
$
\lim_{t\to\infty}x(t)=p .
$
\end{lemma}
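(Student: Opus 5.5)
The plan is the classical Łojasiewicz argument. Because ${\eu F}$ is real analytic, the gradient flow has finite length near any accumulation point, and a finite-length trajectory must converge.

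\emph{Step 1 (Monotonicity and accumulation points).} Along the flow we have $\tfrac{\rm d}{{\rm d}t}{\eu F}(x(t))=\|\grad_{\eu M}{\eu F}(x(t))\|_g^2\ge 0$, so ${\eu F}(x(t))$ is nondecreasing.
\begin{itemize}
\item Since ${\eu M}$ is compact, ${\eu F}$ is bounded, so ${\eu F}(x(t))\uparrow {\eu F}^\star<\infty$.
\item Integrating the identity above gives $\int_0^\infty\|\grad_{\eu M}{\eu F}(x(t))\|_g^2\,{\rm d}t<\infty$.
\item By compactness, the $\omega$-limit set is nonempty, so we may pick an accumulation point $p=\lim_k x(t_k)$ with $t_k\to\infty$.
\item By \eqref{eq:critical_converge}, $\grad_{\eu M}{\eu F}(x(t))\to0$, so continuity gives $p\in{\rm Crit}({\eu F})$. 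Continuity also gives ${\eu F}(p)={\eu F}^\star$.
\item If ${\eu F}(x(t_0))={\eu F}^\star$ for some finite $t_0$, then monotonicity forces the gradient to vanish on $[t_0,\infty)$. The trajectory is then stationary and the claim is trivial, so assume ${\eu F}(x(t))<{\eu F}^\star$ for all $t$.
\end{itemize}

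\emph{Step 2 (Łojasiewicz inequality at $p$).} Work in a real-analytic chart around $p$, in which the metric $g$ is uniformly equivalent to the Euclidean one on a small ball. There the classical Łojasiewicz gradient inequality for the analytic function ${\eu F}$ gives constants $\vartheta\in(0,1/2]$, $C>0$ and a neighborhood $U$ of $p$ such that
\[
|{\eu F}(x)-{\eu F}(p)|^{1-\vartheta}\le C\,\|\grad_{\eu M}{\eu F}(x)\|_g,\qquad \forall x\in U.
\]
Transferring the Euclidean gradient to the Riemannian gradient only changes $C$, by the metric-equivalence constants of the chart.

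\emph{Step 3 (Finite length and trapping).} On any time interval during which $x(t)\in U$, set $h(t)=({\eu F}^\star-{\eu F}(x(t)))^\vartheta$. Using Step 2,
\[
\dot h=-\vartheta({\eu F}^\star-{\eu F})^{\vartheta-1}\|\grad_{\eu M}{\eu F}\|_g^2\le -\tfrac{\vartheta}{C}\|\grad_{\eu M}{\eu F}\|_g=-\tfrac{\vartheta}{C}\|\dot x\|_g.
\]
Hence the arc length of the trajectory traversed inside $U$ after time $t_0$ is at most $\tfrac{C}{\vartheta}h(t_0)$.

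Fix a geodesic ball $B(p,\varepsilon)\subset U$ and choose $k$ large enough that ${\rm dist}(x(t_k),p)<\varepsilon/2$ and $\tfrac{C}{\vartheta}h(t_k)<\varepsilon/2$. A continuity argument then shows the trajectory never leaves $B(p,\varepsilon)$ after $t_k$: a first exit time would require traversing a path of length at least $\varepsilon/2$ inside $U$, contradicting the length bound. So $\int_{t_k}^\infty\|\dot x\|_g\,{\rm d}t<\infty$, the trajectory is Cauchy, and it converges to some limit. That limit must equal the accumulation point $p$, which is critical by Step 1.

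\emph{Main obstacle.} The main obstacle is Step 2: establishing the Łojasiewicz inequality intrinsically on the manifold rather than in $\RR^n$. I would handle it through an analytic chart as above, which is exactly why analyticity of both ${\eu M}$ and ${\eu F}$ is assumed. In our application both hypotheses hold: ${\eu M}=\SSS(\cH)^3$ is a real-analytic product of spheres, and $\Omega$ is a cubic polynomial in the real coordinates. Lemma~\ref{lem:analytic_gradient_flow_convergence} therefore applies to the normalized flow $\partial_t\hat\Theta_m=\grad_{\eu M}\Omega(\hat\Theta_m)$ and gives convergence to a single critical point.
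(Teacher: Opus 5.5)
Your proposal is correct and follows the same route as the paper. In both, monotonicity of $\eu F$ along the ascent flow and compactness give a critical accumulation point $p$. The \L{}ojasiewicz gradient inequality at $p$, transferred through an analytic chart, then upgrades the $L^2$ bound on the velocity to finite arc length, so the trajectory is Cauchy and converges to $p$.

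Your write-up is more complete than the paper's sketch in three places. You spell out the trapping argument, a first exit time from a small geodesic ball contradicting the length bound, which the paper leaves implicit when it defers to \citet{absil2005convergence}. You also handle the degenerate case ${\eu F}(x(t_0))={\eu F}^\star$ separately. Finally, you make explicit that the metric equivalence in the chart only changes the \L{}ojasiewicz constant.
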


To relate this statement to the notation of
\citet[Theorem~2.2]{absil2005convergence}, define $\phi=-{\eu F}$. 
For the exact gradient flow, the required angle condition holds with equality and $\delta=1$, while the weak decrease condition is
immediate.
Intrinsically, the gradient flow satisfies
$$
\partial_t{\eu F}(x(t))
=
-\bigl\|\grad_{{\eu M}}{\eu F}(x(t))\bigr\|_g^2
=
-\|\partial_t x(t)\|_g^2.
$$
Although \citet[Theorem~2.2]{absil2005convergence} is stated in Euclidean space, its proof is local and extends to real analytic Riemannian manifolds via analytic coordinate charts and the \L{}ojasiewicz gradient inequality \cite[Lemma~2.1]{absil2005convergence}. 
Compactness guarantees the existence of an accumulation point and rules out escape to infinity.
Since \({\eu F}\) is bounded above on the compact manifold \({\eu M}\),
\({\eu F}(x(t))\) converges to some finite limit \({\eu F}_\infty\), and
\[
\int_0^\infty \|\partial_t x(t)\|_g^2\,{\rm d}t<\infty.
\]
This \(L^2\)-bound alone does not rule out drifting along a continuum of critical points. 
Real analyticity provides the local \L{}ojasiewicz inequality near any accumulation point \(p\) such that
\[
\|\grad_{\eu M}{\eu F}(x)\|_g
\geq
c\cdot|{\eu F}(x)-{\eu F}(p)|^\mu,
\qquad \mu\in[0,1),
\]
which, as in \citet[Theorem~2.2]{absil2005convergence}, upgrades the
\(L^2\)-control to finite trajectory length:
\[
\int_0^\infty \|\partial_t x(t)\|_g\,{\rm d}t<\infty.
\]
Hence \(x(t)\) is Cauchy and converges to a single critical point
\(p\in\operatorname{Crit}({\eu F})\).

\begin{proof}[Proof of Theorem \ref{thm:converge_point_general_group}]
Since the dynamics in Proposition~\ref{prop:general_group_dyn} decouple across neurons, it suffices to analyze each neuron $m$ separately.
By Lemma~\ref{lem:riemannian-gf}, the projected GF for neuron $m$ is the Riemannian gradient ascent flow of $\Omega$ on ${\eu M}$, evaluated at the Fourier coefficients $\hat\Theta_m$ (see Step~1 in \S\ref{ap:step1_setup}).
Since $\mathcal H$ is finite-dimensional and $\Omega$ is a real polynomial, we have ${\eu M}=\SSS(\mathcal H)^{\otimes 3}$ is a compact real-analytic Riemannian manifold and $\Omega$ is real analytic. 
Lemma \ref{lem:analytic_gradient_flow_convergence} ensures  
$$
x(t) \to p \in{\rm Crit}(\Omega)\text{~~when~~}t\rightarrow\infty.
$$
It remains to identify which critical points can be reached.
In Step~2 (see \S\ref{ap:step2_critical}), we classify all ${\rm Crit}(\Omega)$ into five cases.
Specifically, we can obtain that
\begin{itemize}[label=$\triangleright$, leftmargin=2em]
    \setlength{\itemsep}{-1pt}
    \item \textbf{Case 1 \& 2}. By Lemma~\ref{lem:Omega<0} and \ref{lem:Omega=0_trivial}, convergence to such an equilibrium requires $\hat{\Theta}_m(0) \in {\eu M}_{\sf init}$. Since $\mathrm{vol}({\eu M}_{\sf init}) = 0$ and the law of $\hat{\Theta}_m(0)$ is absolutely continuous, $\mathbb{P}\big(\hat{\Theta}_m(0)\in{\eu M}_{\sf init}\big)=0$.
    \item \textbf{Case 3 \& 4}. Lemmas~\ref{lem:triple-annihilation-structure}, \ref{lem:equilib-svd}, and \ref{lem:rank-1-saddle} establish that these critical points are strict saddles. By Theorem~\ref{thm:riemannian-gf-escape}, the gradient flow avoids the stable manifolds of such saddles almost surely.
\end{itemize}
Therefore, almost surely, only {Case 5} remains: positive-energy, rank-one, single-representation equilibria.
These equilibria give the single-representation property~\textbf{(i)} and the rank alignment with positive proportion~\textbf{(ii)}.
Furthermore, the equilibrium conditions in \eqref{eq:eqid_hat_clean} imply:
$$
(\hat{\theta^{2}_m}[\rho]^\dagger)^*\hat{\xi_m}[\rho]^\dagger=|G|\cdot\Omega_m^\dagger\cdot\hat{\theta^{1}_m}[\rho]^\dagger,
$$
with analogous identities holding for $\hat{\theta}^{2}_m[\rho]^\dagger$ and $\hat{\xi}_m[\rho]^\dagger$. These relations ensure rotational alignment, while $\Omega_m^\dagger > 0$ establishes the positive proportionality, thereby completing the proof.
\end{proof}

We now prove the supporting results stated in \S\ref{ap:step1_setup}--\S\ref{ap:step3_saddle}.

\subsubsection{Proofs of Lemma~\ref{lem:riemannian-gf} and \ref{lem:hessian_extrinsic_adapted}}
\label{ap:proof_riemannian_gf_hessian}

\begin{proof}[Proof of Lemma \ref{lem:riemannian-gf}]
The Riemannian gradient on ${\eu M}=\SSS(\cH)^3$ is obtained by projecting the ambient gradient onto the tangent space.
The orthogonal projection is given by
\begin{align}
    \Pi_{\hat\nu}(U)=U-\langle U,\hat\nu\rangle_{L^2(\mathcal H)} \cdot \hat\nu\in T_{\hat\nu}\SSS(\cH),\qquad \forall \hat\nu\in\SSS(\cH).
    \label{def:project}
\end{align}
Since ${\eu M}$ is a product of three sphere factors, the projection acts block-wise:
\begin{align}
\grad_{{\eu M}} \Omega(\hat\Theta_m)=\Pi_{\hat\Theta_m}\big(\nabla_{\cH^3}\Omega\big)=\big(\Pi_{{\hat\nu} }(\nabla_{\hat\nu} \Omega)\big)_{\nu\in\{\theta_m^1,\theta_m^2,\xi_m\}},\quad\Pi_{\hat\Theta_m}=\bigoplus_{\nu\in\{\theta_m^1,\theta_m^2,\xi_m\}}\Pi_{\hat\nu}.
\label{eq:riem_grad_projection}
\end{align}
The proof proceeds in two steps: we first compute the ambient gradient $\nabla_{\cH^3}\Omega$,  and then apply the projection $\Pi_{\hat\Theta_m}$ to obtain the Riemannian gradient.
Recall that the ambient gradient $\nabla_{\hat{\nu}}\Omega\in\cH$ is the Riesz representative of the Fr\'echet derivative $\rD_{\hat{\nu}}\Omega$, defined by
$\rD_{\hat{\nu}}\Omega(\Xi_\nu)=
\big\langle \nabla_{\hat{\nu}}\Omega, \Xi_\nu \big\rangle_{L^2(\mathcal H)}$ for all $ \Xi_\nu\in\mathcal H$.
We take $\nu=\theta_m^1$ as an example.
Since $\Omega$ is linear in $\widehat{\theta_m^1}$, the Fr\'echet derivative is given by
\begin{align}
\rD_{\hat{\theta_m^1}}\Omega(\Xi_{\theta_m})=
\sum_{\rho \in \irr(G)^\sharp_{\neq 1}}d_\rho^\sharp\cdot\Re\big(
\tr\big(
(\widehat{\xi_m}[\rho])^*\widehat{\theta_m^2}[\rho]\,
\Xi_{\theta_m}[\rho]\big)\big).
\label{eq:frechet_theta1}
\end{align}
Following this, we can identify the $\rho$-th block of $\nabla_{\hat\theta_m^1}\Omega$ by matching \eqref{eq:frechet_theta1} with the inner product form defined in \eqref{eq:def_hilbert}.
This gives the ambient gradient:
\begin{align}
\nabla_{\widehat{\theta_m^1}[\rho]} \Omega
=
1/|G|\cdot(\widehat{\theta_m^2}[\rho])^*\widehat{\xi_m}[\rho]\cdot\ind(\rho\neq\rho_{\sf triv}),\qquad\forall\rho\in\irr(G)_{\neq1}^\sharp.
\label{eq:rgd_1}
\end{align}
We now apply the projection to the ambient gradient in \eqref{eq:rgd_1}.
Based on \eqref{eq:riem_grad_projection}, we need to compute $\Pi_{\hat\theta_m^1}(\nabla_{\hat\theta_m^1}\Omega)$.
Substituting \eqref{def:project} gives that
\begin{align}
\Pi_{\hat\theta_m^1}(\nabla_{\hat\theta_m^1}\Omega) = \nabla_{\hat\theta_m^1}\Omega - \langle\nabla_{\hat\theta_m^1}\Omega,\hat\theta_m^1\rangle_{L^2(\cH)}\cdot\hat\theta_m^1.
\label{eq:proj_expand}
\end{align}
It remains to evaluate the inner product $\langle\nabla_{\hat\theta_m^1}\Omega,\hat\theta_m^1\rangle_{L^2(\cH)}$.
Substituting \eqref{eq:rgd_1}, we obtain
\begin{align}
\langle\nabla_{\hat\theta_m^1}\Omega,\hat\theta_m^1\rangle_{L^2(\cH)}
&= |G|\cdot\sum_{\rho \in \irr(G)^\sharp_{\neq 1}}d_\rho^\sharp\cdot1/|G|\cdot\Re\big(\tr\big((\widehat{\xi_m}[\rho])^*\widehat{\theta_m^2}[\rho]\widehat{\theta_m^1}[\rho]\big)\big)
= \Omega(\hat\Theta_m).
\label{eq:inner_prod_omega}
\end{align}
Substituting \eqref{eq:inner_prod_omega} back into \eqref{eq:proj_expand} yields
\begin{align}
\Pi_{\hat\theta_m^1}(\nabla_{\hat\theta_m^1}\Omega) = 1/|G|\cdot\nabla_{\hat\theta_m^1}\Omega - \Omega(\hat\Theta_m)\cdot\hat\theta_m^1.
\label{eq:rgd_final}
\end{align}
Analogous identities hold for $\widehat{\theta}_m^2$ and $\widehat{\xi}_m$ by symmetry. 
Comparing \eqref{eq:rgd_final} and its counterparts with the dynamics in Proposition~\ref{prop:general_group_dyn}, we see that scaling $\grad_{\eu M}\Omega$ by the constant $2a|G|^2/M$ recovers the exact dynamics, which completes the proof.
\end{proof}

\begin{proof}[Proof of Lemma \ref{lem:hessian_extrinsic_adapted}]
The proof proceeds in two steps: we first compute the ambient Hessian $\Hess_{\cH^3}\Omega$, then derive the Riemannian Hessian by accounting for the sphere constraint.

\medskip\noindent{\bf Step 1: Ambient Hessian.}
The ambient Hessian is defined as $\Hess_{\cH^3}\Omega(\hat\Theta_m)[\Xi_m]=\rD(\nabla_{\cH^3}\Omega(\hat\Theta_m))[\Xi_m]$.
From \eqref{eq:rgd_1} and its cyclic counterparts, the ambient gradient is
\begin{align*}
\nabla_{\cH^3}\Omega(\hat\Theta_m)  &= 1/|G|\cdot\bigg(\bigoplus_{\rho \in \irr(G)^\sharp}(\widehat{\theta_m^2}[\rho])^*\widehat{\xi_m}[\rho]\cdot\ind(\rho\neq\rho_{\sf triv}),\\
&\qquad\bigoplus_{\rho \in \irr(G)^\sharp}\widehat{\xi_m}[\rho](\widehat{\theta_m^1}[\rho])^*\cdot\ind(\rho\neq\rho_{\sf triv}),\;\bigoplus_{\rho \in \irr(G)^\sharp}\widehat{\theta_m^2}[\rho]\widehat{\theta_m^1}[\rho]\cdot\ind(\rho\neq\rho_{\sf triv})\bigg).
\end{align*}
Each block is a bilinear product of the other two layers' Fourier coefficients.
Differentiating the $\theta_m^1$-block via the product rule gives
\begin{align*}
    \rD\big((\widehat{\theta_m^2}[\rho])^*\widehat{\xi_m}[\rho]\big)[\Xi_m]&=\left.\frac{\mathrm d}{\mathrm d\varepsilon }\,(\widehat{\theta_m^2}[\rho]+\varepsilon\Xi_{\theta_m^2
    }[\rho])^*(\widehat{\xi_m}[\rho]+\varepsilon\Xi_{\xi_m
    }[\rho])\right|_{\varepsilon =0}\\
    &=(\Xi_{\theta_m^2}[\rho])^*\widehat{\xi_m}[\rho]+(\widehat{\theta_m^2}[\rho])^*\Xi_{\xi_m}[\rho].
\end{align*}
Assembling across all $\rho \in \irr(G)^\sharp$ yields the $\theta_m^1$-block of the ambient Hessian in \eqref{eq:ambient_hess_theta1}.
The remaining blocks \eqref{eq:ambient_hess_theta2} and \eqref{eq:ambient_hess_xi} follow by the same product rule applied cyclically.

\medskip\noindent{\bf Step 2: Riemannian Hessian.}
Since $\grad_{\eu M}\Omega = \Pi_{\hat\Theta_m}(\nabla_{\cH^3}\Omega)$, the Riemannian Hessian involves differentiating both the projection and the ambient gradient:
\begin{align*}
\Hess_{\eu M}\Omega(\hat\Theta_m)[\Xi_m]
&=\Pi_{\hat\Theta_m}\Big(\rD\big(\Pi_{\hat\Theta_m}(\nabla_{\cH^3}\Omega(\hat\Theta_m))\big)[\Xi_m]\Big)\\
&=\underbrace{\Pi_{\hat\Theta_m}\big(
\rD_1\Pi_{\hat\Theta_m}[\Xi_m]\big(\nabla_{\cH^3}\Omega(\hat\Theta_m)\big)
\big)}_{\displaystyle\text{\small curvature term}}
+\underbrace{\Pi_{\hat\Theta_m}\big(\Hess_{\cH^3}\Omega(\hat\Theta_m)[\Xi_m]\big)}_{\displaystyle\text{\small projected ambient Hessian}},
\end{align*}
where $\rD_1\Pi_{\hat\Theta_m}[\Xi_m]$ denotes the derivative of the projection with respect to the base point $\hat\Theta_m$ in the direction $\Xi_m$.
We now compute the curvature term.
For a single sphere factor, the projection is $\Pi_X(Y)=Y-\langle Y,X\rangle_{L^2(\cH)}\cdot X$, and differentiating with respect to $X$ in the direction $\Xi$ gives
\begin{align*}
\rD_1\Pi_X[\Xi](Y)
&=-\langle Y,\Xi\rangle_{L^2(\cH)}\cdot X-\langle Y, X\rangle_{L^2(\cH)}\cdot\Xi.
\end{align*}
We apply $\Pi_X$ to this expression. 
Then, the first term vanishes and for the second term $\Pi_X(\Xi)=\Xi$ since $\Xi\in T_X\SSS(\cH)$ implies $\langle\Xi,X\rangle_{L^2(\cH)}=0$. 
Therefore, we have 
$$
\Pi_X\big(\rD_1\Pi_X[\Xi](Y)\big)=-\langle Y, X\rangle_{L^2(\cH)}\cdot\Xi.
$$
Since ${\eu M}=\SSS(\cH)^3$ is a product manifold, the projection $\Pi_{\hat\Theta_m}$ acts block-wise on the three factors. Applying the identity above to each factor gives
\begin{align}
\Hess_{\eu M}\Omega(\hat\Theta_m)[\Xi_m]
=
\Pi_{\hat\Theta_m}\big(\Hess_{\cH^3}\Omega(\hat\Theta_m)[\Xi_m]\big)
- \big(\langle \nabla_{\widehat{\nu}}\Omega,\widehat{\nu}\rangle_{L^2(\cH)}\cdot\Xi_{\nu}\big)_{\nu\in\{\theta_m^1,\theta_m^2,\xi_m\}}.
\label{eq:manifold_hessian}
\end{align}
It remains to show that all three inner products equal $\Omega(\hat\Theta_m)$.
By the same calculation as in \eqref{eq:inner_prod_omega}, substituting the ambient gradient \eqref{eq:rgd_1} into the $L^2(\cH)$ inner product yields
\begin{align}
\big\langle \nabla_{\widehat{\nu}} \Omega,\widehat{\nu}\big\rangle_{L^2(\mathcal H)}=\Omega(\hat\Theta_m),\qquad\forall \nu\in\{\theta_m^1,\theta_m^2,\xi_m\}.
\label{eq:hessian_inner_product}
\end{align}
Substituting \eqref{eq:hessian_inner_product} into \eqref{eq:manifold_hessian} gives the claimed decomposition \eqref{eq:hessian_decomposition} and completes the proof.
\end{proof}

\subsubsection{Proof of Lemma~\ref{lem:ODE_Delta_1j}, \ref{lem:Omega<0}, \ref{lem:Omega=0_trivial}: Cases 1 \& 2}
\label{ap:proof_ODE_Omega_leq_0}

\begin{proof}[Proof of Lemma~\ref{lem:ODE_Delta_1j}]
    According to the dynamics in Proposition \ref{prop:general_group_dyn} and the chain rule, we have
\begin{align}
    \partial_t\|\hat{\theta_m^1}[\rho]\|_{\rm F}^2&=\frac{4a|G|}{M}\cdot\Re\big(\operatorname{tr}\big((\hat{\theta_m^1}[\rho])^*(\widehat{\theta_m^2}[\rho])^*\widehat{\xi_m}[\rho]\big)\big)\cdot\ind(\rho\neq\rho_{\sf triv})-\frac{4a|G|^2}{M}\cdot\Omega_m\cdot\|\hat{\theta_m^1}[\rho]\|_{\rm F}^2, \label{eq:dft_theta1_F_dynamics}\\
    \partial_t\|\hat{\theta_m^2}[\rho]\|_{\rm F}^2&=\frac{4a|G|}{M}\cdot\Re\big(\operatorname{tr}\big((\widehat{\theta_m^2}[\rho])^*\widehat{\xi_m}[\rho](\hat{\theta_m^1}[\rho])^*\big)\big)\cdot\ind(\rho\neq\rho_{\sf triv})-\frac{4a|G|^2}{M}\cdot\Omega_m\cdot\|\hat{\theta_m^2}[\rho]\|_{\rm F}^2, \label{eq:dft_theta2_F_dynamics}
\end{align}
and similarly for  
$\|\hat{\xi_m}[\rho]\|_{\rm F}^2$. 
Subtracting \eqref{eq:dft_theta2_F_dynamics} from
\eqref{eq:dft_theta1_F_dynamics} gives that for all $\rho \in \irr(G)^\sharp$:
\[
\partial_t \Delta^1_m[\rho] = -\frac{4a|G|^2}{M}\cdot\Omega_m\cdot \Delta^1_m[\rho],\qquad\Delta^1_m[\rho]=\|\hat{\theta_m^1}[\rho]\|_{\rm F}^2-\|\hat{\theta_m^2}[\rho]\|_{\rm F}^2.
\]
Solving this scalar linear equation  explicitly  yields
\[
\Delta^1_m[\rho](t) = \Delta^1_m[\rho](0)\cdot \exp\Big(- \frac{4a|G|^2}{M}\int_0^t \Omega_m(s)\,{\rm d} s \Big),\qquad\forall t\in\RR_{\geq0}.
\]
The same argument, subtracting the evolution of
$\|\hat{\xi_m}[\rho]\|_{\rm F}^2$ from that of
$\|\hat{\theta_m^1}[\rho]\|_{\rm F}^2$, gives the corresponding formula for
$\Delta^2_m[\rho]$. 
This completes the proof of  \eqref{eq:ODE_Delta_1j}.
\end{proof}

\begin{proof}[Proof of Lemma \ref{lem:Omega<0}]
    By Lemma \ref{lem:riemannian-gf}, the dynamics in Proposition
\ref{prop:general_group_dyn} is the Riemannian gradient flow of $\Omega_m$. 
As a result, $\Omega_m(t)$ is a non-decreasing function in $t$ and thus 
$$
\Omega_m(t)\le \Omega_m(\infty)=\Omega_m^\dagger < 0,\qquad\forall t\in\RR_{\geq0}.
$$
As a consequence, for all $t\ge 0$ and $\rho\in\irr(G)_{\neq1}^\sharp$, \eqref{eq:ODE_Delta_1j} gives
\[
|\Delta^1_m[\rho](t)| =|\Delta^1_m[\rho](0)|\cdot \exp\Big(- \frac{4a|G|^2}{M}\int_0^t \Omega_m(s)\,{\rm d} s \Big) \ge |\Delta^1_m[\rho](0)|\cdot \exp\Big(- \frac{4a|G|^2}{M} \cdot\Omega_m^\dagger\cdot t\Big).
\]
Since $\Omega_m^\dagger<0$, we have the exponential term goes to infinity as $t\to +\infty$.  
If $\Delta^1_m[\rho](0)\neq 0$, then
we have $|\Delta^1_m[\rho](t)|\to+\infty$, 
contradicts the boundness of the ${\eu M}$. This contradiction shows that
$$
\Delta^1_m[\rho](0) = 0\quad\Leftrightarrow\quad\|\hat{\theta_m^2}[\rho](0)\|_{\rm F}^2 = \|\hat{\theta_m^1}[\rho](0)\|_{\rm F}^2.
$$
Applying the same argument to the corresponding equation comparing
$\hat{\theta_m^2}[\rho]$ and $\hat{\xi_m}[\rho]$, we also obtain
 $\|\hat{\theta_m^2}[\rho](0)\|_{\rm F}^2 = \|\hat{\xi_m}[\rho](0)\|_{\rm F}^2$. 
Hence, $\hat{\Theta}_m(0)$ satisfies the defining equal-norm constraints of
${\eu M}_{\sf init}$. Therefore, convergence to $\hat{\Theta}_m^\dagger$ specified in Case 1 is possible
only if $\hat{\Theta}_m(0)\in{\eu M}_{\sf init}$.
\end{proof}

\begin{proof}[Proof of Lemma~\ref{lem:Omega=0_trivial}]
    The proof follows an argument analogous to that of Lemma \ref{lem:Omega<0}.
    Note that if $\hat{\Theta}_m(t)$ converges to $\hat{\Theta}_m^\dagger$ as $t\to+\infty$, then
$\Omega_m(t)\to \Omega_m^\dagger=0$. Therefore,
$\Omega_m(t)\le 0$ for every $t\ge 0$. 
Using \eqref{eq:ODE_Delta_1j}, for every $\rho$ and every $t\ge 0$, we can obtain that
\[
|\Delta^1_m[\rho](t)| =|\Delta^1_m[\rho](0)|\cdot \exp\Big(- \frac{4a|G|^2}{M}\int_0^t \Omega_m(s)\,{\rm d} s \Big) \ge |\Delta^1_m[\rho](0)|.
\]
On the other hand, we have
\[\hat{\theta_m^\tau}[\rho](\infty) =\hat{\theta_m^\tau}[\rho]^\dagger = 0\text{~~if~~} \rho\neq\rho_{\sf triv}\text{~~for all~~}\tau\in\{1,2\}\quad\Rightarrow\quad \Delta^1_m[\rho](\infty) = 0.
\]
The preceding lower bound forces $\Delta^1_m[\rho](0) = 0$, or equivalently, $\|\hat{\theta}_m^2[\rho](0)\|_{\text{F}}^2 = \|\hat{\theta}_m^1[\rho](0)\|_{\text{F}}^2$. 
An analogous argument yields the constraint $\|\hat{\theta}_m^2[\rho](0)\|_{\text{F}}^2 = \|\hat{\xi}_m[\rho](0)\|_{\text{F}}^2$. 
Consequently, we have the convergence to $\hat{\Theta}_m^\dagger$ in Case 2 is possible only if $\hat{\Theta}_m(0) \in {\eu M}_{\sf init}$, which completes the proof.
\end{proof}

\subsubsection{Proof of Lemma~\ref{lem:triple-annihilation-structure}: Case 3}
\label{ap:proof_triple_annihilation}

\begin{proof}[Proof of Lemma \ref{lem:triple-annihilation-structure}]
The proof includes three steps: we first establish the orthogonal block structure of the equilibrium, then simplify the Riemannian Hessian, and finally construct an explicit ascent direction that is a positive eigenvector of the Hessian.

\medskip\noindent{\bf Step 1: Orthogonal Block Structure.}
We start by showing that the three equilibrium coefficient matrices have mutually orthogonal column and row spaces, and derive their block SVD factorization.
Fix $\rho\in \irr(G)^\sharp_{\neq 1}$. 
Since $\Omega_m^\dagger = 0$, the equilibrium conditions reduce to
\begin{align*}
(\hat{\theta^{2}_m}[\rho]^\dagger)^*\hat{\xi_m}[\rho]^\dagger=0,\qquad
\hat{\xi_m}[\rho]^\dagger(\hat{\theta^{1}_m}[\rho]^\dagger)^*=0,\qquad
\hat{\theta^{2}_m}[\rho]^\dagger\hat{\theta^{1}_m}[\rho]^\dagger=0.
\end{align*}
which implies that the column spaces and row spaces of these matrices are mutually orthogonal.
Let $r_{1,\rho}, r_{2,\rho}, r_{3,\rho}$ denote the ranks of $\hat{\theta^{1}_m}[\rho]^\dagger, \hat{\theta^{2}_m}[\rho]^\dagger$, and $\hat{\xi_m}[\rho]^\dagger$, respectively. 
These constraints imply that the total dimension $d_\rho$ can accommodate the ranks pairwise:
\begin{align}
r_{i,\rho} + r_{j,\rho} \le d_\rho,\qquad\forall(i, j)\in\{S\subseteq\{1,2,3\}:|S|=2\}.
\label{eq:rank_constraint}
\end{align}
To represent these constraints, we construct partial isometries
with orthonormal columns such that
\begin{alignat*}{3}
    \mathrm{Im}(V_{1,m}[\rho])&=\mathrm{Im}(\hat{\theta^{1}_m}[\rho]^\dagger),\qquad &\mathrm{Im}(U_{2,m}[\rho])&=\mathrm{Im}(\hat{\theta^{2}_m}[\rho]^\dagger),\qquad &\mathrm{Im}(U_{3,m}[\rho])&=\mathrm{Im}(\hat{\xi_m}[\rho]^\dagger),\\
    \mathrm{Im}(W_{1,m}[\rho])&=\mathrm{Im}\bigl((\hat{\theta^{1}_m}[\rho]^\dagger)^*\bigr),\qquad &\mathrm{Im}(V_{2,m}[\rho])&=\mathrm{Im}\bigl((\hat{\theta^{2}_m}[\rho]^\dagger)^*\bigr),\qquad &\mathrm{Im}(W_{3,m}[\rho])&=\mathrm{Im}\bigl((\hat{\xi_m}[\rho]^\dagger)^*\bigr),
\end{alignat*}
By the orthogonality relations established above,
\begin{align}
(U_{2,m}[\rho])^*U_{3,m}[\rho]=0,\qquad
(V_{2,m}[\rho])^*V_{1,m}[\rho]=0,\qquad
(W_{3,m}[\rho])^*W_{1,m}[\rho]=0.
\label{eq:orthogonal_constraint}
\end{align}
Take $\hat{\theta_m^1}[\rho]^\dagger$ as an example.
Define
$$
P_{m}[\rho]=(V_{1,m}[\rho])^*\,\hat{\theta^{1}_m}[\rho]^\dagger\,W_{1,m}[\rho]\in\CC^{r_\rho\times r_\rho},
$$
which is invertible.
By the singular value decomposition, there exist unitary matrices
$Q_m[\rho],Q'_m[\rho]$ and a diagonal matrix
$\Sigma_{1,m}[\rho]$
such that $P_{m}[\rho]=Q_m[\rho]\Sigma_{1,m}[\rho](Q'_m[\rho])^*$.
Rotating the isometries by
\[
V_{1,m}[\rho]\mapsto V_{1,m}[\rho]Q_{m}[\rho],
\qquad
W_{1,m}[\rho]\mapsto W_{1,m}[\rho]Q'_{m}[\rho],
\]
we diagonalize \(\hat{\theta^1_m}[\rho]^\dagger\).
Since these rotations are unitary, they preserve the relevant column spaces and hence do not affect the orthogonality relations. Relabelling the rotated isometries, and applying the same argument to the other two coefficients, gives
\begin{align}
\hat{\theta^{1}_m}[\rho]^\dagger=&V_{1,m}[\rho]\Sigma_{1,m}[\rho](W_{1,m}[\rho])^*,\qquad
\hat{\theta^{2}_m}[\rho]^\dagger=U_{2,m}[\rho]\Sigma_{2,m}[\rho](V_{2,m}[\rho])^*,\notag\\
&\hat{\xi_m}[\rho]^\dagger=U_{3,m}[\rho]\Sigma_{3,m}[\rho](W_{3,m}[\rho])^*,\qquad\forall\rho\in\irr(G)_{\neq1}^\sharp.
\label{eq:svd_zero_energy}
\end{align}
Take $U_{2,m}[\rho]$ and $U_{3,m}[\rho]$ as an example.
\eqref{eq:rank_constraint} and \eqref{eq:orthogonal_constraint} imply that ${\rm Im}(U_{2,m}[\rho])\perp {\rm Im}(U_{3,m}[\rho])$ with $r_{2,\rho}+r_{3,\rho}\leq d_\rho$.
Therefore, we can extend these bases by appending $d_\rho-r_{2,\rho}-r_{3,\rho}$ orthonormal columns to form a full unitary matrix $U_m[\rho]\in\CC^{d_\rho\times d_\rho}$.
The same construction applies to $V_m[\rho]$ and $W_m[\rho]$.
Under these basis transformations, each coefficient occupies its own diagonal block:
\begin{align*}
\widehat{\theta^1_m}[\rho]^\dagger=
V_{m}[\rho]
\begin{pmatrix}
    \Sigma_{1,m}[\rho] & & \\
    & \mathbf{0}& \\
    & & \mathbf{0}
\end{pmatrix}
&(W_{m}[\rho])^*,\qquad
\widehat{\theta^2_m}[\rho]^\dagger=
U_{m}[\rho]
\begin{pmatrix}
   \mathbf{0} & & \\
    & \Sigma_{2,m}[\rho] & \\
    & & \mathbf{0}
\end{pmatrix}
(V_{m}[\rho])^*, \\
\widehat{\xi_m}[\rho]^\dagger
=
U_{m}&[\rho]
\begin{pmatrix}
    \mathbf{0}& & \\
    & \mathbf{0} & \\
    & & \Sigma_{3,m}[\rho]
\end{pmatrix}
(W_{m}[\rho])^*.
\end{align*}

\paragraph{Step 2: Hessian Simplification.}
In the second step, we show that when $\Omega_m^\dagger=0$, the Riemannian Hessian coincides with the ambient Hessian.
According to Lemma \ref{lem:hessian_extrinsic_adapted}, when $\Omega_m^\dagger = 0$, we have
\[
\Hess_{\eu M}\Omega(\hat\Theta_m^\dagger)[\Xi_m]
=
\Pi_{\hat\Theta_m}\big(\Hess_{\cH^3}\Omega(\hat\Theta_m^\dagger)[\Xi_m]\big) .
\]
Since projection acts as $\Pi_{\hat\nu}({\rm HS}_\nu)={\rm HS}_\nu-\langle{\rm HS}_\nu,\hat\nu^\dagger\rangle_{L^2(\cH)}\cdot\hat\nu^\dagger$, it suffices to show the inner products vanish.
By \eqref{eq:ambient_hess_theta1}, \eqref{eq:ambient_hess_theta2} and \eqref{eq:ambient_hess_xi}, each ${\rm HS}_\nu$ is a bilinear product of the tangent direction and equilibrium blocks from the other two layers.
Hence, the equilibrium conditions in  \eqref{eq:eqid_hat_clean} ensure that these cross-layer products are orthogonal to the equilibrium itself such that
$$
\langle \hat{\theta^{1}_m}[\rho]^\dagger, {\rm HS}_{\theta_m^1}\rangle_{L^2(\cH)} = \langle \hat{\theta^{2}_m}[\rho]^\dagger, {\rm HS}_{\theta_m^2}\rangle_{L^2(\cH)} =\langle \hat{\xi_m}[\rho]^\dagger, {\rm HS}_{\xi_m}\rangle_{L^2(\cH)} =0.
$$
Therefore,  we have $\Pi_{\hat\Theta_m}({\rm HS})={\rm HS}$, and thus 
$
\Hess_{\eu M}\Omega(\hat\Theta_m^\dagger)[\Xi_m]
=
\Hess_{\cH^3}\Omega(\hat\Theta_m^\dagger)[\Xi_m].
$

\paragraph{Step 3: Eigenvector Construction.}
In the final step, we construct an explicit tangent vector $\Xi_m$ and show it is an eigenvector of the Hessian with a positive eigenvalue.
It suffices to assume that at least one coefficient is nonzero by symmetry.
Without loss of generality, assume for some $\rho_\circ \in \irr(G)^\sharp_{\neq 1}$, $r_{1,\rho_\circ} > 0$, i.e., $\hat{\theta_m^1}[\rho_\circ]$ is non-zero.
Recall that $\hat{\theta^{1}_m}[\rho]^\dagger=V_{1,m}[\rho]\Sigma_{1,m}[\rho](W_{1,m}[\rho])^*$ for all $\rho\in\irr(G)_{\neq1}^\sharp$.
Let $v_{1,m}[\rho_\circ]$ and $w_{1,m}[\rho]_\circ$ denote the leading left and right singular vectors of $\hat{\theta_m^1}[\rho_\circ]^\dagger$.
Then, 
\begin{align}
\hat{\theta_m^1}[\rho_\circ]^\dagger w_{1,m}[\rho_\circ]
=
s_{1,m}[\rho_\circ]\cdot v_{1,m}[\rho_\circ],\qquad
(\hat{\theta_m^1}[\rho_\circ]^\dagger)^*v_{1,m}[\rho_\circ]
=
s_{1,m}[\rho_\circ]\cdot w_{1,m}[\rho_\circ],
\label{eq:singular_vector}
\end{align}
where $s_{1,m}[\rho_\circ]$ denotes the corresponding singular value with $s_{1,m}[\rho_\circ]>0$.
As established in \eqref{eq:orthogonal_constraint} and \eqref{eq:svd_zero_energy}, the blocks within each basis are mutually orthogonal.
Therefore, we can show that
\begin{align}
 \hat{\xi_m}[\rho_\circ]^\dagger w_{1,m}[\rho_\circ]=\zero,\qquad
\hat{\theta_m^2}[\rho_\circ]^\dagger v_{1,m}[\rho_\circ]=\zero.
\label{eq:cross_block_orthogonality}
\end{align}
Our goal is to construct a tangent direction $\Xi_m$ that corresponds to an eigenvector of the Hessian with a positive eigenvalue. 
We analyze this in two scenarios based on the available degrees of freedom: $r_{2,\rho_\circ}+r_{3,\rho_\circ}<d_{\rho_\circ}$ and the case where $r_{2,\rho_\circ}+r_{3,\rho_\circ}=d_{\rho_\circ}$.

\paragraph{Case 1: $r_{2,\rho_\circ}+r_{3,\rho_\circ}<d_{\rho_\circ}$.}
In this case, there exists a unit vector $\tilde u_{m}$ in the remaining free block of $U_m[\rho_\circ]$.
Since this vector is orthogonal to both the $\hat{\theta_m^2}$-block and the $\hat{\xi_m}$-block, we have
\begin{align}
({\hat{\xi_m}[\rho_\circ]^\dagger})^*\tilde u_{m}=\zero,\qquad
(\hat{\theta_m^2}[\rho_\circ]^\dagger)^*\tilde u_{m}=\zero.
\label{eq:free_orthogonal}
\end{align}
Define the tangent direction $\Xi_m$ by setting $\Xi_\nu[\rho]=\zero$ for all $\rho\neq\rho_\circ$ and $\nu\in\{\theta_m^1,\theta_m^2,\xi_m\}$, and
\[
\Xi_{\theta_m^1}[\rho_\circ]=\zero,\qquad
\Xi_{\theta_m^2}[\rho_\circ]
=\tilde u_m(v_{1,m}[\rho_\circ])^*,
\qquad
\Xi_{\xi_m}[\rho_\circ]=\tilde u_m(w_{1,m}[\rho_\circ])^*.
\]
We first verify that $\Xi_m\in T_{\hat\Theta_m^\dagger}\eu M$. 
By direct calculation and using \eqref{eq:free_orthogonal}, we have
\[
\langle \Xi_{\theta^2_m},\hat{\theta^{2}_m}^\dagger\rangle_{L^2(\cH)}=d_{\rho_\circ}\cdot\tr\big((\hat{\theta_m^2}[\rho_\circ]^\dagger)^*\Xi_{\theta_m^2}[\rho_\circ]\big)
=d_{\rho_\circ}\cdot\tr\big((\hat{\theta_m^2}[\rho_\circ]^\dagger)^*\tilde u_m(v_{1,m}[\rho_\circ])^*\big)=0.
\]
Similarly, we have 
$\langle \Xi_{\xi_m},\hat{\xi_m}^\dagger\rangle_{L^2(\cH)}=\langle \Xi_{\theta^1_m},\hat{\theta^{1}_m}^\dagger\rangle_{L^2(\cH)}=0$ and therefore $\Xi_m\in T_{\hat\Theta_m^\dagger}\eu M$. 
By \eqref{eq:ambient_hess_theta1} and \eqref{eq:free_orthogonal}, the $\hat{\theta_m^1}$-block of the ambient Hessian for representation $\rho_\circ$ is given by
\begin{align*}
|G|\cdot{\rm HS}_{\theta_m^1}[\rho_\circ]
&=(\Xi_{\theta_m^2}[\rho_\circ])^*\widehat{\xi_m}[\rho_\circ]+(\widehat{\theta_m^2}[\rho_\circ])^*\Xi_{\xi_m}[\rho_\circ]\\
&=v_{1,m}[\rho_\circ](\tilde u_m)^*\widehat{\xi_m}[\rho_\circ]+(\widehat{\theta_m^2}[\rho_\circ])^*\tilde u_m(w_{1,m}[\rho_\circ])^*=\zero.
\end{align*}
Furthermore, by \eqref{eq:ambient_hess_theta2} and \eqref{eq:singular_vector}, we have
\begin{align*}
|G|\cdot{\rm HS}_{\theta_m^2}[\rho_\circ]
&=\Xi_{\xi_m}[\rho_\circ](\widehat{\theta_m^1}[\rho_\circ])^*+\widehat{\xi_m}[\rho_\circ] (\Xi_{\theta_m^1}[\rho_\circ])^*\\
&=\tilde u_m(w_{1,m}[\rho_\circ])^*(\widehat{\theta_m^1}[\rho_\circ])^*= s_{1,m}[\rho_\circ]\cdot \tilde u_m v_{1,m}[\rho_\circ]^*= s_{1,m}[\rho_\circ]\cdot\Xi_{\theta_m^2},
\end{align*}
and similarly we have $|G|\cdot{\rm HS}_{\xi_m}[\rho_\circ]=s_{1,m}[\rho_\circ]\cdot\Xi_{\xi_m}$.
Combining all three blocks and ${\rm HS}_{\nu}[\rho]=\Xi_{\nu}=\zero$ for all $\rho\neq\rho_\circ$ and $\nu\in\{\theta_m^1,\theta_m^2,\xi_m\}$, based on Lemma \ref{lem:hessian_extrinsic_adapted}, we obtain 
$$
\Hess_{\eu M}\Omega(\hat\Theta_m^\dagger)[\Xi_m]=\Hess_{\cH^3}\Omega(\hat\Theta_m^\dagger)[\Xi_m] = ({\rm HS}_{\theta_m^1},{\rm HS}_{\theta_m^2},{\rm HS}_{\xi_m})=|G|^{-1}\cdot s_{1,m}[\rho_\circ]\cdot\Xi_m.
$$
Therefore, $\Xi_m$ is an eigenvector with positive eigenvalue $\lambda_m^+=s_{1,m}[\rho_\circ]/|G| >0$.

\paragraph{Case 2: $r_{2,\rho_\circ}+r_{3,\rho_\circ}=d_{\rho_\circ}$.}
In this case, there is no free block in $U_m[\rho_\circ]$ to pair with.
However, since $r_{1,\rho_\circ}+r_{2,\rho_\circ}\le d_{\rho_\circ}$ and $r_{2,\rho_\circ}=d_{\rho_\circ}-r_{3,\rho_\circ}$, it forces $r_{2,\rho_\circ}=d_{\rho_\circ}-r_{3,\rho_\circ}\ge r_{1,\rho_\circ}>0$.
Similar to \eqref{eq:singular_vector}, let $u_{2,m}[\rho_\circ]$ and $v_{2,m}[\rho_\circ]$ denote the leading left and right singular vectors of $\hat{\theta_m^2}[\rho_\circ]^\dagger$.
Then, we have
\begin{align}
\hat{\theta_m^2}[\rho_\circ]^\dagger v_{2,m}[\rho_\circ]
=
s_{2,m}[\rho_\circ]\cdot u_{2,m}[\rho_\circ],\qquad
(\hat{\theta_m^2}[\rho_\circ]^\dagger)^*u_{2,m}[\rho_\circ]
=
s_{2,m}[\rho_\circ]\cdot v_{2,m}[\rho_\circ],
\label{eq:xi_singular_relations}
\end{align}
where $s_{2,m}[\rho_\circ]>0$ denotes the corresponding positive singular value.
Similar to \eqref{eq:cross_block_orthogonality}, we have
\begin{align}
    (\hat{\xi_m}[\rho_\circ]^\dagger)^* u_{2,m}[\rho_\circ]=\zero,\qquad (\hat{\theta_m^1}[\rho_\circ]^\dagger)^* v_{2,m}[\rho_\circ]=\zero.
\label{eq:cross_block_orthogonality_2}
\end{align}
Let $\lambda_m^+=
\big\{(s_{1,m}[\rho_\circ])^2+(s_{2,m}[\rho_\circ])^2\big\}^{1/2}$.
Similar to Case 1, we define the tangent direction $\Xi_m$ by setting $\Xi_\nu[\rho]=\zero$ for all $\rho\neq\rho_\circ$ and $\nu\in\{\theta_m^1,\theta_m^2,\xi_m\}$, while $\Xi_{\xi_m}[\rho_\circ]=u_{2,m}[\rho_\circ](w_{1,m}[\rho_\circ])^*$ and
$$
\Xi_{\theta_m^1}[\rho_\circ]={s_{2,m}[\rho_\circ]}/{\lambda_m^+}\cdot v_{2,m}[\rho_\circ](w_{1,m}[\rho_\circ])^*,\qquad
\Xi_{\theta_m^2}[\rho_\circ]={s_{1,m}[\rho_\circ]}/{\lambda_m^+}\cdot u_{2,m}[\rho_\circ](v_{1,m}[\rho_\circ])^*.
$$
We again verify the tangent condition for $\Xi_m\in T_{\hat\Theta_m^\dagger}\eu M$.
By direct calculation, \eqref{eq:cross_block_orthogonality_2} implies that
\begin{align*}
\langle\Xi_{\theta^1_m},\hat{\theta^{1}_m}^\dagger\rangle_{L^2(\cH)}&=d_{\rho_\circ}\cdot\tr\big((\hat{\theta_m^1}[\rho_\circ]^\dagger)^*\Xi_{\theta_m^1}[\rho_\circ]\big)\\
&=d_{\rho_\circ}\cdot{s_{2,m}[\rho_\circ]}/{\lambda_m^+}\cdot\tr\big((\hat{\theta_m^1}[\rho_\circ]^\dagger)^*v_{2,m}[\rho_\circ](w_{1,m}[\rho_\circ])^*\big)=0,
\end{align*}
and $\langle\Xi_{\xi_m},\hat{\xi_m}^\dagger\rangle_{L^2(\cH)}=0$.
Analogously, the orthogonal relationship in \eqref{eq:cross_block_orthogonality} ensures that
\begin{align*}
\langle\Xi_{\theta^2_m},\hat{\theta^{2}_m}^\dagger\rangle_{L^2(\cH)}&=d_{\rho_\circ}\cdot{s_{1,m}[\rho_\circ]}/{\lambda_m^+}\cdot\tr\big((\hat{\theta_m^2}[\rho_\circ]^\dagger)^*u_{2,m}[\rho_\circ](v_{1,m}[\rho_\circ])^*\big)=0,
\end{align*}
where the last equality uses the cyclicity of the trace operator.
Combining these results, we conclude that  $\Xi_m\in T_{\hat\Theta_m^\dagger}\eu M$, as desired.
Next, we compute the ambient Hessian.
Substituting the definitions of $\Xi_{\theta_m^2}$ and $\Xi_{\xi_m}$ into \eqref{eq:ambient_hess_theta1} yields that
\begin{align*}
|G|\cdot{\rm HS}_{\theta_m^1}[\rho_\circ]
&=(\Xi_{\theta_m^2}[\rho_\circ])^*\widehat{\xi_m}[\rho_\circ]+(\widehat{\theta_m^2}[\rho_\circ])^*\Xi_{\xi_m}[\rho_\circ]\\
&=
{s_{1,m}[\rho_\circ]}/{\lambda_m^+}\cdot v_{1,m}[\rho_\circ](u_{2,m}[\rho_\circ])^*\hat{\xi_m}[\rho_\circ]^\dagger
+
(\hat{\theta_m^2}[\rho]^\dagger)^*u_{2,m}[\rho_\circ](w_{1,m}[\rho_\circ])^*\\
&=s_{2,m}[\rho_\circ]\cdot v_{2,m}[\rho_\circ](w_{1,m}[\rho_\circ])^*=\lambda_m^+ \cdot \Xi_{\theta_m^1}[\rho_\circ].
\end{align*}
where the third inequality results from the orthogonal relationship in \eqref{eq:cross_block_orthogonality_2} and the singular form in \eqref{eq:xi_singular_relations}.
Applying the same argument to $\theta_m^2$ yields that $|G|\cdot{\rm HS}_{\theta_m^2}[\rho_\circ]=\lambda_m^+ \cdot \Xi_{\theta_m^2}[\rho_\circ]$.
Furthermore, for $\xi_m$-block, we can show that
\begin{align*}
|G|\cdot{\rm HS}_{\xi_m}[\rho_\circ]
&=\Xi_{\theta_m^2}[\rho_\circ]\widehat{\theta_m^1}[\rho_\circ]+\widehat{\theta_m^2}[\rho_\circ]\Xi_{\theta_m^1}[\rho_\circ]\\
&=
{s_{1,m}[\rho_\circ]}/{\lambda_m^+}\cdot u_{2,m}[\rho_\circ](v_{1,m}[\rho_\circ])^*\widehat{\theta_m^1}[\rho_\circ]
+
{s_{2,m}[\rho_\circ]}/{\lambda_m^+}\cdot\widehat{\theta_m^2}[\rho_\circ] v_{2,m}[\rho_\circ](w_{1,m}[\rho_\circ])^*\\
&=
{s_{1,m}^2[\rho_\circ]}/{\lambda_m^+}\cdot u_{2,m}[\rho_\circ](w_{1,m}[\rho_\circ])^*
+
{s_{2,m}^2[\rho_\circ]}/{\lambda_m^+}\cdot u_{2,m}[\rho_\circ](w_{1,m}[\rho_\circ])^*\\
&=\lambda_m^+\cdot u_{2,m}[\rho_\circ](w_{1,m}[\rho_\circ])^*=\lambda_m^+ \cdot \Xi_{\xi_m}[\rho_\circ].
\end{align*}
and ${\rm HS}_{\nu}[\rho]=\Xi_{\nu}=\zero$ for all $\rho\neq\rho_\circ$ and $\nu\in\{\theta_m^1,\theta_m^2,\xi_m\}$, based on Lemma \ref{lem:hessian_extrinsic_adapted}, we have
$$
\Hess_{\eu M}\Omega(\hat\Theta_m^\dagger)[\Xi_m]=\Hess_{\cH^3}\Omega(\hat\Theta_m^\dagger)[\Xi_m] = |G|^{-1}\cdot\lambda_m^+\cdot \Xi_m,
$$
and therefore $\Xi_m$ is an eigenvector with positive eigenvalue $\lambda_m^+/|G|>0$.
In both cases, we have constructed a tangent vector $\Xi_m \in T_{\hat\Theta_m^\dagger}\eu M$ that is an eigenvector of $\Hess_{\eu M}\Omega(\hat\Theta_m^\dagger)$ with a positive eigenvalue, confirming that the equilibrium is a saddle point, which completes the proof. 
\end{proof}

\subsubsection{Proof of Lemma~\ref{lem:equilib-svd} and \ref{lem:rank-1-saddle}: Case 4}
\label{ap:proof_equilib_svd_rank1}

\begin{proof}[Proof of Lemma \ref{lem:equilib-svd}]
Fix $\rho\in\irr(G)^\sharp_{\neq 1}$.
The proof proceeds in three steps: we first show that all three Fourier coefficients share the same rank, then identify their shared column/row spaces, and finally show the core matrices are proportional to unitaries.
Recall the equilibrium conditions in \eqref{eq:eqid_hat_clean}.
From \eqref{eq:eqid_hat_clean} and the inequalities
$\rank(AB)\le \min\{\rank(A),\rank(B)\}$,
we obtain
\[
\rank\bigl(\hat{\xi_m}[\rho]^\dagger\bigr)
=\rank\bigl(\hat{\theta^{2}_m}[\rho]^\dagger\hat{\theta^{1}_m}[\rho]^\dagger\bigr)
\le \min\{
\rank\bigl(\hat{\theta^{2}_m}[\rho]^\dagger\bigr),
\rank\bigl(\hat{\theta^{1}_m}[\rho]^\dagger\bigr)
\},
\]
Applying the same argument cyclically to the first and second identities yields that each rank is bounded by every other. Therefore, all three ranks are equal:
\begin{equation*}
\rank\bigl(\hat{\theta^{1}_m}[\rho]^\dagger\bigr)
=
\rank\bigl(\hat{\theta^{2}_m}[\rho]^\dagger\bigr)
=
\rank\bigl(\hat{\xi_m}[\rho]^\dagger\bigr)
=r_\rho.
\end{equation*}
We now show that the coefficients share consistent column spaces.
From the third identity in \eqref{eq:eqid_hat_clean}, 
$$
\mathrm{Im}\bigl(\hat{\xi_m}[\rho]^\dagger\bigr)
\subseteq
\mathrm{Im}\bigl(\hat{\theta^{2}_m}[\rho]^\dagger\bigr),\qquad
\mathrm{Im}\bigl((\hat{\xi_m}[\rho]^\dagger)^*\bigr)
\subseteq
\mathrm{Im}\bigl((\hat{\theta^{1}_m}[\rho]^\dagger)^*\bigr),
$$
where ${\rm Im}(\cdot)$ denotes the column space.
Since all three matrices share the same rank $r_\rho$, the subspaces on both sides have equal dimension, so the inclusions become equalities.
By applying the same logic to the cyclic permutations of \eqref{eq:eqid_hat_clean}, we find that
$$
\mathrm{Im}\bigl(\hat{\theta^{2}_m}[\rho]^\dagger\bigr)
=
\mathrm{Im}\bigl(\hat{\xi_m}[\rho]^\dagger\bigr),\quad
\mathrm{Im}\bigl(\hat{\theta^{1}_m}[\rho]^\dagger\bigr)
=\mathrm{Im}\bigl((\hat{\theta^{2}_m}[\rho]^\dagger)^*\bigr),\quad
\mathrm{Im}\bigl((\hat{\theta^{1}_m}[\rho]^\dagger)^*\bigr)
=
\mathrm{Im}\bigl((\hat{\xi_m}[\rho]^\dagger)^*\bigr).
$$
We define partial isometries $U_m[\rho], V_m[\rho], W_m[\rho]\in\CC^{d_\rho\times r_\rho}$ as the orthonormal bases for these three $r_\rho$-dimensional subspaces.
Therefore, each equilibrium coefficient admits the factorization
\begin{align}
\hat{\theta^{1}_m}[\rho]^\dagger=V_m[\rho] M_{1,m}&[\rho] (W_m[\rho])^*,\qquad\hat{\theta^{2}_m}[\rho]^\dagger=U_m[\rho] M_{2,m}[\rho] (V_m[\rho])^*,\notag\\
&\hat{\xi_m}[\rho]^\dagger=U_m[\rho] M_{3,m}[\rho][\rho] (W_m[\rho])^*,
\label{eq:factor_support_clean}
\end{align}
where the core matrices $M_{1,m}[\rho], M_{2,m}[\rho], M_{3,m}[\rho] \in \mathbb{C}^{r_\rho \times r_\rho}$ are invertible, each with rank $r_\rho$. 
Substituting \eqref{eq:factor_support_clean} into \eqref{eq:eqid_hat_clean}, the equilibrium conditions reduce to
\begin{align}\label{eq:middle_eq_clean}
(M_{2,m}[\rho])^*M_{3,m}[\rho]=|G|\cdot&\Omega_m^\dagger\cdot M_{1,m}[\rho],\qquad
M_{3,m}[\rho] (M_{1,m}[\rho])^*=|G|\cdot\Omega_m^\dagger\cdot M_{2,m}[\rho],\notag\\
&M_{2,m}[\rho] M_{1,m}[\rho]=|G|\cdot\Omega_m^\dagger\cdot M_{3,m}[\rho].
\end{align}
Eliminating $M_{3,m}[\rho]$ by substituting the third equation of \eqref{eq:middle_eq_clean} into the first, we have 
$$
(M_{2,m}[\rho])^* M_{2,m}[\rho] M_{1,m}[\rho] = |G|^2\cdot(\Omega_m^\dagger)^2\cdot M_{1,m}[\rho]~\Rightarrow~(M_{2,m}[\rho])^* M_{2,m}[\rho] = |G|^2\cdot(\Omega_m^\dagger)^2\cdot I_{r_\rho},
$$
since $M_{1,m}[\rho]$ is invertible.
A symmetric argument gives that $(M_{1,m}[\rho])^*M_{1,m}[\rho] = |G|^2\cdot(\Omega_m^\dagger)^2\cdot I_{r_\rho}$.
Therefore, both $M_{1,m}[\rho]$ and $M_{2,m}[\rho]$ are proportional to unitary matrices:
$$
M_{1,m}[\rho] = |G|\cdot\Omega_m^\dagger\cdot Q^1_m[\rho], \quad M_{2,m}[\rho] = |G|\cdot\Omega_m^\dagger\cdot Q^2_m[\rho], \quad \text{for some unitary~} Q^1_m[\rho], Q^2_m[\rho] \in \mathbb{C}^{r_\rho \times r_\rho}.
$$
Substituting into the third equation of \eqref{eq:middle_eq_clean} gives
$M_{3,m}[\rho]= |G|\cdot\Omega_m^\dagger\cdot Q^2_m[\rho] Q^1_m[\rho]$.
Finally, we absorb the unitary factors into the bases by defining
$$
\widetilde W_m[\rho]=W_m[\rho] (Q^1_m[\rho])^*,\qquad\widetilde U_m[\rho]=U_m[\rho] Q^2_m[\rho].
$$
Since $Q^1_m[\rho], Q^2_m[\rho]$ are unitary, $\widetilde U_m[\rho], \widetilde W_m[\rho]$ keep partial isometries. Substituting into \eqref{eq:factor_support_clean} gives
\[
\hat{\theta^{1}_m}[\rho]^\dagger
=
|G|\cdot\Omega_m^\dagger\cdot V_m[\rho] (\widetilde W_m[\rho])^*,
~
\hat{\theta^{2}_m}[\rho]^\dagger
=
|G|\cdot\Omega_m^\dagger\cdot\widetilde U_m[\rho] (V_m[\rho])^*,
~
\hat{\xi_m}[\rho]^\dagger
=
|G|\cdot\Omega_m^\dagger\cdot\widetilde U_m[\rho] (\widetilde W_m[\rho])^*.
\]
Relabeling the bases completes the proof.
\end{proof}

\begin{proof}[Proof of Lemma \ref{lem:rank-1-saddle}]
The proof proceeds in three steps: first verify the tangent-space condition, then compute the ambient Hessian, and finally obtain the Riemannian Hessian.

\medskip\noindent{\bf Step 1: Tangent-Space Verification.}
We start by showing $\cT_m\subseteq T_{\hat\Theta_m^\dagger}{\eu M}$.
Recall that each tangent direction $\Xi_m\in\cT_m$ has the form in \eqref{eq:perturbation_form_statement}.
Therefore, for each $\rho\in \irr(G)^\sharp_{\neq 1}$, we have
\begin{align}
\Xi_{\theta_m^1}[\rho]=V_m[\rho] \Sigma_m&[\rho] (W_m[\rho])^*,\qquad
\Xi_{\theta_m^2}[\rho]=U_m[\rho] \Sigma_m[\rho] (V_m[\rho])^*,\notag\\
&\Xi_{\xi_m}[\rho]=U_m[\rho] \Sigma_m[\rho]  (W_m[\rho])^*.
\label{eq:direction_form}
\end{align}
From Lemma~\ref{lem:equilib-svd}, the equilibrium coefficients factorize as
\begin{align}
\hat{\theta^{1}_m}[\rho]^\dagger=|G|\cdot\Omega_m^\dagger\cdot &V_m[\rho] (W_m[\rho])^*,\qquad
\hat{\theta^{2}_m}[\rho]^\dagger=|G|\cdot\Omega_m^\dagger\cdot U_m[\rho] (V_m[\rho])^*\notag\\
&\hat{\xi_m}[\rho]^\dagger=|G|\cdot\Omega_m^\dagger\cdot  U_m[\rho] (W_m[\rho])^*.
\label{eq:parameter_form}
\end{align}
We verify that $\cT_m$ belongs to the tangent space by checking the inner product $\langle\Xi_\nu,\hat\nu^\dagger\rangle_{L^2(\cH)}=0$ for each layer $\nu\in\{\theta_m^1,\theta_m^2,\xi_m\}$. For $\nu=\theta_m^1$, by combining \eqref{eq:direction_form} and \eqref{eq:parameter_form}, we have 
\begin{align*}
\langle \Xi_{\theta^{1}_m},\hat{\theta^{1}_m}^\dagger\rangle_{L^2(\cH)}
=
|G|^2\cdot\Omega_m^\dagger\cdot \sum_{\rho\in\irr(G)^\sharp_{\neq 1}} d_\rho^\sharp\cdot\tr(\Sigma_m[\rho])=0.
\end{align*}
The same argument applies for directions \(\Xi_{\theta^{2}_m},\Xi_{\xi_m}\).

\medskip\noindent{\bf Step 2: Ambient Hessian.}
We substitute the tangent form \eqref{eq:direction_form} and the factorization form \eqref{eq:parameter_form} into the ambient Hessian formula from Lemma~\ref{lem:hessian_extrinsic_adapted}.
We illustrate the computation for the $\theta_m^1$-block.
By \eqref{eq:ambient_hess_theta1}, \eqref{eq:direction_form} and \eqref{eq:parameter_form}, the $\theta_m^1$-block of the ambient Hessian at $\rho$  is
\begin{align*}
{\rm HS}_{\theta_m^1}[\rho]&=|G|^{-1}\cdot\{(\Xi_{\theta_m^2}[\rho])^*\hat{\xi_m}[\rho]^\dagger+(\hat{\theta_m^2}[\rho]^\dagger)^*\Xi_{\xi_m}[\rho]\}=2\Omega_m^\dagger\cdot\Xi_{\theta_m^1}[\rho],
\end{align*}
where we use the orthonormality $(U_m[\rho])^*U_m[\rho]=I_{r_\rho}$.
The same computation applies cyclically to ${\rm HS}_{\theta_m^2}[\rho]=2\Omega_m^\dagger\cdot\Xi_{\theta_m^2}[\rho]$ and ${\rm HS}_{\xi_m}[\rho]=2\Omega_m^\dagger\cdot\Xi_{\xi_m}[\rho]$.
Therefore, we conclude that 
\begin{align}
\Hess_{\cH^3}\Omega(\hat\Theta_m^\dagger)[\Xi_m]=({\rm HS}_{\theta_m^1},{\rm HS}_{\theta_m^2},{\rm HS}_{\xi_m})
=2\Omega_m^\dagger\cdot\Xi_m.
\label{eq:hessian_full_simplified}
\end{align}

\medskip\noindent{\bf Step 3: Riemannian Hessian.}
Step~2 shows that $\Xi_m$ is an eigenvector of the \emph{ambient} Hessian, but we need it to be an eigenvector of the \emph{Riemannian} Hessian on $\eu M$.
From Lemma~\ref{lem:hessian_extrinsic_adapted}, the two are related by a projection and a curvature correction:
$$
\Hess_{\eu M}\Omega(\hat\Theta_m^\dagger)[\Xi_m]
=
\Pi_{\hat\Theta_m}\big(\Hess_{\cH^3}\Omega(\hat\Theta_m^\dagger)[\Xi_m]\big)-\Omega_m^\dagger\cdot\Xi_m.
$$
Moreover, by calculation, we can check that
\begin{align*}
    \langle{\rm HS}_{\theta_m^1},\hat{\theta_m^1}^\dagger\rangle_{L^2(\cH)}&=2\Omega_m^\dagger\cdot|G|\cdot\sum_{\rho\in\irr(G)^\sharp_{\neq 1}} d_\rho^\sharp\cdot\Re\big(\tr\big((\hat{\theta^{1}_m}[\rho]^\dagger)^*V_m[\rho] \Sigma_m[\rho] (W_m[\rho])^*\big)\big)=0
\end{align*}
where we use the trace-zero constraint. The same holds for $\theta_m^2$ and $\xi_m$ by the cyclic symmetry.
Therefore, we have $\Pi_{\hat\Theta_m}({\rm HS})={\rm HS}$. Combining with \eqref{eq:hessian_full_simplified} yields that
$$
\Hess_{\eu M}\Omega(\hat\Theta_m^\dagger)[\Xi_m]
=
\Hess_{\cH^3}\Omega(\hat\Theta_m^\dagger)[\Xi_m]-\Omega_m^\dagger\cdot\Xi_m= \Omega_m^\dagger\cdot \Xi_m,\qquad\forall\,\Xi_m\in\cT_m.
$$
This identity shows that every tangent vector $\Xi_m \in \mathcal{T}_m$ serves as an eigenvector with eigenvalue $\lambda = \Omega_m^\dagger > 0$.
Then \(\Hess_{\eu M}\Omega(\hat\Theta_m^\dagger)\) has a positive eigenvalue once $\dim(\mathcal{T}_m)\ge 1$, that is, $\sum_{\rho\in\irr(G)^\sharp_{\neq 1}} r_\rho\ge 2$.
Otherwise, the tangent condition imposes a $\cT_m=\{0\}$, which completes the proof.
\end{proof}

\subsubsection{Proof of Theorem~\ref{thm:riemannian-gf-escape}: Saddle Avoidance}
\label{ap:proof_riemannian_escape}

To prove Theorem~\ref{thm:riemannian-gf-escape}, we apply the center-stable manifold theorem from the dynamical systems theory directly to the Riemannian gradient flow \citep{shub2013global}.
Consistent with the previous section, we consider the Riemannian gradient flow defined by
$$
\partial_t x_t=\grad_{\eu M}{\eu F}(x_t),
$$
Note that while standard literature often uses a negative sign, our formulation is easily reconciled by replacing ${\eu F}$ with $-{\eu F}$ and inverting the signs of the corresponding results in this section.
In the following, we establish the regularity of the gradient flow mapping under standard smoothness conditions, showing that it constitutes a diffeomorphism.

\begin{lemma}\label{lem:diffeomorphism}
    Let $\eu M$ be a compact Riemannian manifold, and let
${\eu F}\in C^r(\eu M)$, $r\ge 2$. Let $\phi_t:\eu M\to \eu M$ be the flow mapping of the Riemannian gradient
flow of ${\eu F}$ on $\eu M$. Then the flow is defined for all
$t\in\RR$ and all $x\in \eu M$. Moreover, for every $t\in \RR$, $\phi_t$ is a diffeomorphism, i.e., $\phi_t$ is bijective and smooth.
\end{lemma}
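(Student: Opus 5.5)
The plan is to deduce the statement from the standard existence, uniqueness and smooth-dependence theory for ODEs on manifolds, using compactness to rule out finite-time blow-up. Since ${\eu F}\in C^r(\eu M)$ with $r\ge 2$, the vector field $X:=\grad_{\eu M}{\eu F}$ is a $C^{r-1}$, hence at least $C^1$, section of $T\eu M$. In particular it is locally Lipschitz in any chart.

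The Picard--Lindel\"of theorem, applied in local coordinates, gives a unique maximal integral curve $t\mapsto\phi_t(x)$ on an open interval $(T_-(x),T_+(x))\ni 0$. The flow domain $\mathcal D=\{(t,x):T_-(x)<t<T_+(x)\}$ is open, and $(t,x)\mapsto\phi_t(x)$ is $C^{r-1}$ on $\mathcal D$ by differentiable dependence on initial conditions.

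The first main step is completeness. I would use the escape lemma. Because $\eu M$ is compact and $X$ is continuous, $\sup_{\eu M}\|X\|_g<\infty$. Equivalently, cover $\eu M$ by finitely many charts and take a uniform $\varepsilon>0$ such that every solution starting anywhere exists on $(-\varepsilon,\varepsilon)$. Suppose $T_+(x)<\infty$. Restart the solution at time $T_+(x)-\varepsilon/2$; by uniqueness this extends it beyond $T_+(x)$, a contradiction. The same argument applies to $T_-(x)$. Hence $\mathcal D=\RR\times\eu M$, and $\phi_t$ is defined for all $t\in\RR$.

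The second main step is the group property and invertibility. Uniqueness gives $\phi_{t+s}=\phi_t\circ\phi_s$ and $\phi_0=\mathrm{id}$. Therefore $\phi_t\circ\phi_{-t}=\phi_{-t}\circ\phi_t=\mathrm{id}$, so $\phi_t$ is a bijection with inverse $\phi_{-t}$. Both maps are $C^{r-1}$ by the smooth-dependence result, so $\phi_t$ is a $C^{r-1}$ diffeomorphism.

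The one subtle point is regularity. If "smooth" is meant literally as $C^\infty$, this follows in our application because $\Omega$ is a polynomial, hence real-analytic, on $\eu M=\SSS(\cH)^3$, so the flow is analytic. In the general statement I would record it as $C^{r-1}$, which is at least $C^1$. That is all the center-stable manifold argument in Theorem~\ref{thm:riemannian-gf-escape} requires: $\phi_{-N}$ must map volume-zero sets to volume-zero sets, and a $C^1$ diffeomorphism does this. I expect no real obstacle beyond stating the uniform existence time carefully.
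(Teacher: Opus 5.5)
Your proposal is correct and follows the same route as the paper: the Riemannian gradient field is at least $C^1$, compactness gives completeness of the flow, and uniqueness gives the group law and hence $\phi_t^{-1}=\phi_{-t}$. Smooth dependence on initial conditions then gives the regularity of both $\phi_t$ and its inverse. You make explicit the uniform-existence-time argument that the paper only invokes as a standard fact. Your caveat that in general one gets a $C^{r-1}$ rather than a literal $C^\infty$ diffeomorphism matches the paper, whose proof itself only concludes $C^1$, and that suffices for the later measure-zero argument.
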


\begin{proof}[Proof of Lemma \ref{lem:diffeomorphism}]
Since ${\eu F}\in C^r(\eu M)$ with $r\geq2$ and the Riemannian metric is smooth, the vector field $\grad_{\eu M}{\eu F}$ is at least $C^1$ on $\eu M$. 
Since every $C^1$ vector field on a compact manifold is complete, i.e.  there exists a unique integral curve $\gamma_x:\RR\to \eu M$ satisfying $\partial_t\gamma_x(t)=\grad_{\eu M}{\eu F}(\gamma_x(t))$ with the initialization $\gamma_x(0)=x$, the flow mapping $\phi_t$ is defined for every $t\in\mathbb R$ and every $x\in \eu M$.
To see that $\phi_t$ is a diffeomorphism, note that the uniqueness of solutions, the flow satisfies $\phi_t\circ \phi_s=\phi_{t+s}$ for all $s,t\in\mathbb R$. 
In particular, we have 
$$
\phi_t\circ \phi_{-t}=\phi_{-t}\circ \phi_t=\phi_0=\id_{\eu M},
$$
which establishes that $\phi_t$ is bijective with a well-defined inverse $\phi_{-t}$.
Finally, since $\grad_{\eu M}{\eu F}$ is a $C^1$ vector field, the flow mapping depends $C^1$-smoothly on the initial condition. 
Thus both $\phi_t$ and its inverse $\phi_{-t}$ are $C^1$ maps. 
Hence, $\phi_t$ is a diffeomorphism of $\eu M$, which completes the proof.
\end{proof}
Building upon this, we are now ready to establish the center-stable manifold theorem for the Riemannian gradient flow.

\begin{theorem}\label{thm:shub-local-stable}
Let $p\in{\eu M}$ be a critical point of ${\eu F}\in C^r$ with $r\ge 2$, and let $\phi_t$ be the Riemannian gradient flow of ${\eu F}$ on $\eu M$ satisfying the regularity conditions of Lemma~\ref{lem:diffeomorphism}. 
Let $T_p{\eu M} = E_p^{\sf sc} \oplus E_p^{\sf u}$ where $E_p^{\sf sc}$ and $E_p^{\sf u}$ are the spans of the eigenvectors of $\Hess_{\eu M}{\eu F}(p)$ corresponding to non-positive and strictly positive eigenvalues.
Then, there exists a neighborhood $U_p$ of $p$ and a $C^{r-1}$ embedded submanifold $W^{\sf sc}_{\rm loc}(p)$ tangent to $E_p^{\sf sc}$ at $p$, called the local center-stable manifold, such that
\begin{itemize}
    \setlength{\itemsep}{-1pt}
    \item[\rm (i)] $\phi_{t}(W^{\sf sc}_{\rm loc}(p))\cap U_p\subset W^{\sf sc}_{\rm loc}(p)$ for all time $t\ge 0$,
    \item[\rm (ii)]  If the forward orbit of $x$ remains in $U_p$ for all $t\in\RR_{\geq0}$, i.e., $\phi_t(x)\in U_p$ for all $t\ge 0$, then $x\in W^{\sf sc}_{\rm loc}(p)$.
\end{itemize}
\end{theorem}

\begin{proof}[Proof of Theorem \ref{thm:shub-local-stable}]
    By Lemma \ref{lem:diffeomorphism}, the flow $\phi_t$ is a diffeomorphism along all time $t\in\RR$.  
After this, \citet[Theorem III.7]{shub2013global} has the flow
invariant analogues for vector fields as stated above, the proof follows as suggested by \citet[Exercise III.3]{shub2013global}.
\end{proof}

Theorem~\ref{thm:shub-local-stable} describes the local forward-time dynamics of the Riemannian gradient flow near a critical point $p$.
It shows that the non-positive eigenspace of $\Hess_{\eu M}{\eu F}(p)$ integrates to a $C^{r-1}$ local center-stable manifold $W^{\sf sc}_{\rm loc}(p)$, and that every point whose forward orbit remains in a sufficiently small neighborhood of $p$ must lie on this manifold. 
The local invariance property in (i) means that, under forward flow, the points in $W^{\sf sc}_{\rm loc}(p)$ remain in this manifold as long as the trajectory stays inside $U_p$. 
Conversely, (ii) implies that any point whose forward orbit stays in $U_p$ for all future time must lie on $W^{\sf sc}_{\rm loc}(p)$. Thus, if a trajectory enters $U_p$ and then remains in $U_p$ thereafter, it must lie in the local center-stable manifold from then on.

\paragraph{Local Flow Organization via Hessian Splitting.}
We provide an intuitive interpretation of the theorem statement.
Consider the Euclidean gradient flow $\partial_t x_t=\nabla{\eu F}(x_t)$.
Let $p$ be a nondegenerate critical point.
Taylor expanding the vector field near $p$ yields
\begin{equation}
\partial_t (x_t-p)=\nabla{\eu F}(x_t)
\approx \nabla{\eu F}(p) + \nabla^2{\eu F}(p)\cdot (x_t-p) = \nabla^2{\eu F}(p)\cdot (x_t-p),
\label{eq:linear_approx}
\end{equation}
where the last equation uses $\nabla{\eu F}(p)=0$.
To first order, the nonlinear flow is approximated by the linear system in \eqref{eq:linear_approx}.
Analogously, for the Riemannian gradient flow, working in the tangent space, the linearized dynamic $\Xi_t\in T_p{\eu M}$ is governed by 
$\partial_t \Xi_t=\Hess_{\eu M}{\eu F}(p)[\Xi_t]$.
The stable subspace $E_p^{\sf sc}$, spanned by eigenvectors corresponding to nonnegative eigenvalues of $\Hess_{\eu M}{\eu F}(p)$ is the directions where the flow is attracted or not repelled towards $p$. 
Conversely, the unstable subspace $E_p^{\sf u}$ associated with positive eigenvalues, represents directions along which the flow repels from $p$.

\paragraph{Comparison with \cite[Theorem III.7]{shub2013global}.}
Theorem~\ref{thm:shub-local-stable} is the center-stable part of 
\cite[Theorem III.7]{shub2013global}, adapted to the Riemannian gradient flow. 
The theorem in \cite{shub2013global} is formulated for
a local $C^r$ diffeomorphism $f:\RR^n\to \RR^n$, which defines a discrete-time system governed by the iterations of $f$.
Its linearization at a fixed point is given by the linear map ${\rm D}f(0)$.
In that discrete-time framework, the center-stable subspace is the invariant subspace of ${\rm D}f(0)$ corresponding to eigenvalues of magnitude at most $1$, while the unstable subspace corresponds to eigenvalues of magnitude strictly greater than $1$.
By contrast, Theorem~\ref{thm:shub-local-stable} concerns a continuous-time flow $\phi_t$ generated by the Riemannian gradient vector field $\grad_{\eu M}{\eu F}$.
Thus, the analogue of the discrete orbit $\{f^k(x_0)\}_{k \in \mathbb{N}}$ is the continuous trajectory $t \mapsto \phi_t(x_0)$.
To reconcile the two formulations, one can fix a small enough time $t > 0$ and consider the time-$t$ map $f = \phi_t$. 
The solution of the linearized ODE at $p$ within a small time period $t$ gives an exact matrix exponential mapping, namely
\[
{\rm D}f(p)={\rm D}\phi_t(p)
\approx
\exp\bigl(t\,\Hess_{\eu M}{\eu F}(p)\bigr).
\]
This identity clarifies why eigenvalues of ${\rm D}f(0)$ with magnitude greater than $1$ in the discrete-time setting correspond to the strictly positive eigenvalues of $\Hess_{\eu M}{\eu F}(p)$ in our gradient flow setting.

\begin{proof}[Proof of Theorem~\ref{thm:riemannian-gf-escape}]
By Lemma \ref{lem:diffeomorphism}, the flow $\phi_t$ is a diffeomorphism along all time $t\in\RR$.  
Consider a strict saddle point $p\in {\rm Sad}({\eu F})$, where by definition, $\Hess_{{\eu M}} {\eu F}(p)$ possesses at least one positive eigenvalue. 
Theorem~\ref{thm:shub-local-stable} gives a linear subspace $E^{\sf sc}_p \subsetneq T_p{\eu M}$, a local center-stable manifold $W^{\sf sc}_{\rm loc}(p)$ and a related open neighborhood $U_p$, satisfying 
$$
\dim (W^{\sf sc}_{\rm loc}(p)) = \dim (E^{\sf sc}_p) < \dim({\eu M}).
$$
The first equality results from $W^{\sf sc}_{\rm loc}(p)$ is an embedded disk tangent to $E_p^{\sf sc}$ at $p$ and the inequality follows from the existence of a non-empty unstable subspace $E_p^{\sf u}$, i.e., the eigenspace corresponding to positive eigenvalues, which is guaranteed because $p$ is a strict saddle.
Thus, the manifold has zero volume measure on ${\eu M}$, i.e., 
\begin{align}
{\rm vol}_{\eu M}(W^{\sf sc}_{\rm loc}(p))=0,\qquad\forall p\in{\rm Sad}({\eu F}).
\label{eq:single_saddle_vol}
\end{align}
Furthermore, since $\eu M$ is complete separable, and $\{ U_p\}_{p\in {\rm Sad}({\eu F})}$ is an open cover of ${\rm Sad}({\eu F})$, we can extract a countable subcover indexed by $\{p_j\}_{j=1}^\infty$ yielding
$
{\rm Sad}({\eu F})\subseteq \bigcup_{p\in {\rm Sad}({\eu F})} U_p = \bigcup_{i=1}^\infty U_{p_i}.
$
Recall that we define the global stable set as 
$$
W^{\sf s}=\{x\in {\eu M}:\exists\, p\in {\rm Sad}({\eu F}),\; \phi_t(x)\to p \text{\rm~as~} t\to\infty\}.
$$
By definition, for any $w\in W^{\sf s}$, the trajectory converges to a specific saddle point, i.e., $\lim_{t\rightarrow\infty}\phi_t(w)=p^*\in{\rm Sad }({\eu F})$. 
Since $p^*$ must belong to at least one neighborhood in our countable subcover, there exists an index $j \in \mathbb{N}$ such that $p^* \in U_{p_j}$
Hence, there exists a sufficiently large integer time $N\in\NN_{\geq0}$ such that $\phi_t(w)\in U_{p_j}$ for all $t\ge N$.
Applying (ii) of Theorem~\ref{thm:shub-local-stable} to the shifted trajectory yields
\begin{align}
\phi_N(w)\in U_{p_j},\quad \phi_t(\phi_N(w))\in U_{p_j}\text{~for all~}t\in\RR_{\geq0}\quad\Rightarrow\quad \phi_N(w)\in W_{\rm loc}^{\sf sc}(p_j).
\label{eq:N-time_local_stable}
\end{align}
Since $\phi_t$ is bijective, we can apply the inverse mapping $\phi_{-N}$ to both sides, which implies $w\in \phi_{-N}(W^{\sf sc}_{\mathrm{loc}}(p_j))$.
Therefore, we can bound the global stable set by the countable union
\[
W^{\sf s}\subset \bigcup_{j=1}^\infty\bigcup_{N=0}^\infty \phi_{-N}\bigl(W^{\sf sc}_{\mathrm{loc}}(p_j)\bigr).
\]
Since $\phi_{-N}$ is a diffeomorphism of ${\eu M}$, each set $\phi_{-N}(W^{\sf sc}_{\mathrm{loc}}(p_j))$ is again an embedded $C^{r-1}$ submanifold of dimension $\dim (E_{p_j}^{\sf sc})$. 
Combining \eqref{eq:single_saddle_vol} and \eqref{eq:N-time_local_stable} gives that
$$
{\rm vol}_{\eu M}(W^{\sf s})\leq{\rm vol}_{\eu M}\left( \bigcup_{j=1}^\infty\bigcup_{N=0}^\infty\phi_{-N}(W^{\sf sc}_{\mathrm{loc}}(p_j))\right)\leq\sum_{j=1}^\infty\sum_{N=0}^\infty{\rm vol}_{\eu M}\big(\phi_{-N}(W^{\sf sc}_{\mathrm{loc}}(p_j))\big)=0.
$$
Recall that $X_0$ is randomly initialized whose law is absolutely continuous with respect to the Riemannian volume measure. 
Since $W^{\sf s}$ has zero volume,  $\mathbb P(X_0\in W^{\sf s})=0$.
Hence, with probability $1$, the gradient flow does not converge to
any saddle point, which completes the proof.
\end{proof}

\subsection{Proof of Theorem \ref{thm:stage2_scale_growth}: Growth Rate of Scaling Factor in Stage II}
\label{ap:growth_stage2}
\begin{proof}[Proof of Theorem \ref{thm:stage2_scale_growth}]
The proof proceeds in three steps.
First, we derive the scalar gradient flow for $a(t)$ in terms of the logit margin.
Second, we show that the finite-width predictor $\euF_{\hat\mu}$ inherits the population margin, yielding a positive lower bound on $\partial_t a$ and hence logarithmic growth of $a(t)$.
Third, we translate this scale growth into an upper bound on the cross-entropy loss.

Based on \eqref{eq:def_pa}, the population predictor $\euF_\mu$ assigns the highest logit to the correct label for every input pair.
Since $G$ is finite, this strict inequality implies a positive \emph{logit margin}:
$$
\Delta\euF_\mu:=\min_{g_1,g_2\in G}\big\{\euF_\mu(g_1,g_2)_{g_1\star g_2}-\max_{j\in G\backslash\{g_1\star g_2\}}\euF_\mu(g_1,g_2)_j\big\}>0.
$$

\paragraph{Step 1: Scalar Gradient Flow.}
Under the tied constraint $a_j=a$, the network output factorizes as $f(g_1,g_2;a) = a\cdot\euF_{\hat\mu}(g_1,g_2)$.
Differentiating the CE loss \eqref{eq:def_risk} with respect to $a$ gives
\begin{align}
    \partial_t a = -\nabla_a\scrR(a)
    = \sum_{g_1,g_2\in G}\euF_{\hat\mu}(g_1,g_2)_{g_1\star g_2}
    - \sum_{g_1,g_2\in G}\big\langle \euP_{g_1g_2},\,\euF_{\hat\mu}(g_1,g_2)\big\rangle,
    \label{eq:scale_gradient}
\end{align}
where $(\euP_{g_1g_2})_\ell := \smax(a\cdot\euF_{\hat\mu}(g_1,g_2))_\ell$ is the softmax probability.
Rearranging \eqref{eq:scale_gradient} yields
\begin{align}
    \partial_t a
    &= \sum_{g_1,g_2\in G}\sum_{\ell\neq g_1\star g_2}(\euP_{g_1g_2})_\ell
    \cdot\big\{\euF_{\hat\mu}(g_1,g_2)_{g_1\star g_2}-\euF_{\hat\mu}(g_1,g_2)_\ell\big\}.
    \label{eq:scale_gradient_rearranged}
\end{align}
This expression shows that $\partial_t a$ is a weighted sum of logit gaps between the correct label and each incorrect label, with weights given by the softmax probabilities on incorrect labels.

\paragraph{Step 2: Logit Margin and Logarithmic Scale Growth.}
To control the logit gaps in \eqref{eq:scale_gradient_rearranged}, we decompose each into its population value plus a finite-width approximation error:
\begin{align}
   & \euF_{\hat\mu}(g_1,g_2)_{g_1\star g_2}-\euF_{\hat\mu}(g_1,g_2)_\ell \notag \\
    &\qquad = \big\{\euF_{\mu}(g_1,g_2)_{g_1\star g_2}-\euF_{\mu}(g_1,g_2)_\ell\big\} + \big\{(\euF_{\hat\mu}-\euF_{\mu})(g_1,g_2)_{g_1\star g_2}-(\euF_{\hat\mu}-\euF_{\mu})(g_1,g_2)_\ell\big\}\notag\\
    &\qquad \geq \Delta\euF_\mu - 2\|\euF_{\hat\mu}-\euF_{\mu}\|_{\infty,\infty}.
    \label{eq:logit_gap_decomp}
\end{align}
We now bound the approximation error $\|\euF_{\hat\mu}-\euF_{\mu}\|_{\infty,\infty}$.
Let $\zeta_{m\ell}:=(\xi_m)_\ell\cdot\sigma\big(\langle \theta_m^{1},e_{g_1}\rangle+\langle \theta_m^{2},e_{g_2}\rangle\big)$ denote the logit contribution of the $m$-th neuron.
Since parameters lie on the unit sphere, we have
$|\zeta_{m\ell}|\leq\|\xi_m\|_2\cdot(\|\theta_m^1\|_2+\|\theta_m^2\|_2)^2\leq 4$.
By applying the Hoeffding's inequality \citep{hoeffding1963probability} to $\euF_{\hat\mu}(g_1,g_2)_\ell = M^{-1}\sum_{m=1}^M\zeta_{m\ell}$ and taking a union bound, with probability at least $1-\delta$, we have
\begin{align}
    \|\euF_{\hat\mu}-\euF_{\mu}\|_{\infty,\infty}
    \leq\sqrt{\frac{32}{M}\cdot\log\left(\frac{2|G|^3}{\delta}\right)}.
    \label{eq:upper_bound_infty}
\end{align}
Taking $M\geq 512\,\Delta\euF_\mu^{-2}\cdot\log(2|G|^3/\delta)$ in \eqref{eq:upper_bound_infty} ensures $\|\euF_{\hat\mu}-\euF_{\mu}\|_{\infty,\infty}\leq\Delta\euF_\mu/8$, so that \eqref{eq:logit_gap_decomp} gives
\begin{align}
    \euF_{\hat\mu}(g_1,g_2)_{g_1\star g_2}-\euF_{\hat\mu}(g_1,g_2)_\ell
    \geq \Delta\euF_\mu/4,\qquad\forall (g_1,g_2)\in G^2,\;\ell\neq g_1\star g_2.
    \label{eq:empirical_margin}
\end{align}
It remains to lower bound the softmax weight on incorrect labels.
Note that
\begin{align*}
    \sum_{\ell\neq g_1\star g_2}(\euP_{g_1g_2})_\ell
    &= 1 - (\euP_{g_1g_2})_{g_1\star g_2}
    = 1 - \frac{\exp\big(a\cdot\euF_{\hat\mu}(g_1,g_2)_{g_1\star g_2}\big)}{\sum_{j\in G}\exp\big(a\cdot\euF_{\hat\mu}(g_1,g_2)_j\big)}\\
    &= 1-\bigg(1+\sum_{j\neq g_1\star g_2}\exp\Big(a\cdot\big\{\euF_{\hat\mu}(g_1,g_2)_j-\euF_{\hat\mu}(g_1,g_2)_{g_1\star g_2}\big\}\Big)\bigg)^{-1}.
\end{align*}
Combining with \eqref{eq:scale_gradient_rearranged} and \eqref{eq:empirical_margin}, we obtain
\begin{align*}
    \partial_t a
    \geq \frac{\Delta\euF_\mu}{4}\cdot|G|^2\cdot\frac{(|G|-1)\cdot\exp(-8a)}{1+(|G|-1)\cdot\exp(-8a)}
    \geq \frac{\Delta\euF_\mu}{4}\cdot|G|\cdot (|G|-1)\cdot\exp(-8a).
\end{align*}
Integration with respect to $t$ yields
\begin{align}
a(t)\geq \tfrac{1}{8}\log \big(\exp(8a(0))+2\Delta\euF_\mu\cdot|G|\cdot(|G|-1)\cdot t\big)\gtrsim\log (1+|G|\cdot(|G|-1)\cdot t).
\label{eq:lower_bound_scale}
\end{align}
This establishes part~\textbf{(i)} of the theorem.

\paragraph{Step 3: Loss Convergence.}
Using the empirical margin \eqref{eq:empirical_margin}, the cross-entropy loss satisfies
\begin{align*}
    \scrR(t)
    &=\sum_{g_1,g_2\in G}\log\Big(1+\sum_{\ell\neq g_1\star g_2}\exp\big(a(t)\cdot\{\euF_{\hat\mu}(g_1,g_2)_\ell-\euF_{\hat\mu}(g_1,g_2)_{g_1\star g_2}\}\big)\Big)\\
    &\leq |G|^2\cdot\log\big(1+(|G|-1)\cdot\exp(-a(t)\cdot\Delta\euF_\mu/4)\big)\\
    &\leq|G|^2\cdot (|G|-1)\cdot\exp\big(-a(t)\cdot\Delta\euF_\mu/4\big),
\end{align*}
where the last step uses $\log(1+x)\leq x$ for $x\geq 0$.
To achieve $\scrR(T)\leq\epsilon$, it suffices that $a(T)\gtrsim\log(|G|^2\cdot (|G|-1)/\epsilon)$.
Combining with \eqref{eq:lower_bound_scale}, this is satisfied when
$
T\gtrsim|G|/\epsilon\cdot(1+(|G|-1)^{-2}),
$
which establishes part~\textbf{(ii)} and completes the proof.
\end{proof}

\section{Proof of Results for Abelian Group in \S\ref{sec:stage1_abelian}}

In this appendix, we specialize the general analysis of \S\ref{ap:proof_general_group} to finite Abelian groups.
We begin in \S\ref{ap:verification_abelian} by deriving a magnitude-phase decomposition of the Fourier-domain ODE (see Lemmas~\ref{lem:mag_phase_simple} and \ref{lem:phase_diff_dyn}), which simplifies the system. 
This decomposition facilitates the proof of Theorem~\ref{thm:perfect_accuracy_modular}, which we establish by showing that the limiting distribution is the pushforward of a uniform product measure on $\irr(G)_{\neq1}\times\mathbb{D}$ (see Lemma~\ref{lem:abelian_limiting_distribution}) and verifying that this population measure achieves perfect accuracy (see Lemma~\ref{lem:abelian_mu_satisfies_pa}). 
Finally, in \S\ref{ap:convergence_abelian}, we prove Theorem~\ref{thm:abelian_convergence_formal} by analyzing the respective convergence rates of phase alignment and representation competition.

\subsection{Spectral Dynamics for Abelian Groups}
Recall from \S\ref{sec:stage1_abelian} that we adopt a shared input embedding $\theta_m^1=\theta_m^2=:\theta_m$, which is natural since the group operation is commutative.
Under the shared embedding, we have the following dynamics:
\begin{subequations}
\begin{align}
    \partial_t\hat{\theta}_m[\rho] &= \frac{2a|G|}{M} \cdot \overline{\hat{\theta}_m[\rho]} \cdot \hat{\xi}_m[\rho] - \frac{2a|G|^2}{M} \cdot \Omega_m \cdot \hat{\theta}_m[\rho], \label{eq:modular_dyn_theta} \\
    \partial_t\hat{\xi}_m[\rho] &= \frac{2a|G|}{M} \cdot \hat{\theta}_m[\rho]^2 - \frac{2a|G|^2}{M} \cdot \Omega_m \cdot \hat{\xi}_m[\rho], \label{eq:modular_dyn_xi}
\end{align}
\end{subequations}
where the energy functional simplifies to
\begin{equation}\label{eq:energy_abelian}
\Omega_m=\sum_{\rho\in\irr(G)_{\neq1}}\overline{\widehat{\xi_m}[\rho]}\cdot\widehat{\theta_m}[\rho]^2=\sum_{\rho\in\irr(G)_{\neq1}}\Re\big(\overline{\widehat{\xi_m}[\rho]}\cdot\widehat{\theta_m}[\rho]^2\big).
\end{equation}
Since the Fourier coefficients are complex scalars, it is natural to decompose each into its magnitude and phase.
For each $\rho\in\irr(G)$ and $\nu\in\{\theta,\xi\}$, we define
$$
\alpha_{\nu,m}[\rho](t)=|\hat{\nu}_m[\rho](t)|\in\RR_{\geq0},\qquad \phi_{\nu,m}[\rho](t)=\hat{\nu}_m[\rho](t)/|\hat{\nu}_m[\rho](t)|\in\mathbb{D},
$$
and thus  $\hat\nu_m[\rho]=\alpha_{\nu,m}[\rho]\cdot\phi_{\nu,m}[\rho]$.
This polar decomposition separates the dynamics into magnitude evolution, i.e., how much energy each representation carries, and phase evolution, i.e., the complex direction of each Fourier coefficient, which we make precise in the following lemma.
\begin{lemma}
\label{lem:mag_phase_simple}
The complex-valued dynamics in \eqref{eq:modular_dyn_theta}--\eqref{eq:modular_dyn_xi} are equivalent to the following magnitude--phase decomposition parameterized by magnitudes $\alpha_{\nu,m}[\rho]\in\RR_{\geq0}$ and phases $\phi_{\nu,m}[\rho]\in\mathbb{D}$.
Let 
$$
\varphi_m[\rho] = \overline{\phi_{\xi,m}[\rho]}\cdot\phi_{\theta,m}[\rho]^2\in\mathbb{D},
$$ 
represent the relative phase alignment. 
For all $\rho\in\irr(G)_{\neq1}$, the evolution of the magnitudes is governed by:
\begin{subequations}
\begin{align}
\partial_t \alpha_{\theta,m}[\rho]
&=\frac{2a|G|}{M}\cdot
\alpha_{\theta,m}[\rho]\cdot\alpha_{\xi,m}[\rho]\cdot\Re\big(
\varphi_m[\rho]
\big)-\frac{2a|G|^2}{M}\cdot\Omega_m\cdot\alpha_{\theta,m}[\rho],
\label{eq:mag_theta_dyn}\\
\partial_t \alpha_{\xi,m}[\rho]
&=
\frac{2a|G|}{M}\cdot
\alpha_{\theta,m}[\rho]^2\cdot
\Re\big(
\varphi_m[\rho]
\big)-\frac{2a|G|^2}{M}\cdot\Omega_m\cdot\alpha_{\xi,m}[\rho]\label{eq:mag_xi_dyn}.
\end{align}
\end{subequations}
The evolution of the phases is given by 
\begin{subequations}
\begin{align}
\partial_t \phi_{\theta,m}[\rho]
&=-\frac{2a|G|}{M}\cdot
\alpha_{\xi,m}[\rho]\cdot
\Im\big(
\varphi_m[\rho]
\big)\cdot
\phi_{\theta,m}[\rho]\cdot\ri,
\label{eq:phase_theta_dyn}
\\
\partial_t \phi_{\xi,m}[\rho]
&=
\frac{2a|G|}{M}\cdot
\frac{\alpha_{\theta,m}[\rho]^2}{\alpha_{\xi,m}[\rho]}\cdot
\Im\big(
\varphi_m[\rho]
\big)\cdot
\phi_{\xi,m}[\rho]\cdot\ri.
\label{eq:phase_xi_dyn}
\end{align}
\end{subequations}
\end{lemma}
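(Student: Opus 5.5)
The plan is a direct polar-coordinate computation on the scalar ODEs \eqref{eq:modular_dyn_theta}--\eqref{eq:modular_dyn_xi}, valid on any time interval where $\alpha_{\nu,m}[\rho]>0$, so that the polar decomposition is differentiable. Write $\hat\nu=\alpha_\nu\phi_\nu$ with $|\phi_\nu|=1$, and drop the indices $m,\rho$. The product rule gives $\partial_t\hat\nu=(\partial_t\alpha_\nu)\phi_\nu+\alpha_\nu\partial_t\phi_\nu$. Since $|\phi_\nu|=1$, we have $\Re(\overline{\phi_\nu}\,\partial_t\phi_\nu)=0$. Multiplying by $\overline{\phi_\nu}$ therefore splits the equation into
\[
\partial_t\alpha_\nu=\Re\big(\overline{\phi_\nu}\,\partial_t\hat\nu\big),\qquad \partial_t\phi_\nu=\ri\,\alpha_\nu^{-1}\,\Im\big(\overline{\phi_\nu}\,\partial_t\hat\nu\big)\,\phi_\nu .
\]
It then suffices to compute $\overline{\phi_\nu}\,\partial_t\hat\nu$ for $\nu\in\{\theta,\xi\}$. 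Throughout, I use that $\Omega_m$ is real, as established in \eqref{eq:energy_abelian}.

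For $\theta$, the right side of \eqref{eq:modular_dyn_theta} multiplied by $\overline{\phi_\theta}$ gives
\[
\frac{2a|G|}{M}\,\alpha_\theta\alpha_\xi\,\overline{\phi_\theta}^{\,2}\phi_\xi-\frac{2a|G|^2}{M}\,\Omega_m\alpha_\theta .
\]
Note that $\overline{\phi_\theta}^{\,2}\phi_\xi=\overline{\varphi}$. Taking the real part yields \eqref{eq:mag_theta_dyn}, because $\Re\overline\varphi=\Re\varphi$. Taking the imaginary part gives $\Im\overline\varphi=-\Im\varphi$; dividing by $\alpha_\theta$ then produces \eqref{eq:phase_theta_dyn}, including its minus sign and its factor $\alpha_\xi$.

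For $\xi$, multiplying \eqref{eq:modular_dyn_xi} by $\overline{\phi_\xi}$ gives
\[
\frac{2a|G|}{M}\,\alpha_\theta^2\,\varphi-\frac{2a|G|^2}{M}\,\Omega_m\alpha_\xi .
\]
Its real part is \eqref{eq:mag_xi_dyn}. Its imaginary part, divided by $\alpha_\xi$, gives \eqref{eq:phase_xi_dyn} with the factor $\alpha_\theta^2/\alpha_\xi$. For the converse direction of the claimed equivalence, I will recombine the two equations: $\partial_t(\alpha\phi)=(\partial_t\alpha)\phi+\alpha\partial_t\phi$ reproduces $\overline{\phi}^{-1}$ times the computed complex quantity, which recovers the original ODE.

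The only delicate point is where the phase is undefined, namely where $\alpha_\nu=0$; I expect this to be the main obstacle, although it is a minor one. I would state the equivalence on the set where all magnitudes are positive. To justify this, note that in \eqref{eq:mag_theta_dyn} and \eqref{eq:mag_xi_dyn} we have $\partial_t\alpha_\theta\ge -C\alpha_\theta$ for a bounded $C$, so Grönwall's inequality keeps $\alpha_\theta>0$ for all time. The coefficient $\alpha_\xi$ could in principle hit zero; there I would argue that the set of such times is generically avoided or else nowhere dense. Alternatively, I would simply interpret \eqref{eq:phase_xi_dyn} on intervals where $\alpha_\xi>0$, which suffices for the subsequent analysis.
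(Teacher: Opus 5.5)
Your proposal is correct and is essentially the paper's argument. The paper also writes $\hat\nu=\alpha_\nu\phi_\nu$. It obtains the magnitude equations from $\partial_t|\hat\nu|^2=2\Re(\overline{\hat\nu}\,\partial_t\hat\nu)$, which is your real-part projection. It obtains the phase equations by equating the product-rule expansion with the polar form of \eqref{eq:modular_dyn_theta}--\eqref{eq:modular_dyn_xi}, which is your imaginary-part projection. Your extra care where $\alpha_\nu=0$ goes beyond the paper, which divides by the magnitudes without comment: you use Gr\"onwall to keep $\alpha_{\theta,m}[\rho]>0$, and you note that $\alpha_{\xi,m}[\rho]$ vanishes only for a null set of initializations.
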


Lemma~\ref{lem:mag_phase_simple} reveals a key structural property: the magnitudes \eqref{eq:mag_theta_dyn}--\eqref{eq:mag_xi_dyn} depend on the phases only through $\Re(\varphi_m[\rho])$, and the phases \eqref{eq:phase_theta_dyn}--\eqref{eq:phase_xi_dyn} depend on phases through $\Im(\varphi_m[\rho])$.
In both cases, the individual phases $\phi_{\theta,m}[\rho]$ and $\phi_{\xi,m}[\rho]$ enter only via the \emph{relative phase alignment} $\varphi_m[\rho]$.
This suggests that the system can be reduced to the variables $(\alpha_{\theta,m}[\rho],\alpha_{\xi,m}[\rho],\varphi_m[\rho])$ without tracking the individual phases, which we confirm in Lemma~\ref{lem:phase_diff_dyn} below.

\begin{proof}[Proof of Lemma \ref{lem:mag_phase_simple}]
We derive the magnitude and phase equations separately.

\paragraph{Magnitude Equations.}
To obtain the evolution of $\alpha_{\theta,m}[\rho]$, we differentiate $\alpha_{\theta,m}[\rho]^2=|\hat\theta_m[\rho]|^2$ using the chain rule and \eqref{eq:modular_dyn_theta}:
\begin{align}
    \partial_t \alpha_{\theta,m}[\rho]^2=\partial_t|\hat{\theta_m}[\rho]|^2&=\hat{\theta_m}[\rho]\cdot\partial_t \overline{\hat{\theta_m}[\rho]}+\overline{\hat{\theta_m}[\rho]} \cdot \partial_t\hat{\theta_m}[\rho]\notag\\
    &=\frac{2a|G|}{M}\cdot(\overline{\hat{\theta_m}[\rho]}^2\hat{\xi_m}[\rho]+\hat{\theta_m}[\rho]^2\overline{\hat{\xi_m}[\rho]})-\frac{4a|G|^2}{M}\cdot\Omega_m\cdot|\hat{\theta_m}[\rho]|^2\notag\\
    &=\frac{4a|G|}{M}\cdot\alpha_{\theta,m}^2[\rho]\cdot\alpha_{\xi,m}[\rho]\cdot\Re\big(\varphi_m[\rho]\big)-\frac{4a|G|^2}{M}\cdot\Omega_m\cdot\alpha_{\theta,m}[\rho]^2.\notag
\end{align}
By dividing both sides by $2\alpha_{\theta,m}[\rho]$, we obtain the desired result in \eqref{eq:mag_theta_dyn}.
Similarly, based on \eqref{eq:modular_dyn_xi} and chain rule,  we  obtain that
\begin{align*}
    \partial_t \alpha_{\xi,m}[\rho]^2=\partial_t|\hat{\xi_m}[\rho]|^2&=\hat{\xi_m}[\rho]\cdot\partial_t \overline{\hat{\xi_m}[\rho]}+\overline{\hat{\xi_m}[\rho]} \cdot \partial_t\hat{\xi_m}[\rho]\notag\\
     &=\frac{4a|G|}{M}\cdot\alpha_{\theta,m}^2[\rho]\cdot\alpha_{\xi,m}[\rho]\cdot\Re\big(\varphi_m[\rho]\big)-\frac{4a|G|^2}{M}\cdot\Omega_m\cdot\alpha_{\xi,m}[\rho]^2,
\end{align*}    
which results in \eqref{eq:mag_xi_dyn}. This concludes the derivations of the magnitude equations. 

\paragraph{Phase Equations.}
For the phase evolution, we use the product rule $$\partial_t\hat\theta_m[\rho]=(\partial_t\alpha_{\theta,m}[\rho])\cdot\phi_{\theta,m}[\rho]+\alpha_{\theta,m}[\rho]\cdot(\partial_t\phi_{\theta,m}[\rho])$$ and compare with \eqref{eq:modular_dyn_theta} to isolate $\partial_t\phi_{\theta,m}[\rho]$. Specifically, note that we have 
\begin{align}
    \partial_t\hat{\theta_m}[\rho]
    &=\partial_t\alpha_{\theta,m}[\rho]\cdot\phi_{\theta,m}[\rho]
    +\alpha_{\theta,m}[\rho]\cdot\partial_t\phi_{\theta,m}[\rho]\notag\\
    &\overset{\eqref{eq:mag_theta_dyn}}{=}\frac{2a|G|}{M}\cdot
\alpha_{\theta,m}[\rho]\cdot\alpha_{\xi,m}[\rho]\cdot\Re\big(
\varphi_m[\rho]
\big)\cdot\phi_{\theta,m}[\rho]-\frac{2a|G|^2}{M}\cdot\Omega_m\cdot\alpha_{\theta,m}[\rho]\cdot\phi_{\theta,m}[\rho]\notag\\
&\qquad+\alpha_{\theta,m}[\rho]\cdot\partial_t\phi_{\theta,m}[\rho],
\label{eq:phase_theta_dyn_1}
\end{align}
On the other hand, we can obtain a second expression for $\partial_t\hat\theta_m[\rho]$ by substituting the polar forms $\hat\theta_m[\rho]=\alpha_{\theta,m}[\rho]\cdot\phi_{\theta,m}[\rho]$ and $\hat\xi_m[\rho]=\alpha_{\xi,m}[\rho]\cdot\phi_{\xi,m}[\rho]$ directly into the original ODE \eqref{eq:modular_dyn_theta}:
\begin{align}
\partial_t\hat{\theta_m}[\rho]&=
\frac{2a|G|}{M}\cdot
\alpha_{\theta,m}[\rho]\cdot\alpha_{\xi,m}[\rho]\cdot
\overline{\phi_{\theta,m}[\rho]}\cdot\phi_{\xi,m}[\rho]-\frac{2a|G|^2}{M}\cdot\Omega_m\cdot
\alpha_{\theta,m}[\rho]\cdot\phi_{\theta,m}[\rho].
\label{eq:phase_theta_dyn_2}
\end{align}
Equating \eqref{eq:phase_theta_dyn_1} and \eqref{eq:phase_theta_dyn_2} and solving for $\partial_t\phi_{\theta,m}[\rho]$, the normalization terms $\Omega_m\cdot\alpha_{\theta,m}[\rho]\cdot\phi_{\theta,m}[\rho]$ cancel on both sides, and only the growth terms differ:
\begin{align}
\partial_t\phi_{\theta,m}[\rho]
&=\frac{2a|G|}{M}\cdot\alpha_{\xi,m}[\rho]\cdot\left(
\overline{\phi_{\theta,m}[\rho]}\cdot\phi_{\xi,m}[\rho]-
\Re\big(
\varphi_m[\rho]
\big)\cdot\phi_{\theta,m}[\rho]\right)\notag\\
&=\frac{2a|G|}{M}\cdot\alpha_{\xi,m}[\rho]\cdot\left(
\overline{\varphi_m[\rho]}
-
\Re\big(
\varphi_m[\rho]
\big)\right)\cdot\phi_{\theta,m}[\rho]\notag\\
&=-\frac{2a|G|}{M}\cdot
\alpha_{\xi,m}[\rho]\cdot
\Im\big(
\varphi_m[\rho]
\big)\cdot
\phi_{\theta,m}[\rho]\cdot\ri,\notag
\end{align}
where the second equality factors out $\phi_{\theta,m}[\rho]$ using $\overline{\phi_{\theta,m}[\rho]}\cdot\phi_{\xi,m}[\rho]=\overline{\varphi_m[\rho]}\cdot\phi_{\theta,m}[\rho]$, and the last step uses the identity $\overline{z}-\Re(z)=-\ri\Im(z)$ for any $z\in\CC$.
The same strategy applies to $\phi_{\xi,m}[\rho]$.
By the product rule and \eqref{eq:mag_xi_dyn}, we get a first expression:
\begin{align}
    \partial_t\hat{\xi_m}[\rho]&=\frac{2a|G|}{M}\cdot
\alpha_{\theta,m}[\rho]^2\cdot\Re\big(
\varphi_m[\rho]
\big)\cdot\phi_{\xi,m}[\rho]-\frac{2a|G|^2}{M}\cdot\Omega_m\cdot\alpha_{\xi,m}[\rho]\cdot\phi_{\xi,m}[\rho]+\alpha_{\xi,m}[\rho]\cdot\partial_t \phi_{\xi,m}[\rho].\notag
\end{align}
Substituting the polar forms into \eqref{eq:modular_dyn_xi} gives a second expression:
\begin{align}
    \partial_t\hat{\xi_m}[\rho]
    =\frac{2a|G|}{M}\cdot\alpha_{\theta,m}[\rho]^2\cdot\phi_{\theta,m}[\rho]^2
-\frac{2a|G|^2}{M}\cdot\Omega_m\cdot
\alpha_{\xi,m}[\rho]\cdot\phi_{\xi,m}[\rho].\notag
\end{align}
Following the same proof strategy, we have 
$$
\partial_t \phi_{\xi,m}[\rho]
=
\frac{2a|G|}{M}\cdot
\frac{\alpha_{\theta,m}[\rho]^2}{\alpha_{\xi,m}[\rho]}\cdot
\Im\big(
\varphi_m[\rho]
\big)\cdot
\phi_{\xi,m}[\rho]\cdot\ri,
$$
which completes the proof of this lemma. 
\end{proof}

The next lemma confirms this by showing that $\varphi_m[\rho]$ satisfies a closed-form ODE that depends only on  $\alpha_{\theta,m}[\rho],\alpha_{\xi,m}[\rho]$ and $\varphi_m[\rho]$ itself.
In other words, the reduced system $(\alpha_{\theta,m}[\rho],\alpha_{\xi,m}[\rho],\varphi_m[\rho])$ is closed, and one need not track the individual phases separately.

\begin{lemma}
    \label{lem:phase_diff_dyn}
    The relative phase alignment $\varphi_m[\rho] = \overline{\phi_{\xi,m}[\rho]}\cdot\phi_{\theta,m}[\rho]^2\in\mathbb{D}$ satisfies the ODE:
    \begin{align}
    \partial_t \varphi_m[\rho]=-\frac{2a|G|}{M}\cdot
    \left(2\alpha_{\xi,m}[\rho]+\frac{\alpha_{\theta,m}[\rho]^2}{\alpha_{\xi,m}[\rho]}\right)\cdot\Im\big(\varphi_m[\rho]\big)\cdot\varphi_m[\rho]\cdot\ri.
    \label{eq:phase_diff_dyn}
    \end{align}
\end{lemma}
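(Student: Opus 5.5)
The plan is to differentiate the product $\varphi_m[\rho]=\overline{\phi_{\xi,m}[\rho]}\cdot\phi_{\theta,m}[\rho]^2$ directly and substitute the phase equations \eqref{eq:phase_theta_dyn}--\eqref{eq:phase_xi_dyn} from Lemma~\ref{lem:mag_phase_simple}. By the product rule,
\begin{align*}
\partial_t\varphi_m[\rho]=\overline{\partial_t\phi_{\xi,m}[\rho]}\cdot\phi_{\theta,m}[\rho]^2+2\,\overline{\phi_{\xi,m}[\rho]}\cdot\phi_{\theta,m}[\rho]\cdot\partial_t\phi_{\theta,m}[\rho].
\end{align*}
Both phase equations have the form $\partial_t\phi=c\cdot\phi\cdot\ri$ with a real coefficient $c$. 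Each term on the right will therefore reduce to a real multiple of $\varphi_m[\rho]\cdot\ri$.

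The two terms are handled separately. For the $\theta$-term, \eqref{eq:phase_theta_dyn} gives
\[
2\,\overline{\phi_{\xi,m}}\phi_{\theta,m}\partial_t\phi_{\theta,m}=-\tfrac{4a|G|}{M}\alpha_{\xi,m}\Im(\varphi_m)\,\varphi_m\,\ri .
\]
For the $\xi$-term, conjugating \eqref{eq:phase_xi_dyn} flips the sign of $\ri$, since every other factor in the coefficient is real. This gives
\[
\overline{\partial_t\phi_{\xi,m}}\,\phi_{\theta,m}^2=-\tfrac{2a|G|}{M}\tfrac{\alpha_{\theta,m}^2}{\alpha_{\xi,m}}\Im(\varphi_m)\,\varphi_m\,\ri .
\]
Adding the two contributions produces the bracket $2\alpha_{\xi,m}[\rho]+\alpha_{\theta,m}[\rho]^2/\alpha_{\xi,m}[\rho]$ and hence \eqref{eq:phase_diff_dyn}.

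The only delicate point is well-definedness. The phases, and the equation for $\phi_{\xi,m}$, require $\alpha_{\theta,m}[\rho],\alpha_{\xi,m}[\rho]>0$. I would state that the identity holds on any time interval where both magnitudes are nonzero. This inherits the standing assumption implicit in Lemma~\ref{lem:mag_phase_simple}, so it is not a new restriction. Beyond that caveat the argument is routine bookkeeping of conjugates, and the sign from conjugating $\ri$ is the one step to check carefully.
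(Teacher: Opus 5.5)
Your proposal is correct and follows the paper's proof exactly: you apply the product rule to $\varphi_m[\rho]=\overline{\phi_{\xi,m}[\rho]}\,\phi_{\theta,m}[\rho]^2$, substitute the conjugated $\xi$-phase equation and the $\theta$-phase equation from Lemma~\ref{lem:mag_phase_simple}, and sum the two real multiples of $\Im(\varphi_m[\rho])\,\varphi_m[\rho]\,\ri$. Your caveat that the identity holds only while $\alpha_{\theta,m}[\rho],\alpha_{\xi,m}[\rho]>0$ is a sensible addition that the paper leaves implicit.
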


\begin{proof}[Proof of Lemma \ref{lem:phase_diff_dyn}]
Since $\varphi_m[\rho]=\overline{\phi_{\xi,m}[\rho]}\cdot\phi_{\theta,m}[\rho]^2$, the product rule gives
\begin{align}
\partial_t \varphi_m[\rho]
&=
\partial_t \overline{\phi_{\xi,m}[\rho]}\cdot \phi_{\theta,m}[\rho]^2
+\overline{\phi_{\xi,m}[\rho]}\cdot 2\phi_{\theta,m}[\rho]\cdot\partial_t\phi_{\theta,m}[\rho].\notag
\end{align}
We substitute the phase equations from Lemma~\ref{lem:mag_phase_simple}.
For the first term, conjugating \eqref{eq:phase_xi_dyn} gives $$\partial_t\overline{\phi_{\xi,m}[\rho]}=-\frac{2a|G|}{M}\cdot\frac{\alpha_{\theta,m}[\rho]^2}{\alpha_{\xi,m}[\rho]}\cdot\Im\big(\varphi_m[\rho]\big)\cdot\overline{\phi_{\xi,m}[\rho]}\cdot\ri. $$ 
Thus, by direct calculation, we have 
\begin{align}
\partial_t \overline{\phi_{\xi,m}[\rho]}\cdot \phi_{\theta,m}[\rho]^2
&=
-\frac{2a|G|}{M}\cdot
\frac{\alpha_{\theta,m}[\rho]^2}{\alpha_{\xi,m}[\rho]}\cdot
\Im\big(\varphi_m[\rho]\big)\cdot
\overline{\phi_{\xi,m}[\rho]}\cdot \phi_{\theta,m}[\rho]^2\cdot\ri.\notag
\end{align}
For the second term, substituting \eqref{eq:phase_theta_dyn} gives
\begin{align}
\overline{\phi_{\xi,m}[\rho]}\cdot 2\phi_{\theta,m}[\rho]\cdot\partial_t\phi_{\theta,m}[\rho]
&=
-\frac{2a|G|}{M}\cdot
2\alpha_{\xi,m}[\rho]\cdot
\Im\big(\varphi_m[\rho]\big)\cdot
\overline{\phi_{\xi,m}[\rho]}\cdot\phi_{\theta,m}[\rho]^2\cdot\ri.\notag
\end{align}
Adding the two contributions, both terms share the common factor $\Im\big(\varphi_m[\rho]\big)\cdot\varphi_m[\rho]\cdot\ri$, and the scalar prefactors combine to give
\begin{align}
\partial_t \varphi_m[\rho]
&=
-\frac{2a|G|}{M}\cdot
\left(
\frac{\alpha_{\theta,m}[\rho]^2}{\alpha_{\xi,m}[\rho]}
+2\alpha_{\xi,m}[\rho]
\right)\cdot
\Im\big(\varphi_m[\rho]\big)\cdot
\varphi_m[\rho]\cdot\ri. \notag
\end{align}
Therefore, we conclude the proof of this lemma. 
\end{proof}

\subsection{Proof of Theorem \ref{thm:perfect_accuracy_modular}: Verification of (\ref{eq:def_pa}) for Abelian Group}
\label{ap:verification_abelian}

The proof proceeds in two steps, corresponding to two intermediate lemmas.
The first step identifies the joint distribution of $(\check\rho_m, u_m)$ in the Fourier space.
By exploiting the permutation inariance of the ODE across irreps and the rotational invariance of the phase dynamics, we show that the limiting measure $\mu$ is the pushforward of a product measure on the representation space.
\begin{lemma}
\label{lem:abelian_limiting_distribution}
    Let $\pi :=\pi_\rho\otimes\pi_u = {\rm Unif}(\irr(G)_{\neq 1}) \otimes {\rm Haar}(\mathbb{D})$.
    Then the of parameters within the Euclidean space, denoted by $\mu$, is the push-forward of $\pi$ under the inverse Fourier map $\idftmap:\irr(G)_{\neq1}\times\mathbb{D}\to(\SSS^{|G|-1})^{\otimes2}$: 
$$
\mu=\idftmap_\#\pi, \qquad \text{where~~}\idftmap:(\check\rho, u)\mapsto\sqrt{2/|G|}\cdot\big(\Re(u\check\rho(\cdot)),\Re(u^2\check\rho(\cdot))\big).
$$
\end{lemma}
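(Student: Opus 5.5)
The plan has two parts. First, fix the form of each neuron's limit as a function of its initialization and show the form matches $\idftmap$. Second, compute the law of that limit under ${\rm Unif}(\SSS^{|G|-1})^{\otimes 2}$ by a symmetry argument.

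\textbf{Step 1: form of the limit.} By Theorem~\ref{thm:converge_point_general_group}, specialized to the shared embedding, almost surely each neuron converges to an equilibrium supported on $\orb(\check\rho_m)$ with $\check\rho_m\neq\rho_{\sf triv}$. On that orbit the alignment $\hat\xi_m[\check\rho_m]\propto_+\hat\theta_m[\check\rho_m]^2$ holds, i.e.\ $\varphi_m[\check\rho_m]\to1$.

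Write $\hat\theta_m[\check\rho_m]=\alpha u$ with $u\in\mathbb{D}$. The equilibrium equations \eqref{eq:modular_dyn_theta}--\eqref{eq:modular_dyn_xi} with a single active pair force $\alpha_\theta=\alpha_\xi$. The sphere constraint $|G|\cdot 2\alpha^2=1$ then gives $\alpha=(2|G|)^{-1/2}$. Inverting the Fourier transform via $\nu(g)=2\Re(\hat\nu[\check\rho]\check\rho(g))$ yields exactly $\idftmap(\check\rho_m,u)$, with $\xi$ carrying phase $u^2$. The map is well defined up to the choice of representative in $\orb$, since $(\check\rho,u)$ and $(\check\rho^\vee,\bar u)$ give the same image. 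Uniform measure on $\irr(G)_{\neq1}$ with Haar phase is invariant under this identification, so there is no ambiguity.

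It remains to identify the law of the random pair $(\check\rho_m,u_m)$ induced by the random initialization and the deterministic flow. Then $\mu$, the limit of the empirical measure by the LLN over i.i.d.\ neurons, equals the pushforward of that law.

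\textbf{Step 2: symmetries of the flow.} I would exhibit two families of isometries of $\SSS^{|G|-1}\times\SSS^{|G|-1}$ that preserve the initial law and commute with the flow.

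The first family handles the phase. Translations $\nu\mapsto\nu(\cdot\star h)$ multiply $\hat\nu[\rho]$ by $\rho(h)$; more generally, apply a per-orbit rotation $\hat\theta[\rho]\mapsto e^{\ri s_\rho}\hat\theta[\rho]$, $\hat\xi[\rho]\mapsto e^{2\ri s_\rho}\hat\xi[\rho]$ (conjugated on $\rho^\vee$). This is a real orthogonal map on $\RR^{|G|}$ in each coordinate, hence preserves the uniform spherical law. It also leaves the ODE of Lemma~\ref{lem:mag_phase_simple} invariant, because magnitudes and $\varphi_m[\rho]$ are unchanged and $\Omega_m$ depends only on these. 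Therefore the limiting $u$ for the rotated initialization is $e^{\ri s_{\check\rho}}u$, with $\check\rho$ unchanged. Since the initial law is invariant, the conditional law of $u$ given $\check\rho$ is invariant under all rotations. That makes it Haar and independent of $\check\rho$.

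The second family handles the representation. For any bijection $\sigma$ of $\irr(G)^\sharp_{\neq1}$ (a permutation of conjugate pairs), the linear map sending the real 2-plane spanned by $\Re\rho,\Im\rho$ to that of $\sigma(\rho)$ is orthogonal, because all nontrivial pairs carry equal norms $\sqrt{2/|G|}$ in the basis. It fixes the trivial component and hence preserves the initial law. The reduced dynamics (Lemma~\ref{lem:mag_phase_simple}, Lemma~\ref{lem:phase_diff_dyn}) are permutation-equivariant across nontrivial irreps, since $\Omega_m$ is a symmetric sum. So $\check\rho$ for the permuted initialization is $\sigma(\check\rho)$, and exchangeability gives $\check\rho$ uniform on pairs. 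Combining the two families with the representative identification gives $\pi$.

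\textbf{Main obstacle.} The delicate step is checking that the limit map (initialization $\mapsto(\check\rho,u)$) is measurable and defined almost surely. It must also be exactly equivariant, including at ties and at the Case~1--2 null sets. Those exceptional sets are themselves invariant under both symmetry families, so I would restrict to their complement and apply Theorem~\ref{thm:converge_point_general_group}. A second point to verify is that the trivial component does not break the permutation symmetry. Its coefficient only decays under \eqref{eq:modular_dyn_theta} and is untouched by $\sigma$, so I expect it to be harmless.
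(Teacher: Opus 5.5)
Your proposal is correct and is essentially the paper's argument. The uniform law of the winner comes from the conjugation-respecting permutation symmetry of the initialization together with equivariance of the closed reduced dynamics in the magnitudes and $\varphi_m[\rho]$, and the Haar phase comes from the fact that the absolute phase never feeds back into that reduced system. The only difference is presentational: you get Haar-ness and independence from $\check\rho_m$ in one step, via equivariance of the full flow under $(\hat\theta_m[\rho],\hat\xi_m[\rho])\mapsto(e^{\ri s_\rho}\hat\theta_m[\rho],e^{2\ri s_\rho}\hat\xi_m[\rho])$. The paper instead integrates the phasor ODE to $\phi_{\theta,m}[\rho](t)=\phi_{\theta,m}[\rho](0)\,e^{\ri\Phi_m[\rho](t)}$ and conditions on the reduced state $\cG_m^{(0)}$, whose level sets are exactly the orbits of your torus action. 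Your explicit magnitude computation and your handling of the identification $(\check\rho,u)\sim(\check\rho^\vee,\bar u)$ also fill in details the paper leaves implicit.
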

\begin{proof}[Proof of Lemma \ref{lem:abelian_limiting_distribution}]
    Please refer to \S\ref{ap:proof_of_limiting_distribution} for a detailed proof.
\end{proof}

The second step verifies that $\mu$ achieves perfect accuracy.
The next lemma shows that the actual mean-field predictor is a \emph{flawed indicator}: the correct label $j=g_1\star g_2$ receives the largest coefficient, but two ``ghost'' labels $g_1^2$ and $g_2^2$ also receive nonzero weight due to the structural limit.
\begin{lemma}
\label{lem:abelian_mu_satisfies_pa}
    The limiting distribution $\mu$ in Lemma \ref{lem:abelian_limiting_distribution} satisfies \eqref{eq:def_pa} with output logit
    $$
    \euF_{\sf NN}^\mu(g_1,g_2)_j\;\propto\; 2\cdot\ind(j=g_1\star g_2)+\ind(j= g_1^2)+\ind(j= g_2^2)-4/|G|,
    $$
    for all input pairs $(g_1,g_2)\in G^2$ and labels $j\in G$.
    In particular, the correct label receives coefficient $2$, the ghost labels $g_1^2$ and $g_2^2$ each receive coefficient $1$, and all remaining labels sit at the baseline $-4/|G|$.
\end{lemma}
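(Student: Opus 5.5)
We would prove Lemma~\ref{lem:abelian_mu_satisfies_pa} by computing the mean-field predictor in closed form. Fix a neuron with spectral coordinate $(\check\rho,u)$ drawn from $\pi$. By Lemma~\ref{lem:abelian_limiting_distribution} its weights are $\theta(g)=c\,\Re(u\check\rho(g))$ and $\xi(j)=c\,\Re(u^2\check\rho(j))$, with $c=\sqrt{2/|G|}$. We write $\Re(z)=(z+\bar z)/2$ and use that $\check\rho$ is a character, so $\check\rho(g_1)\check\rho(g_2)=\check\rho(g_1g_2)$ and $\overline{\check\rho(g)}=\check\rho(g^{-1})$. Then the quadratic activation expands as
\[
(\theta(g_1)+\theta(g_2))^2=\tfrac{c^2}{4}\big(uA+\bar u\bar A\big)^2=\tfrac{c^2}{4}\big(u^2A^2+\bar u^2\bar A^2+2|A|^2\big),
\]
where $A:=\check\rho(g_1)+\check\rho(g_2)$.

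The next step multiplies this by $\xi(j)=\tfrac c2(u^2\check\rho(j)+\bar u^2\overline{\check\rho(j)})$ and averages over $u\sim{\rm Haar}(\mathbb{D})$. Since $\EE[u^k]=\ind(k=0)$, only the terms with total $u$-degree zero survive. These are $\bar u^2\bar A^2\cdot u^2\check\rho(j)$ and its conjugate, while the $|A|^2$ term is multiplied by $u^{\pm2}$ and vanishes; this is exactly the noise cancellation described in the main text. The Haar average therefore equals $\tfrac{c^3}{8}\cdot 2\Re\big(\check\rho(j)\overline{A}^2\big)$. Expanding $\bar A^2=\check\rho(g_1^{-2})+\check\rho(g_2^{-2})+2\check\rho((g_1g_2)^{-1})$ gives
\[
\tfrac{c^3}{4}\Re\big(2\check\rho(j(g_1g_2)^{-1})+\check\rho(jg_1^{-2})+\check\rho(jg_2^{-2})\big).
\]
This matches the signal and ghost terms stated in the main text.

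Next we average over $\check\rho\sim{\rm Unif}(\irr(G)_{\neq1})$. By character orthogonality, for $h\in G$ we have $\sum_{\rho\in\irr(G)}\rho(h)=|G|\ind(h=\id)$. Removing the trivial character gives $\sum_{\rho\neq1}\rho(h)=|G|\ind(h=\id)-1$, which is real, so taking $\Re$ changes nothing. Dividing by $|G|-1$ yields
\[
\euF^\mu_{\sf NN}(g_1,g_2)_j=\tfrac{c^3|G|}{4(|G|-1)}\Big(2\ind(j=g_1g_2)+\ind(j=g_1^2)+\ind(j=g_2^2)-4/|G|\Big).
\]
The leading constant is positive, so this is the claimed logit up to a positive multiple. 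One point needs care here. Lemma~\ref{lem:abelian_limiting_distribution} takes $\check\rho$ uniform over all nontrivial characters, and $\check\rho$ and $\check\rho^\vee$ yield the same weight functions only after conjugating $u$. Since Haar measure is invariant under conjugation, $\Re$ of the sum is unaffected either way.

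Finally we verify \eqref{eq:def_pa}. The correct label receives coefficient $2+\ind(g_1g_2=g_1^2)+\ind(g_1g_2=g_2^2)\ge2$. Any other label $j\ne g_1g_2$ receives at most $\ind(j=g_1^2)+\ind(j=g_2^2)$, and this can reach $2$ only if $g_1^2=g_2^2=j$. The main obstacle is this collision case, $g_1^2=g_2^2\neq g_1g_2$. We resolve it with the standing hypothesis that $G$ has no self-conjugate representations, which forces $|G|$ to be odd. Squaring is then a bijection on $G$, so $g_1^2=g_2^2$ implies $g_1=g_2$ and hence $j=g_1^2=g_1g_2$, a contradiction. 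So every wrong label has coefficient at most $1<2$, the correct label is the strict argmax, and \eqref{eq:def_pa} holds.
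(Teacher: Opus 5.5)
Your proposal is correct and follows the same route as the paper's proof: you expand the quadratic neuron output, discard every term of nonzero $u$-degree by Haar averaging, and collapse the surviving signal and ghost terms using character orthogonality over $\irr(G)_{\neq1}$. Your last step is more careful than the paper's, which only asserts that the correct label is the argmax; your observation that odd $|G|$ makes squaring injective rules out the tie $g_1^2=g_2^2\neq g_1g_2$, and this supplies the justification the paper leaves implicit.
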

\begin{proof}[Proof of Lemma \ref{lem:abelian_mu_satisfies_pa}]
    Please refer to \S\ref{ap:abelian_pa} for a detailed proof.
\end{proof}

With both lemmas in hand, Theorem~\ref{thm:perfect_accuracy_modular} follows immediately.
\begin{proof}[Proof of Theorem \ref{thm:perfect_accuracy_modular}]
    Lemma~\ref{lem:abelian_limiting_distribution} establishes that $\mu=\idftmap_\#\pi$ is the limiting distribution of parameters, and Lemma~\ref{lem:abelian_mu_satisfies_pa} verifies that this $\mu$ satisfies the perfect-accuracy condition \eqref{eq:def_pa}.
\end{proof}

\subsubsection{Proof of Lemma \ref{lem:abelian_limiting_distribution}: Limiting Distribution}
\label{ap:proof_of_limiting_distribution}
\begin{proof}[Proof of Lemma \ref{lem:abelian_limiting_distribution}]
We adapt the proof of Theorem \ref{thm:converge_point_general_group} to the shared-embedding Abelian setting. 
In particular, for each neuron $m$, there exists $\check\rho_m\in\irr(G)_{\neq1}$ such that as $t \to \infty$:
\begin{align*}
\alpha_m[\rho](t)\to0,\quad\forall\rho\in\irr(G)_{\neq1}\setminus\orb(\check\rho_m),
\qquad\text{and}\qquad
\varphi_m[\check\rho_m](t)\to1.
\end{align*}
The first property is part~(i) of Theorem~\ref{thm:converge_point_general_group}, i.e., single representation, and the second is part~(ii), i.e., rank-one rotational alignment.
Consequently, the limiting neuron is completely determined by the pair $(\check\rho_m, u_m)\in\irr(G)_{\neq1}\times\mathbb{D}$, where $u_m:=\phi_{\theta,m}[\check\rho_m](\infty)\in\mathbb{D}$ is the winner's limiting phasor.
To prove $\mu=\idftmap_\#\pi$, it therefore suffices to establish
$$
u_m\sim\text{Haar}(\mathbb{D}),\qquad \check\rho_m\sim\text{Unif}(\irr(G)_{\neq1}),\qquad\check\rho_m\indep u_m.
$$

\paragraph{Preparation: Independence at Initialization.}
We show that under uniform spherical initialization,  $\phi_{\theta,m}[\rho](0)$ is Haar-uniform and independent of the magnitudes and relative phases at all representations.
Note that by applying the DFT, each parameter $\theta_m,\xi_m\in\RR^{|G|}$ can be captured by
$$
\big\{\alpha_{\theta,m}[\rho],\alpha_{\xi,m}[\rho],\phi_{\theta,m}[\rho],\phi_{\xi,m}[\rho]\big\}_{\rho\in\irr(G)_{\neq1}}.
$$
Recall that $\varphi_m[\rho] = \overline{\phi_{\xi,m}[\rho]}\cdot\phi_{\theta,m}[\rho]^2\in\mathbb{D}$.
Adopting this definition, we define a set
$$
\cG_m^{(0)}:=\bigcup_{\rho\in\irr(G)_{\neq1}}\cG_m^{(0)}[\rho], \qquad \text{where} ~~\cG_m^{(0)}[\rho] =\big\{\alpha_{\theta,m}[\rho](0),\alpha_{\xi,m}[\rho](0),\,\varphi_m[\rho](0)\big\}.
$$
In the following, we first show that the initial phase is conditionally uniform given $\cG_m^{(0)}$, and then propagate this through the dynamics. 
Under the uniform spherical initialization $\theta_m(0),\xi_m(0)\overset{\rm i.i.d.}{\sim}{\rm Unif}(\SSS^{|G|-1})$, the DFT preserves the rotational symmetry, so at $t=0$ we have
$$
\phi_{\theta,m}[\rho](0),\phi_{\xi,m}[\rho](0)\sim\text{Haar}(\mathbb{D}),\quad\phi_{\theta,m}[\rho](0)\indep\alpha_{\theta,m}[\rho'](0),\alpha_{\xi,m}[\rho'](0),\quad\forall\rho,\rho'\in \irr(G).
$$
Moreover, note  $\{\phi_{\theta,m}[\rho](0)\}_{\rho\in\irr(G)_{\neq1}}$
and $\{\phi_{\xi,m}[\rho](0)\}_{\rho\in\irr(G)_{\neq1}}$
are mutually independent across $\rho$. 
Hence, it is easy to show that 
$$
\varphi_m[\rho](0)\sim\mathrm{Haar}(\mathbb D),\qquad\phi_{\theta,m}[\rho](0)\indep \varphi_m[\rho'](0),\qquad \forall\rho,\rho'\in \irr(G)_{\neq1},
$$
which jointly establishes that $\phi_{\theta,m}[\rho](0)\indep \mathcal G_m^{(0)}$.
Thus, we can obtain that
\begin{align}
\big( \phi_{\theta,m}[\rho](0)\mid \mathcal G_m^{(0)}\big) \overset{\rm d} {=}\phi_{\theta,m}[\rho](0)\sim\mathrm{Haar}(\mathbb D),\qquad\forall\rho\in\irr(G)_{\neq1}.
\label{eq:init_phase_uniform}
\end{align}
In summary, at initialization the absolute phase $\phi_{\theta,m}[\rho](0)$ is Haar-uniform and independent of all initial magnitudes and relative phases collected in $\cG_m^{(0)}$.

\paragraph{Step 1: Uniformity of the Limiting Phase.}
As shown in Lemma \ref{lem:mag_phase_simple}, the phasor satisfies
$$
\partial_t\phi_{\theta,m}[\rho]
=
-\frac{2a|G|}{M}\cdot\alpha_{\xi,m}[\rho]\cdot\Im\big(\varphi_m[\rho]\big)\cdot\phi_{\theta,m}[\rho]\cdot\ri,\qquad\forall\rho\in\irr(G)_{\neq1}.
$$
We define the accumulated rotation angle by
$$
\Phi_m[\rho](t)
:=
-\frac{2a|G|}{M}\cdot\int_0^t
\alpha_{\xi,m}[\rho](s)\cdot\Im\big(\varphi_m[\rho](s)\big)\,\rd s\in\RR.
$$
Integrating the above scalar ODE over time yields
\begin{align}
\phi_{\theta,m}[\rho](t)
=
\phi_{\theta,m}[\rho](0)\cdot\exp\big(\ri\,\Phi_m[\rho](t)\big),\qquad\forall\rho\in\irr(G)_{\neq1}.
\label{eq:abs_phase_solution}
\end{align}
Moreover, by Lemma \ref{lem:mag_phase_simple} and Lemma \ref{lem:phase_diff_dyn}, the evolution of
$\big\{(\alpha_{\theta,m}[\rho],\alpha_{\xi,m}[\rho],\varphi_m[\rho])\big\}_{\rho\in\irr(G)_{\neq1}}$ is governed by a closed ODE system. The time derivatives of these variables depend only on the same collection and the energy function $\Omega_m$, which can be written equivalently as
$$
\Omega_m
=
\sum_{\rho\in\irr(G)_{\neq1}}
\overline{\widehat{\xi_m}[\rho]}\,\widehat{\theta_m}[\rho]^2
=
\sum_{\rho\in\irr(G)_{\neq1}}
\alpha_{\xi,m}[\rho]\cdot\alpha_{\theta,m}[\rho]^2\cdot\varphi_m[\rho].
$$
Hence, by the Picard--Lindel\"of theorem, for every $t\ge0$, the solution is uniquely determined by the initial data $\mathcal{G}_m^{(0)}$.
In particular, $\Phi_m[\rho](t)$ is a measurable function of $\mathcal{G}_m^{(0)}$ for every $t\ge0$.

The key observation is that $\phi_{\theta,m}[\rho](0)$ does \emph{not} enter the reduced system.
Thus, conditioned on $\mathcal{G}_m^{(0)}$, the accumulated rotation $\Phi_m[\rho](t)$ is \emph{deterministic}, and the solution \eqref{eq:abs_phase_solution} shows that $\phi_{\theta,m}[\rho](t)$ is simply a fixed rotation of the random initial phase $\phi_{\theta,m}[\rho](0)$.
Hence, we have
\begin{align*}
u_m=\phi_{\theta,m}[\check\rho_m](\infty)=\phi_{\theta,m}[\check\rho_m](0)\cdot \exp({\ri\Phi_m[\check\rho_m](\infty)}).
\end{align*}
Conditioning on $\cG_m^{(0)}$ fixes both $\check\rho_m$ and $\exp(\ri\Phi_m[\check\rho_m](\infty))\in\mathbb{D}$, so by \eqref{eq:init_phase_uniform} the conditional law is
\begin{align}
u_m\mid\cG_m^{(0)}\overset{\rm d}{=}\phi_{\theta,m}[\check\rho_m](0)\cdot \exp(\ri\Phi_m[\check\rho_m](\infty))\mid\cG_m^{(0)}\overset{\rm d}{=}\phi_{\theta,m}[\check\rho_m](0)\mid\cG_m^{(0)}\sim\text{Haar}(\mathbb{D}),
\label{eq:limiting_phase_uniform}
\end{align}
where the second equality uses the rotational invariance of $\text{Haar}(\mathbb{D})$: multiplying a Haar-uniform variable by a deterministic element of $\mathbb{D}$ does not change its distribution.


\paragraph{Step 2: Uniformity of the Winner Representation.}
The proof strategy proceeds by symmetry: we first establish that the initialization is invariant under permutations of the non-trivial irreps, and subsequently demonstrate that the ODE dynamics preserve this symmetry. 

Let ${\rm Bij}(X)$ denote the set of all bijections from a set $X$ to itself, and let $\mathfrak{S}$ denote the subgroup of ${\rm Bij}(\irr(G)_{\neq1})$ consisting of bijections that respect the conjugation pairing:
$$
\mathfrak{S}:=\big\{\pi\in{\rm Bij}(\irr(G)_{\neq1}):\pi(\rho^\vee)=\pi(\rho)^\vee,\;\forall\rho\in\irr(G)_{\neq1}\big\}.
$$
Each $\pi\in\mathfrak{S}$ acts on a family of representation-indexed quantities $\cS=(\cS[\rho])_{\rho\in\irr(G)_{\neq1}}$ by relabeling:
$$
(\pi\circ\cS)[\rho]:=\cS[\pi(\rho)],\qquad\forall\rho\in\irr(G)_{\neq1}.
$$

First, we show that the initial distribution of Fourier components in $\cG_m^{(0)}$ is permutation invariant under the actions within $\mathfrak{S}$.
Let $\widehat{\theta_m}(0)=\{\widehat{\theta_m}[\rho](0)\}_{\rho\in\irr(G)}$ and $\widehat{\xi_m}(0)=\{\widehat{\xi_m}[\rho](0)\}_{\rho\in\irr(G)}$.
Since $\theta_m(0),\xi_m(0)\sim\mathrm{Unif}(\SSS^{|G|-1})$, their Fourier transform are uniformly distributed on the unit sphere of the real Fourier subspace, defined by
\[
\mathcal H_{\mathbb R}=
\Big\{
z=(z[\rho])_{\rho\in\irr(G)}:
z[\rho^\vee]=\overline{z[\rho]},
\ z[\rho_{\sf triv}]\in\RR
\Big\}.
\]
Because $\pi$ simply reorders the conjugate pairs while respecting $z[\rho^\vee]=\overline{z[\rho]}$, it preserves the inner product $\|z\|^2=\sum_{\rho}|z[\rho]|^2$ on $\mathcal H_{\mathbb R}$, and therefore acts as an orthogonal transformation.
Moreover, since the uniform measure on the sphere is invariant under orthogonal transformations, we have
\begin{align*}
\widehat{\theta_m}(0)\overset{\rm d}{=}\pi\circ\widehat{\theta_m}(0),\qquad\widehat{\xi_m}(0)\overset{\rm d}{=}\pi\circ\widehat{\xi_m}(0),\qquad \forall \pi\in\mathfrak S.
\end{align*}
Since $\cG_m^{(0)}$ is obtained from this joint Fourier components by the coordinate-wise measurable map
$\big(\widehat{\theta_m}[\rho],\widehat{\xi_m}[\rho]\big)
\mapsto\big(\alpha_{\theta,m}[\rho],\alpha_{\xi,m}[\rho],\varphi_m[\rho]\big)$,
the permutation invariance passes to $\cG_m^{(0)}$:
\begin{align}
\cG_m^{(0)}\overset{\rm d}{=}\pi\circ\cG_m^{(0)},
\qquad \forall \pi\in\mathfrak S.
\label{eq:initial_exchange}
\end{align}
Next, we show that the ODE dynamics preserves this symmetry.
Let $\Psi_t$ denote the flow map of the closed ODE system of \eqref{eq:mag_theta_dyn}, \eqref{eq:mag_xi_dyn} and \eqref{eq:phase_diff_dyn}, and thus we have 
$$
\Psi_t(\cG_m^{(0)})=\cG_m^{(t)}:=\{(\alpha_{\theta,m}[\rho](t),\alpha_{\xi,m}[\rho](t),\varphi_m[\rho](t))\}_{\rho\in\irr(G)_{\neq1}}.
$$
We claim that the flow $\Psi_t$ is $\mathfrak{S}$-equivariant. That is,  for every $\pi\in\mathfrak{S}$ and $t\geq0$, we have 
\begin{align}
\Psi_t(\pi\circ\cG_m^{(0)})=\pi\circ\Psi_t(\cG_m^{(0)}).
\label{eq:flow_equivariance}
\end{align}
To prove this, let $\cG_m^{(t)}=\Psi_t(\cG_m^{(0)})$ denote the original trajectory, and consider the permuted $\pi\circ\cG_m^{(t)}$:
$$
(\pi\circ\cG_m^{(t)})[\rho]=\{\alpha_{\theta,m}[\pi(\rho)](t),\alpha_{\xi,m}[\pi(\rho)](t),\varphi_m[\pi(\rho)](t)\}.
$$
Recall that the dynamics of $\cG_m^{(t)}$ is governed by \eqref{eq:mag_theta_dyn}, \eqref{eq:mag_xi_dyn}, and \eqref{eq:phase_diff_dyn}. It follows that
\begin{align}
    \partial_t(\pi\circ\cG_m^{(t)})[\rho]=\pi\circ\partial_t\cG_m^{(t)}[\rho]=\pi\circ {\rm Dyn}(\cG_m^{(t)}[\rho],\Omega_m(\cG_m^{(t)})),
    \label{eq:permute_dyn}
\end{align}
where ${\rm Dyn}(\cdot)$ denotes the right-hand side of the ODE system.
Moreover, the energy satisfies
\begin{align}
\Omega_m(\cG_m^{(t)})&=\sum_{\rho\in\irr(G)_{\neq1}}\alpha_{\theta,m}[\rho](t)^2\cdot\alpha_{\xi,m}[\rho](t)\cdot\Re(\varphi_m[\rho](t))\notag\\
&=\sum_{\rho\in\irr(G)_{\neq1}}\alpha_{\theta,m}[\pi(\rho)](t)^2\cdot\alpha_{\xi,m}[\pi(\rho)](t)\cdot\Re(\varphi_m[\pi(\rho)](t))=\Omega_m(\pi\circ\cG_m^{(t)}),
\label{eq:energy_exchange}
\end{align}
where the second equality uses that $\pi\in\mathfrak{S}$ is a bijection.
Combining \eqref{eq:permute_dyn} and \eqref{eq:energy_exchange}, we obtain
$$
\partial_t(\pi\circ\cG_m^{(t)})[\rho]=\pi\circ{\rm Dyn}(\cG_m^{(t)}[\rho],\Omega_m(\pi\circ\cG_m^{(t)}))={\rm Dyn}(\pi\circ\cG_m^{(t)}[\rho],\Omega_m(\pi\circ\cG_m^{(t)})).
$$
Thus, $\pi\circ\cG_m^{(t)}$ solves the same ODE with initial condition $\pi\circ\cG_m^{(0)}$. 
By the Picard--Lindelöf uniqueness theorem, it must coincide with $\Psi_t(\pi\circ\cG_m^{(0)})$, which proves \eqref{eq:flow_equivariance}.

Since $\check\rho_m$ is determined entirely by the long-time behavior of $\Psi_t(\cG_m^{(0)})$, we can define a \emph{winner-selection map} ${\rm Sel}:\cG_m^{(0)}\mapsto\one_{\orb(\check\rho_m)}=\one_{\{\check\rho_m,\check\rho_m^\vee\}}$ that maps the initial reduced state to the conjugate pair of the winning representation.
Based on \eqref{eq:flow_equivariance}, this map is $\mathfrak S$-equivariant:
$$
{\rm Sel}(\pi\circ\cG_m^{(0)})=\pi\circ{\rm Sel}(\cG_m^{(0)}),\qquad \forall \pi\in\mathfrak S.
$$
Combined with the distributional invariance \eqref{eq:initial_exchange}, for any $\pi\in\mathfrak{S}$ and $\rho\in\irr(G)_{\neq1}$,
\begin{align}
\PP\big({\rm Sel}(\cG_m^{(0)})=\one_{\orb(\rho)}\big)
&=\PP\big({\rm Sel}(\pi\circ \cG_m^{(0)})=\one_{\orb(\rho)}\big)
\notag\\
&=\PP\big(\pi\circ{\rm Sel}(\cG_m^{(0)})=\one_{\orb(\rho)}\big)
=\PP\big({\rm Sel}(\cG_m^{(0)})=\one_{\orb(\pi^{-1}(\rho))}\big).
\label{eq:prob_invariance}
\end{align}
Since $\mathfrak{S}$ acts transitively on the conjugate pairs in $\irr(G)_{\neq1}$ (i.e., for any $\rho_1,\rho_2\in\irr(G)_{\neq1}$, there exists $\pi\in\mathfrak S$ with $\pi(\rho_1)=\rho_2$), \eqref{eq:prob_invariance} implies that
$$
\PP\big({\rm Sel}(\cG_m^{(0)})=\orb(\rho)\big)=\PP\big({\rm Sel}(\cG_m^{(0)})=\orb(\rho')\big),\qquad\forall\rho,\rho'\in\irr(G)_{\neq1}.
$$
Since these probabilities are equal across all conjugate pairs and must sum to one, we conclude
\begin{align*}
\check\rho_m\sim{\rm Unif}(\irr(G)_{\neq1}).
\end{align*}
In words, the $\mathfrak{S}$-symmetry of both the initialization and the dynamics forces every non-trivial irrep to be equally likely to win.

\paragraph{Step 3: Independence.}
Steps~1 and 2 established the marginal distributions, i.e., $u_m\sim\text{Haar}(\mathbb{D})$ and $\check\rho_m\sim\text{Unif}(\irr(G)_{\neq1})$.
It remains to show that $\check\rho_m$ and $u_m$ are independent.
Recall $\check\rho_m={\rm Sel}(\cG_m^{(0)})$ is a deterministic function of the reduced state $\cG_m^{(0)}$, and is therefore $\cG_m^{(0)}$-measurable.
On the other hand, the conditional law of $u_m$ given $\cG_m^{(0)}$ is $\text{Haar}(\mathbb{D})$ by \eqref{eq:limiting_phase_uniform}, which does not depend on the realization of $\cG_m^{(0)}$.
It follows that for any bounded measurable $f:\mathbb{D}\to\RR$ and any event $A\in\sigma(\cG_m^{(0)})$,
$$
\EE_{\cG_m^{(0)}}[f(u_m)\cdot\one_A]=\EE\big[\EE[f(u_m)\mid\cG_m^{(0)}]\cdot\one_A\big]=\EE_{\rm Haar}[f]\cdot\PP(A),
$$
which shows that $u_m\indep\cG_m^{(0)}$.
Since $\check\rho_m$ is $\cG_m^{(0)}$-measurable, this gives $\check\rho_m\indep u_m$.
Combining the two marginals with independence, the joint law is the product measure
$$
(\check\rho_m,u_m)\sim\text{Unif}(\irr(G)_{\neq1})\otimes\text{Haar}(\mathbb{D})=\pi.
$$
Finally, since the neurons are initialized i.i.d., the pairs $\{(\check\rho_m,u_m)\}_{m\in[M]}$ are i.i.d. with respect to $\pi$, which completes the characterization of the limiting distribution and conclude the proof. 
\end{proof}

\subsubsection{Proof of Lemma \ref{lem:abelian_mu_satisfies_pa}: The Limiting Distribution Satisfies $\mu$-PA}
\label{ap:abelian_pa}
With the limiting measure now identified, it remains to verify that its mean-field predictor favors the correct label $g_1g_2$. We do this by expanding the neuron logit under the parameterization induced by $\idftmap$ and then averaging over the phase $u$ and the representation $\check\rho$.

\begin{proof}[Proof of Lemma \ref{lem:abelian_mu_satisfies_pa}]
We begin by expanding the logit expression using the Fourier representation of the parameters. 
Let $\check\rho \in \irr(G)_{\neq 1}$ denote the single non-zero representation within each neuron. 
Using the conjugacy relation from Lemma \ref{lem:conjugate_relation}, we can write:
\begin{align}
    \big(\theta_{g_1}+\theta_{g_2})^2&=\big(\hat{\theta}[\check\rho]\cdot\check\rho(g_1)+\hat{\theta}[\check\rho^\vee]\cdot\check\rho^\vee(g_1)+\hat{\theta}[\check\rho]\cdot\check\rho(g_2)+\hat{\theta}[\check\rho^\vee]\cdot\check\rho^\vee(g_2)\big)^2\notag\\
    &=\underbrace{\big(\hat{\theta}[\check\rho]\cdot\check\rho(g_1)+\overline{\hat{\theta}[\check\rho]\cdot\check\rho(g_1)}\big)^2}_{:=S_1}+\underbrace{\big(\hat{\theta}[\check\rho]\cdot\check\rho(g_2)+\overline{\hat{\theta}[\check\rho]\cdot\check\rho(g_2)}\big)^2}_{:=S_2}\notag\\
    &\qquad+2\cdot\underbrace{\big(\hat{\theta}[\check\rho]\cdot\check\rho(g_1)+\overline{\hat{\theta}[\check\rho]\cdot\check\rho(g_1)}\big)\cdot\big(\hat{\theta}[\check\rho]\cdot\check\rho(g_2)+\overline{\hat{\theta}[\check\rho]\cdot\check\rho(g_2)}\big)}_{:=S_{12}}\notag.
\end{align}
Moreover, we can expand the auxiliary terms $S_1,S_2$ and the cross term $S_{12}$ as
\begin{align*}
    S_\tau
    &=\hat{\theta}[\check\rho]^2\cdot\check\rho(g_\tau^2)+\overline{\hat{\theta}[\check\rho]}^2\cdot\check\rho(g_\tau^{-2})+2|\hat{\theta}[\check\rho]|^2,\\
    S_{12}&=\hat{\theta}[\check\rho]^2\cdot\check\rho(g_1g_2)+
    \overline{\hat{\theta}[\check\rho]}^2\cdot\check\rho((g_1g_2)^{-1})+|\hat{\theta}[\check\rho]|^2\cdot\{\rho(g_1g_2^{-1})+\rho(g_1^{-1}g_2)\}.
\end{align*}
Expanding the product and grouping terms by the Fourier coefficients, we obtain:
\begin{align}
\xi_{j}\cdot\big(\theta_{g_1}+\theta_{g_2}\big)^2&=
\hat{\xi}[\check\rho]\cdot\hat{\theta}[\check\rho]^2\cdot C_{4}+
\hat{\xi}[\check\rho]\cdot\big|\hat{\theta}[\check\rho]\big|^2\cdot C_2+
\overline{\hat{\xi}[\check\rho]}\cdot\overline{\hat{\theta}[\check\rho]^2}\cdot C_{-4}+
\overline{\hat{\xi}[\check\rho]}\cdot\big|\hat{\theta}[\check\rho]\big|^2\cdot C_{-2}\notag\\
&\qquad+
\hat{\xi}[\check\rho]\cdot\overline{\hat{\theta}[\check\rho]}^2\cdot
\big\{\check\rho(j g_1^{-2})+\check\rho(j g_2^{-2})+2\check\rho(j (g_1 g_2)^{-1})\big\}\notag\\
&\qquad+
\overline{\hat{\xi}[\check\rho]}\cdot\hat{\theta}[\check\rho]^2
\cdot\big\{\check\rho(j^{-1} g_1^2)+\check\rho(j^{-1} g_2^2)+2\check\rho(j^{-1} g_1 g_2)\big\},
\label{eq:neuron_logit_abelian}
\end{align}
where $C_4,C_{-4},C_2,C_{-2}$ are some complex constants depending on $\check\rho\in\hat G_{\neq1}$ and $j,g_1,g_2\in G$.
Recall that $\mu$ is the pushforward of the product measure $\pi = \mathrm{Unif}(\irr(G)_{\neq 1}) \otimes \mathrm{Haar}(\mathbb{T})$ via mapping: 
$$
\idftmap:(\check\rho, u)\mapsto\sqrt{2/|G|}\cdot\big(\Re(u\check\rho(\cdot)),\Re(u^2\check\rho(\cdot))\big),
$$ which assigns to each pair $(\check\rho, u)$ the corresponding Fourier coefficients for $(\theta, \xi)$.
Integrating against $\mu$ therefore reduces to integrating over representations and phases.
Equivalently, we have
$$
(\theta_g,\xi_g)=\sqrt{2/|G|}\cdot\big(\Re(u\check\rho(g)),\Re(u^2\check\rho(g))\big)\quad\Longleftrightarrow\quad (\hat\theta[\check\rho],\hat\xi[\check\rho])=\sqrt{1/2|G|}\cdot(u,u^2).
$$
Define function $F_j^{g_1,g_2}:(\theta,\xi) \mapsto \xi_{j}\cdot\big(\theta_{g_1}+\theta_{g_2}\big)^2$ for each label $j \in G$. 
Then, we have
\begin{align*}
&\int_{(\SSS^{|G|-1})^{\otimes2}}\eqref{eq:neuron_logit_abelian}~\rd\mu(\theta,\xi)=\int_{\irr(G)_{\neq 1} \times \mathbb{T}} F_j^{g_1,g_2}\circ\idftmap(\check\rho,u)\rd(\pi_{\check\rho}\otimes\pi_u)(\check\rho,u).
\end{align*}
Moreover, the neuron logit can be further written as a polynomial of $u$
\begin{align}
    \eqref{eq:neuron_logit_abelian}&=\frac{1}{(2|G|)^{3/2}}\sum_{\kappa\in\{-4,-2,2,4\}}C_\kappa\cdot u^\kappa+\frac{1}{(2|G|)^{3/2}}\cdot\big(\check\rho(j g_1^{-2})+\check\rho(j g_2^{-2})+2\check\rho(j (g_1 g_2)^{-1})\big)\notag\\
    &\qquad+\frac{1}{(2|G|)^{3/2}}\cdot\big(\check\rho(j^{-1} g_1^2)+\check\rho(j^{-1} g_2^2)+2\check\rho(j^{-1} g_1 g_2)\big).
    \label{eq:neuron_logit_poly}
\end{align}
The key observation is that the phase integrals over the unit circle $\mathbb{T}$ vanish for all non-zero powers:
\begin{align}
\EE_{\pi}[u^k]=\int_\mathbb{T}u^k\rd\pi_u(u)=2\pi\cdot\int_0^{2\pi}e^{ikt}\rd t=\frac{e^{ikt}}{2\pi ik}\bigg|_0^{2\pi}=0,\qquad\forall k\neq0.
\label{eq:phase_cancellation}
\end{align}
Combining \eqref{eq:neuron_logit_poly} and \eqref{eq:phase_cancellation} yields that
\begin{align*}
    \euF_{\sf NN}^\mu(g_1,g_2)_j&=\EE_{(\theta,\xi)\sim\mu}\big[\xi_{j}\cdot\big(\theta_{g_1}+\theta_{g_2}\big)^2\big]\notag\\
    &=\frac{1}{\sqrt{2}|G|^{3/2}\cdot(|G|-1)}\sum_{\check\rho\in\irr(G)_{\neq1}}\big(\check\rho(j g_1^{-2})+\check\rho(j g_2^{-2})+2\check\rho(j (g_1 g_2)^{-1})\big)\notag\\
    &=\frac{1}{\sqrt{2}|G|^{1/2}\cdot(|G|-1)}\cdot\big(\ind(j= g_1^2)+\ind(j= g_2^2)+2\cdot\ind(j=g_1\star g_2)-4/|G|\big),
\end{align*}
where the last equality applies the orthogonality relation for characters $\sum_{\rho \in \irr(G)} \rho(g) = |G| \cdot \ind(g = \id)$.
Thus, the correct label $j = g_1 \star g_2$ achieves the largest expected logit, completing the proof.
\end{proof}

\subsection{Proof of Theorem \ref{thm:abelian_convergence_formal}: Convergence Rate for Abelian Groups}
\label{ap:convergence_abelian}

We first establish two structural lemmas that facilitate the subsequent analysis of the dynamics.
The first lemma shows that under scale-matching initialization, the magnitudes of input and output Fourier coefficients for any representation $\rho\in\irr(G)$ remain equal throughout training.
\begin{lemma}\label{lem:scale_identical}
Under the initialization in Definition \ref{def:scale_matching_init}, throughout the training, the following equality holds
    $$
    |\hat{\theta_m}[\rho](t)|=|\hat{\xi_m}[\rho](t)|,\qquad\forall (t,\rho)\in\RR_{\geq0}\times\irr(G).
    $$
\end{lemma}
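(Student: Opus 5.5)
The plan is to show that the magnitude gap between the two layers satisfies a linear homogeneous ODE, so it stays zero if it starts at zero. This works per representation $\rho$ and needs only the magnitude equations of Lemma~\ref{lem:mag_phase_simple}. The key device is to work with squared magnitudes, so that we never divide by a magnitude that might vanish.

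For $\rho\in\irr(G)_{\neq1}$, start from the squared-magnitude identities derived in the proof of Lemma~\ref{lem:mag_phase_simple}:
\begin{align*}
\partial_t\alpha_{\theta,m}[\rho]^2&=\frac{4a|G|}{M}\,\alpha_{\theta,m}[\rho]^2\alpha_{\xi,m}[\rho]\Re(\varphi_m[\rho])-\frac{4a|G|^2}{M}\Omega_m\,\alpha_{\theta,m}[\rho]^2,\\
\partial_t\alpha_{\xi,m}[\rho]^2&=\frac{4a|G|}{M}\,\alpha_{\theta,m}[\rho]^2\alpha_{\xi,m}[\rho]\Re(\varphi_m[\rho])-\frac{4a|G|^2}{M}\Omega_m\,\alpha_{\xi,m}[\rho]^2.
\end{align*}
Both equations are valid without dividing by magnitudes. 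The product $\alpha_\theta^2\alpha_\xi\Re(\varphi)$ equals $\Re(\overline{\hat\xi_m[\rho]}\hat\theta_m[\rho]^2)$, which is well defined even where phases are not.

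The driving terms in the two equations coincide exactly. Subtracting them, the difference $D_\rho:=\alpha_{\theta,m}[\rho]^2-\alpha_{\xi,m}[\rho]^2$ satisfies
\[
\partial_t D_\rho=-\frac{4a|G|^2}{M}\,\Omega_m(t)\,D_\rho .
\]
This is the shared-embedding analogue of Lemma~\ref{lem:ODE_Delta_1j}. Its solution is $D_\rho(t)=D_\rho(0)\exp\big(-\frac{4a|G|^2}{M}\int_0^t\Omega_m\big)$. The energy $\Omega_m$ is continuous and bounded along the trajectory because the trajectory stays on the compact manifold, so the exponential factor is finite for every $t$.

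Under Definition~\ref{def:scale_matching_init} we have $D_\rho(0)=0$, hence $D_\rho(t)=0$ for all $t$. Taking nonnegative square roots gives $|\hat\theta_m[\rho](t)|=|\hat\xi_m[\rho](t)|$.

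The trivial representation needs a separate check. For $\rho_{\sf triv}$, Proposition~\ref{prop:general_group_dyn} with $\theta^1=\theta^2$ gives pure decay, $\partial_t\hat\nu[\rho_{\sf triv}]=-\frac{2a|G|^2}{M}\Omega_m\hat\nu[\rho_{\sf triv}]$. The same linear argument, now without any driving term, shows the two magnitudes stay equal. Conjugate representations are covered automatically by $\hat\nu[\rho^\vee]=\overline{\hat\nu[\rho]}$.

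The main subtlety I expect is the polar decomposition at zeros. Equation~\eqref{eq:mag_theta_dyn} divides by $\alpha$, and the phase is undefined when $\alpha=0$. I will avoid this by working throughout with the squared-magnitude equations, computed directly from \eqref{eq:modular_dyn_theta}--\eqref{eq:modular_dyn_xi} as $\partial_t|\hat\nu|^2=2\Re(\overline{\hat\nu}\,\partial_t\hat\nu)$. These are smooth everywhere, so uniqueness of the linear scalar ODE for $D_\rho$ completes the argument with no case split.
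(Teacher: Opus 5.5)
Your proposal is correct and follows the paper's proof. Both subtract the two squared-magnitude equations to obtain the homogeneous linear ODE $\partial_t D_\rho=-\frac{4a|G|^2}{M}\Omega_m D_\rho$ and conclude $D_\rho\equiv 0$ from $D_\rho(0)=0$. Your extra care tidies up points the paper's short proof passes over: you compute $\partial_t|\hat\nu|^2=2\Re(\overline{\hat\nu}\,\partial_t\hat\nu)$ directly so you never divide by a vanishing magnitude, and you treat $\rho_{\sf triv}$ separately, since Lemma~\ref{lem:mag_phase_simple} only covers $\irr(G)_{\neq1}$. Your explicit solution also correctly places the factor $4a|G|^2/M$ inside the exponent, where the paper's display has a typo.
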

\begin{proof}[Proof of Lemma \ref{lem:scale_identical}]
Combining \eqref{eq:mag_theta_dyn} and \eqref{eq:mag_xi_dyn} in Lemma \ref{lem:mag_phase_simple} yields that
$$
\partial_t\big(|\hat{\theta_m}[\rho]|^2-|\hat{\xi_m}[\rho]|^2\big)=-\frac{4a|G|^2}{M}\cdot\Omega_m\cdot\big(|\hat{\theta_m}[\rho]|^2-|\hat{\xi_m}[\rho]|^2\big).
$$
By taking integration at both side, it holds that
$$
|\hat{\theta_m}[\rho]|^2(t)-|\hat{\xi_m}[\rho]|^2(t)=\frac{4a|G|^2}{M}\cdot \big(|\hat{\theta_m}[\rho]|^2(0)-|\hat{\xi_m}[\rho]|^2(0)\big)\cdot\exp\left(-\int_0^t\Omega_m(\tau)\rd t\right),\quad\forall t\in\RR^+.
$$
Recall we consider a scale-matching initialization specified in Definition \ref{def:scale_matching_init} such that $|\hat{\theta_m}[\rho]|^2(0)-|\hat{\xi_m}[\rho]|^2(0)=0$ for all $\rho\in\irr(G)$.
Combining the results above completes the proof.
\end{proof}

Following this lemma, for notational simplicity, we define the common magnitude variable:
$$
\alpha_m[\rho](t):=|\hat{\theta_m}[\rho](t)|=|\hat{\xi_m}[\rho](t)|\in\RR,\qquad\forall t\in\RR_{\geq0}.
$$
Recall from Lemma~\ref{lem:mag_phase_simple} that the relative phase alignment $\varphi_m[\rho]=\overline{\phi_{\xi,m}[\rho]}\cdot\phi_{\theta,m}[\rho]^2\in\mathbb{D}$ captures the phase relationship between the input and output Fourier coefficients. Also recall from \eqref{eq:energy_abelian} that the energy function $\Omega_m=\sum_{\rho\in\irr(G)_{\neq1}}\overline{\hat\xi_m[\rho]}\cdot\hat\theta_m[\rho]^2$ governs the strength of the normalization term.
Under scale-matching, the pair $(\alpha_m[\rho],\varphi_m[\rho])$ fully determines the state of each representation, and the following lemma shows that their joint dynamics form a closed system.
\begin{lemma}
\label{lem:equiv_dynamics}
Under the initialization in Definition \ref{def:scale_matching_init}, the dynamics in \eqref{eq:modular_dyn_theta} and \eqref{eq:modular_dyn_xi} are equivalent to
\begin{align}
        \partial_t\alpha_m[\rho]&=\frac{2a|G|}{M}\cdot\alpha_m[\rho]^2\cdot\Re(\varphi_m[\rho])-\frac{2a|G|^2}{M}\cdot\Omega_m\cdot\alpha_m[\rho],\label{eq:mag_dynamics}\\
        \partial_t\varphi_m[\rho]&=\frac{6a|G|}{M}\cdot\alpha_m[\rho]-\frac{6a|G|}{M}\cdot\alpha_m[\rho]\cdot\varphi_m[\rho]\cdot\Re(\varphi_m[\rho]).\label{eq:phase_dynamics}
\end{align}
\end{lemma}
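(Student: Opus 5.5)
The plan is to specialize Lemma~\ref{lem:mag_phase_simple} and Lemma~\ref{lem:phase_diff_dyn} to the scale-matched regime furnished by Lemma~\ref{lem:scale_identical}. Under Definition~\ref{def:scale_matching_init}, Lemma~\ref{lem:scale_identical} gives $\alpha_{\theta,m}[\rho](t)=\alpha_{\xi,m}[\rho](t)=\alpha_m[\rho](t)$ for all $t\ge0$. Substituting this into \eqref{eq:mag_theta_dyn}, the product $\alpha_{\theta,m}[\rho]\,\alpha_{\xi,m}[\rho]$ becomes $\alpha_m[\rho]^2$, which yields \eqref{eq:mag_dynamics} directly. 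Substituting into \eqref{eq:mag_xi_dyn} gives the same right-hand side, so that equation is consistent rather than an extra constraint. The energy in \eqref{eq:energy_abelian} likewise reduces to $\Omega_m=\sum_{\rho\neq1}\alpha_m[\rho]^3\,\Re(\varphi_m[\rho])$, so the magnitude equations close in $(\alpha_m,\varphi_m)$.

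For the phase, substituting $\alpha_{\theta,m}=\alpha_{\xi,m}=\alpha_m$ into \eqref{eq:phase_diff_dyn} makes the prefactor $2\alpha_{\xi,m}+\alpha_{\theta,m}^2/\alpha_{\xi,m}$ equal to $3\alpha_m$. This gives
\[
\partial_t\varphi_m[\rho]=-\frac{6a|G|}{M}\,\alpha_m[\rho]\,\Im(\varphi_m[\rho])\,\varphi_m[\rho]\,\ri .
\]
The remaining step is an algebraic identity on the unit circle. For $|\varphi|=1$ we have $\Re(\varphi)-\ri\Im(\varphi)=\bar\varphi=1/\varphi$, hence $\varphi\,\Re(\varphi)-\ri\Im(\varphi)\,\varphi=1$. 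It follows that $-\ri\Im(\varphi)\varphi=1-\varphi\Re(\varphi)$, and substituting this produces \eqref{eq:phase_dynamics}.

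For the converse direction of the claimed equivalence, I would argue as follows. Given a solution of \eqref{eq:mag_dynamics}--\eqref{eq:phase_dynamics} with $|\varphi_m[\rho](0)|=1$, I would first check that $|\varphi_m[\rho]|$ stays equal to $1$. Computing $\partial_t|\varphi|^2=2\Re(\bar\varphi\,\partial_t\varphi)$, the right-hand side is proportional to $\alpha_m\Re(\varphi)(1-|\varphi|^2)$, so $|\varphi|=1$ is invariant. The individual phases $\phi_{\theta,m},\phi_{\xi,m}$ are then recovered by integrating \eqref{eq:phase_theta_dyn}--\eqref{eq:phase_xi_dyn}, which are linear in the phases once $\alpha_m$ and $\varphi_m$ are known. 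Uniqueness then follows from Picard--Lindel\"of applied to the polynomial system \eqref{eq:modular_dyn_theta}--\eqref{eq:modular_dyn_xi}.

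I expect the main obstacle to be the degenerate case $\alpha_m[\rho]=0$, where the phases are undefined and the factor $\alpha_\theta^2/\alpha_\xi$ in \eqref{eq:phase_diff_dyn} is formally singular. Under scale matching this factor reduces to $\alpha_m$, which extends continuously to zero. I would also note that if $\alpha_m[\rho]=0$ at some time, it remains zero for all time by \eqref{eq:mag_dynamics}, since that equation is linear-homogeneous in $\alpha_m$ and uniqueness applies. On such components the phase equation is vacuous, so restricting the statement to $\rho$ with $\alpha_m[\rho](0)>0$ loses nothing.
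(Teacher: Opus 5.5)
Your proposal is correct and matches the paper's proof: you substitute $\alpha_{\theta,m}[\rho]=\alpha_{\xi,m}[\rho]=\alpha_m[\rho]$ from Lemma~\ref{lem:scale_identical} into \eqref{eq:mag_theta_dyn} and into Lemma~\ref{lem:phase_diff_dyn}, which gives the prefactor $3\alpha_m[\rho]$, and then apply the unit-circle identity $-\ri\,\Im(\varphi)\,\varphi=1-\varphi\,\Re(\varphi)$. Your converse direction (invariance of $|\varphi_m[\rho]|=1$ and recovery of the individual phases) and your treatment of the degenerate case $\alpha_m[\rho]=0$ are sound additions that the paper omits, since it simply divides by the magnitudes.
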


In the magnitude ODE \eqref{eq:mag_dynamics}, the first term is a \emph{self-reinforcement} that amplifies representations with larger scale and good phase alignment, i.e., $\Re(\varphi_m[\rho])$ close to one, while the second is a \emph{competition} term that penalizes all representations through energy $\Omega_m$.
The phase-alignment ODE \eqref{eq:phase_dynamics} drives $\varphi_m[\rho]\to 1$ at a rate proportional to $\alpha_m[\rho]$.

\begin{proof}[Proof of Lemma \ref{lem:equiv_dynamics}]
We prove the two statements separately.
Throughout, Lemma~\ref{lem:scale_identical} guarantees $\alpha_{\theta,m}[\rho]=\alpha_{\xi,m}[\rho]=\alpha_m[\rho]$, so we write $\alpha_m[\rho]$ in place of both $\alpha_{\theta,m}[\rho]$ and $\alpha_{\xi,m}[\rho]$.

\paragraph{\textbf{Magnitude Dynamics.}}
We apply the chain rule to $\alpha_m[\rho]^2=|\hat{\theta_m}[\rho]|^2$, which yields $\partial_t(\alpha_m[\rho]^2)=2\alpha_m[\rho]\cdot\partial_t\alpha_m[\rho]$.
Leveraging the  ODE in \eqref{eq:mag_theta_dyn} from Lemma~\ref{lem:mag_phase_simple}, we have 
\begin{align*}
    2\alpha_m[\rho]\cdot\partial_t\alpha_m[\rho]
    &=\frac{4a|G|}{M}\cdot\alpha_m[\rho]^3\cdot\Re(\varphi_m[\rho])-\frac{4a|G|^2}{M}\cdot\Omega_m\cdot\alpha_m[\rho]^2.
\end{align*}
Then, dividing both sides by $2\alpha_m[\rho]$ yields \eqref{eq:mag_dynamics}. 

\paragraph{\textbf{Phase-Alignment Dynamics.}}
By the  phase-alignment ODE in  Lemma~\ref{lem:phase_diff_dyn}, we have 
\begin{align}\label{eq:apply_phase_alignment_ode}
\partial_t \varphi_m[\rho]=-\frac{6a|G|}{M}\cdot\alpha_m[\rho]\cdot\Im\big(\varphi_m[\rho]\big)\cdot\varphi_m[\rho]\cdot\ri.
\end{align}
To rewrite this in terms of $\Re(\varphi_m[\rho])$, we use the unit-circle identity: for any $\varphi\in\mathbb{D}$, we have 
$-\Im(\varphi)\cdot\varphi\cdot\ri = 1 - \varphi\cdot\Re(\varphi)$.
This identity can be verified by writing $\varphi=e^{i\psi}$ and expanding both ends of the equation. Combining  this identity with \eqref{eq:apply_phase_alignment_ode} yields \eqref{eq:phase_dynamics}. 
\end{proof}

Now we are ready to prove Theorem \ref{thm:abelian_convergence_formal}. 

\begin{proof}[Proof of Theorem \ref{thm:abelian_convergence_formal}]
The ODE dynamics in Lemma~\ref{lem:equiv_dynamics} couple \emph{phase alignment}, i.e., $\varphi_m[\rho]\to1$, and \emph{representation competition}, i.e., one $\alpha_m[\rho]$ dominates the others.
Theorem~\ref{thm:abelian_convergence_formal} establishes a separate convergence rate for each mechanism, given that the other mechanism is realized when initialized:
\begin{itemize}[label=$\triangleright$, leftmargin=2em]
	\setlength{\itemsep}{-1pt}
\item \textbf{Case 1: Phase Alignment.} A single conjugate pair $\orb(\check\rho_m)$ is active, so the magnitude is constant and the dynamics reduces to a scalar phase ODE with constant coefficients.
\item \textbf{Case 2: Representation Competition.} All phases are perfectly aligned, i.e., $\varphi_m[\rho]\equiv1$, and the analysis reduces to studying log-ratio dynamics among the magnitudes.
\end{itemize}
Before deriving the main results, we first provide an alternative representation of the dynamics in Lemma \ref{lem:equiv_dynamics}. 
By taking the real and imaginary part of both sides of \eqref{eq:phase_dynamics} respectively, after some simple transformations, we can obtain that
\begin{align}
&\partial_t\log\Im(\varphi_m[\rho])=\frac{1}{\Im(\varphi_m[\rho])}\cdot\partial_t\Im(\varphi_m[\rho])=-\frac{6a|G|}{M}\cdot\alpha_m[\rho]\cdot\Re(\varphi_m[\rho]),\notag\\
&\partial_t{\rm arctanh}\big(\Re(\varphi_m[\rho])\big)=\frac{1}{1-\Re(\varphi_m[\rho])^2}\cdot \partial_t \Re(\varphi_m[\rho])=\frac{6a|G|}{M}\cdot\alpha_m[\rho].
\label{eq:variant_dyn_phase_re}
\end{align}
Moreover, we can write the dynamics of scale in \eqref{eq:mag_dynamics} as
\begin{align}
\partial_t\log\alpha_m[\rho]=\frac{2a|G|}{M}\cdot(\alpha_m[\rho]\cdot\Re(\varphi_m[\rho])-|G|\cdot\Omega_m).
\label{eq:log_mag_dynamics}
\end{align}
The sphere constraint gives that the scales of representations should satisfy
$$
\sum_{\rho\in\irr(G)}\alpha_m[\rho]^2=\sum_{\rho\in\irr(G)}|\hat{\theta_m}[\rho]|^2=1/|G|,
$$
where we use Lemma \ref{lem:l2G_inner_product_dft}.
We now consider two initialization schemes that yield explicit dynamics.

\paragraph{Case 1: Phase Alignment.}
In this case, each neuron is initialized with support on a single non-trivial representation $\check\rho_m\in\irr(G)$.
Given the single-frequency initialization and the unit sphere constraint, we can show that 
$$
\alpha_m[\rho](0)=1/\sqrt{2|G|},\quad\forall\rho\in\orb(\check\rho_m),\text{~~and~~}\alpha_m[\rho]=0,\quad\forall\rho\in\irr(G)_{\neq1}\backslash\orb(\check\rho_m).
$$
Recall that \eqref{eq:mag_dynamics} in Lemma \ref{lem:equiv_dynamics} implies that that
\begin{align*}
\alpha_m[\rho](t)=\alpha_m[\rho](0)\cdot\exp\left(\frac{2a|G|}{M}\int_0^t\alpha_m[\rho](s)^2\cdot\Re(\varphi_m[\rho](s))\rd s-\frac{2a|G|^2}{M}\int_0^t\Omega_m(s)\cdot\alpha_m[\rho](s)\rd s\right),
\end{align*}
for all time $t\in\RR_{\geq 0}$.
For $\rho\neq\check\rho$, it apparent that $\alpha_m[\rho](t)\equiv0$ for all $t\in\RR_{\geq0}$ is the solution:
$$
\alpha_m[\rho](t)\equiv0,\qquad\forall(\rho,t)\in\irr(G)_{\neq1}\backslash\orb(\check\rho_m)\times\RR_{\geq0}.
$$
Hence, the single-frequency pattern is preserved throughout the training.
\eqref{eq:variant_dyn_phase_re} can be written as
\begin{align*}
\partial_t{\rm arctanh}\big(\Re(\varphi_m[\rho])\big)=3\sqrt{2}a|G|^{1/2}/M,
\end{align*}
which indicated that
\begin{align*}
    \frac{1+\Re(\varphi_m[\rho](t))}{1-\Re(\varphi_m[\rho](t))}=\frac{1+\Re(\varphi_m[\rho](0))}{1-\Re(\varphi_m[\rho](0))}\cdot\exp\left(3\sqrt{2}a|G|^{1/2}/M\cdot t\right),\qquad\forall t\in\RR_{\geq0}.
\end{align*}
This shows that the phase alignment $\Re(\varphi_m[\rho])$ converges to $1$ exponentially fast. 
Thus, to ensure $\Re(\varphi_m[\rho](T))\leq1-\varepsilon$, the required time $T$ must satisfy
$$
T\gtrsim
\frac{M}{a|G|^{1/2}}\cdot\log\left(\frac{2-\varepsilon}{\varepsilon}\cdot\frac{1-\Re(\varphi_m[\rho](0))}{1+\Re(\varphi_m[\rho](0))}\right).
$$
Given that $\varepsilon\in(0,1)$ is chosen to be sufficiently small, the following simplified lower bound for $T$ is sufficient: $$ T \gtrsim \frac{M}{a|G|^{1/2}}\cdot\log\left(\frac{2}{\varepsilon}\cdot\frac{1-\Re(\varphi_m[\rho](0))}{1+\Re(\varphi_m[\rho](0))}\right).$$

\paragraph{Case 2: Phase-Aligned Initialization.}
In this case, neurons are initialized with perfect phase alignment, i.e., $\varphi_m[\rho](0) = 1$ for all $\rho\in\irr(G)_{\neq1}$.
We show that phase alignment is preserved, reducing the system to a pure magnitude competition, and then derive the convergence rate.

We first show that perfect phase alignment is an invariant of the dynamics.
Note \eqref{eq:phase_dynamics} gives 
$$
1-\Re(\varphi_m[\rho](t))=-(1-\Re(\varphi_m[\rho](0)))\cdot\left(-\frac{6a|G|}{M}\cdot\int_0^t(1+\Re(\varphi_m[\rho](s)))\cdot\alpha_m[\rho](s)\rd s\right),
$$
where the right-hand side vanishes at $\varphi_m[\rho]=1$.
Thus, we have $\Re(\varphi_m[\rho](t))\equiv1$ for all $\rho\in\irr(G)_{\neq1}$ and $t\geq0$.
Since $\varphi_m[\rho](t)$ stays on the unit circle $\mathbb{D}$, we have $\varphi_m[\rho](t)\equiv1$ throughout the training.
Under this invariance, substituting $\Re(\varphi_m[\rho])=1$ into the log-magnitude dynamics \eqref{eq:log_mag_dynamics} gives
\begin{equation}\label{eq:log_mag_phase_aligned}
\partial_t\log\alpha_m[\rho]=\frac{2a|G|}{M}\cdot\big(\alpha_m[\rho]-|G|\cdot\Omega_m\big).
\end{equation}
We next derive the log-ratio ODE that governs the competition between representations. 
For any two non-trivial representations $\rho,\rho'\in\irr(G)_{\neq1}$, subtracting \eqref{eq:log_mag_phase_aligned} for $\rho'$ from that for $\rho$ gives
\begin{equation}\label{eq:log_ratio_ode}
\partial_t\log\left(\frac{\alpha_m[\rho]}{\alpha_m[\rho']}\right)=\frac{2a|G|}{M}\cdot\big(\alpha_m[\rho]-\alpha_m[\rho']\big)=\frac{2a|G|}{M}\cdot\alpha_m[\rho]\cdot\left(1-\frac{\alpha_m[\rho']}{\alpha_m[\rho]}\right).
\end{equation}
To simplify the notation, we define $r_{\check\rho_m,\rho}(t)=\alpha_m[\check\rho_m](t)/\alpha_m[\rho](t)$ and $\check\rho_m=\argmax_{\rho\in\irr(G)_{\neq1}}\alpha_m[\rho](0),$
where $\check\rho_m$ is the initially dominant representation.
Hence, by definition, we have $r_{\check\rho_m,\rho}(0)>1$ for all $\rho\in\irr(G)_{\neq1}\backslash\orb(\check\rho_m)$.
Setting $\rho=\check\rho_m$ in \eqref{eq:log_ratio_ode} yields that
$$
\partial_t \log r_{\check\rho_m,\rho}=\frac{2a|G|}{M}\cdot\alpha_m[\check\rho_m]\cdot(1-1/r_{\check\rho_m,\rho}),
$$
which further indicates that
$$
\partial_t\log(r_{\check\rho_m,\rho}-1)=\frac{\partial_t\log r_{\check\rho_m,\rho}}{1-1/r_{\check\rho_m,\rho}}=\frac{2a|G|}{M}\cdot\alpha_m[\check\rho_m].
$$
Integrating this ODE with respect to $t$ gives
\begin{equation}\label{eq:ratio_solution}
r_{\check\rho_m,\rho}(t)=1+\big(r_{\check\rho_m,\rho}(0)-1\big)\cdot\exp\!\left(\frac{2a|G|}{M}\cdot\int_0^t\alpha_m[\check\rho_m](s)\,\rd s\right),\qquad\forall t\in\RR_{\geq0}.
\end{equation}
Since $r_{\check\rho_m,\rho}(0)-1>0$ and the exponential is at least $1$, we have $r_{\check\rho_m,\rho}(t)\geq r_{\check\rho_m,\rho}(0)>1$ for all $t\geq0$.
This confirms that $\check\rho_m$ remains the dominant representation throughout training.
Next, we prove that $\alpha_m[\check\rho_m](t)$ is monotonically nondecreasing.
From \eqref{eq:log_mag_phase_aligned} with $\rho=\check\rho_m$, we have
\begin{align*}
\partial_t\log\alpha_m[\check\rho_m]
&=\frac{2a|G|}{M}\cdot\bigg(\alpha_m[\check\rho_m]-|G|\cdot\sum_{\rho\in\irr(G)_{\neq1}}\alpha_m[\rho]^3\bigg)\\
&\geq\frac{2a|G|}{M}\cdot\alpha_m[\check\rho_m]\cdot\bigg(1-|G|\cdot\sum_{\rho\in\irr(G)_{\neq1}}\alpha_m[\rho]^2\bigg)\geq0,
\end{align*}
where the second inequality follows from the sphere constraint $\sum_{\rho\in\irr(G)}\alpha_m[\rho]^2=1/|G|$.
We now derive the convergence rate.
We define $r_{\min}=\min_{\rho\in\irr(G)_{\neq1}\backslash\orb(\check\rho_m)}r_{\check{\rho}_m,\rho}(0)$.
Since $\alpha_m[\check\rho_m](t)$ is nondecreasing, we have $\int_0^t\alpha_m[\check\rho_m](s)\,\rd s\geq\alpha_m[\check\rho_m](0)\cdot t$.
Substituting this into \eqref{eq:ratio_solution}, we obtain
$$
r_{\check\rho_m,\rho}(t)\geq1+(r_{\min}-1)\cdot\exp\!\left(\frac{2a|G|}{M}\cdot\alpha_m[\check\rho_m](0)\cdot t\right),\qquad\forall\rho\in\irr(G)_{\neq1}\backslash\orb(\check\rho_m).
$$
Thus, as  $t$ goes to infinity, the ratio $r_{\check\rho_m,\rho}(t)$ goes to infinity for all non-trivial representation $\rho$. 
To derive the convergence rate, we require $r_{\check\rho_m,\rho}(T)\geq1/\varepsilon$ for a given $\varepsilon>0$.
It suffices to have 
\begin{align}
T\geq\frac{M}{2a|G|\cdot\alpha_m[\check\rho_m](0)}\cdot\log\!\left(\frac{\varepsilon^{-1}-1}{r_{\min}-1}\right).
\label{eq:case2_time1}
\end{align}
Next, we analyze the trivial representation. 
For this component, the dynamics simplify significantly as the quadratic growth term vanishes. 
Specifically, the evolution is governed by:
$$
\partial_t \hat{\theta_m}[\rho_{\sf triv}]=-\frac{2a|G|^2}{M}\cdot\Omega_m \cdot\hat{\theta_m}[\rho_{\sf triv}],\qquad
    \partial_t \hat{\xi_m}[\rho_{\sf triv}]=-\frac{2a|G|^2}{M}\cdot\Omega_m\cdot\hat{\xi_m}[\rho_{\sf triv}].
$$
Solving these ODEs, for each parameter $\nu \in \{\theta_m, \xi_m\}$, we obtain the closed-form expression:
\begin{align}
\hat\nu[\rho_{\sf triv}](t)&= \hat\nu[\rho_{\sf triv}](0)\cdot\exp\left(-\frac{2a|G|^2}{M}\cdot\int_0^t\Omega_m(\tau)\rd t\right),\qquad\forall t\in\RR_{\geq0}.
\label{eq:triv_dynamics}
\end{align}
Under the initial condition that $\varphi_m[\rho](0) = 1$ for all $\rho\in\irr(G)_{\neq1}$, we can show that
\begin{align}
\Omega_m(0)=\sum_{\rho\in\irr(G)_{\neq1}}\alpha_m[\rho]^3\cdot\Re(\varphi_m[\rho](0))=\sum_{\rho\in\irr(G)_{\neq1}}\alpha_m[\rho]^3\geq0.
\label{eq:positive_energy}
\end{align}
Recall that the dynamics follow a Riemannian gradient (ascent) flow with respect to the energy $\Omega_m$ (see Lemma~\ref{lem:riemannian-gf}).
Therefore, $\Omega_m(t)$ is monotonically non-decreasing. Combining this monotonicity with \eqref{eq:triv_dynamics} and \eqref{eq:positive_energy} yields the upper bound:
\begin{align*}
\hat\nu[\rho_{\sf triv}](t)&\lesssim\hat\nu[\rho_{\sf triv}](0)\cdot\exp\left(-\frac{2a|G|^2}{M}\cdot\Omega_m(0)\cdot t\right),
\end{align*}
which indicates that
$$
r_{\check\rho_m,\rho_{\sf triv}}(t)\gtrsim\frac{M}{a|G|^2}\cdot r_{\check\rho_m,\rho_{\sf triv}}(0)\cdot
\exp\left(
\Omega_m(0)\cdot t
\right).
$$
To ensure the trivial representation is sufficiently suppressed, it suffices to choose a time:
\begin{align}
T\gtrsim\frac{M}{a|G|^2\cdot \Omega_m(0)}\cdot\log\left(
\frac{\varepsilon^{-1}}{r_{\check\rho_m,\rho_{\sf triv}}(0)}
\right).
\label{eq:case2_time2}
\end{align}
The proof is completed by combining the time requirements from \eqref{eq:case2_time1} and \eqref{eq:case2_time2}.
\end{proof}
\section{Technical Lemmas}

This appendix provides the technical lemmas supporting our main results. 
In \S\ref{ap:auxiliary_lemmas}, we collect several auxiliary results required for the proofs in \S\ref{ap:proof_prop_dyn_approx}. 
Next, \S\ref{ap:rep_identities} records standard identities for irreducible representations that are invoked repeatedly throughout \S\ref{ap:proof_prop_dyn_approx} through \S\ref{ap:convergence_abelian}.

\subsection{Auxiliary Lemmas for \S\ref{ap:proof_prop_dyn_approx}}
\label{ap:auxiliary_lemmas}

The following three lemmas provide the analytic estimates needed in the proof of the trajectory approximation (Proposition~\ref{prop:general_group_dyn}).

\begin{lemma}
\label{lem:indicator_sum_l2_norm}
Let $G$ be a finite set and $\{\kappa_{xy}\}_{(x,y)\in G^2}\subset\mathbb{R}$ be a sequence of coefficients. Then we have 
    $$
    \bigg\|\sum_{x,y\in G}\kappa_{xy}\cdot e_x\bigg\|_2\leq|G|^{1/2}\cdot\bigg(\sum_{x,y\in G}\kappa_{xy}^2\bigg)^{1/2}.
    $$
\end{lemma}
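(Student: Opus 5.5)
The plan is to reduce the bound to a single application of the Cauchy--Schwarz inequality after collapsing the double sum into a vector indexed by $x$. First, I will group the terms by the index that selects the basis vector. Since $e_x$ depends only on $x$, write
\[
\sum_{x,y\in G}\kappa_{xy}\cdot e_x=\sum_{x\in G}\Big(\sum_{y\in G}\kappa_{xy}\Big)\cdot e_x .
\]
The $x$-th coordinate of the vector is therefore $s_x:=\sum_{y\in G}\kappa_{xy}$.

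By orthonormality of $\{e_x\}_{x\in G}$, this gives $\big\|\sum_{x,y}\kappa_{xy}e_x\big\|_2^2=\sum_{x\in G}s_x^2$.

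Next, for each fixed $x$, I will apply Cauchy--Schwarz to the inner sum, viewed as the pairing of $(\kappa_{xy})_{y\in G}$ with the all-ones vector in $\RR^{|G|}$. This yields
\[
s_x^2=\Big(\sum_{y\in G}1\cdot\kappa_{xy}\Big)^2\le |G|\cdot\sum_{y\in G}\kappa_{xy}^2 .
\]
Summing over $x$ gives $\sum_x s_x^2\le |G|\sum_{x,y}\kappa_{xy}^2$. Taking square roots then gives exactly the claimed inequality.

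There is no genuine obstacle here. The only point needing care is noticing that the factor $|G|^{1/2}$ comes from Cauchy--Schwarz in the summed-out variable $y$, not in $x$, because the $x$-coordinates stay orthogonal and contribute without loss.
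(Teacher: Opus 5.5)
Your proposal is correct and follows essentially the same route as the paper: expand the squared norm coordinatewise into $\sum_{x}\big(\sum_{y}\kappa_{xy}\big)^2$, bound each row sum by Cauchy--Schwarz against the all-ones vector to get the factor $|G|$, and sum over $x$.
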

\begin{proof}[Proof of Lemma \ref{lem:indicator_sum_l2_norm}]
By expanding the squared $\ell_2$-norm coordinate-wise, we  obtain that
$$
    \bigg\|\sum_{x,y\in G}\kappa_{xy}\cdot e_x\bigg\|_2^2=\sum_{s\in G}\bigg(\sum_{x\in G}\kappa_{sx}\bigg)^2\leq|G|\cdot\sum_{s\in G}\sum_{x\in G}\kappa_{sx}^2=|G|\cdot\sum_{x,y\in G}\kappa_{xy}^2,
$$
    where the inequality uses Cauchy-Schwarz inequality.
    Taking square roots completes the proof.
\end{proof}

\begin{lemma}
\label{lem:smax_approx_uniform}
For each pair $(g_1,g_2)\in G^2$, let $f_{g_1g_2}\in\RR^{|G|}$ be a logit vector and $\euP_{g_1g_2}=\smax(f_{g_1g_2})\in\RR^{|G|}$ the corresponding softmax distribution.
Define the maximal logit gap
$
\Delta f_{\max}=\max_{g_1,g_2\in G}\big(\max_{j\in G}(f_{g_1g_2})_j-\min_{j\in G}(f_{g_1g_2})_j\big).
$
If $\Delta f_{\max}\leq1$, then the softmax distribution is close to uniform:
$$
\max_{g_1,g_2\in G}\left\|\euP_{g_1g_2}-\one_{|G|}/|G|\right\|_\infty\leq2|G|^{-1}\cdot\Delta f_{\max}.
$$
\end{lemma}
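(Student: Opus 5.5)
The plan is to fix a single pair $(g_1,g_2)$ and write $f=f_{g_1g_2}$, $p=\euP_{g_1g_2}$, $n=|G|$. Because softmax is invariant under adding a constant to every logit, we may shift $f$ so that $\min_j f_j=0$. Then every coordinate lies in $[0,\Delta]$, where $\Delta:=\max_j f_j-\min_j f_j\le\Delta f_{\max}\le 1$. Once the bound is proved for one pair, taking the maximum over $(g_1,g_2)$ gives the statement.

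Next we bound each probability on both sides. Write
$$
p_j=\frac{e^{f_j}}{\sum_k e^{f_k}}.
$$
The numerator lies in $[1,e^{\Delta}]$ and the denominator lies in $[n,ne^{\Delta}]$. Hence
$$
\frac{e^{-\Delta}}{n}\le p_j\le\frac{e^{\Delta}}{n},
$$
and so
$$
\Big|p_j-\frac1n\Big|\le\frac1n\max\{e^{\Delta}-1,\;1-e^{-\Delta}\}=\frac{e^{\Delta}-1}{n}.
$$

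The final step, and the only one that needs any care, is the elementary inequality $e^{x}-1\le 2x$ for $x\in[0,1]$. It holds because $e^x-1$ is convex, so on $[0,1]$ it lies below the chord joining $(0,0)$ to $(1,e-1)$. That chord has slope $e-1\le 2$, which gives $e^x-1\le(e-1)x\le 2x$. Applying this with $x=\Delta$ yields $|p_j-1/n|\le 2\Delta/n\le 2\Delta f_{\max}/|G|$ for every $j$, which is the claimed $\ell_\infty$ bound.
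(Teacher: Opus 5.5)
Your proof is correct and takes essentially the paper's route: sandwich each softmax probability between bounds set by the logit range, then apply $e^{x}-1\le 2x$ on $[0,1]$. Your separate numerator and denominator bounds are slightly cruder than the paper's extremal-configuration bounds, but they give the same $(e^{\Delta}-1)/|G|$ deviation, and your convexity (chord) argument supplies the justification for $e^{x}-1\le 2x$ that the paper only asserts.
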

\begin{proof}[Proof of Lemma \ref{lem:smax_approx_uniform}]
We fix  an arbitrary pair $(g_1,g_2)$ and write $f=f_{g_1g_2}$ for brevity.
Let $j_+\in\arg\max_{j\in G} f_j$ and $j_-\in\arg\min_{j\in G} f_j$, so that $f_{j_+}-f_{j_-}\le \Delta f_{\max}$.

\vspace{5pt}
\noindent {\bf Upper Bound.}
For any $j\in G$, the softmax probability is maximized when $j$ has the largest logit and all other logits are as small as possible.
Since each logit lies within the interval $[f_{j_-},\,f_{j_+}]$, we have 
\begin{align*}
(\euP_{g_1g_2})_j & \le \frac{\exp(f_{j_+})}{\exp(f_{j_+})+(|G|-1)\cdot\exp(f_{j_-})}\leq\frac{\exp(\Delta f_{\max})}{\exp(\Delta f_{\max})+|G|-1}.
\end{align*}
Subtracting $1/|G|$ and using $\Delta f_{\max}\leq1$ gives
$$
(\euP_{g_1g_2})_j-\frac{1}{|G|}\leq
\frac{(|G|-1)\cdot(\exp(\Delta f_{\max})-1)}{|G|\cdot(\exp(\Delta f_{\max})+|G|-1)}\leq\frac{\exp(\Delta f_{\max})-1}{|G|}\leq\frac{2\Delta f_{\max}}{|G|},
$$
where the last step uses $\exp(u)-1\leq 2u$ for $u\in[0,1]$.

\vspace{5pt}
\noindent{\bf Lower Bound.}
By the same reasoning applied to the minimum, we obtain
$$
\frac{1}{|G|}-(\euP_{g_1g_2})_j\leq \frac{1}{|G|}-\frac{1}{1+(|G|-1)\cdot \exp(\Delta f_{\max})}\leq\frac{2\,\Delta f_{\max}}{|G|},
$$
where the last inequality follows by the same $\exp(u) - 1 \le 2u$ bound.
Combining the upper and lower bounds and taking the maximum yields the claimed result. 
\end{proof}

\begin{lemma}
\label{lem:lin_ode_solution}
Let $\iota\neq0$ denote a non-zero constant and $\zeta:[0,\infty)\mapsto\mathbb{R}^n$ denote a continuous function.
For any initial condition $x(0)\in\mathbb{R}^n$, the unique solution of
$\partial_tx(t)=\iota x(t)+\zeta(t)$
is given by
$$
x(t)=x(0)\cdot\exp(\iota t)+\int_{0}^{t} \zeta(s)\cdot\exp(\iota(t-s))\rd s.
$$
In particular, if $\zeta(t)\equiv \zeta\in\mathbb{R}$ is constant, then $x(t)=x(0)\cdot\exp(\iota t)+{\zeta}/{\iota}\cdot(\exp(\iota t)-1)$.
\end{lemma}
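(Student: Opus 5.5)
The statement is the standard variation-of-constants formula for a linear ODE with constant scalar coefficient, so I would prove it by an integrating factor followed by a uniqueness argument. Multiply the equation $\partial_t x(t)=\iota x(t)+\zeta(t)$ by $\exp(-\iota t)$ and observe, by the product rule, that
\[
\partial_t\big(\exp(-\iota t)\,x(t)\big)=\exp(-\iota t)\big(\partial_t x(t)-\iota x(t)\big)=\exp(-\iota t)\,\zeta(t).
\]
Since $\zeta$ is continuous on $[0,\infty)$, the right-hand side is continuous. The fundamental theorem of calculus, applied coordinatewise on $[0,t]$, therefore gives $\exp(-\iota t)x(t)-x(0)=\int_0^t\exp(-\iota s)\zeta(s)\,\rd s$. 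Multiplying through by $\exp(\iota t)$ yields the claimed representation.

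For uniqueness, suppose $x_1,x_2$ are two solutions with the same initial value. Their difference $y$ satisfies $\partial_t y=\iota y$ with $y(0)=0$, so $\partial_t(\exp(-\iota t)y(t))=0$. Hence $\exp(-\iota t)y(t)\equiv y(0)=0$, and the two solutions coincide. Alternatively, one can invoke Picard--Lindel\"of, since the vector field is affine in $x$ and hence globally Lipschitz.

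For the special case of constant $\zeta$, evaluate the integral explicitly as $\zeta\int_0^t\exp(\iota(t-s))\,\rd s=\zeta\,(\exp(\iota t)-1)/\iota$. This is the only place where the hypothesis $\iota\neq0$ is used. No step is a genuine obstacle; the only point needing care is that the integrating-factor computation holds coordinatewise for vector-valued $x$, which is immediate because the scalar $\iota$ acts diagonally.
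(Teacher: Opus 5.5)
Your proposal is correct. The paper gives no argument of its own for this lemma and simply defers to Lemma B.5 of \citet{he2026mechanism}; your integrating-factor (variation-of-constants) derivation is the standard proof behind that citation.

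The one thing worth making explicit concerns existence. Your computation shows that every solution must equal the formula, which by itself already gives uniqueness. Existence follows by reading the steps backward: $t\mapsto\int_0^t\exp(-\iota s)\zeta(s)\,\mathrm{d}s$ is differentiable with derivative $\exp(-\iota t)\zeta(t)$ because $\zeta$ is continuous. Alternatively, existence follows from Picard--Lindel\"of, as you note.
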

\begin{proof}[Proof of Lemma \ref{lem:lin_ode_solution}]
    Please refer to Lemma B.5 in \cite{he2026mechanism} for detailed proof.
\end{proof}

\subsection{Technical Identities for Irreducible Representations}
\label{ap:rep_identities}

This subsection collects identities for irreps that are used in the spectral dynamics proofs.
All results follow from two properties established in \S\ref{ap:bg_rep}: the conjugacy relation for dual representations (see Definition~\ref{def:group}(ii)) and the Schur orthogonality relations.
We begin with a lemma relating the dual representation to complex conjugation, which follows directly from unitarity.

\begin{lemma}[Conjugate Relation for Dual Representations]\label{lem:conjugate_relation}
    For any $\rho \in \widehat{G}$, we have $\rho^\vee(g) = \overline{\rho(g)}$ for all $g \in G$.
    Furthermore, for any real-valued function $\nu: G \mapsto \mathbb{R}$, the Fourier coefficients satisfy $\widehat{\nu}[\rho^\vee] = \overline{\widehat{\nu}[\rho]}$.
\end{lemma}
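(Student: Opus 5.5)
The plan is to prove the two claims in turn, using only unitarity of $\rho$ and the definition of the Fourier coefficient $\hat\nu[\rho]=|G|^{-1}\sum_{g}\nu(g)\rho(g^{-1})$.

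\textbf{First claim.} Recall from Definition~\ref{def:group}(ii) that $\rho^\vee(g)=\rho(g^{-1})^\top$. Since $\rho$ is a homomorphism, $\rho(g^{-1})=\rho(g)^{-1}$. Since $\rho(g)$ is unitary, $\rho(g)^{-1}=\rho(g)^*=\overline{\rho(g)}^\top$. Substituting gives
\[
\rho^\vee(g)=\big(\overline{\rho(g)}^\top\big)^\top=\overline{\rho(g)},
\]
which is the first assertion. We should also confirm that $\rho^\vee$ is again a unitary homomorphism, so that it is an honest element of $\irr(G)$. This follows because entrywise conjugation preserves products and unitarity. Irreducibility is preserved because a $\rho$-invariant subspace $W$ yields the $\rho^\vee$-invariant subspace $\overline{W}$.

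\textbf{Second claim.} Apply the definition of the Fourier coefficient at $\rho^\vee$ and use the first claim at the element $g^{-1}$:
\[
\hat\nu[\rho^\vee]=\frac{1}{|G|}\sum_{g}\nu(g)\,\rho^\vee(g^{-1})=\frac{1}{|G|}\sum_{g}\nu(g)\,\overline{\rho(g^{-1})}.
\]
Because $\nu$ is real-valued, $\nu(g)=\overline{\nu(g)}$. We can therefore pull the conjugation outside the sum, giving $\hat\nu[\rho^\vee]=\overline{\hat\nu[\rho]}$.

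\textbf{Expected difficulty.} The only subtlety is notational. The identification of $\rho^\vee$ with a fixed element of the chosen set $\irr(G)$ must use the same basis as in the definition, $\rho^\vee(g)=\rho(g^{-1})^\top$. If $\irr(G)$ instead contains only an isomorphic copy of the dual, the identity would hold only up to conjugation by a unitary intertwiner. We therefore adopt the paper's convention that $\rho^\vee$ denotes exactly the representation $g\mapsto\rho(g^{-1})^\top$. With that convention fixed, no genuine obstacle remains.
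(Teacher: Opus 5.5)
Your proof is correct and follows the paper's argument essentially step for step: the homomorphism property and unitarity turn $\rho(g^{-1})^\top$ into $\overline{\rho(g)}$, and because $\nu$ is real-valued, the conjugation passes outside the Fourier sum. Your extra remarks go slightly beyond what the paper checks but do not change the argument. These are the check that $\rho^\vee$ is again irreducible and unitary, and the caveat that $\rho^\vee$ must be taken literally as $g\mapsto\rho(g^{-1})^\top$ rather than an isomorphic copy in $\irr(G)$.
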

\begin{proof}[Proof of Lemma \ref{lem:conjugate_relation}]
Since $\rho$ is an unitary representation, we have $\rho(g^{-1})=\rho(g)^{-1}=\rho(g)^*$.
Recall that the dual representation is defined by $\rho^\vee(g) = \rho(g^{-1})^\top$.
Then, we obtain that
$$
\rho^\vee(g) = (\rho(g)^*)^\top = (\overline{\rho(g)}^\top)^\top = \overline{\rho(g)}.
$$
Applying the definition of the transform and the property derived above gives that
    \begin{align*}
        \widehat{\nu}[\rho^\vee] &= \frac{1}{|G|} \sum_{g\in G} \nu(g)\rho^\vee(g^{-1}) = \frac{1}{|G|} \sum_{g\in G} \nu(g)\overline{\rho(g^{-1})}= \overline{\frac{1}{|G|} \sum_{g\in G} \nu(g)\rho(g^{-1})}= \overline{\widehat{\nu}[\rho]},
    \end{align*}
    which yields the desired results.
\end{proof}

The next two lemmas provide trace-level formulations of the Schur orthogonality, which are convenient for manipulating products of representation matrices summed over the group.
\begin{lemma}[Trace-Level Schur Orthogonality, Dual Form]\label{lem:SO_vee_trace}
Let $G$ be a finite group and let $\widehat G$ denote its unitary dual.
For any $\rho,\sigma\in\widehat G$ and any complex matrices
$C_1\in\CC^{d_\rho\times d_\rho}$, $C_2\in\CC^{d_\sigma\times d_\sigma}$, it holds that
$$
\sum_{g\in G}\tr\big(C_1\rho(g)\big)\cdot\tr\big(C_2\sigma^\vee(g)\big)
={|G|}/{d_\rho}\cdot\ind(\rho=\sigma)\cdot\tr\big(C_1C_2^\top\big).
$$
\end{lemma}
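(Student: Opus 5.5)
The plan is to reduce the identity to the classical entrywise Schur orthogonality relations, using the matrix-coefficient form recorded in Definition~\ref{def:group}(iii). First I will rewrite the dual factor: by Lemma~\ref{lem:conjugate_relation} (equivalently by unitarity), $\sigma^\vee(g)=\overline{\sigma(g)}$. Hence $\tr\big(C_2\sigma^\vee(g)\big)=\sum_{k,l}(C_2)_{kl}\,\overline{\sigma_{lk}(g)}$. Similarly, $\tr\big(C_1\rho(g)\big)=\sum_{i,j}(C_1)_{ij}\,\rho_{ji}(g)$.

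Next I will expand the product and exchange the finite sums. This gives
\[
\sum_{i,j,k,l}(C_1)_{ij}(C_2)_{kl}\sum_{g\in G}\rho_{ji}(g)\overline{\sigma_{lk}(g)}.
\]
Property (iii) of Definition~\ref{def:group} says that $\{\sqrt{d_\rho}\rho_{ij}\}$ is orthonormal in $L^2(G)$ with the normalized inner product. Therefore
\[
\sum_{g}\rho_{ji}(g)\overline{\sigma_{lk}(g)}=\frac{|G|}{d_\rho}\,\ind(\rho=\sigma)\,\ind(j=l)\,\ind(i=k).
\]
Inequivalent irreducible representations in the fixed set $\irr(G)$ give orthogonal coefficients, and the equality $\rho=\sigma$ is literal because we work with fixed representatives.

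Substituting collapses the sum to $\frac{|G|}{d_\rho}\ind(\rho=\sigma)\sum_{i,j}(C_1)_{ij}(C_2)_{ij}$. The remaining double sum is exactly $\tr(C_1C_2^\top)$, since $(C_2^\top)_{ji}=(C_2)_{ij}$. This finishes the identity.

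There is no real obstacle here. The only point needing care is index bookkeeping: the transpose arises because the dual representation pairs $\rho_{ji}$ with $\overline{\sigma_{lk}}$ in the same index order, with no swap. I will also justify that the orthonormality in (iii) covers cross terms between distinct $\rho\neq\sigma$. This is standard Schur orthogonality, implicit in the statement that the collection forms an orthonormal basis.
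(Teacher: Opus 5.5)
Your proposal is correct and follows essentially the same route as the paper's proof. Both rewrite $\sigma^\vee(g)=\overline{\sigma(g)}$, expand both traces entrywise, exchange the finite sums, and apply the entrywise Schur orthogonality from Definition~\ref{def:group}(iii), collapsing the sum to $\frac{|G|}{d_\rho}\ind(\rho=\sigma)\sum_{i,j}(C_1)_{ij}(C_2)_{ij}=\frac{|G|}{d_\rho}\ind(\rho=\sigma)\tr(C_1C_2^\top)$.
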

\begin{proof}[Proof of Lemma \ref{lem:SO_vee_trace}]
By the entry-wise Schur orthogonality (see Definition~\ref{def:group}(iii)), for any $\rho,\sigma\in\widehat G$ and indices $i,j\in[d_\rho]$, $k,\ell\in[d_\sigma]$, we have
$$
\langle \sqrt{d_\rho}\,\rho_{ij},\,\sqrt{d_\sigma}\,\sigma_{k\ell}\rangle_{L_2(G)}
=\delta_{\rho,\sigma}\,\delta_{i,k}\,\delta_{j,\ell},
$$
where $\delta_{\rho,\sigma}$ equals $1$ if $\rho=\sigma$ and $0$ otherwise and similarly for $\delta_{i,k}$, $\delta_{j,\ell}$.
Expanding the $L^2(G)$ inner product $\langle f,h\rangle_{L_2(G)}=|G|^{-1}\sum_{g\in G}f(g)\overline{h(g)}$ and rearranging gives
\begin{equation}\label{eq:entrywise_SO}
\sum_{g\in G}\rho_{ij}(g)\,\overline{\sigma_{k\ell}(g)}
={|G|}/{d_\rho}\cdot\ind(\rho=\sigma)\cdot\delta_{i,k}\,\delta_{j,\ell}.
\end{equation}
Based on Lemma \ref{lem:conjugate_relation}, we have 
$\sigma^\vee(g)=\overline{\sigma(g)}$ such that $
\sigma^\vee_{ij}(g)=\overline{\sigma_{ij}(g)}$ for all $i,j\in[d_\sigma]$.
Following this, \eqref{eq:entrywise_SO} can be rewritten as
\begin{equation}\label{eq:entrywise_SO_vee}
\sum_{g\in G}\rho_{ij}(g)\sigma^\vee_{k\ell}(g)
={|G|}/{d_\rho}\cdot\ind(\rho=\sigma)\cdot\delta_{i,k}\delta_{j,\ell}.
\end{equation}
Moreover, by the definition of trace and the matrix product rule $(AB)_{ii}=\sum_j A_{ij}B_{ji}$, we have
$$
\tr(C_1\rho(g))=\sum_{i=1}^{d_\rho}(C_1\rho(g))_{ii}
=\sum_{i=1}^{d_\rho}\sum_{j=1}^{d_\rho}(C_1)_{ij}\cdot\rho_{ji}(g),
\quad\tr(C_2\sigma^\vee(g))
=\sum_{k=1}^{d_\sigma}\sum_{\ell=1}^{d_\sigma}(C_2)_{k\ell}\cdot\sigma^\vee_{\ell k}(g).
$$
Multiplying these two expansions and exchanging the order of summation with $\sum_{g\in G}$ gives
\begin{align}
\sum_{g\in G}\tr(C_1\rho(g))\cdot\tr(C_2\sigma^\vee(g))
&=\sum_{i_1,j_1=1}^{d_\rho}\sum_{i_2,j_2=1}^{d_\sigma}
(C_1)_{i_1j_1}(C_2)_{i_2j_2}\cdot\sum_{g\in G}\rho_{j_1i_1}(g)\,\sigma^\vee_{j_2i_2}(g).
\label{eq:expand_sum}
\end{align}
Applying \eqref{eq:entrywise_SO_vee} yields that
\begin{align*}
\eqref{eq:expand_sum}&=\sum_{i_1,j_1=1}^{d_\rho}\sum_{i_2,j_2=1}^{d_\sigma}
(C_1)_{i_1j_1}(C_2)_{i_2j_2}\cdot
{|G|}/{d_\rho}\cdot\ind(\rho=\sigma)\cdot\delta_{j_1,j_2}\delta_{i_1,i_2}\\
&={|G|}/{d_\rho}\cdot\ind(\rho=\sigma)\cdot\sum_{i,j=1}^{d_\rho}(C_1)_{ij}(C_2)_{ij}={|G|}/{d_\rho}\cdot\ind(\rho=\sigma)\cdot\tr(C_1C_2^\top).
\end{align*}
Combining the arguments above completes the proof.
\end{proof}

\begin{lemma}[Trace-Level Schur Orthogonality, Inverse Form]\label{lem:trace_times_rep_matrix}
Let $G$ be a finite group and let $\widehat G$ denote its unitary dual.
Fix $\rho,\sigma\in\widehat G$ and let $C\in\CC^{d_\rho\times d_\rho}$.
Then the following matrix identity holds:
$$
\sum_{g\in G}\tr\big(C\rho(g)\big)\sigma(g^{-1})
={|G|}/{d_\rho}\cdot\ind(\rho=\sigma)\cdot C.
$$
\end{lemma}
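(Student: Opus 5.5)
The plan is to expand both sides entrywise and reduce the identity to the entrywise Schur orthogonality in Definition~\ref{def:group}(iii), exactly as in the proof of Lemma~\ref{lem:SO_vee_trace}. First, write $\tr(C\rho(g))=\sum_{i,j=1}^{d_\rho}C_{ij}\rho_{ji}(g)$. By unitarity, $\sigma(g^{-1})=\sigma(g)^*$, so $\sigma(g^{-1})_{k\ell}=\overline{\sigma_{\ell k}(g)}$. The $(k,\ell)$ entry of the left-hand side therefore becomes
\begin{align*}
\sum_{i,j=1}^{d_\rho}C_{ij}\sum_{g\in G}\rho_{ji}(g)\,\overline{\sigma_{\ell k}(g)}.
\end{align*}

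Next, apply the entrywise orthogonality relation \eqref{eq:entrywise_SO}, which states $\sum_{g}\rho_{ji}(g)\overline{\sigma_{\ell k}(g)}=\frac{|G|}{d_\rho}\ind(\rho=\sigma)\,\delta_{j,\ell}\,\delta_{i,k}$. Substituting this in collapses the double sum to the single term $i=k$, $j=\ell$. The $(k,\ell)$ entry is then $\frac{|G|}{d_\rho}\ind(\rho=\sigma)\,C_{k\ell}$.

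Reassembling the entries gives the matrix identity $\frac{|G|}{d_\rho}\ind(\rho=\sigma)\,C$. When $\rho\neq\sigma$ the dimensions may differ, but the right-hand side is the zero $d_\sigma\times d_\sigma$ matrix, so the statement remains consistent.

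The only real care needed is index bookkeeping. Under the convention $\sigma(g^{-1})=\sigma(g)^*$, the transpose inside $(\cdot)^*$ swaps the indices, and this swap is precisely what makes the surviving term $C_{k\ell}$ rather than $C_{\ell k}$. I expect no other obstacle; the rest is a direct corollary of Schur orthogonality.
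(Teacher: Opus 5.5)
Your proposal is correct and follows the same route as the paper. Both expand the $(k,\ell)$ entry, use unitarity to write $\sigma(g^{-1})_{k\ell}=\overline{\sigma_{\ell k}(g)}$, and collapse the double sum by entrywise Schur orthogonality to get $\frac{|G|}{d_\rho}\ind(\rho=\sigma)\,C_{k\ell}$, with your index swap producing $C_{k\ell}$ rather than $C_{\ell k}$ exactly as in the paper.
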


\begin{proof}[Proof of Lemma \ref{lem:trace_times_rep_matrix}]
By the entry-wise Schur orthogonality (Definition~\ref{def:group}(iii)) and unitarity $\sigma(g^{-1})=\sigma(g)^*$, which gives $\sigma(g^{-1})_{k\ell}=\overline{\sigma(g)_{\ell k}}$, for any
$\rho,\sigma\in\widehat G$ and indices $i,j\in[d_\rho]$, $k,\ell\in[d_\sigma]$, we have
\begin{equation}\label{eq:SO_entry_inv}
\sum_{g\in G}\rho_{ij}(g)\sigma(g^{-1})_{k\ell}
={|G|}/{d_\rho}\cdot\ind(\rho=\sigma)\cdot\delta_{i,\ell}\cdot\delta_{j,k}.
\end{equation}
Next, we evaluate the $(k, \ell)$-th entry of the matrix sum. Expanding the trace term gives
\begin{align}
\bigg(\sum_{g\in G}\tr(C\rho(g))\sigma(g^{-1})\bigg)_{k\ell}
&=\sum_{g\in G}\tr(C\rho(g))\,\sigma(g^{-1})_{k\ell}=\sum_{j_1,j_2=1}^{d_\rho} C_{j_1j_2}\sum_{g\in G}\rho_{j_2j_1}(g)\sigma(g^{-1})_{k\ell}.
\label{eq:entry_expand}
\end{align}
By applying \eqref{eq:SO_entry_inv}, we obtain
\[
\eqref{eq:entry_expand}
=\sum_{j_1,j_2=1}^{d_\rho} C_{j_1j_2} \cdot {|G|}/{d_\rho}\cdot\ind(\rho=\sigma)\cdot\delta_{j_2,\ell}\cdot\delta_{j_1,k}
=\frac{|G|}{d_\rho}\cdot\ind(\rho=\sigma)\cdot C_{k\ell}.
\]
As this relation holds for every pair of indices $(k, \ell)$, the desired result follows.
\end{proof}
Lastly, we state the Plancherel theorem for finite groups.
\begin{lemma}[Plancherel Theorem]
\label{lem:l2G_inner_product_dft}
    Let $f, h: G \mapsto \mathbb{R}$.
    The inner product on $L^2(G)$ is given by
    $$
    \langle f,h\rangle_{L_2(G)} = \sum_{\rho\in\irr(G)}d_\rho\cdot\tr\big(\hat{f}[\rho](\hat{h}[\rho])^*\big) = \sum_{\rho\in\irr(G)}d_\rho\cdot\tr\big((\hat{f}[\rho])^*\hat{h}[\rho]\big).
    $$
\end{lemma}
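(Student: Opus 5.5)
The plan is to derive the identity directly from the definition of the group DFT together with the entry-wise Schur orthogonality in Definition~\ref{def:group}(iii), rather than from the Fourier inversion formula. The idea is to expand the left-hand side $\langle f,h\rangle_{L^2(G)}=|G|^{-1}\sum_{g}f(g)\overline{h(g)}$ using inversion for one factor, and then recognize the Fourier coefficient of the other.

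\textbf{Step 1 (expand $f$).} By the inversion formula $f(g)=\sum_{\rho}d_\rho\,\tr(\hat f[\rho]\rho(g))$,
\[
\langle f,h\rangle_{L^2(G)}=\sum_{\rho}d_\rho\,\tr\Big(\hat f[\rho]\cdot\tfrac{1}{|G|}\sum_{g}\rho(g)\,\overline{h(g)}\Big).
\]

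\textbf{Step 2 (identify the inner sum).} Since $h$ is real, $\overline{h(g)}=h(g)$. Unitarity gives $\rho(g)=\rho(g^{-1})^{*}$, so
\[
\tfrac{1}{|G|}\sum_g h(g)\rho(g)=\Big(\tfrac{1}{|G|}\sum_g h(g)\rho(g^{-1})\Big)^{*}=(\hat h[\rho])^{*}.
\]
This yields the first form, $\sum_\rho d_\rho\,\tr(\hat f[\rho](\hat h[\rho])^*)$.

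\textbf{Step 3 (the second form).} The left-hand side is real and symmetric in $f,h$. Exchanging the roles of $f$ and $h$ gives $\sum_\rho d_\rho\tr(\hat h[\rho](\hat f[\rho])^*)$. Cyclicity of the trace turns this into $\sum_\rho d_\rho\tr((\hat f[\rho])^*\hat h[\rho])$. Alternatively, one can note that it is the complex conjugate of the first form, which is real.

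\textbf{Main obstacle.} The only delicate point is the bookkeeping of conjugates and transposes, namely that $\rho(g)$, not $\overline{\rho(g)}$, appears in Step 2. This is handled by unitarity alone. Realness of $h$ is used only to drop the conjugate; for complex $h$ the same argument would give $\overline{h}$'s coefficients instead.

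A consistency check follows from the normalization implied by the basis property in Definition~\ref{def:group}(iii): taking $f=h=\delta_{\id}$ gives $|G|^{-1}$ on both sides, since $\hat\delta_{\id}[\rho]=|G|^{-1}I$ and $\sum_\rho d_\rho^2=|G|$.
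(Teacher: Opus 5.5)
Your proof is correct. It differs from the paper only in that the paper gives no argument: it simply defers to \citet{terras1999fourier} (Theorem~2, Chapter~15). Your route is short and self-contained once the inversion formula stated in \S\ref{sec:background_harmonic} is granted. Inversion for $f$ together with unitarity, $\rho(g)=\rho(g^{-1})^*$, identifies the inner sum as $(\hat h[\rho])^*$. The second form then follows from symmetry of the real inner product and cyclicity of the trace, or equivalently by conjugating the first form termwise.

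Two small points of framing need correcting.

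First, your opening sentence says you will argue ``rather than from the Fourier inversion formula,'' yet Step~1 is exactly an application of it. Using it is legitimate, since inversion is stated in the paper and is equivalent to completeness of the orthonormal basis in Definition~\ref{def:group}(iii). But Schur orthogonality enters your argument only through that formula, not directly.

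Second, your remark that realness of $h$ is what makes Step~2 work is inaccurate. Keeping the conjugate, $\frac{1}{|G|}\sum_g \overline{h(g)}\,\rho(g)=\bigl(\frac{1}{|G|}\sum_g h(g)\,\rho(g^{-1})\bigr)^*=(\hat h[\rho])^*$ holds for any complex $h$, so the first identity holds verbatim for complex-valued functions. The hypothesis that $f,h$ are real is needed only for the second identity. For complex functions that sum equals $\overline{\langle f,h\rangle_{L^2(G)}}=\langle h,f\rangle_{L^2(G)}$, and realness is what makes it coincide with $\langle f,h\rangle_{L^2(G)}$.
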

\begin{proof}[Proof of Lemma \ref{lem:l2G_inner_product_dft}]
    Please refer to Theorem 2 in Chapter 15, \cite{terras1999fourier} for a detailed proof.
\end{proof}

\section{Background: Group Representation Theory}
\label{ap:bg_rep}

This appendix provides a self-contained introduction to finite group representation theory on finite groups, following the exposition in \citet{serre1977linear} and \citet{terras1999fourier}.
The presentation is tailored to readers with a background in linear algebra who may not be familiar with abstract algebra.

\begin{definition}[Group]
A \emph{group} $(G, \star)$ consists of a set $G$ and a binary operation $\star: G \times G \to G$ satisfying three axioms:
(i)~\emph{associativity}: $(a \star b) \star c = a \star (b \star c)$ for all $a, b, c \in G$;
(ii)~\emph{identity}: there exists an element $\id \in G$ such that $a \star \id = \id \star a = a$ for all $a \in G$;
and (iii)~\emph{inverses}: for every $a \in G$ there exists $a^{-1} \in G$ with $a \star a^{-1} = a^{-1} \star a = \id$.
The group is \emph{finite} if $|G| < \infty$, and \emph{Abelian} (or commutative) if $a \star b = b \star a$ for all $a, b \in G$.
\end{definition}

Several families of finite groups appear throughout this paper.

\begin{itemize}[label=$\triangleright$, leftmargin=2em]
    \item \textbf{Cyclic Groups.} The \emph{cyclic group} $\ZZ_n = \{0, 1, \dots, n-1\}$ with addition modulo $n$ is the simplest finite group; it is generated by a single element and is always Abelian.

    \item \textbf{Direct Products of Cyclic Groups.} The \emph{direct product} $\ZZ_{n_1} \oplus \cdots \oplus \ZZ_{n_d}$ consists of $d$-tuples with componentwise modular addition, and is the group underlying generalized modular addition (\S\ref{sec:patterns_modular}). By the Fundamental Theorem of Finite Abelian Groups \citep[see, e.g.][Chapter~1]{serre1977linear}, every finite Abelian group is isomorphic to such a direct product.

    \item \textbf{Symmetric and Alternating Groups.} A \emph{permutation} of $\{1, \dots, n\}$ is a bijection from this set to itself.  For example, the map $1 \mapsto 2, 2 \mapsto 3, 3 \mapsto 1$ is the $3$-cycle $(1\;2\;3)$.
    The \emph{symmetric group} $S_n$ is the set of all $n!$ such permutations under composition.
    It is non-Abelian for $n \geq 3$.
    The \emph{alternating group} $A_n \subset S_n$ consists of all even permutations and has order $n!/2$ and  is non-Abelian for $n \geq 4$.

    \item \textbf{Frobenius Group $C_7 \rtimes C_3$.} This is the non-Abelian group of order~21 used in our experiments (\S\ref{sec:stage1_general}).
    It is defined by two \emph{generators} $x$ and $y$ subject to three relations:
    $$
    \langle x, y \mid x^7 = 1, \; y^3 = 1, \; yxy^{-1} = x^2 \rangle.
    $$
    Here $x$ has order~7, $y$ has order~3, and the relation $yxy^{-1} = x^2$ specifies that conjugating $x$ by $y$ squares it.
    Every element can be written uniquely as $x^a y^b$ with $a \in \ZZ_7$ and $b \in \ZZ_3$, so we identify group elements with pairs $(a, b)$.
    To compute the group operation, we multiply $x^a y^b \cdot x^{a'} y^{b'}$ by moving $y^b$ past $x^{a'}$.
    The relation $yx = x^2 y$, rearranged from $yxy^{-1} = x^2$, tells us that each $y$ passing over an $x$ doubles its exponent, so $y^b x^{a'} = x^{2^b a'} y^b$. This implies that 
    $$
    x^a y^b \cdot x^{a'} y^{b'} = x^{a} \cdot x^{2^b a'} y^b y^{b'} = x^{a + 2^b a' \bmod 7}\, y^{b + b' \bmod 3}.
    $$
    Since every element is $x^a y^b$, we can equivalently regard $\ZZ_7 \times \ZZ_3$ as the set of group elements, with the group operation $(a, b) \star (a', b') = (a + 2^b a' \bmod 7, \; b + b' \bmod 3)$.
    Note that $2^3 = 8 \equiv 1 \pmod{7}$, so three doublings return to the original value, consistent with $y^3 = 1$.
    The group is non-Abelian precisely because of this doubling: $(1, 0) \star (0, 1) = (1, 1)$ while $(0, 1) \star (1, 0) = (2, 1)$.
\end{itemize}

\paragraph{Representations.}
The central idea of representation theory is to \emph{linearize} a group: instead of working with abstract group elements, we map them to invertible matrices so that the group structure can be studied using the tools of linear algebra.

\begin{definition}[Representation]
A \emph{(linear) representation} of a finite group $G$ over the vector space $\CC^{d_\rho}$ is a homomorphism $\rho: G \to {\rm GL}(\CC^{d_\rho})$, i.e., a map from group elements to $d_\rho \times d_\rho$ invertible complex matrices satisfying
$$
\rho(g_1 \star g_2) = \rho(g_1) \cdot \rho(g_2), \qquad \forall g_1, g_2 \in G.
$$
The integer $d_\rho$ is called the \emph{dimension} (or \emph{degree}) of the representation.
\end{definition}

It follows immediately that $\rho(\id) = I_{d_\rho}$ and $\rho(g^{-1}) = \rho(g)^{-1}$.
Two representations $\rho$ and $\rho'$ of the same dimension are \emph{isomorphic}, denoted by $\rho \cong \rho'$, if there exists an invertible matrix $M$ such that $M\rho(g) = \rho'(g)M$ for all $g \in G$. Informally, two isomorphic representations differ only by a change of basis.
Every representation of a finite group is isomorphic to a \emph{unitary} representation, i.e., one satisfying $\rho(g)^* \rho(g) = I_{d_\rho}$ for all $g$.
\emph{Throughout this paper, we always work with unitary representations.}

\paragraph{Examples of Representations.}
We illustrate with the groups introduced above.

\begin{itemize}[label=$\triangleright$, leftmargin=2em]
    \item \textbf{Cyclic Group.} The maps $\rho_k(g) = \exp(2\pi\ri k g / n)$ for $k = 0, 1, \dots, n-1$ are one-dimensional representations.
    One can verify the homomorphism property, since $$\rho_k(g_1 + g_2) = \exp(2\pi\ri k (g_1 + g_2) / n) = \rho_k(g_1)\rho_k(g_2). $$
    These are precisely the standard Fourier basis on $\ZZ_n$, where $\rho_k$ corresponds to frequency $k$.

    \item \textbf{Direct Product of Cyclic Groups.} For the direct product $\ZZ_{n_1} \oplus \cdots \oplus \ZZ_{n_d}$ with componentwise modular addition, the representations are products of one-dimensional cyclic representations: $\rho_{k_1, \dots, k_d}(g_1, \dots, g_d) = \prod_{j=1}^d \exp(2\pi\ri k_j g_j / n_j)$, since each coordinate acts independently.

    \item \textbf{Symmetric Group $S_3$.} The group $S_3$ of all 6 permutations of $\{1,2,3\}$ has three representations.
    Besides the trivial representation, there is also a one-dimensional \emph{sign representation} $\rho_{\rm sgn}(\sigma) = {\rm sgn}(\sigma) \in \{+1, -1\}$, which maps even permutations to $+1$ and odd permutations to $-1$.
    Also, there is  a two-dimensional representation $\rho_{\rm std}$ that maps each permutation to a $2 \times 2$ matrix. Writing $\omega = e^{2\pi\ri/3}$, the six matrices are
    \begin{align}
    \rho_{\rm std}(\id) &= \begin{pmatrix} 1 & 0 \\ 0 & 1 \end{pmatrix}, \quad
    &\rho_{\rm std}((1\;2\;3)) &= \begin{pmatrix} \omega & 0 \\ 0 & \bar\omega \end{pmatrix}, \quad
    &\rho_{\rm std}((1\;3\;2)) &= \begin{pmatrix} \bar\omega & 0 \\ 0 & \omega \end{pmatrix}, \nonumber\\[4pt]
    \rho_{\rm std}((1\;2)) &= \begin{pmatrix} 0 & 1 \\ 1 & 0 \end{pmatrix}, \quad
    &\rho_{\rm std}((1\;3)) &= \begin{pmatrix} 0 & \bar\omega \\ \omega & 0 \end{pmatrix}, \quad
    &\rho_{\rm std}((2\;3)) &= \begin{pmatrix} 0 & \omega \\ \bar\omega & 0 \end{pmatrix}.
    \label{eq:S3_std_rep}
    \end{align}
    The three even permutations (top row) map to diagonal rotation matrices, while the three odd permutations (bottom row) map to off-diagonal reflection matrices.

    \item \textbf{Frobenius group $C_7 \rtimes C_3$.} This group has five irreducible representations: three one-dimensional and two three-dimensional.
    The \emph{one-dimensional representations} are $\rho_j(x^a y^b) = \omega_3^{jb}$ for $j = 0, 1, 2$, where $\omega_3 = e^{2\pi\ri/3}$.
    These are trivial on $x$ and only focus the $C_3$-component $b$.
    The \emph{three-dimensional representations} are the nontrivial ones that arise from the non-Abelian structure of the group, and are the ones that networks must learn in our experiments.
    They are specified by the images of the two generators. Writing $\omega_7 = e^{2\pi\ri/7}$, we have
    \begin{align*}
    \rho_4(x) = \diag(\omega_7, \omega_7^2, \omega_7^4),\qquad \rho_5(x) = \diag(\omega_7^3, \omega_7^6, \omega_7^5), \qquad \rho_4(y) = \rho_5(y) = \begin{pmatrix} 0 & 0 & 1 \\ 1 & 0 & 0 \\ 0 & 1 & 0 \end{pmatrix},
    \end{align*}
    One can verify the group relation $\rho(y)\rho(x)\rho(y)^{-1} = \rho(x^2)$.
    Indeed, the cyclic permutation matrix $\rho(y)$ shifts the diagonal entries $(\omega_7, \omega_7^2, \omega_7^4)$ to $(\omega_7^2, \omega_7^4, \omega_7) = (\omega_7^2, \omega_7^4, \omega_7^8) = \rho(x^2)$, where we used $\omega_7^8 = \omega_7$ since $8 \equiv 1 \pmod{7}$.
    These $3 \times 3$ matrix-valued representations are the ones that networks must learn in our non-Abelian experiments (\S\ref{sec:stage1_general}).
\end{itemize}

\paragraph{Dual Representation.}
For each $\rho \in \irr(G)$, the \emph{dual representation} $\rho^\vee$ is defined by
\begin{align}\label{eq:def_dual_representation}
\rho^\vee(g) = \rho(g^{-1})^\top.
\end{align}
Since we always work with unitary representations, we have $\rho(g)^{-1} = \rho(g)^*$. Thus  \eqref{eq:def_dual_representation} simplifies to $\rho^\vee(g) = (\rho(g)^*)^\top = \overline{\rho(g)}$, i.e., the entrywise complex conjugate.

The dual generalizes the notion of conjugate frequency in the standard DFT: for $\ZZ_n$, the dual of $\rho_k(g) = \exp({2\pi\ri kg/n})$ is $\rho_{n-k}(g) = \exp(-2\pi\ri kg/n)$.
When $\rho \cong \rho^\vee$, i.e., $\rho$ and its conjugate are related by a change of basis, the representation is called \emph{self-dual} or \emph{self-conjugate}.
For example, the two-dimensional representation $\rho_{\rm std}$ of $S_3$ in \eqref{eq:S3_std_rep} is self-conjugate: conjugating the matrices swaps $\omega \leftrightarrow \bar\omega$, and one can verify that $\rho_{\rm std}$ and $\overline{\rho_{\rm std}}$ are related by a \emph{change of basis}.
In contrast, the two $3$-dimensional representations $\rho_4$ and $\rho_5$ of $C_7 \rtimes C_3$ are duals of each other but not self-conjugate.

\section{Additional Results for Generalized Modular Addition}

In \S\ref{sec:patterns_modular}, we focused on the cleaner Abelian setting without non-trivial self-conjugate representations, using $G=\ZZ_3\oplus\ZZ_5$ as the running example. 
In this appendix, we record the full Fourier-domain heatmaps of the main-text experiment. 
Moreover, we discuss the self-conjugate case, which requires a separate interpretation from the odd-order setting. 
Finally, several clarification are made about the relation between our formulation and the cyclic modular-addition analysis of \citet{he2026mechanism}.

\subsection{Full Experimental Results in \S\ref{sec:warm_up}}
Figure~\ref{fig:modular_add_experiment} in the main text visualizes the single-frequency pattern through $\xi_m$. 
Here, we provide the full Fourier-domain heatmaps for all three learned parameter blocks $(\theta_m^1,\theta_m^2,\xi_m)$ on the same task $G=\ZZ_3\oplus\ZZ_5$.
Figure~\ref{fig:modular_addition_dft_heatmap_example} complements the compressed visualization in Figure~\ref{fig:modular_add_experiment}. 
It shows that the same single-frequency structure appears in $\theta_m^1$, $\theta_m^2$, and $\xi_m$: each neuron concentrates on one shared non-trivial frequency tuple together with its conjugate partner, and Hermitian symmetry forces the two active coefficients to have equal real parts and opposite imaginary parts. 
Thus the sparsity pattern is shared across all three parameter families, not only in the output embedding.

\begin{figure}[!t]
  \centering
\includegraphics[width=1\linewidth]{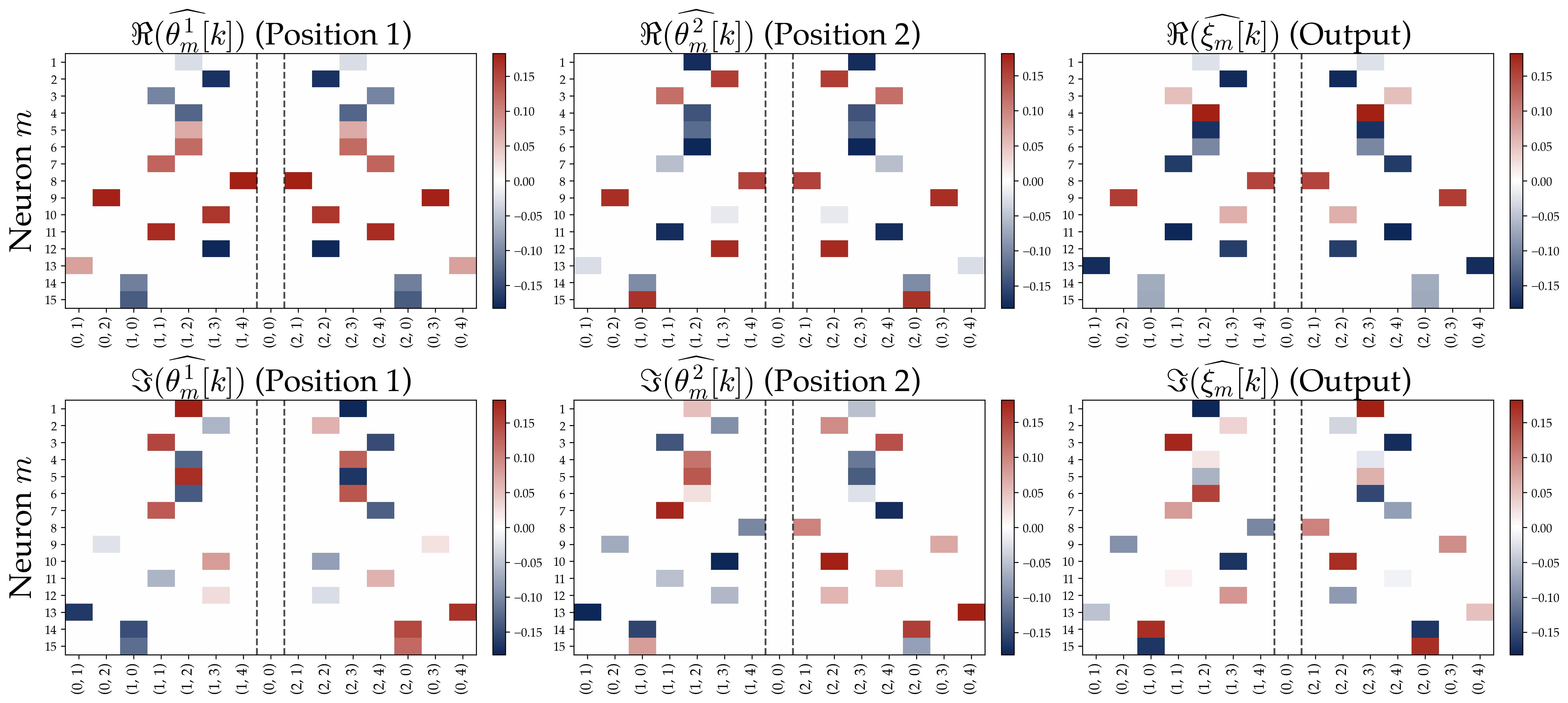}
  \caption{Heatmap of the learned parameters for the top 20 neurons on the generalized modular addition task over $G=\ZZ_3\oplus\ZZ_5$, after applying the Discrete Fourier Transform. Each row corresponds to one neuron, and the three columns of panels correspond to $\widehat{\theta_m^1}$, $\widehat{\theta_m^2}$, and $\widehat{\xi_m}$, respectively. The upper row plots the real parts and the lower row plots the imaginary parts of the Fourier coefficients. Along the horizontal axis, each column is indexed by a frequency tuple $k$, with conjugate frequencies arranged symmetrically. Since $G=\ZZ_3\oplus\ZZ_5$ has no non-trivial self-conjugate representations, each active neuron shows exactly one conjugate pair of nonzero coefficients at $k$ and $k^\vee$.}
  \label{fig:modular_addition_dft_heatmap_example}
\end{figure}

\subsection{Modular Addition with Self-Conjugate Irreps}
\label{ap:even_modular}

For a product group $G_\cN=\ZZ_{n_1}\oplus\cdots\oplus\ZZ_{n_d}$, a frequency tuple is self-conjugate exactly when each even-order coordinate is either $0$ or $n_j/2$. As a concrete example, we consider
$
G=\ZZ_2\oplus\ZZ_3\oplus\ZZ_5,
$
with $29$ non-trivial representations. Among them, $k=(1,0,0)$ is the unique non-trivial self-conjugate frequency, while the remaining $28$ frequencies form $14$ conjugate pairs.

\begin{figure}[!h]
  \centering
\includegraphics[width=1\linewidth]{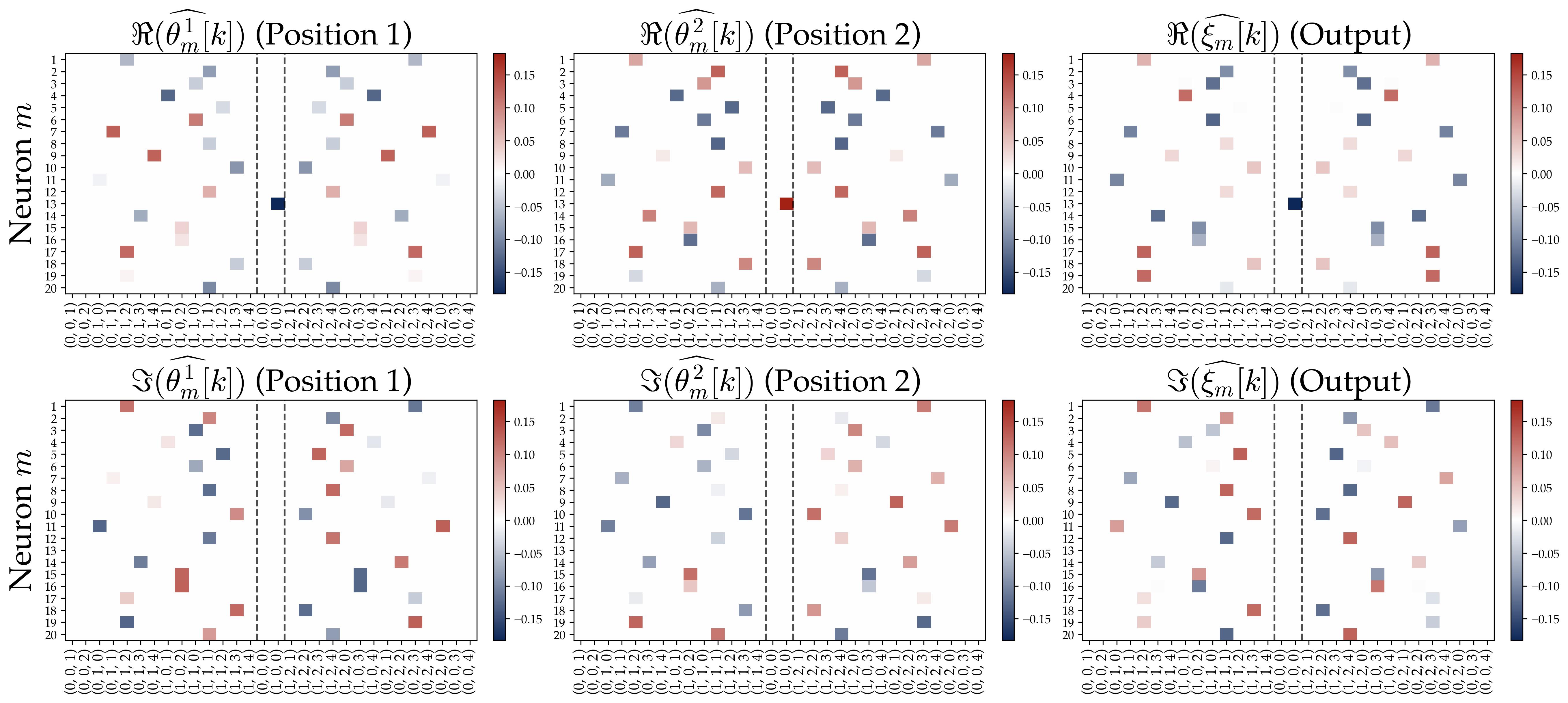}
  \caption{Heatmap of the learned parameters for the top 20 neurons on the generalized modular addition task over $G=\ZZ_2\oplus\ZZ_3\oplus\ZZ_5$, after applying the Discrete Fourier Transform. Each row corresponds to one neuron, and the three columns of panels correspond to $\widehat{\theta_m^1}$, $\widehat{\theta_m^2}$, $\widehat{\xi_m}$, respectively. Along the horizontal axis, each column is indexed by a frequency tuple $k$, and conjugate frequencies are arranged symmetrically. The dashed separators isolate the self-conjugate sector, consisting of the trivial frequency and the unique non-trivial self-conjugate frequency. Neurons that learn a non-self-conjugate representation therefore show two mirrored active coefficients at $k$ and $k^\vee$, while neurons that learn the self-conjugate mode show a single active real coefficient in the middle sector.}
  \label{fig:modular_addition_dft_heatmap_self_conjugate}
\end{figure}

\paragraph{Results.}
The dominant mechanism from the odd-order case remains unchanged: most neurons still learn a single non-trivial frequency together with its conjugate partner, so Observations~\ref{find:freq_sparse} and~\ref{find:phase_align} continue to describe the learned structure. The only new phenomenon occurs when a neuron selects the self-conjugate frequency $k=(1,0,0)$. Since this representation satisfies $\rho_k=\rho_k^\vee$, its Fourier coefficient has no distinct conjugate partner and must therefore be real. Empirically, such neurons appear as a single active coefficient with vanishing imaginary part in the isolated middle sector.

This changes the interpretation of diversification in exactly one place. For a non-self-conjugate frequency, the neuron still carries a \emph{continuous phase} $\exp(\ri\phi_m)$ with $\phi_m\sim{\rm Unif}(0,2\pi]$. 
In this case, the phase space collapses to the \emph{discrete Rademacher} sign set $\{\pm 1\}$. Thus, the self-conjugate case does not represent a new learning mechanism, but rather a boundary case of the general spectral framework where the complex phase reduces to a real-valued sign.

\subsection{Comparison with Results in \citet{he2026mechanism}}

The cyclic-group results of \citet{he2026mechanism} are the one-dimensional special case of our framework. For $\ZZ_n$ with odd $n$, each irrep is a character $\rho_k(g)=\exp(2\pi \ri kg/n)$, and a sinusoid is simply its real part:
$
\cos(\omega_k g+\phi_m)=\Re(\exp({\ri\phi_m})\cdot\rho_k(g)).
$
Thus the scalar frequencies used in $\ZZ_p$ are replaced in our framework by irreducible representations, and the cyclic story becomes a special case of the representation-theoretic picture.
For odd $n$, the non-trivial characters of $\ZZ_n$ come in conjugate pairs $(k,n-k)$, so any real-valued feature admits the expansion
$$
\nu(g)=\hat\nu[\rho_0]+\sum_{k=1}^{(n-1)/2}2|\hat\nu[\rho_k]|\cdot\cos(\omega_k g+\phi_k), \qquad \omega_k=\frac{2\pi k}{n}.
$$
Following this, the cyclic phenomena in \citet{he2026mechanism} are exactly our single-frequency sparsity, phase alignment, and diversification specialized to 1D irreps. 
In particular, tied embeddings give $\arg(\hat\xi_m[\rho_k])=2\phi_m \bmod 2\pi$, and the ensemble predictor reduces to Lemma~\ref{lem:abelian_mu_satisfies_pa} on $\ZZ_p$, with the main peak at $x+y$ and ghost peaks at $2x$ and $2y$.
What our framework adds is the extension beyond this cyclic, odd-order setting. First, it covers arbitrary finite Abelian products rather than a single cyclic factor. Second, it makes explicit how the picture changes when self-conjugate sectors are present. In particular, prior modular-addition analyses effectively avoid this issue by working with odd $p$, whereas our formulation shows that even-order groups fit into the same spectral mechanism after replacing Haar-random phases on self-conjugate modes by Rademacher signs.

\end{document}